\documentclass{article}

\usepackage{microtype}
\usepackage{graphicx}
\usepackage{subcaption}
\usepackage{booktabs} %

\usepackage{hyperref}

\usepackage[accepted]{icml2026}

\usepackage{graphicx}
\usepackage{subcaption}
\usepackage{float}
\usepackage{wrapfig}

\usepackage{amsmath}
\usepackage{amssymb}
\usepackage{amsthm}
\usepackage{mathtools}

\usepackage{aliascnt}

\usepackage{enumitem}
\usepackage{soul}

\theoremstyle{plain}

\theoremstyle{remark}

\newcommand{\squishlisttwo}{
 \begin{list}{$\bullet$}
  { \setlength{\itemsep}{1pt}
     \setlength{\parsep}{0pt}
    \setlength{\topsep}{0pt}
    \setlength{\partopsep}{0pt}
    \setlength{\leftmargin}{1em}
    \setlength{\labelwidth}{1.5em}
    \setlength{\labelsep}{0.5em} } }

\newcommand{\squishend}{
  \end{list}  }

\def\d{{\ \mathrm{d}}}
\def\E{{\mathbb{E}}}
\def\cD{{\mathcal{D}}}
\def\cO{{\mathcal{O}}}
\def\cT{{\mathcal{T}}}
\def\mX{{\mathbf{X}}}
\def\mI{{\mathbf{I}}}
\def\mK{{\mathbf{K}}}
\def\vy{{\mathbf{y}}}
\def\vw{{\mathbf{w}}}
\def\vx{{\mathbf{x}}}
\def\trueDi{{\bar{D}_i}}
\def\Cwoi{{C \setminus \{i\}}}
\def\itruthful{{$i$-truthful}}
\def\struthful{{$i$-s$\cdot$truthful}}

\newaliascnt{proposition}{theorem}
\theoremstyle{plain}
\newtheorem{proposition}[proposition]{Proposition}
\aliascntresetthe{proposition}

\newaliascnt{definition}{theorem}
\theoremstyle{definition}
\newtheorem{definition}[definition]{Definition}
\aliascntresetthe{definition}

\newaliascnt{assumption}{theorem}
\theoremstyle{definition}
\newtheorem{assumption}[assumption]{Assumption}
\aliascntresetthe{assumption}

\usepackage[capitalize,noabbrev]{cleveref}

\crefalias{definition}{definition}
\crefalias{assumption}{assumption}
\crefalias{proposition}{proposition}

\Crefname{definition}{Def.}{Defs.}
\Crefname{proposition}{Prop.}{Props.}
\Crefname{assumption}{Assum.}{Assums.}

\icmltitlerunning{Incentivizing Truthfulness and Collaborative Fairness in Bayesian Learning}

\begin{document}

\twocolumn[
  \icmltitle{Incentivizing Truthfulness and Collaborative Fairness in Bayesian Learning}

  \icmlsetsymbol{equal}{*}

  \begin{icmlauthorlist}
    \icmlauthor{Rachael Hwee Ling Sim}{equal,nus}
    \icmlauthor{Jue Fan}{equal,nus,astar}
    \icmlauthor{Xiao Tian}{nus,astar}
    \icmlauthor{Xinyi Xu}{}
    \icmlauthor{Patrick Jaillet}{mit}
    \icmlauthor{Bryan Kian Hsiang Low}{nus}
  \end{icmlauthorlist}

  \icmlaffiliation{nus}{Department of Computer Science, National University of Singapore, Singapore}
  \icmlaffiliation{astar}{Agency for Science, Technology and Research (A*STAR), Singapore}
  \icmlaffiliation{mit}{Department of Electrical Engineering and Computer Science, Massachusetts Institute of Technology, USA}

  \icmlcorrespondingauthor{Bryan Kian Hsiang Low}{lowkh@comp.nus.edu.sg}

  \icmlkeywords{Machine Learning, ICML}

  \vskip 0.3in
]

\printAffiliationsAndNotice{\icmlEqualContribution}

\begin{abstract}
Collaborative machine learning involves training high-quality models using datasets from a \textcolor{black}{number of sources}.
To incentivize sources to share data, existing data valuation methods fairly reward each source based on its data submitted as is. However, as \textcolor{black}{these methods} do not verify nor incentivize data truthfulness, the sources can manipulate their data (e.g., by submitting duplicated or noisy data) to artificially increase their valuations and rewards or prevent others from benefiting.
This paper presents the first mechanism that provably ensures (\textbf{F}) collaborative fairness and incentivizes (\textbf{T}) truthfulness at equilibrium for Bayesian models.
Our mechanism combines semivalues (e.g., Shapley value), which ensure fairness, and a truthful data valuation function (DVF) based on a validation set that is unknown to the sources.
As semivalues are influenced by others' data, we introduce an additional condition to prove that a source can maximize its expected data values in coalitions and semivalues by submitting a dataset that captures its true knowledge.
Additionally, we discuss the implications and suitable relaxations of (\textbf{F}) and (\textbf{T}) when the mediator has a limited budget for rewards or lacks a validation set.
Our theoretical findings are validated on synthetic and real-world datasets. 
\end{abstract}

\section{Introduction}\label{sec:intro}

In recent years, there has been more interest in applying collaborative \emph{machine learning} (ML)\footnote{App.~\ref{app:difference} describes how our collaborative ML setting differs from the federated learning and crowd-sourcing settings.} to build high-quality ML models \cite{yang2019federated}. 
Although an individual data source may have limited data, multiple sources when combined would have larger and more diverse data.
For example, in healthcare, a local hospital with limited data diversity and quantity can draw on data from other hospitals to improve disease prediction \cite{sheller2020federated,panagopoulos2022incentivizing}.
As another example, financial institutions can share data to build a more accurate fraud classification model \cite{nvidia-share-finance}.
Existing works \cite{sim2020,nguyen22trade,sim2023,wu2024incentive} have recognized that since data sources incur significant costs for data collection, they require incentives for collaboration. So, these works suggest how to provide incentives, such as  (\textbf{F}) \emph{collaborative fairness} which ensures that a source receives a higher monetary reward (or a more valuable model as a reward) by contributing more valuable data
and enforces that a source's reward should depend on  others' contribution (e.g., lower if its data is redundant due to similarity to others) \cite{sim2022survey}.

The above existing works share a common limitation: They value the data submitted as is without incentivizing or verifying  (\textbf{T}) \emph{truthfulness} of the data.
This limitation is problematic as untruthfulness can harm the performance of the jointly trained ML model, reduce the overall benefits of collaboration, and disincentivize truthful data sources who are unwilling to benefit harmful untruthful sources. 
In Sec.~\ref{sec:dv} and App.~\ref{app:strategies}, we describe how untruthful data sources can submit duplicated data to artificially increase their valuations and rewards or introduce noise to their data to prevent others from benefiting.
This raises the key question: \textbf{[I] How can we define the \emph{data valuation function} (DVF, which maps each dataset to a value) to prefer data that contributes to better model performance while incentivizing \emph{truthfulness}?}
Intuitively, the untruthful sources can artificially increase their data values/rewards if they had complete information about the DVF. For instance, knowing that only data size/cardinality matters, they can duplicate data to increase their rewards. Similarly, knowing the exact validation set, \textcolor{black}{they can discard the irrelevant part of their datasets to better align with the validation set and increase their rewards}
\cite{zheng2024truthful}.
To counter such strategies and incentivize (\textbf{T}) truthfulness, the DVF must be based on some information that remains unknown to all data sources and assumptions are needed. %

On the other hand, existing approaches that incentivize truthful data submission may not ensure (\textbf{F}) \emph{collaborative fairness}. The approaches of~\citet{chen2020truthful,zheng2024truthful} value a source $i$ based on the change in log-likelihood of others' data and of a validation set after observing source $i$'s dataset, respectively. (\textbf{T}) Truthful data submission leads to the highest \emph{expected} data value compared to untruthful submissions.
However, with only two data sources, peer prediction approaches (e.g.,~\citet{chen2020truthful}) may unfairly assign equal values to a smaller, less informative dataset and a more informative one.
The approach of~\citet{zheng2024truthful} may unfairly value a redundant source (with data similar to several others) equally with a unique data source that contributes a greater improvement in the log-likelihood after observing \emph{all} sources' datasets (App.~\ref{app:existing-problems}).
Since these approaches do not enforce that a source's reward depends on others' contribution to achieve collaborative fairness, we pose the second question: \textbf{[II] Can \emph{post-processing} additionally ensure \emph{collaborative fairness} while still preserving the truthfulness incentive?} 
We show that it is possible with an additional assumption to address the influence of others' data (\textcolor{black}{due to} ensuring collaborative fairness) on truthfulness.
\begin{table}[t]
    \centering
    \caption{Comparison of related works on (\textbf{F}) collaborative fairness, (\textbf{T}) truthfulness, (\textbf{O}) \textcolor{black}{considering} other incentives or constraints, and (\textbf{E}) multi-source experiments. Only \textbf{ours} \emph{simultaneously} ensures fairness (Sec.~\ref{sec:problem}), provably guarantees truthfulness (Sec.~\ref{sec:dv}), \emph{and} is empirically verified with multiple sources (Sec.~\ref{sec:experiments}). The $\times$ marks DVFs that do not satisfy the fairness conditions (Sec.~\ref{sec:problem}) or are proven to disincentivize truthfulness (App.~\ref{app:strategies}). Blank cells indicate no theoretical guarantees of (\textbf{T}) or no consideration of (\textbf{O/E}). However, some works may still demonstrate truthfulness empirically. 
    More differences are highlighted in Secs.~\ref{sec:intro} ~and~\ref{sec:related}.
    }
    \label{tab:comparison}
    {\setlength{\tabcolsep}{1.4mm}
    \small
    \begin{tabular}{p{.63\columnwidth}cccc@{}}
\toprule
 & (\textbf{F}) & (\textbf{T}) & (\textbf{O}) & (\textbf{E}) \\
\midrule
\textbf{Ours} & \checkmark & \checkmark & \checkmark & \checkmark \\
\citet{zheng2024truthful} & $\times$   & \checkmark &  &  \\
\citet{sim2020,sim2023} & \checkmark & $\times$ & \checkmark & \checkmark \\
\citet{chen2020truthful} & $\times$  & \checkmark & \checkmark &  \\
\citet{xu2021validation} & \checkmark & $\times$ &  & \checkmark \\
\citet{ghorbani2019data,jia2019towards,xu2021gradient,wu2024incentive} & \checkmark &  &  & \checkmark \\
\bottomrule
\end{tabular}
}
\vspace{-3mm}
\end{table}

This paper seeks to fill the gap and achieve both (\textbf{T}) \emph{truthfulness} and (\textbf{F}) \emph{collaborative fairness} simultaneously (Fig.~\ref{fig:overview}). 
We address \textbf{[I]} and ensure (\textbf{T}) by defining our DVF based on the log-likelihood of the validation set for \emph{Bayesian} models. %
Next, to achieve (\textbf{F}), we decide the reward of each source based on the \emph{semivalues} \cite{dubey1981value} of the cooperative game with all sources and the DVF as the characteristic function. %
We address \textbf{[II]} by proving the existence of a truthful equilibrium where every source maximizes its semivalue/reward by truthful data submission (on the condition that other sources are truthful).
Lastly, we address the following question: \textbf{[III] is it possible to incentivize \textbf{(T)} and \textbf{(F)} while simultaneously \textcolor{black}{considering} other incentives/constraints, e.g., when removing the need for a validation set?}
Our work \textcolor{black}{generalizes}
that of~\citet{zheng2024truthful} by addressing \textbf{[II]} and \textbf{[III]}. It also 
\textcolor{black}{goes beyond} 
existing data valuation methods based on validation accuracy \cite{ghorbani2019data,jia2019towards} and collaborative ML works that ensure (\textbf{F}) \cite{sim2020} by specifying \textcolor{black}{reasonable} assumptions required to theoretically guarantee (\textbf{T}).
Our contributions are given in Tab.~\ref{tab:comparison} and include:
\squishlisttwo
    \item Identifying that unknown information is necessary to prevent untruthfulness, a DVF and \textcolor{black}{reasonable} assumptions to theoretically guarantee (\textbf{T}) truthfulness (Sec.~\ref{sec:dv});
    \item Proposing the first mechanism that provably ensures both (\textbf{T}) truthfulness and (\textbf{F}) collaborative fairness at equilibrium (Sec.~\ref{sec:sv});
    \item Identifying the implications and suitable relaxations of (\textbf{T}) and (\textbf{F}) when the mediator lacks a validation set (Sec.~\ref{sec:simultaneous}); and
    \item Validating our theoretical findings regarding questions \textbf{[I-III]} on synthetic and real-world datasets with multiple sources
    (Sec.~\ref{sec:experiments}).
\squishend

\vspace{-1mm}
\section{Collaborative ML}\label{sec:problem}

Our problem setting considers a set $N$ of $n$ data sources collaborating to train an ML model for the same task (e.g., predicting heart disease in the general population). Each source $i \in N$ owns a \emph{true dataset} $\trueDi$ but may choose to share the dataset $D_i$ with the mediator during the collaboration.
We define the true dataset $\trueDi$ as the dataset that fully captures the knowledge of source $i$ beyond that of the mediator. For example, instead of the raw collected data, $\trueDi$ may have outliers removed/corrected and include data augmentations (e.g., image translations with the same label when training an image classification model).
For each coalition/subset of sources $C \subseteq N$, let $D_C \triangleq \bigcup_{i \in C} D_i$ denote their aggregated dataset.
Let $\mathfrak{D} \triangleq ({D}_i)_{i \in N}$ denote all submitted datasets.
As in prior works \cite{sim2020,chen2020truthful}, we consider the setting where a trusted mediator evaluates the value of the data of a source $i$, source $j$, or a coalition $C$ using a {common} \emph{truthful} data valuation function $v$ (Sec.~\ref{sec:dv}) via $v(D_i)$, $v(D_j)$, or $v(D_C)$, respectively. 
Based on the coalition values $(v(D_C))_{C \subseteq N}$, the mediator decides the \emph{fair} reward $r_i$ for each source $i \in N$ which still incentivizes truthful submission of $\trueDi$ (Fig.~\ref{fig:overview}).

Next, we provide working definitions of \textbf{(F)} collaborative fairness and \textbf{(T)} truthfulness, which we will refine in later sections.
To satisfy \textbf{(F)}, \emph{the data valuation function $v$ should assign a higher value to a dataset $D$ that leads to ML models with better performance (e.g., accuracy) on a validation set}. 
In some existing works, under the assumption of truthful sources, the valuation can also be based on quantities such as diversity of the dataset \cite{xu2021validation} and information gain on the model parameters \cite{sim2020} as these quantities have been shown to correlate with better model performance.
Let $\mathcal{A}$ be the learning algorithm that takes in a dataset and returns a trained ML model. To satisfy \textbf{(T)} for every source $i$, \emph{the data valuation function $v$ should assign a higher value to the true dataset (defined above) over datasets that result in different model parameters}. Formally, for any dataset $D_i$ that results in different parameters from $i$'s true dataset $\trueDi$ (i.e., $\mathcal{A}(D_i) \neq \mathcal{A}(\trueDi)$), the valuation (which depends on $\mathcal{A}$) should not increase: $v(D_i) \leq v(\trueDi)$.
We assume that each source can only submit one dataset for valuation and cannot untruthfully increase its reward by submitting multiple datasets as different sources. See App.~\ref{app:assumptions} for an in-depth justification of this and later assumptions.

\textbf{(F)} additionally requires that each source $i$'s reward $r_i$ depends on the value of the data contributed by other sources \cite{sim2022survey}.
A fair reward is often derived by modeling the collaboration as a cooperative game with $n$ players and a characteristic function $\nu^v_{\mathfrak{D}}$ that maps each coalition $C$ to the data value $v(D_C)$; the subscript $\mathfrak{D}$ captures the dependence on the submitted datasets.
Let $\mathfrak{D}' \triangleq (D'_i)_{i \in N}$ denote another possible combination of submitted datasets. 
For a data source $i$, we write $\nu^v_{\mathfrak{D}} \succeq_i \nu^v_{\mathfrak{D}'}$ to mean that $i$ is more valuable in $\mathfrak{D}$ and $i$'s contribution to every coalition is at least as large in $\mathfrak{D}$ as in $\mathfrak{D}'$. Formally, this requires $v(D_{C \cup \{i\}}) - v(D_C) \geq v(D'_{C \cup \{i\}}) - v(D'_C)$ for all $C \subseteq N \setminus \{i\}$. 
This can occur if $D_i$ in $\mathfrak{D}$ is more informative than $D'_i$ in $\mathfrak{D}'$ or if others' data $D'_C$ in $\mathfrak{D}'$ already contain substantial information about $D'_i$.
We define the reward values $(r_i)_{i \in N}$ to be \emph{fair} if they satisfy four conditions: 
\begin{enumerate}[label=F\arabic*,leftmargin=*,itemsep=1pt,topsep=0pt,parsep=0pt,partopsep=0pt,start=0]
\item \label{2f0} \textbf{Uselessness:} If source $i$ contributes no value to all coalitions, it should get no reward.
Formally, if $\nu^v_{\mathfrak{D}}(C \cup \{i\}) = \nu^v_{\mathfrak{D}}(C)$ for all $C \subseteq N \setminus \{i\}$, then $r_i = 0$.
\item \label{2f1} \textbf{Symmetry}: If sources $i$ and $j$ contribute equally to all coalitions, they should get equal rewards.
Formally, if $\nu^v_{\mathfrak{D}}(C \cup \{i\}) = \nu^v_{\mathfrak{D}}(C \cup \{j\})$ for all $C \subseteq N \setminus \{i, j\}$, then $r_i = r_j$.
\item \label{2f2} \textbf{Monotonicity}: If source $i$ provides at least as much value in $\mathfrak{D}$ as in $\mathfrak{D}'$, its reward $r_i$ based on $\mathfrak{D}$ should be at least as high as its reward $r'_i$ based on $\mathfrak{D}'$.
Formally, if $\nu^v_{\mathfrak{D}} \succeq_i \nu^v_{\mathfrak{D}'}$, then $r_i \geq r'_i$. 

\item \label{2f3} \textbf{Strict Monotonicity}: If source $i$ provides more value in $\mathfrak{D}$ than  $\mathfrak{D}'$, its reward $r_i$  based on $\mathfrak{D}$ should exceed its reward $r'_i$ based on $\mathfrak{D}'$. 
Formally, if $\nu^v_{\mathfrak{D}} \succeq_i \nu^v_{\mathfrak{D}'}$ and $\nu^v_{\mathfrak{D}} \npreceq_i \nu^v_{\mathfrak{D}'}$, then $r_i > r'_i$.
\end{enumerate}

\citet{sim2020,ghorbani2019data} and others use semivalues as reward values to satisfy these fair conditions. 
\begin{definition}\label{def:semivalue}
The \textbf{semivalue} \cite{dubey1981value} $\phi_i[\nu]$ of source $i$ is a weighted combination of its marginal contributions to various coalitions, with non-negative coefficients $(w_c)_{c=0}^{n-1}$ satisfying $\sum_{c=0}^{n-1} w_c \binom{n-1}{c}=1$, i.e., $\phi_i[\nu]\triangleq \sum_{C \subseteq N \setminus \{i\}} w_{|C|}[\nu(C \cup \{i\})-\nu(C)]$.
\end{definition}

Semivalues always satisfy \ref{2f0} to \ref{2f2} %
and fair semivalues also satisfy \ref{2f3} with $w_c > 0$ for all $c$ \cite{domenech2016probabilistic}. \emph{Fair} semivalues, such as the Shapley value \cite{shapley_value} which sets every $w_c \binom{n-1}{c} = 1/n$, are widely used in data valuation \cite{ghorbani2019data,kwon2021betashapley}. In App.~\ref{app:back:semivalues}, we elaborate on the fairness conditions (and their implications such as strict desirability) and give an illustrative numeric example.

\section{Data Valuation Functions}\label{sec:dv}

In this section, we discuss the \emph{data valuation function} (DVF) $v$, which maps a source's or coalition's data to a value that correlates with model performance.
DVFs are either validation-set-based or validation-set-free. Validation-set-based DVFs rely on a held-out validation set to evaluate model performance, while validation-set-free DVFs rely on proxies of performance (e.g., dataset diversity or information gain on the model parameters \cite{xu2021validation,sim2020}) assuming truthful sources.
In App.~\ref{app:strategies}, we describe strategies that an untruthful source $i$ can provably use to obtain a $D_i$ that artificially increases its value under different validation-set-free DVFs. For example, duplicating data can increase the information gain and volume. For validation-set-based DVFs with a known validation set,
\citet{zheng2024truthful} have also described how a source can discard some data to better align with the validation set to increase its data value/reward. 
To counter such strategies and incentivize truthfulness, we suggest that DVFs must depend on some information $\cO$ unknown to all sources, such as a held-out validation set.
Since $\cO$ is unknown, it should be modeled as a random variable for each source. Following the \emph{expected utility hypothesis} (a widely used concept in economics for analyzing decision-making under uncertainty) and \citet{chen2020truthful}, we define the truthfulness incentive (\textbf{T}) based on expected data value:

\begin{definition}[\itruthful]\label{def:truthful}
A function $\tau$, which depends on source $i$'s submitted dataset $D_i$ and the dependent random variables (RV) $\cO$ unknown to $i$, incentivizes $i$'s \emph{truthfulness} (is \itruthful) if given its true dataset $\trueDi$, any other submitted dataset $D_i$ does not increase $i$'s expected value. That is, 
\[
\E_{\cO|\trueDi} \left[\tau(D_i; \cO)\right] \leq \E_{\cO|\trueDi} \left[\tau(\trueDi; \cO)\right] . 
\]
The function $\tau$ incentivizes $i$'s \emph{strict truthfulness} (is \struthful) if the inequality is strict whenever the dataset $D_i$ results in different model parameters ($\mathcal{A}(D_i) \neq \mathcal{A}(\trueDi)$).
\end{definition}
Strict truthfulness is preferred as trivial DVFs (e.g., $\tau(D_i) = 0$) are \itruthful.
In practice, each source $i$ need not compute expectations, but must know that truthful submission maximizes its value on average.
Next, \cref{prop:log-scoring} specifies the assumptions and a DVF that satisfies \cref{def:truthful}.

\begin{proposition}\label{prop:log-scoring}
    Let $\cT$ denote the RV for the mediator's held-out validation set and $T^*$ be the actual held-out validation set unknown to sources.
    Let $\cD_i$ and $\cD_C \triangleq \bigcup_{i \in C} \cD_i$ denote the RVs representing source $i$'s dataset and coalition $C$'s dataset, respectively. Define
    \[ v(D_C) \triangleq \log p(\cT = T^*|\cD_C = D_C) - \log p(\cT = T^*) \ . \]
    Assume that all sources agree
    \begin{enumerate}[label=A\arabic*,leftmargin=*,itemsep=5pt,topsep=0pt,parsep=0pt,partopsep=0pt]
    \item \label{a1} on a Bayesian model $\mathcal{A}$ (e.g., Bayesian linear regression), the prior distribution of model parameters $p(\theta)$, the relationship between the validation set and the model parameters (i.e., $p(\cT|\theta)$), and the (possibly different) relationships between each source's dataset and the model parameters (e.g., $p(\cD_i|\theta)$), and
    \item \label{a2} the mediator's observed $T^*$ is drawn from $p(\cT|\theta)$.
    \end{enumerate}
    The value $v(D_i)$ of source $i$'s dataset is \struthful\ (\cref{def:truthful}) with $\cT$ as the dependent unknown RV.
    Formally,\[
\E_{\cT|\trueDi} \left[v(D_i)\right] \leq \E_{\cT|\trueDi} \left[v(\trueDi)\right] .
\]
    Moreover, if we additionally assume each source $i$
    \begin{enumerate}[resume, label=A\arabic*,leftmargin=*,itemsep=5pt,topsep=0pt,parsep=0pt,partopsep=0pt]
    \item \label{a3} believes that every other source $j$'s submitted dataset $D_j$ is drawn from $p(\cD_j|\theta)$,
    \end{enumerate}
    the value $v(D_C)$ of coalition $C\ni i$  is \struthful\ with $(\cT,\cD_{\Cwoi})$ as the dependent unknown RVs.
    Formally,
    \[
        \E_{\cT,\cD_{C \setminus \{i\}}|\trueDi} [v(D_C)] \leq \E_{\cT ,\cD_{C \setminus \{i\}}|\trueDi} [v(D_{C \setminus \{i\}} \cup \trueDi)]\ . 
    \]
    The inequalities above are strict whenever the dataset $D_i$ and true dataset $\trueDi$ result in different model parameters.
\end{proposition}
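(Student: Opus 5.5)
The plan is to reduce both inequalities to Gibbs' inequality for the logarithmic scoring rule, i.e.\ nonnegativity of a KL divergence. The term $-\log p(\cT=T^*)$ does not depend on the submitted data, so it cancels from both sides. Only the log predictive term $\log p(\cT=T^*|\cD=D)$ needs comparing.

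For the first claim, fix source $i$ with true dataset $\trueDi$. By \ref{a1} and \ref{a2}, source $i$ believes that $T^*$ is distributed as
\[ q(\cT) \triangleq p(\cT|\cD_i=\trueDi)=\int p(\cT|\theta)\,p(\theta|\trueDi)\,d\theta . \]
Submitting $D_i$ instead yields the predictive $p_{D_i}(\cT)\triangleq p(\cT|\cD_i=D_i)$. The expected-value gap is then
\[ \E_{\cT|\trueDi}[v(\trueDi)]-\E_{\cT|\trueDi}[v(D_i)]=\mathrm{KL}\big(q\,\|\,p_{D_i}\big)\ge 0 . \]
The agreement in \ref{a1} is what makes the expectation taken by source $i$ coincide with the predictive distribution the mediator uses for the valuation.

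For the coalition claim, let $S=\Cwoi$. By \ref{a3}, source $i$ models $\cD_S$ as drawn from $p(\cD_S|\theta)$, with $\theta$ following its posterior $p(\theta|\trueDi)$. I would condition on $\cD_S=D_S$ and use the tower rule
\[ \E_{\cT,\cD_S|\trueDi}[\cdot]=\E_{\cD_S|\trueDi}\,\E_{\cT|\cD_S,\trueDi}[\cdot] . \]
Conditionally on $D_S$, source $i$'s belief about $\cT$ is $p(\cT|\cD_S=D_S,\cD_i=\trueDi)$, and this is exactly the predictive that the mediator forms for $v(D_S\cup\trueDi)$. This identification needs conditional independence of the datasets and the validation set given $\theta$, which I would read into \ref{a1}. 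The mediator forms $p(\cT|D_S\cup D_i)$ for the alternative submission. Applying Gibbs' inequality pointwise in $D_S$ and then averaging gives the claim.

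For strictness, the KL divergence is zero only if the two predictive distributions coincide. I would argue that if $\mathcal{A}(D_i)\ne\mathcal{A}(\trueDi)$, meaning the posteriors over $\theta$ differ, then the predictives over $\cT$ differ. This requires an identifiability condition, namely injectivity of the map from posteriors to predictives via $p(\cT|\theta)$, which I would state as implicit in \ref{a1}. In the coalition case I also need the predictives to differ on a set of $D_S$ with positive probability, and I would argue this also follows from identifiability. I expect this step to be the main obstacle, since everything else is standard proper-scoring-rule reasoning.
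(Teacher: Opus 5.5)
Your proposal is correct and follows essentially the same route as the paper: both parts reduce the expected gap to a KL divergence between source $i$'s predictive $p(\cT|\trueDi)$ and the predictive induced by the submitted $D_i$. In the coalition case the same KL divergence is taken between $p(\cT|\trueDi, D_{\Cwoi})$ and $p(\cT|D_i, D_{\Cwoi})$ and averaged over $p(\cD_{\Cwoi}|\trueDi)$. You flag an identifiability condition for strictness (different posteriors must give different predictives over $\cT$, on a positive-probability set of $D_{\Cwoi}$ in the coalition case); the paper leaves this implicit and simply asserts the KL divergence is positive whenever the inferences about $\theta$ differ.
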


We prove Prop.~\ref{prop:log-scoring} in App.~\ref{app:proof} 
and justify why the assumptions (including the necessity of a validation set) are reasonable in App.~\ref{app:assumptions}.
While \citet{zheng2024truthful} have proven the first part of Prop.~\ref{prop:log-scoring} that the value of source $i$'s dataset is \struthful, we further prove that the value $v(D_C)$ of coalition $C$ is also \struthful\ by adding assumption \ref{a3}. Our result \textcolor{black}{suggests that source $i$ should still be truthful when there are other sources in the coalition contributing data and affecting source $i$'s data value/reward (for \textbf{(F)} in Sec.~\ref{sec:problem}).} 
Sec.~\ref{sec:experiments} empirically evaluates if \textbf{(T)} holds and the impact of varying validation set size and model misspecifications (such as misspecifying $p(\cT|\theta)$).
Our DVF $v$ satisfies several desirable properties in addition to \textbf{(T)}:
\squishlisttwo
\item $v$ is an \textbf{effective measure of model performance}. Let $\mathbf{X}^*$ and $\vy^*$ denote the input matrix and the output vector of the validation set. For discriminative models, which consider $\mathbf{X}^*$ as known and model $p(\mathcal{Y} = \vy^* | \theta, \mathbf{X}^*)$ instead of $p(\mathcal{T} = T^* | \theta)$, $v$ corresponds to a scaled and translated version of the log-likelihood (or log predictive density) of the validation set. The log predictive density is widely used to evaluate Bayesian models and quantify how well the posterior predictive distribution, $p(\mathcal{Y} | \mathbf{X}^*, \cD_i = D_i)$, aligns with the observed outputs $\vy^*$.\footnote{See App.~\ref{app:subsec:metrics} for additional details, e.g., formula for Gaussian distributions.} 
Submitting $D_i$ with noisy data or duplicated data is expected to degrade the log predictive density --- the predictive mean $\mathbb{E}[\mathcal{Y} | \mathbf{X}^*, \cD_i = D_i]$ may deviate further from $\vy^*$ or the predictive uncertainty decreases, respectively. The latter is undesirable as the Bayesian model may assign higher confidence to inaccurate predictions.
\item $v$ is easy to compute and numerically stable, with closed form expressions for exponential family models. For more complex models, $v$ can be \textbf{efficiently} approximated using Monte Carlo sampling or variational inference \cite{murphy2022probabilistic}, as implemented in probabilistic ML libraries \cite{phan2019numpyro}.The time complexity tends to be linear w.r.t. the number of sampling steps, the number of model parameters and the size of the dataset $D_C$. See~\cite{zheng2024truthful} for alternative efficient computation methods.
\item The expected value of $v(D_C)$ over all possible realizations of $D_C$ \textbf{represents the mutual information} $\mathbb{I}(\cD_C, \cT)$, that is, the expected reduction in uncertainty about the validation set $\cT$ due to $C$'s data. Due to the ``information-never-hurt'' property of entropy \cite{Cover91}, observing more data cannot decrease $\mathbb{I}(\cD_C, \cT)$. Thus, the DVF is expected to be  \textbf{monotonic}: When $C' \subseteq C$, $\E[v(D_C)] \geq \E[v(D_{C'})]$.
\item A source without data will have no value, i.e., $v(\emptyset) = 0$.
\squishend
\noindent
\emph{Remark.} Do other validation-set-based DVFs, such as accuracy, satisfy \cref{def:truthful} when the validation set is unknown? Validation accuracy may not be \struthful~because a source can perturb its data to result in different model parameters (e.g., add backdoor triggers to its data \cite{saha2020hidden}) as long as it does not change the most likely class. The study of non-Bayesian models is left to future work as it is harder to prove \cref{def:truthful} when their training involves an extra step of loss minimization rather than directly computing probabilities based on the model specifications.

\vspace{-1mm}
\section{Fair and Truthful Semivalues}\label{sec:sv}

In Sec.~\ref{sec:problem}, we have described how using the semivalues can ensure (\textbf{F}) collaborative fairness. In this section, we prove that semivalues $(\phi_i[\nu^v_{\mathfrak{D}}])_{i \in N}$ based on the above DVF are (\textbf{T}) \struthful\ too, which has \emph{not} been studied by existing works in Tab.~\ref{tab:char-func}.
Compared to $v(D_i)$, the semivalue $\phi_i[\nu^v_{\mathfrak{D}}]$ of source $i$ includes additional terms, such as $v(D_{C \cup \{i\}})$ and $-v(D_C)$, which depend on other sources' data. Thus, proving that the semivalue 
$\phi_i[\nu^v_{\mathfrak{D}}]$
satisfies \cref{def:truthful} and that its expected value is maximized by source $i$ submitting $\trueDi$ is non-trivial. 
Next, \cref{prop:truth-semi-eq} states that semivalues based on a DVF from \cref{prop:log-scoring} also preserve \cref{def:truthful}.

\begin{proposition}\label{prop:truth-semi-eq}
Let $\nu^v_{\mathfrak{D}, i \rightarrow \trueDi}$ denote the characteristic function computed using $i$'s true dataset $\trueDi$ and dataset $D_j$ for $j \neq i$.
Assuming~\ref{a1}-\ref{a3} and using DVF $v$ from \cref{prop:log-scoring}, the semivalue $\phi_i[\nu^v_{\mathfrak{D}}]$ for source $i$ is \struthful\ (\cref{def:truthful}) with $(\cT,\cD_{N \setminus \{i\}})$ as the dependent unknown RV. Formally, for any $D_i$, the expected semivalue from submitting $D_i$ is not greater than that from submitting $\trueDi$ with strict inequality when $D_i$ results in different model parameters:
\[
\E_{(\cT, \cD_{N \setminus \{i\}}) | \trueDi}\left[ \phi_i[\nu^v_{\mathfrak{D}}] \right]
 \leq \E_{(\cT, \cD_{N \setminus \{i\}}) | \trueDi}\left[ \phi_i[\nu^v_{\mathfrak{D}, i \rightarrow \trueDi} ]\right] .
\]
\end{proposition}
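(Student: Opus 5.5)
The plan is to use linearity of expectation together with the second part of \cref{prop:log-scoring}. By \cref{def:semivalue},
\[
\phi_i[\nu^v_{\mathfrak{D}}] = \sum_{C \subseteq N\setminus\{i\}} w_{|C|}\bigl[v(D_{C\cup\{i\}}) - v(D_C)\bigr].
\]
Taking the expectation over $(\cT,\cD_{N\setminus\{i\}})$ given $\trueDi$ splits it into a weighted sum of expectations of individual coalition values. Each term $v(D_C)$ with $i\notin C$ depends only on $\cT$ and on $D_j$ for $j\in C$. It therefore has the same expectation whether $i$ submits $D_i$ or $\trueDi$. The reason is that, under \ref{a3}, the conditional law of $(\cT,\cD_{N\setminus\{i\}})$ given $\trueDi$ depends only on the true dataset (through the posterior over $\theta$), not on what $i$ submits. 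Hence the negative terms cancel in the difference of the two sides.

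For the positive terms, the unknown variables not appearing in $v(D_{C\cup\{i\}})$ can be marginalised out. This gives
\[
\E_{(\cT,\cD_{N\setminus\{i\}})|\trueDi}\bigl[v(D_{C\cup\{i\}})\bigr]=\E_{(\cT,\cD_{C})|\trueDi}\bigl[v(D_{C}\cup D_i)\bigr].
\]
We then apply the coalition statement of \cref{prop:log-scoring} to the coalition $C\cup\{i\}\ni i$, which yields
\[
\E_{(\cT,\cD_{C})|\trueDi}\bigl[v(D_{C}\cup D_i)\bigr]\le\E_{(\cT,\cD_{C})|\trueDi}\bigl[v(D_{C}\cup \trueDi)\bigr].
\]
The case $C=\emptyset$ is covered by its first part. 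Multiplying by the nonnegative weights $w_{|C|}$ and summing over $C$ gives the claimed inequality, since the right-hand side sums exactly to the expected semivalue under $\nu^v_{\mathfrak{D}, i \rightarrow \trueDi}$.

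For strictness, suppose $\mathcal{A}(D_i)\ne\mathcal{A}(\trueDi)$. \cref{prop:log-scoring} then gives strict inequality for every coalition, so it suffices that some weight $w_c$ is positive. This holds because $\sum_c w_c\binom{n-1}{c}=1$ with $w_c\ge0$. A strictly positive weighted term plus non-strict terms gives a strict total.

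The main obstacle I expect is justifying the marginalisation and cancellation rigorously. In particular, one must check that the expectation in \cref{prop:log-scoring} over $(\cT,\cD_{C})$ coincides with the marginal of the joint expectation over $(\cT,\cD_{N\setminus\{i\}})$ given $\trueDi$. This requires that, under \ref{a1} and \ref{a3}, the $\cD_j$ and $\cT$ are jointly distributed via a common $\theta$ drawn from $p(\theta\mid\trueDi)$, independent of $i$'s submission. I would state this joint model explicitly before invoking the proposition termwise.
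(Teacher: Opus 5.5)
Your proposal is correct and follows essentially the same route as the paper's proof. Both expand the semivalue via \cref{def:semivalue}, use linearity of expectation so that the $-v(D_C)$ terms are unaffected by $i$'s submission, apply \cref{prop:log-scoring} term by term to $v(D_C\cup D_i)$ versus $v(D_C\cup\trueDi)$ with nonnegative weights, and get strictness from the coalitions with $w_{|C|}>0$; you are slightly more explicit than the paper in noting that some $w_c$ must be positive (since $\sum_c w_c\binom{n-1}{c}=1$) and in justifying the marginalisation.
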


We prove \cref{prop:truth-semi-eq} in App.~\ref{app:prove-truth-semi} using Def.~\ref{def:semivalue} and \cref{prop:log-scoring}. 
\cref{prop:truth-semi-eq} implies that a truthful Nash equilibrium exists. When source $i$ knows that every other source $j \in N \setminus \{i\}$ submits its true dataset $\bar{D}_j$ (and agrees that it is drawn from $p(\cD_j | \theta)$), source $i$ expects that submitting $\trueDi$ maximizes its semivalue $\phi_i[\nu^v_{\mathfrak{D}}]$. \textbf{Thus, source $i$ has no incentive to deviate unilaterally from truthful submission.}
Moreover, when the mediator uses the Shapley value, no better equilibrium exists: By the group rationality property of the Shapley value \cite{jia2019towards}, the total reward $\sum_{i \in N} \phi_i[\nu^v_{\mathfrak{D}}]$ equals $v(D_N)$. Applying Prop.~\ref{prop:log-scoring} to the aggregated dataset $D_N$, the total expected reward $\E[v(D_N)]$ is maximized when the submitted $\mathfrak{D}$ leads to the same inference about the model parameters as the true datasets $(\trueDi)_{i \in N}$. As other equilibria yield less total rewards and lack the natural symmetric Schelling point of honesty, they are difficult for sources to coordinate on \cite{dorner2023honesty} and \textbf{all sources will prefer the easier truthful Nash equilibrium, justifying assumption~\ref{a3}.}
Together, \cref{prop:log-scoring} and \cref{prop:truth-semi-eq} provably ensure (\textbf{F}) collaborative fairness and incentivize (\textbf{T}) truthfulness at equilibrium for Bayesian models.

\noindent
\emph{Remark on semivalue weights' impact on (\textbf{T}) and (\textbf{F}).} (i) When $w_0 = 1, w_{c \neq 0} = 0$, the semivalue $\phi_i[\nu^v_{\mathfrak{D}}] = v(D_i)$ always incentivizes (\textbf{T}) regardless of what other sources submit. However, these weights do not satisfy the constraint required for \ref{2f3} and (\textbf{F}). Conversely, (ii) when $w_c > 0$ for all $c$ instead, the semivalues ensure (\textbf{F}), but ensure (\textbf{T}) \emph{only} under assumption~\ref{a3}.
Setting larger weights for smaller coalitions (such as $w_0$) may reduce the influence of other potentially untruthful sources. The impact on fairness and truthfulness is demonstrated in Fig.~\ref{fig:untruthful-vary-weights}.

\noindent
\emph{Remark on efficiency.} Computing the semivalue $\phi_i$ exactly requires evaluation of $2^n$ coalitions, which may be costly for large $n$. In our setting, however, the number of sources is typically small (e.g., a few hospitals). When $n$ is larger, efficient unbiased estimators $\hat{\phi}_i$ using only $O(n \log n)$ samples \cite{jia2019towards,kolpaczki2024approximating,li2024semiapprox} can be used. As the expected value of the unbiased estimator 
equals the exact semivalue $\phi_i[\nu^v_{\mathfrak{D}}]$, such estimators are also fair and \struthful: the expected value of $\hat{\phi}_i[\nu^v_{\mathfrak{D}}]$ (over samples and the unknown RV) is no greater than that of
$\hat{\phi}_i[\nu^v_{\mathfrak{D}, i \rightarrow \trueDi} ]$. In Tab.~\ref{tab:20parties}, we empirically verify this on $20$ sources using unbiased estimators.

In data valuation \cite{ghorbani2019data,sim2022survey}, we often care about the ranking of the semivalues instead. For example, the mediator may only use data from sources with the largest semivalues.
\cref{prop:rank} (proven in App.~\ref{app:proof-rank}) explains why if source $i$'s expected semivalue is less than $k$'s when it submits $\trueDi$, \textbf{source $i$ cannot improve its semivalue ranking (over $k$) by submitting an alternative dataset} as it may increase the expected shortfall instead.

\begin{proposition}\label{prop:rank}
From each source $i$'s perspective, the expected decrease in its own semivalue from submitting alternative dataset $D_i$ instead of $\trueDi$ is not smaller than the decrease in the semivalue of any other source $k \neq i$: 
\begin{align*}
     & \E_{(\cT, \cD_{N \setminus \{i\}}) | \trueDi}\left[
\phi_i[\nu^v_{\mathfrak{D}, i \rightarrow \trueDi} ] - \phi_i[\nu^v_{\mathfrak{D}}]\right] \\ \geq \  & \E_{(\cT, \cD_{N \setminus \{i\}}) | \trueDi}\left[ \phi_k[ \nu^v_{\mathfrak{D}, i \rightarrow \trueDi} ] - \phi_k[\nu^v_{\mathfrak{D}}] \right].
\end{align*}
\end{proposition}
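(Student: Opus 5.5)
The plan is to reduce the claim to \cref{prop:log-scoring} using the standard pairwise-difference identity for semivalues. We work with the difference of semivalues rather than with each one separately. Rearranging the inequality in \cref{prop:rank}, it is equivalent to
\[
\E_{(\cT, \cD_{N \setminus \{i\}}) | \trueDi}\Big[\big(\phi_i - \phi_k\big)[\nu^v_{\mathfrak{D}, i \rightarrow \trueDi}] - \big(\phi_i - \phi_k\big)[\nu^v_{\mathfrak{D}}]\Big] \geq 0 .
\]
So it suffices to write $\phi_i[\nu]-\phi_k[\nu]$ in a form where the dependence on $D_i$ is isolated.

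\textbf{Step 1 (pairwise identity).} Using \cref{def:semivalue}, split the coalitions $C \subseteq N\setminus\{i\}$ according to whether they contain $k$. Writing $S \subseteq N \setminus\{i,k\}$ with $s=|S|$, this gives
\[
\phi_i[\nu] = \sum_{S} \Big( w_s[\nu(S\cup\{i\})-\nu(S)] + w_{s+1}[\nu(S\cup\{i,k\})-\nu(S\cup\{k\})]\Big).
\]
The symmetric expression holds for $\phi_k[\nu]$. Subtracting, the terms $\nu(S)$ and $\nu(S\cup\{i,k\})$ cancel, and we obtain
\[
\phi_i[\nu]-\phi_k[\nu] = \sum_{S \subseteq N\setminus\{i,k\}} (w_s + w_{s+1})\,\big[\nu(S\cup\{i\}) - \nu(S\cup\{k\})\big].
\]
The coefficients $w_s+w_{s+1}$ are non-negative because the semivalue weights are.

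\textbf{Step 2 (isolate $D_i$).} Apply the identity to both $\nu^v_{\mathfrak{D}}$ and $\nu^v_{\mathfrak{D}, i \rightarrow \trueDi}$. The terms $\nu(S\cup\{k\}) = v(D_{S\cup\{k\}})$ do not involve source $i$'s dataset, so they are identical in the two games and cancel. The left-hand side above therefore becomes
\[
\sum_{S \subseteq N\setminus\{i,k\}} (w_s+w_{s+1})\,\E_{(\cT, \cD_{N \setminus \{i\}}) | \trueDi}\big[v(D_S \cup \trueDi) - v(D_S\cup D_i)\big].
\]

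\textbf{Step 3 (invoke \cref{prop:log-scoring}).} For each $S$, the random variables $\cD_{N\setminus(S\cup\{i\})}$ do not appear in the integrand. Marginalizing them out leaves an expectation over $(\cT,\cD_S)\mid\trueDi$. By the coalition part of \cref{prop:log-scoring} (applied with $C=S\cup\{i\}$ under \ref{a1}--\ref{a3}), this expectation is non-negative. For $S=\emptyset$ we use the first part of \cref{prop:log-scoring} instead. Since each summand is a non-negative weight times a non-negative expectation, the sum is non-negative, which proves the claim.

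The only real work is the bookkeeping in Step 1, in particular getting the coefficients $w_s+w_{s+1}$ right and checking that the cross terms cancel. The marginalization in Step 3 is routine. A strict version of the inequality would additionally require $w_s+w_{s+1}>0$ for some $S$ where the expectation is strictly positive, but the statement only asks for the weak inequality.
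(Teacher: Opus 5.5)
Your proof is correct and follows essentially the paper's route: both split coalitions by whether they contain the other source, then apply \cref{prop:log-scoring} coalition by coalition with $C=S\cup\{i\}$. Your pairwise identity makes explicit a step the paper only gestures at when it says the change in $\phi_k$ ``consists of both positive and negative weights'': the gap between the two expected changes equals $\sum_{S\subseteq N\setminus\{i,k\}}(w_{|S|}+w_{|S|+1})\,\E\bigl[v(D_S\cup\trueDi)-v(D_S\cup D_i)\bigr]\geq 0$, so your write-up is the more rigorous version of the same argument.
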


\section{Other Incentives or Constraints}\label{sec:simultaneous}

\subsection{Limited Budget}\label{sec:budget}

We first consider the constraint where the mediator has a limited reward budget to distribute among data sources: When a mediator gives \textit{monetary} rewards, the budget is often limited (e.g., profits generated by the ML model) \cite{jia2019towards,chen2020truthful}; when a mediator rewards each source \textit{with an ML model}, the model performance (and the value of data from all sources) is inherently bounded \cite{sim2020}. To ensure the \emph{feasibility} or \emph{budget balance} incentive, can we transform each source's semivalue $\phi_i[\nu^v_{\mathfrak{D}}]$ so that the reward $r_i$ does not exceed the budget $B$, i.e., $r_i \leq B$? 
A natural solution is to scale the semivalue by a factor $a$, i.e., $r_i \triangleq \phi_i[\nu^v_{\mathfrak{D}}]/a$.

When the factor $a$ depends on source $i$'s submission (e.g., the RV $A \triangleq \max_{k \in N} \phi_k[\nu^v_{\mathfrak{D}}]$), it is infeasible to prove that $r_i$ is \itruthful\ because $\phi_i[\nu^v_{\mathfrak{D}}]/A$ becomes non-linear w.r.t. $v(D_{C \cup \{i\}})$ and cannot be simplified (e.g., $\E[\phi_i[\nu^v_{\mathfrak{D}}]/A]$ is not bounded by $\E[\phi_i[\nu^v_{\mathfrak{D}}]]/\E[A]$).
Thus, to provably guarantee that %
$r_i$ is \itruthful, the mediator must decide the factor $a$ in $r_i$ \emph{beforehand}, for example, by using historical data and discussion with sources.
As the proposed DVF in Sec.~\ref{sec:dv} is unbounded, the scaled semivalue $\phi_i[\nu^v_{\mathfrak{D}}]/a$ might exceed $B$. This is corrected by using a bounded
\emph{feasible} reward value $r_i \triangleq \min(\phi_i[\nu^v_{\mathfrak{D}}]/a, B)$ that is \struthful\ only when $\phi_i[\nu^v_{\mathfrak{D}}]/a \leq B$, and \itruthful\ otherwise. 
In the otherwise case, source $i$ can report any dataset that results in different model parameters as long as its semivalue exceeds $aB$.

\subsection{Removing Validation Set}\label{sec:no-test-set}

In some privacy-sensitive applications, the mediator may find it hard to procure a validation set~\cite{sim2023}.
Is it possible to ensure collaborative fairness \emph{and} ensure truthful submission maximizes expected rewards \emph{without} a held-out validation set?
As the DVF must still depend on other information unknown to source $i$ to incentivize truthfulness (Sec.~\ref{sec:dv}), each source $i$'s data must be evaluated using other sources' data as the validation set.

In Sec.~\ref{sec:problem}, the definition of (\textbf{F}) collaborative fairness considers a \textbf{common} DVF that correlates with model performance and reward values that satisfy four fairness conditions.
\citet{chen2020truthful} incentivize strict truthfulness of source $i$ by directly rewarding with
$v^{-i}(D_i) \triangleq \log p( \cD_{N \setminus \{i\}} = D_{N \setminus \{i\}}|\cD_i = D_i) - \log p( \allowbreak \cD_{N \setminus \{i\}}= D_{N \setminus \{i\}})$.
This approach does not ensure (\textbf{F}) since each source $i \in N$'s reward (a) is decided by a different DVF (e.g., $v^{-i}(D) \neq  v^{-j}(D)$) and (b) does not depend on how the dataset $D_{N \setminus \{i\}}$ is partitioned among the other sources. 
(a) matters as the difference in data values/rewards should reflect the differences from the dataset $D$ instead of from the DVF.
(b) matters as the partition influences whether source $i$ provides as much value in $\mathfrak{D}$ as in $\mathfrak{D}'$ in every coalition, so setting the same reward $v^{-i}(D_i)$ in both cases may violate fairness condition \ref{2f3}.

To correct (a), we use the same set of multiple DVFs to decide the rewards of all sources. To correct (b), we decide the reward values based on semivalues and use each DVF to evaluate the data of all coalitions that can be formed by $n$ sources. As the validation set and training data should be disjoint, for each source $j$, we randomly split its data into a validation set $T_j$ and a remaining dataset $D_j$ (e.g., $50\%$ each). We consider $n$ cooperative games where, in the $j$-th game, $T_j$ serves as the validation set with $\cT_j$ denoting the RV. The valuation is based on the remaining $\mathfrak{D} = ({D}_j)_{j \in N}$. The characteristic function $\nu^{T_j}_{\mathfrak{D}}$ is defined as
\[
    \nu^{T_j}_{\mathfrak{D}}(C) \triangleq 
    \log p(\cT_j = T_j | \cD_C = D_C)\, - \log p(\cT_j = T_j)\ .
\]
By our earlier results (Prop. \ref{prop:truth-semi-eq}), for any source $i \neq j$, truthfully submitting $\trueDi$ maximizes its expected data values and semivalue. Specifically, for any $C \ni i$,
\begin{align*}
    \E_{\cD_{N \setminus \{i\}}, \cT_j | \trueDi}\left[\nu^{T_j}_{\mathfrak{D}}(C)\right] & \leq \E_{\cD_{N \setminus \{i\}}, \cT_j | \trueDi}\left[\nu^{T_j}_{\mathfrak{D},i \rightarrow \trueDi}(C)\right]\ , \\
    \E_{\cD_{N \setminus \{i\}}, \cT_j | \trueDi}\left[\phi_i[\nu^{T_j}_{\mathfrak{D}}]\right]  & \leq \E_{\cD_{N \setminus \{i\}}, \cT_j | \trueDi}\left[\phi_i[\nu^{T_j}_{\mathfrak{D},i \rightarrow \trueDi}]\right]\ . 
\end{align*}
At first glance, we can correct (a) and (b) by setting the reward value as $\grave{r}_i \triangleq \sum_{j \in N} \phi_i[\nu^{T_j}_{\mathfrak{D}}]$ where source $i$'s data is also evaluated on its own split validation set. However, \cref{prop:truth-semi-eq} does not hold for $j = i$ and our experiments (e.g., Fig.~\ref{fig:nogt-shapley-25-include}a)
show that (\textbf{T}) truthfulness no longer holds as each source $i$ can artificially increase its own value $\nu^{T_i}_{\mathfrak{D}}(D_i)$, semivalue $\phi_i[\nu^{T_i}_{\mathfrak{D}}]$, and the reward $\grave{r}_i$ by ensuring that its remaining dataset $D_i$ predicts its split validation set $T_i$ well.

Thus, to ensure (\textbf{T}) truthfulness, 
the reward value of source $i$ must exclude $\phi_i[\nu^{T_i}_{\mathfrak{D}}]$. More precisely, we set the reward value for source $i$ as $\breve{r}_i \triangleq \sum_{j \in N \setminus \{i\}} \phi_i[\nu^{T_j}_{\mathfrak{D}}]$ (see Fig.~\ref{fig:no-validation-overview}) which corrects (b) and \textcolor{black}{largely corrects} (a) (as $\breve{r}_i$ and $\breve{r}_j$ depend on 
the same overlapping subset of DVFs based on the other $n-2$ sources).
The following conditions hold:
\ref{2f0}, 
(i)~\textbf{Modified Symmetry:} For two sources $i$ and $j$ who contribute equally to every coalition under every validation set (i.e., $\nu^{T_j}_{\mathfrak{D}}(C \cup \{i\}) = \nu^{T_j}_{\mathfrak{D}}(C \cup \{j\})$ for all $C \subseteq N \setminus \{i, j\}$), they receive equal rewards $\breve{r}_i = \breve{r}_j$ only if $\phi_i[\nu^{T_j}_{\mathfrak{D}}] = \phi_j[\nu^{T_i}_{\mathfrak{D}}]$. This may occur when both sources have identical datasets and the mediator uses the same random seed to split them into validation and remaining datasets.
(ii) \textbf{monotonicity} (\ref{2f2}) and (iii) \textbf{strict monotonicity} (\ref{2f3}) 
still hold based on the new validation sets. For example, source $i$ should receive a higher reward $r_i$ if its data provides at least as much value in $\mathfrak{D}$ as in $\mathfrak{D}'$ when evaluated on every other source's validation set, i.e., $(
\forall j
\neq i\ 
\nu^{T_j}_{\mathfrak{D}} \succeq_i {\nu}^{T_j}_{\mathfrak{D}'}) \implies \breve{r}_i \geq \breve{r}'_i$.
(ii) and (iii) are non-trivial as they are not satisfied by existing solution \cite{chen2020truthful}.
The relaxation in (i) is needed as while setting the reward value as $\grave{r}_i$ or a constant would satisfy \ref{2f1}, they are not \struthful. In App.~\ref{app:alt-remove}, we consider strictly enforcing (\textbf{F}) and explain why (\textbf{T}) may not hold.

\vspace{-1mm}
\section{Experiments}\label{sec:experiments}

This section empirically validates  
\textbf{[I]} whether our DVF incentivizes truthfulness under different validation sets,  
\textbf{[II]} whether the corresponding semivalues incentivize truthfulness and collaborative fairness, and  
\textbf{[III]} how these incentives are impacted when the mediator has limited budget or lacks a validation set.
As comparisons with DVFs that are provably untruthful (e.g., information gain, volume; see App.~\ref{app:strategies}) or unfair by design (Sec.~\ref{sec:no-test-set}) are less meaningful,
we compare against the closest theoretical baseline \cite{zheng2024truthful} in Fig.~\ref{fig:honest-indiv-zheng}. We also do not claim to outperform accuracy- or similarity-based DVFs without theoretical truthfulness guarantees \cite{ghorbani2019data,xu2021gradient}.

When truthfulness is incentivized, source $0$ submitting a dataset that fully captures its knowledge should lead to higher value and reward over alternative strategies that degrade model performance or artificially inflate knowledge. We compare the [\textbf{T}] \underline{t}ruthful strategy with five untruthful strategies: [\textbf{S}] sharing only a \underline{s}ubset of data ($50\%$), [\textbf{N}] adding \underline{n}oise to outputs $\vy$ (mislabeling a fraction of the output labels in classification datasets; adding Gaussian noise to outputs of regression datasets), [\textbf{D}] \underline{d}uplicating the dataset (concatenating $3$ copies), [\textbf{I}] \underline{i}njecting mislabeled synthetic data outside the observed input domain, or [\textbf{P}] adding Gaussian noise to each feature in each in\underline{p}ut.
We evaluate four Bayesian models and datasets with $3$ sources:\footnote{
\citet{sim2020} also use $3$ sources to demonstrate the validity. More experiments (e.g., with $9$/$20$ sources) and experimental details are given in App.~\ref{app:exp}.
}

\squishlisttwo
    \item \textbf{\underline{G}aussian \underline{p}rocess regression with the \underline{Fr}iedman dataset (GP-FR):} We generate $400, 300$, and $300$ data points for the respective sources $0$, $1$, and $2$ using the synthetic Friedman function with standard Gaussian noise: $\vy = 10\sin(\pi \mathbf{X}_{[:,0]} \mathbf{X}_{[:,1]}) + 20 (\mathbf{X}_{[:,2]} - 0.5)^2 + 10\mathbf{X}_{[:,3]} +5\mathbf{X}_{[:,4]} + 0\mathbf{X}_{[:,5]} + \mathcal{N}(0, 1)\ .$
    Each input in $\mathbf{X}$ has $6$ features independently drawn from the standard uniform distribution. We standardize the output $\vy$ and train a GP model with a squared exponential kernel and different lengthscales for each feature. 
    
    \item \textbf{Bayesian binary \underline{lo}gistic regression with the \underline{He}art disease dataset (LO-HE)} \cite{bharti2021heartprediction}: 
    Healthcare firms aim to predict whether a patient has heart disease based on features like resting blood pressure. After preprocessing (outlier removal and standardization), we split $990$ data points among sources $0$-$2$ and the mediator's validation set in the ratio $[.3, .2, .2, .3]$.  

    \item \textbf{Bayesian \underline{n}eural \underline{n}etwork regression with the \underline{Cy}cle Power Plant dataset (NN-CY)} \cite{combined_cycle_power_plant}: Power plants aim to predict net electrical energy output based on four features, such as temperature and ambient pressure. The dataset contains $9{,}568$ points, of which $25\%$ are reserved for validation. The remaining data is split among sources $0$-$2$ in the ratio $[.4, .3, .3]$. We train a Bayesian neural network with one ReLU hidden layer of $100$ units. The output is perturbed with Gaussian noise (mean $0$, variance drawn from an inverse-gamma prior with shape and scale parameters $(3, 1)$).  

        \item \textbf{Bayesian multinomial \underline{lo}gistic regression with the \underline{Bl}ood MNIST dataset (LO-BL)} \cite{medmnistv1} in App.~\ref{app:exp}: We extract image embeddings from the BloodMNIST dataset using a pretrained ResNet-18 model and select the top $50$ features via principal component analysis (PCA). Healthcare firms seek to classify blood cells into eight types. After preprocessing, there are $10{,}927$ and $3{,}007$ images in the training and validation sets, respectively.
        Sources $0$-$2$ have different cell types distribution; for example, source $0$ has fewer class $0$ samples.
\squishend

In all non-GP models, we use a standard Gaussian prior for the weights and estimate the data value using posterior samples generated via the No-U-Turn Sampler (NUTS) \cite{hoffman2014no}, running four chains with $1{,}000$ burn-in and $2{,}000$ sampling iterations per chain. NUTS makes no assumptions about the posterior distribution, avoids extra approximations to compute the validation set log-likelihood, and provides asymptotically exact estimates.

\vspace{-1mm}
\subsection{DVF}

\begin{figure}
    \centering
    \begin{tabular}{@{}c@{}c@{}c@{}}
        \includegraphics[width=0.33\linewidth]{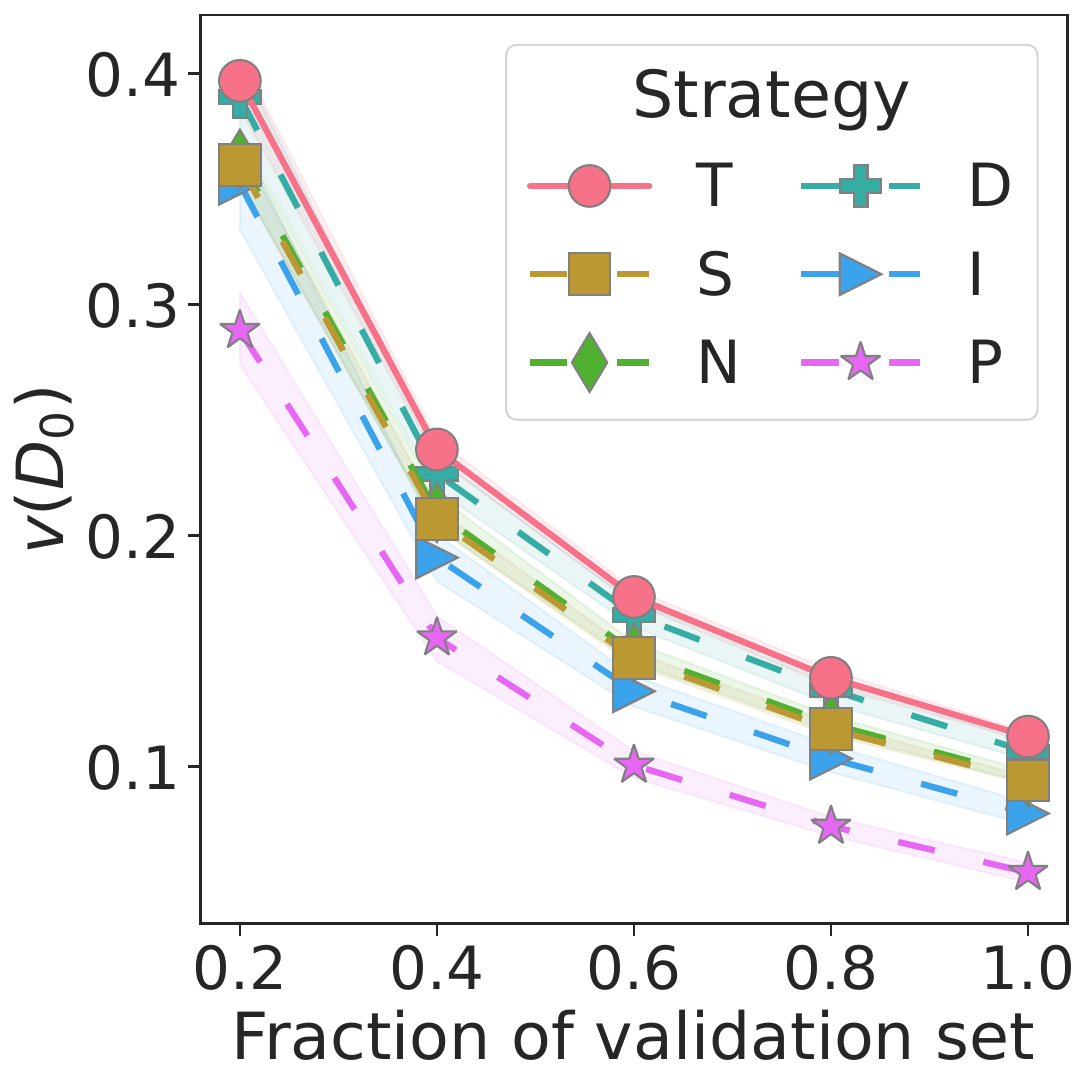} &
        \includegraphics[width=0.33\linewidth]{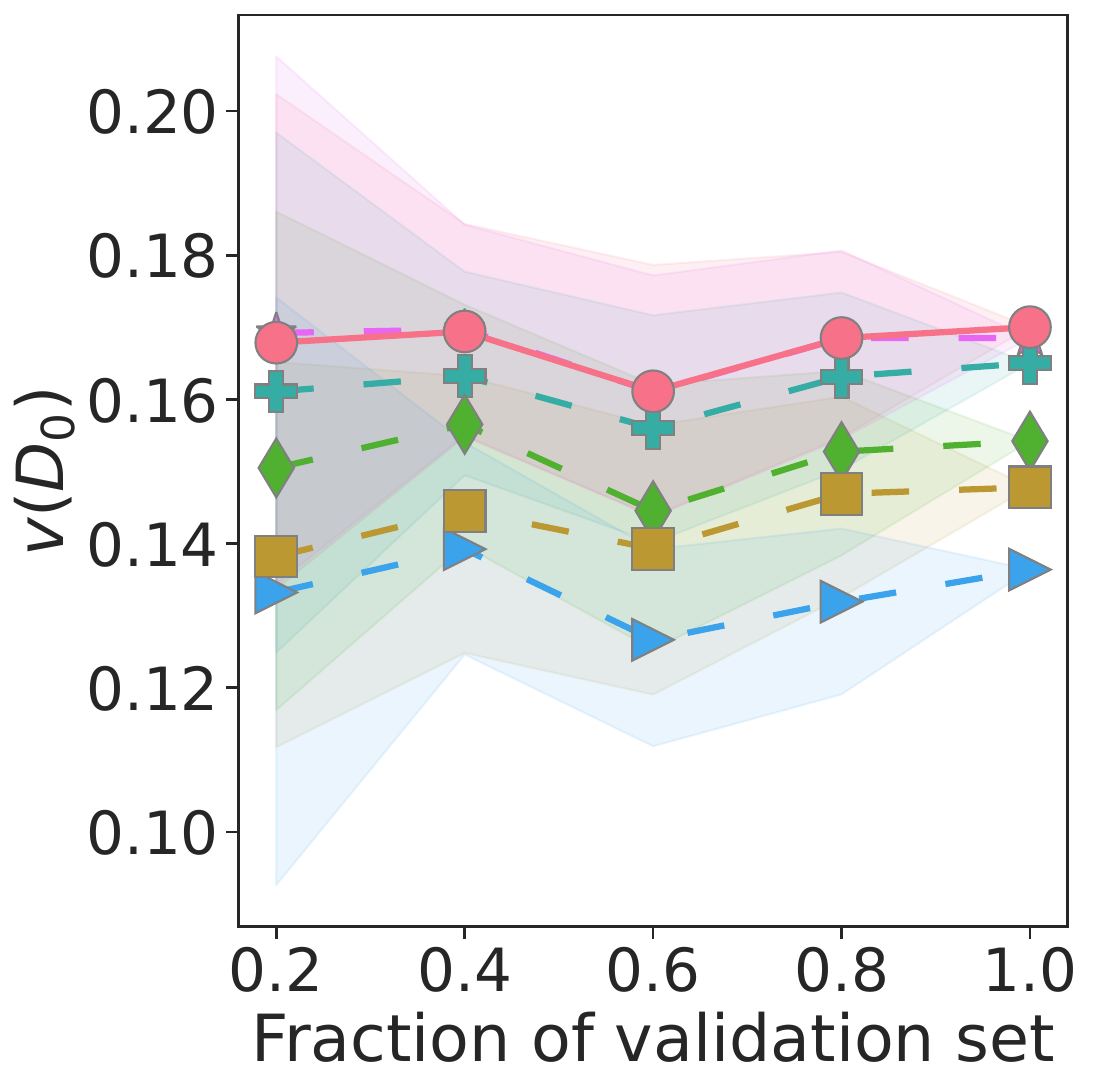} &
        \includegraphics[width=0.33\linewidth]{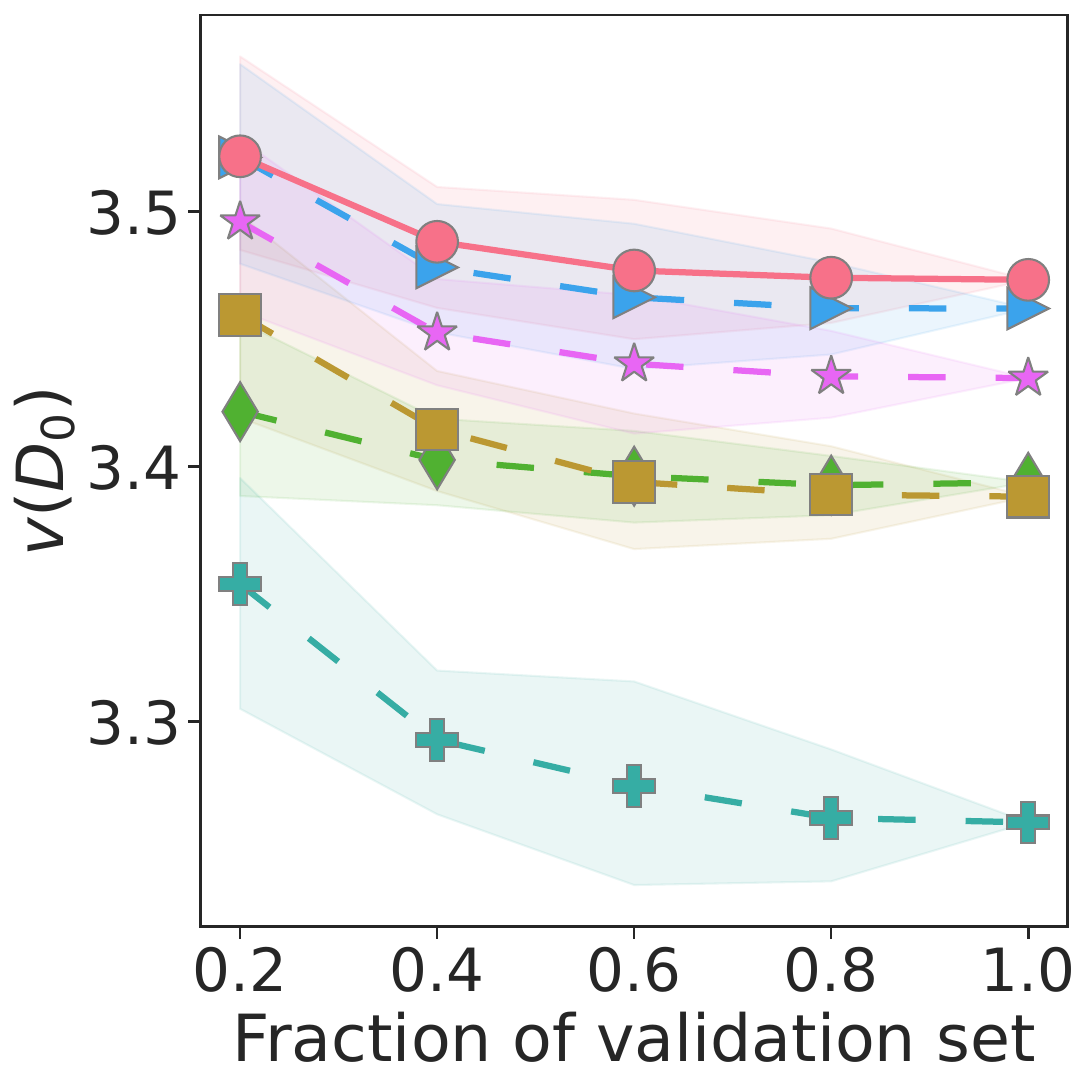} \\
        {\small (a) (GP-FR)} & {\small (b) (LO-HE)} & {\small (c) (NN-CY)} \\ 
        \includegraphics[width=0.33\linewidth]{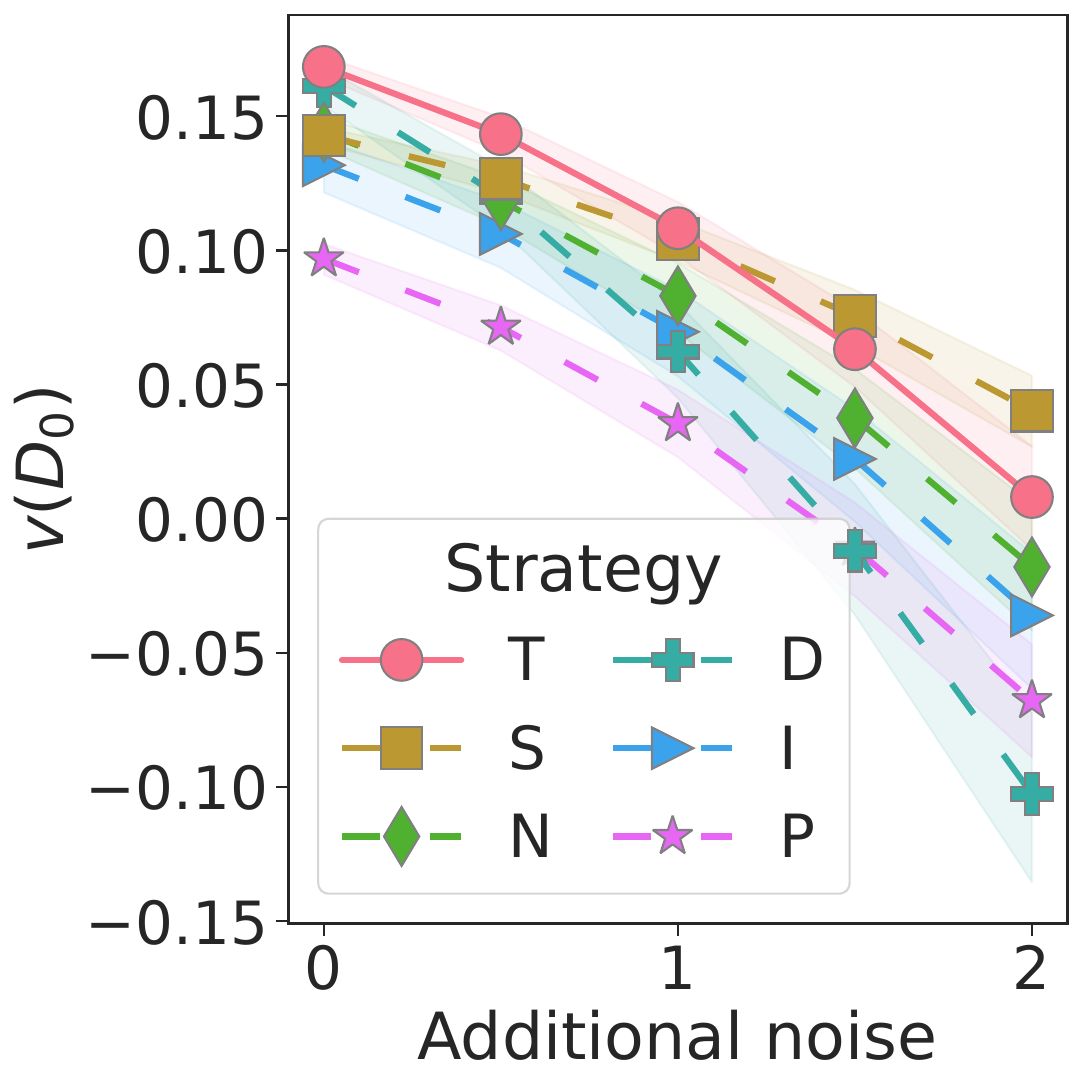} &
        \includegraphics[width=0.33\linewidth]{supp_figs/heart-Fraction-of-validation-set.pdf} &
        \includegraphics[width=0.33\linewidth]{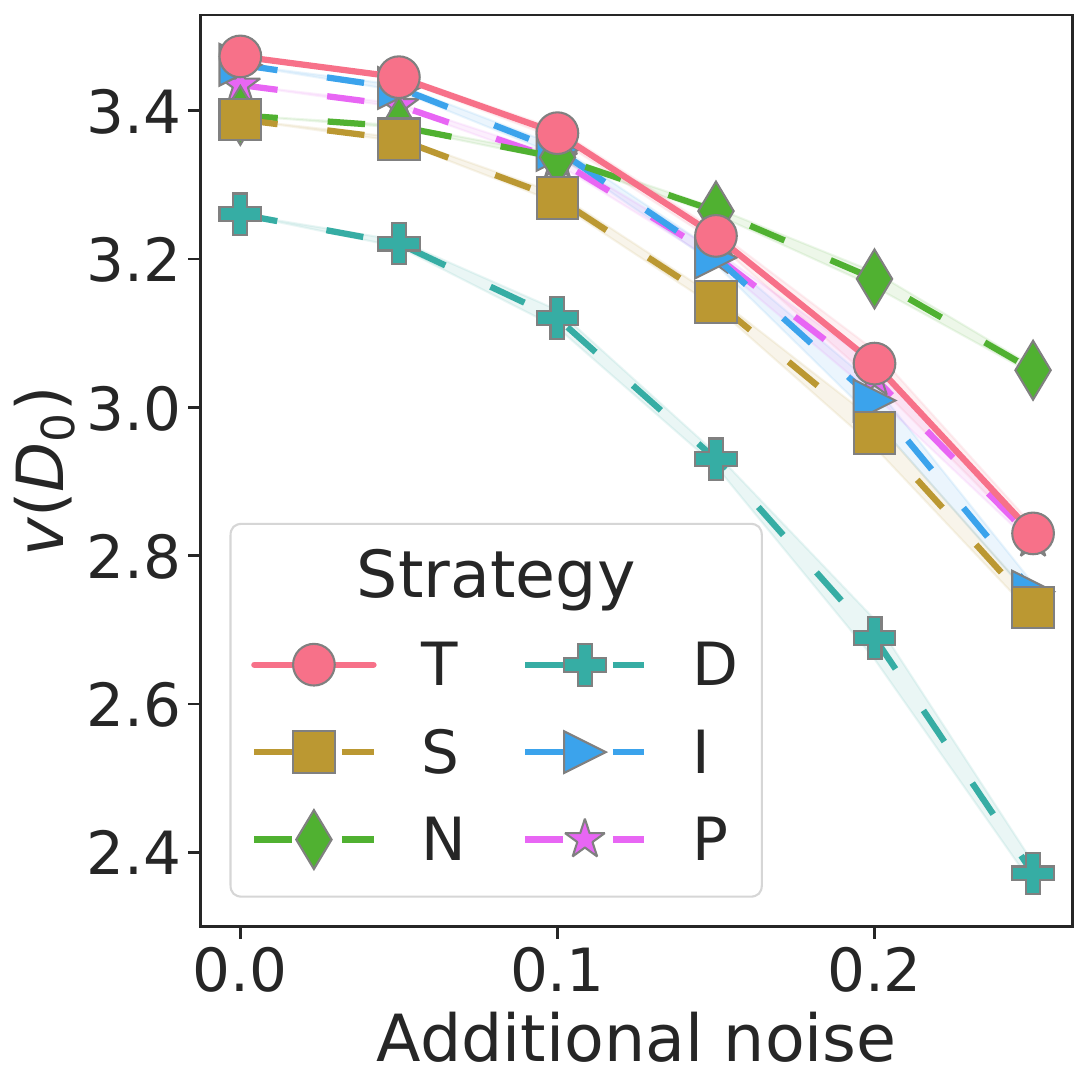} \\
          {\small (d) (GP-FR)} & {\small (e) (LO-HE)} & {\small (f) (NN-CY)}
    \end{tabular}
     \vspace{-1mm}
    \caption{Graphs of value of source $0$ (with $95\%$ CI shaded across 20 sets) under its different strategies when evaluated on validation sets of (a-c) increasing sizes and (d-f) with increasing noise added to the outputs for various datasets. }
    \label{fig:main-dvfs}
    \vspace{-3mm}
\end{figure}

In Fig.~\ref{fig:main-dvfs}a-c, we plot the value $v(D_0)$ of source $0$ when evaluated on validation sets of different sizes. Across all datasets, it can be observed that source $0$'s strategy T consistently leads to its highest value compared to other untruthful strategies, regardless of the validation set size. Larger validation sets result in smaller variance in the data value.

Next, we empirically evaluate if truthfulness is still incentivized under Bayesian model misspecifications.
In Fig.~\ref{fig:main-dvfs}d-f, we plot the value $v(D_0)$ of source $0$ with increasing levels of noise added to the validation set outputs. Across all datasets, source $0$'s strategy T achieves its highest value compared to other untruthful strategies when the noise is low. However, as noise increases and the validation set likelihood $p(\cT|\theta)$ in \ref{a2} becomes more misspecified, $v(D_0)$ may not be empirically \struthful\ for source $0$. Untruthful strategies, such as submitting a subset (S in Fig.~\ref{fig:main-dvfs}d) or a noisy dataset (N in Fig.~\ref{fig:main-dvfs}e-f), may result in a higher log-likelihood of the noisy validation set than strategy T as they produce a higher predictive uncertainty.
In App.~\ref{app:exp:dvf}, it can be similarly observed that under minor misspecifications of $p(\cT|\theta)$ (by generating the validation set with modified parameters in the Friedman function) and $p(\cD_1|\theta)$, or a validation set with different input distribution, $v(D_0)$ and $v(D_{\{0,1\}})$ are still \struthful\ for source $0$. 
Thus, truthfulness still holds under minor violations of assumptions \ref{a2} and \ref{a3}.

\vspace{-1mm}
\subsection{Truthful Semivalues}\label{sec:exp-semi}

\begin{figure}
        \centering
        \begin{tabular}{@{}c@{}c@{}c@{}c@{}}
            \includegraphics[width=0.33\linewidth]{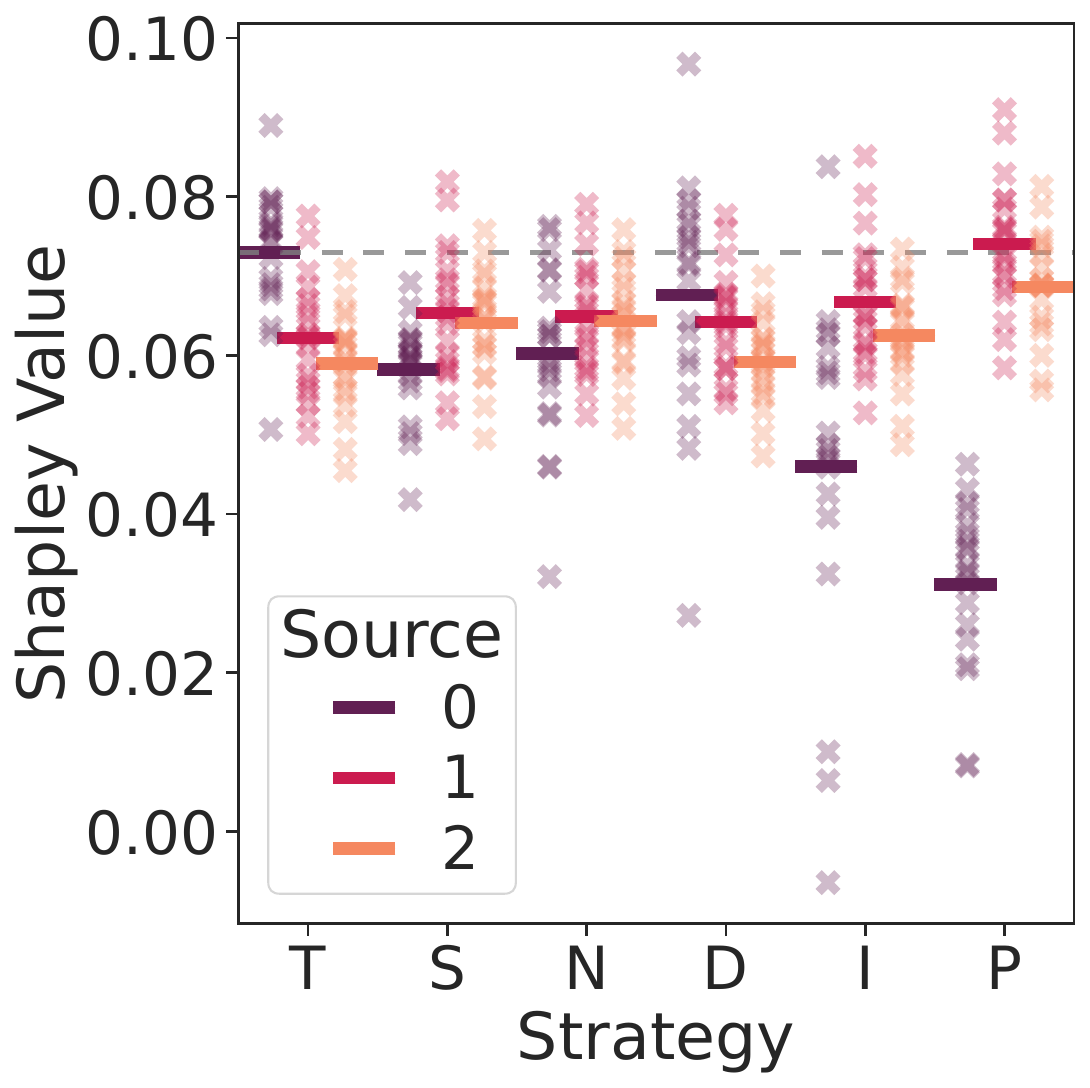} &
            \includegraphics[width=0.33\linewidth]{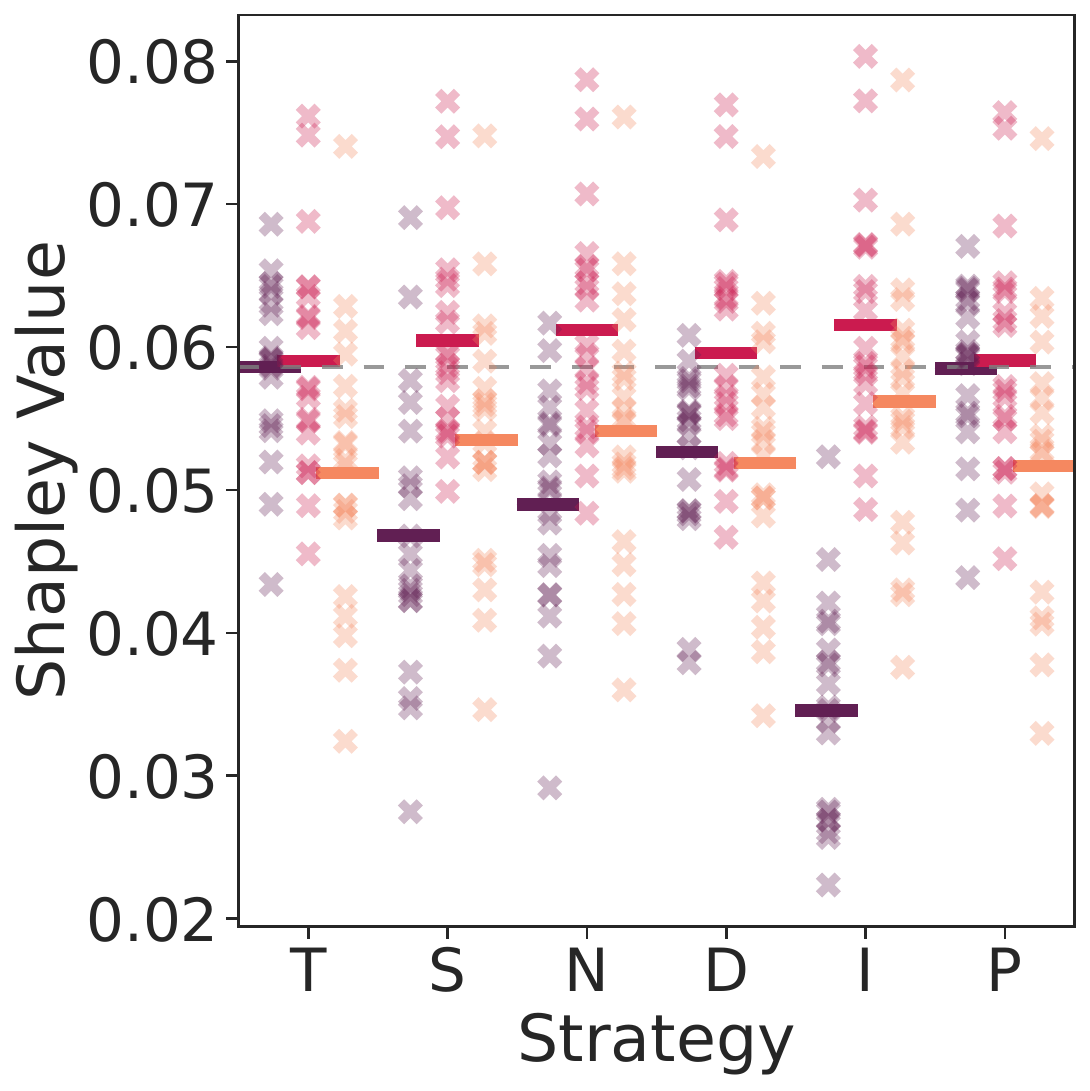} &
            \includegraphics[width=0.33\linewidth]{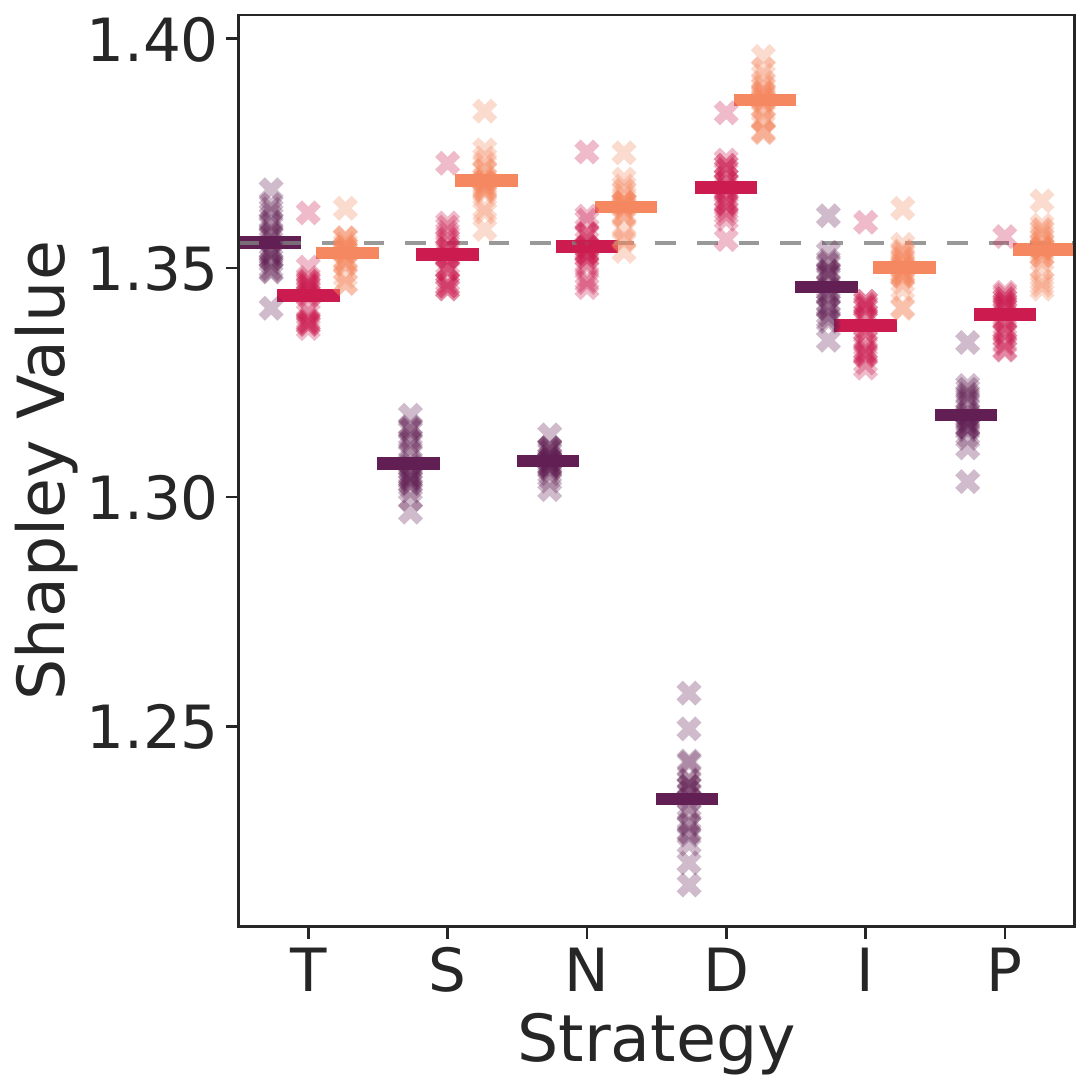} \\
            {\small (a) (GP-FR)} & {\small (b) (LO-HE)} &
            {\small (c) (NN-CY)}
        \end{tabular}
        \vspace{-1mm}
        \caption{Graphs of all sources' Shapley values (mean across $20$ validation sets plotted as straight bars) when source $0$ uses different strategies for various datasets (a)–(c).}
        \label{fig:honest-shapley}
        \vspace{-3mm}
\end{figure}

In Fig.~\ref{fig:honest-shapley}, we consider source $0$ using various strategies while all other sources submit true datasets. We report the Shapley values computed for $20$ different subsets of the validation set.
Across all datasets, source $0$'s strategy T consistently leads to its highest Shapley value. Moreover, collaborative fairness is evident: The Shapley values of other sources are higher when source $0$ submits a less valuable untruthful dataset (e.g., with strategies S and N) than their Shapley values when source $0$ is truthful. This reflects that other sources' data become more informative and important for accurate predictions when source $0$ is untruthful. 
In App.~\ref{app:exp:semi}, 
Fig.~\ref{fig:untruthful-vary-weights} shows that with larger weight on smaller coalitions, there is {less fairness} as sources $1$'s and $2$'s rewards become less influenced by source $0$'s strategies. Even when source $1$ untruthfully mislabels part of its dataset $D_1$, source $0$'s semivalue is still usually maximized by its strategy T.

\subsection{Other Constraints}\label{sec:exp-nogt-semi}
Firstly, we consider the scenario with a \textbf{limited budget}. We reuse Fig.~\ref{fig:honest-shapley}a and consider $a=1$, $B = .06$ (Sec.~\ref{sec:budget}), and $r_i = \min(\phi_0[\nu^v_\mathfrak{D}],.06)$. As strategy D results in $\phi_0[\nu^v_\mathfrak{D}] > .06$, it can also result in the same maximum reward $.06$ for source $0$ as strategy T. Thus,  $r_i$ is only \itruthful.

Next, when the mediator \textbf{does not have a validation set}, the mediator partitions each source's dataset into training and validation sets in a $75\%{-}25\%$ split and computes the semivalue of each source on all resulting validation sets, as described in Sec.~\ref{sec:no-test-set}. The reward value for each source $i$ is computed as $\breve{r}_i = \sum_{j \in N \setminus \{i\}} \phi_i[\nu^{T_j}_{\mathfrak{D}}]$, making it independent of its own validation set.
In Fig.~\ref{fig:nogt-shapley}, we consider source $0$ using various strategies when other sources submit their true datasets. We report the reward values $\breve{r}_i$ computed for $10$ different training and validation set splits.
Across all datasets, it can be observed that source $0$'s strategy T consistently leads to its highest reward value.

\begin{figure}[t]
        \centering
        \begin{tabular}{@{}c@{}c@{}c@{}}
            \includegraphics[width=0.33\linewidth]{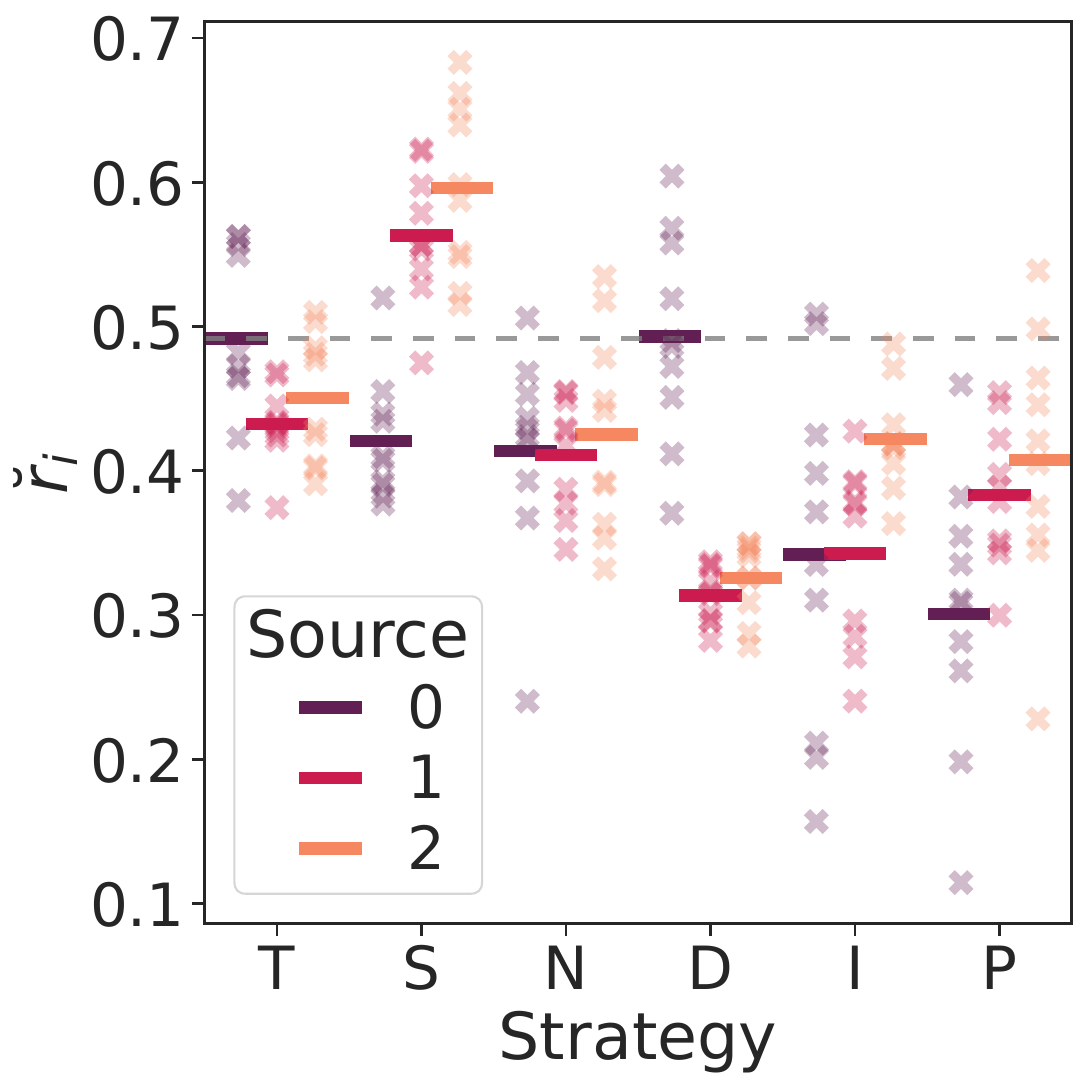} &
            \includegraphics[width=0.33\linewidth]{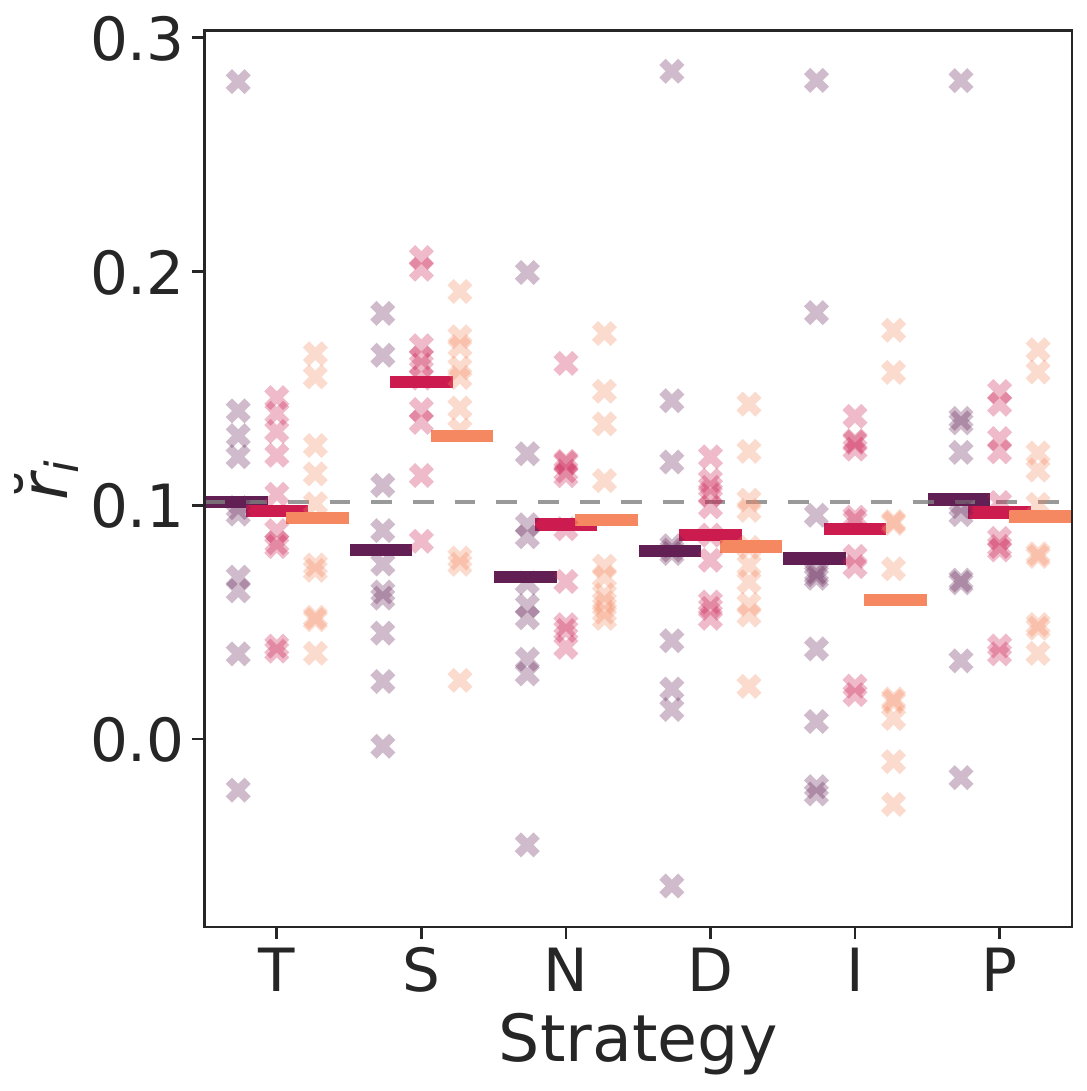} &
            \includegraphics[width=0.33\linewidth]{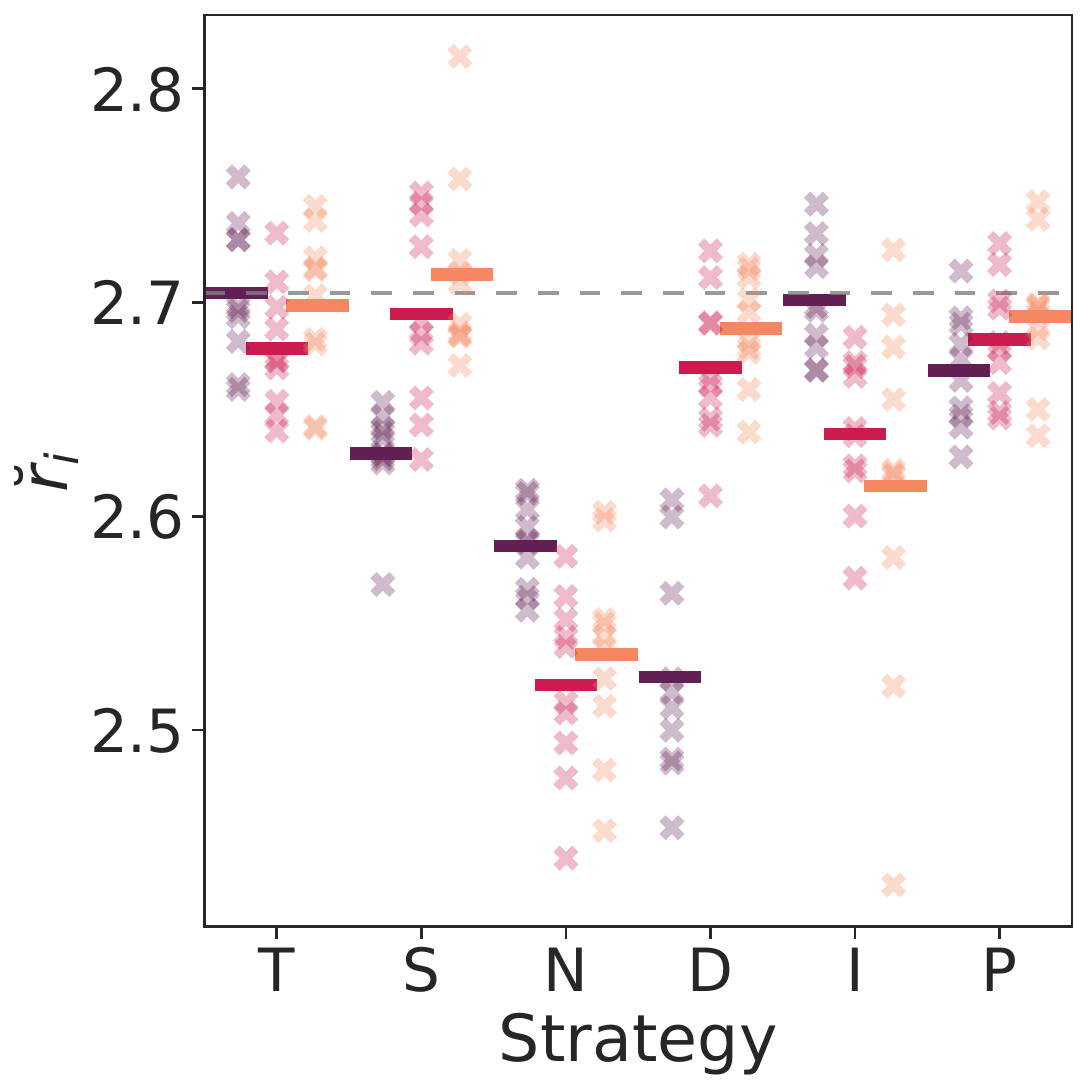} \\
            {\small (a) (GP-FR)} & {\small (b) (LO-HE)} &
            {\small (c) (NN-CY)}
        \end{tabular}
         \vspace{-1mm}
        \caption{Graphs of all sources' reward values $\breve{r}_i$ (mean across $10$ training and validation splits plotted as straight bars) when source $0$ uses different strategies for various datasets (a)–(c).}
        \vspace{-3mm}
        \label{fig:nogt-shapley}
\end{figure}

Unlike in Sec.~\ref{sec:exp-semi}, the reward values of other sources do not always increase when source $0$ uses an untruthful strategy. 
In Fig.~\ref{fig:nogt-shapley}a-c, source $0$'s strategy I decreases the reward of sources $1$ and $2$ compared to source $0$'s strategy T.
This difference arises because 
the log-likelihood of observing source $0$'s untruthful validation set $T_0$ may be significantly lower than that of observing $\bar{T}_0$. Thus, for any source $j \neq 0$, $\phi_j[\nu^{T_0}_{\mathfrak{D}}] < \phi_j[\nu^{\bar{T}_0}_{\mathfrak{D},0 \rightarrow \bar{D}_0}]$, leading to a much lower $\breve{
r}_j$ when source $0$ submits an untruthful $D_0$.
This observation does not violate fairness --- the change in the validation set may violate the premise of (ii) monotonicity. 
App.~\ref{app:exp:remove} investigates another $50\%{-}50\%$ train-validation set split and shows that if source $i$'s reward includes $\phi_i[\nu^{T_i}_{\mathfrak{D}}]$, $i$ can use untruthful strategies to artificially increase its reward and make its remaining dataset predict its validation set well. 

\vspace{-1.5mm}
\section{Related Works}\label{sec:related}

\noindent
\textbf{Collaborative Fairness (F).}
Many existing works ensure \textbf{(F)} by deciding reward values using the Shapley value or other semivalues. \citet{ghorbani2019data,jia2019towards,kwon2021betashapley} have proposed validation accuracy as the DVF, while \citet{xu2021validation,sim2020,sim2023} have proposed validation-set-free DVFs, such as dataset diversity/volume, information gain, or Bayesian surprise the data elicits about the model parameters.\footnote{App.~\ref{app:difference} discusses why incentivization mechanisms in federated learning are less suitable comparisons.} 
But, these works do not incentivize or verify the truthfulness of data submitted. Validation-set-free DVFs may incentivize untruthful submissions as sources can increase their rewards with low-effort strategies like data duplication (App.~\ref{app:strategies}). In contrast, we propose a DVF that satisfies our truthfulness definition and specify the necessary assumptions (Sec.~\ref{sec:dv}).

\noindent
\textbf{Incentivizing Truthful Data Submission.}
\citet{faltings2022game} give an overview of game-theoretic mechanisms for eliciting accurate information. In this work, we focus on mechanisms to elicit true data from sources jointly training an ML model for a common prediction task with a shared ground truth (e.g., heart disease in a population).
A mechanism is \emph{truthful} if sources expect the highest reward by reporting the data they believe are correct.
Peer consistency mechanisms~\cite{miller2005eliciting,jurca2011incentivest,faltings2017peer, dorner2023honesty, chen2020truthful} evaluate a source based on the agreement between its submission and those of randomly selected peers. 
but may fail to ensure (\textbf{F}) as rewards are not based on the same DVFs and all sources' data values. To address this, we assume access to a ground truth validation set, as in \citet{richardson2020budget,zheng2024truthful}.
\citet{zheng2024truthful} also use Bayesian models and pointwise mutual information between each source’s data and the mediator’s held-out validation set. Our DVF in Prop.~\ref{prop:log-scoring} additionally considers each \emph{coalition's} data value as it is required for computing semivalues.
Unlike these prior works, our work ensure (\textbf{F}) conditions (Sec.~\ref{sec:problem}) and prove the existence of a truthful equilibrium where every source maximizes its semivalue reward (Sec.~\ref{sec:sv}).

\section{Conclusion}\label{sec:conclusion}
This paper proposes the first mechanism that provably ensures \emph{both} collaborative fairness and incentivizes truthfulness, and comprehensively evaluates the mechanism on multiple datasets. 
The novelty lies in identifying the reasonable assumptions and analysis needed for the proof and suitable prior works to build upon --- works that achieved truthfulness or fairness alone, such as pointwise mutual information \cite{zheng2024truthful}.
Apps.~\ref{app:limits} and~\ref{app:q}, respectively, address some limitations and some questions a reader may have.

\clearpage

\section*{Impact Statement}

This paper presents work whose goal is to advance the field of 
Machine Learning. There are many potential societal consequences 
of our work, none of which we feel must be specifically highlighted here.

\section*{Acknowledgments}

This research is supported by the National Research Foundation Singapore and the Singapore Ministry of Digital Development and Innovation, National AI Group under the AI Visiting Professorship Programme (award number AIVP-$2024$-$001$). Jue Fan and Xiao Tian are supported by Agency for Science, Technology and Research
(A*STAR) Graduate Academy. 
The authors would also like to thank the anonymous reviewers and AC for their helpful feedback.

\bibliography{refs}
\bibliographystyle{icml2026}

\newpage
\onecolumn
\appendix
\section{Assumptions}\label{app:assumptions}
\begin{assumption}[No cloning]\label{ass:one}
    Each source can only submit \emph{one} dataset for valuation. Thus, a source cannot increase its reward by submitting multiple datasets as different sources.
\end{assumption}

\cref{ass:one} is realistic as the source's identity can be verified (e.g., through a business registration ID) and has been similarly assumed by \citet{chen2020truthful, zheng2024truthful}. Additionally, other works have addressed the replication robustness issue alone. For example, \citet{Han2022} propose a robust variant of the Shapley value which ensures that the total reward of a source from cloning and multiple submissions is lower than the reward without cloning.

\begin{assumption}[Agreement on probabilistic models]\label{ass:agree}
    All sources agree on (or trust the mediator to decide) a Bayesian model $\mathcal{A}$ (e.g., Bayesian logistic regression), the prior model parameters $p(\theta)$, the relationship between the validation set and the model parameters (i.e., $p(\cO|\theta)$), and the (possibly different) relationships between each source's dataset and the model parameters (e.g., $p(\cD_i|\theta)$).
\end{assumption}

We focus on Bayesian models as they naturally produce probabilistic predictions that account for both prior knowledge and the observed data. Such probabilistic predictions are essential for \cref{def:truthful} and applying proper scoring rules (see App.~\ref{app:back:scoring}). The focus is not overly restrictive as Bayesian models can model complex relationships. In App.~\ref{app:back:bayesian}, we provide a background on Bayesian models.

\cref{ass:agree} is essential as the mediator uses these quantities to compute the posterior of the model parameters and the validation set (i.e., $p(\cT = T^*|\cD_i = D_i)$) during data valuation.

\cref{ass:agree} is reasonable as the sources can discuss and agree on the model specifications and relationships based on historical, public, or similar data at the start of the collaboration. The sources can also use majority voting to select the best common prior belief. Furthermore, our approach is consistent with other existing Bayesian data valuation methods \citep{sim2020,chen2020truthful}, which also assume a known and agreed-upon prior.

Notably, \cref{ass:agree} does \emph{not} require all sources' datasets to share the same relationship with the model parameters. For example, in a Bayesian regression model with weights $\theta$ to predict housing prices,  source $i$ can inform the mediator if its output (housing prices) has a larger noise variance (in $p(y_i|\theta, X_i)$) or differs from others by a constant due to different pricing strategies (e.g., $y_i = X_i \theta + \text{constant}$). Hence, source $i$ does not need to alter its dataset to match the dataset distribution of others.

\begin{assumption}[Validation Set]\label{ass:valid}
    All sources believe that the mediator's held-out validation set $T^*$ is drawn from $p(\cT|\theta)$.
\end{assumption}
\cref{ass:valid} is needed so that each source would hypothetically consider $p(\cT|\theta)$ when computing the expectation in the truthfulness definition. If a source suspects that $T^*$ was from another random/noisy distribution instead, it may not truthfully submit its data.

\cref{ass:valid} is sensible as the mediator can be trusted to obtain a validation set from actual evaluations and to suggest an appropriate model.
Moreover, the mediator can use outlier detection techniques, such as residual analysis \cite{murphy2022probabilistic}, to remove data that do not fit the model specifications and are unlikely to be from $p(\cT|\theta)$.
Using a validation set is a common practice in data valuation works \citep{jia2019towards,ghorbani2019data}, and it serves as a foundation for ensuring truthfulness \citep{richardson2020budget}.

To our knowledge, works that remove the ground truth assumption can only incentivize and ensure truthfulness in limited settings (e.g., multiple independent and homogeneous tasks, agents’ signals on a task being conditionally independent given the ground truth, discrete predictions, or knowing the misclassification rate for the noisy ground truth in binary classification \citep{liu2023surrogate}). Thus, removing this validation set assumption is highly non-trivial.

\begin{assumption}[Other Sources]\label{ass:others}
    Every source $i$ believes that every other source $j$'s submitted dataset $D_j$ is drawn from $p(\cD_j|\theta)$.
\end{assumption}

\cref{ass:others} can be satisfied if every source believes that every other source $j$ is truthful and submits its true dataset $\bar{D}_j$ and believes that $\bar{D}_j$ is drawn from $p(\cD_j|\theta)$.

\cref{ass:others} is used to show the existence of a truthful equilibrium --- when other sources are truthful, it is optimal for source $i$ to also be truthful. This assumption is reasonable, as sources may believe that converging to a truthful equilibrium is easier. It is commonly used in existing works \citep{chen2020truthful,liu2020elicit,richardson2020budget,dorner2023honesty}.

Additionally, in practice, a source $j$ can improve its true dataset $\bar{D}_j$ to align with its model specifications $p(\cD_j|\theta)$ by removing outliers from its raw data. This can be done using outlier detection techniques, such as residual analysis \cite{murphy2022probabilistic}.

\section{Background}

\subsection{Collaborative Machine Learning, Federated Learning and Data Valuation}\label{app:difference}

In this section, we highlight the differences between collaborative ML, federated learning, and data valuation.

Data valuation is concerned with how much each source's data contributes to an ML model performance. A source's data can first be independently valued using a performance metric (e.g., accuracy) before being valued relative to the data contributed by others (e.g., semivalues) \cite{sim2022survey}. Intuitively, a source should receive a higher monetary reward (or a more valuable model as a reward) for contributing more valuable data. For fairness, a source's reward should depend on others' contributions (e.g., lower if their data is redundant due to similarity to others) \cite{sim2022survey}.

\textbf{Our work involves data valuation, but as we consider the collaborative ML setting described by \cite{sim2020}, the goal is to elicit larger truthful datasets from \emph{a few} sources (e.g., companies, hospitals) instead of datum from individual owners.}
Thus, we do not compare against works \cite{jia2019towards,ghorbani2019data} that consider a crowd-sourcing or interpretable ML setting where a value is assigned to each datum. For those works, scalability may pose a greater challenge.

\textbf{Collaborative ML (CML) is a more abstract and thus more general setting than federated learning (FL).} 

In FL \cite{mcmahan2017seminalFL}, a model is trained on data from multiple sources without centralizing them. Instead, sources (\emph{clients}) share only updates to improve the model. As the data-processing inequality guarantees that updates cannot contain more information than the raw training data, FL can partially allay sources' privacy concerns. Additionally, FL algorithms also seek to reduce communication costs and address data heterogeneity across sources \cite{mcmahan2017seminalFL}. \emph{Federated averaging} is a popular FL algorithm for training neural networks. In each round, every selected source trains its local neural network by running a few steps of gradient descent on its own data and sends the parameters update to the server. The server aggregates the updates from all selected sources. In the next round, the server broadcasts the aggregated parameters to selected sources for updating their local neural networks.

In contrast, in CML, a model can be trained on data from multiple sources in other ways that may involve centralization. For example, each source can share sufficient statistics \cite{caragea2004distributedss,sim2023} or raw data via secure data sharing platforms (that prevent unauthorized access) such as X-road \cite{sim2020}. Motivating examples are given in Sec.~\ref{sec:intro}.
In CML, one research area focuses on \emph{incentives that sources need to share data (App.~\ref{app:interaction}) and ways to achieve these incentives}. As any source would have incurred a nontrivial cost to collect its data, they would not altruistically share their data with others. For example, a firm with significant data may be concerned about losing its competitive edge if competitors get the same collaboratively trained model. These sources will be motivated to share their data when given enough incentives and rewards \cite{sim2020}. Works that incentivize CML fall into two categories: (i) monetary rewards or (ii) non-monetary rewards such as ML models and data derivatives. 
We briefly overview CML incentivization works that use non-monetary rewards (ii), which we further break down to (ii.a) and (ii.b). In (ii.a), such as \citet{sim2020}, the mediator decides the target reward values (satisfying incentives) and generates model rewards to realize them (e.g., by adding noise to the model parameters). In (ii.b), the mediator decides some quantity that correlates with model quality instead of directly controlling the reward values.
For example, in each round of FL, \citet{xu2021gradient} fairly decide the number of non-zero components in the gradient rewards. \citet{lin2023fair} fairly decide the probability that source $i$ is selected to receive the model updates in each round. \citet{wu2024incentive} fairly decide the proportions of local model updates that each source receives in each round of FL. (We also note that these works assume and may not incentivize truthfulness. For instance, a source can increase its cosine gradient Shapley value \cite{xu2021gradient} by only submitting data of the majority class and removing its unique data of minority class.)

\textbf{Comparison.} This work focuses on (i), as deciding monetary reward values is more straightforward and facilitates theoretical analysis. Deciding how to give out model rewards in (ii) may lead to a conflict in incentives, especially with strict truthfulness (see Sec.~\ref{sec:budget} for a discussion about the issue with limited budget or models with maximum attainable performance).
We also focus on \textbf{CML rather than FL}, as decentralized and iterative settings affect the valuation and model rewards in (ii) and make strict truthfulness harder to guarantee. Thus, we consider our CML setting in Sec.~\ref{sec:problem} as a first step and leave truthfulness in the FL setting as future work. We compare only with works that value data (to decide reward values) \cite{sim2020}, instead of works that focus the FL setting \cite{xu2021gradient,lin2023fair,wu2024incentive}.

\begin{figure}[t]
    \centering
    \includegraphics[width=.6\linewidth]{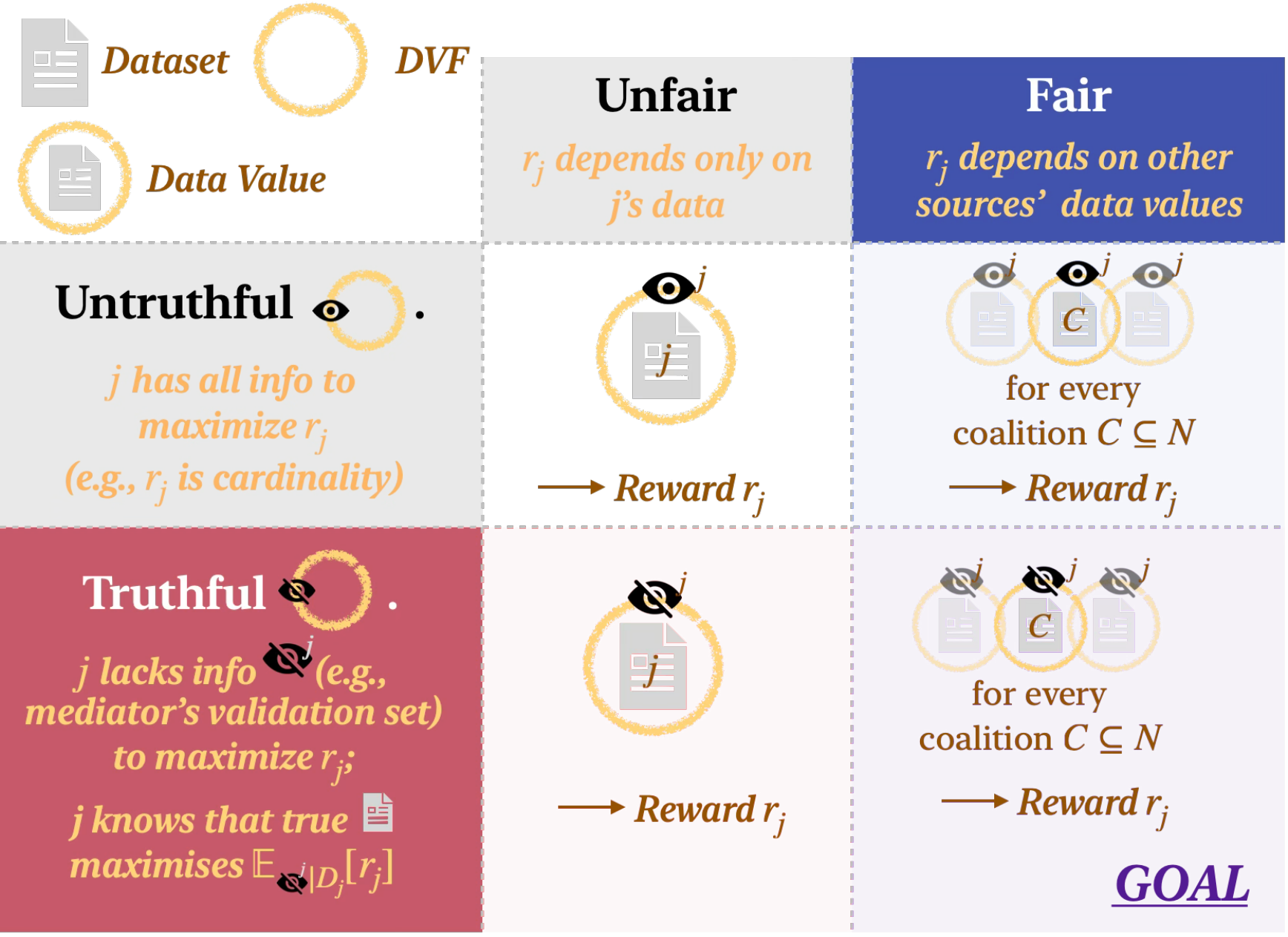}
    \caption{
    The goal is to determine \emph{fair} and \emph{truthful} reward values $r_j$ for each source $j \in N$. The \emph{data valuation function} (DVF, denoted by a circle) evaluates the data value of a source $j$ or coalition $C$ (within). A reward $r_j$ is \emph{unfair} if it depends solely on $j$'s data and not on others', and \emph{untruthful} if $j$ has full information (eye) to optimize $r_j$. Definitions of (\textbf{F}) collaborative fairness and (\textbf{T}) truthfulness are in Secs.~\ref{sec:problem} and~\ref{sec:dv}, respectively. We provably achieve this goal in Sec.~\ref{sec:sv} by considering a DVF with information unknown to $j$ and data values from subsets of $N$.}
    \label{fig:overview}
\end{figure}

\subsection{Problem with Existing Methods}\label{app:existing-problems}

\textbf{$2$ sources.} The reward based on \citet{chen2020truthful} will always be equal regardless of what $\mathcal{D}$ and $D$ are used because of the following mathematical relationship:
\begin{align*}
r_i & = \log p(\mathcal{D}_j = D_j \mid \mathcal{D}_i = D_i) - \log p(\mathcal{D}_j = D_j) \\
& = \log \frac{p(\mathcal{D}_j = D_j, \mathcal{D}_i = D_i)}{p(\mathcal{D}_j = D_j) p(\mathcal{D}_{i} = D_i)} \\ & = \log p(\mathcal{D}_i = D_i \mid \mathcal{D}_j = D_j) - \log p(\mathcal{D}_i = D_i) \\ & = r_j.
\end{align*}
We consider the GP Friedman dataset where source $0$ and $1$ have $400$ and $300$ data points, respectively. Our method solves the problem as source $0$ fairly has a Shapley value of $0.098$ that is larger than source $1$'s $0.066$.

\textbf{$3$ sources, source $1$ with more unique contribution.}
\citet{zheng2024truthful} reward source $i$ with $v(D_i)$ instead of the fair semivalue $\phi_i[\nu^v_\mathcal{D}]$.

We consider the GP Friedman dataset where source $0$ has $400$ data points, source $1$ has another $300 $ unique data points while source $2$ has 250 data points that are a subset of $D_0$. We observe that based on \citet{zheng2024truthful}, source $2$ gets slightly higher valuation with $v(D_2) = .1346$ vs $v(D_1) = 0.1306$ despite having less unique data and contributing a less improvement after observing source $0$'s data.
However, source $2$ gets a slightly lower Shapley value $\phi_2[\nu^v_\mathcal{D}] = 0.042$ vs $\phi_1[\nu^v_\mathcal{D}] = 0.049$, thus solving the problem.
This is because source $2$ contributes less to improving predictions when source $0$ is present, e.g., $v(D_{0,2}) = 0.1469$ vs  $v(D_{0,1}) = 0.1647$.

\subsection{Semivalues}\label{app:back:semivalues}

When collaborative ML is modeled as a cooperative game with $n$ players and a characteristic function $\nu$ that maps coalition $C$ to its value, the semivalues, characterized by their weighting coefficients $(w_c)_{c=0}^{n-1}$, are defined as
\begin{equation}\label{eq:semivalue2}
\textstyle
\phi_i[\nu] = \sum_{C \subseteq N \setminus \{i\}} w_{|C|}[\nu(C \cup \{i\}) - \nu(C)] 
\end{equation}
where the weights are non-negative and $\sum_{c=0}^{n-1} w_c \binom{n-1}{c} = 1$.

We define $\nu \succeq_i \nu'$ if $i$'s contribution to every coalition using $\nu$ is as large as that when using another characteristic function $\nu'$, i.e., $\forall C \subseteq N \setminus \{i\},\ \nu({C \cup \{i\}}) - \nu({C}) \geq \nu'({C \cup \{i\}}) - \nu'({C})$.
We also define $i \succeq j$ if the value of the coalition with $i$ is never less than with player $j$, i.e., $\forall C \subseteq N \setminus \{i,j\},\ \nu({C \cup \{i\}}) \geq \nu({C \cup \{j\}})$.
Semivalues satisfy the following desirable properties.
\squishlisttwo
     \item \textbf{Dummy.} A dummy player $i$ always contributes the same value to every coalition, i.e., for any coalition $C \subseteq N \setminus \{i\}$, $\nu(C \cup \{i\}) - \nu(C) = \nu(\{i\})$. If player $i \in N$ is a dummy, then $\phi_i[\nu] = \nu(\{i\})$. For example, a null player, whose contribution to any coalition is $0$, should always get $0$ value.
    \item \textbf{Symmetry.} If the value of every coalition  remains unchanged when $i$ is replaced with $j$ (i.e., $(i \succeq j) \land (j \succeq i)$), then the semivalues of players $i$ and $j$ are equal, i.e., $\phi_i[\nu] = \phi_j[\nu]$.
    \item \textbf{Monotonicity.} If $\nu \succeq_i \nu'$, then the semivalue of player $i$ based on $\nu$ should be at least the semivalue based on $\nu'$, i.e., $\phi_i[\nu] \geq \phi_i[\nu']$.
    \item \textbf{Desirability.} If $i \succeq j$, then the semivalue of player $i$ is at least that of player $j$, i.e., $\phi_i[\nu] \geq \phi_j[\nu]$.
\squishend

Additionally, when $w_c > 0$ for all $c$, the semivalues also satisfy:
\squishlisttwo
    \item  \textbf{Strict Monotonicity.} If $\nu \succeq_i \nu'$ and $\nu \npreceq_i \nu'$, then the semivalue of player $i$ based on $\nu$ should be greater than the semivalue based on $\nu'$, i.e., $\phi_i[\nu] > \phi_i[\nu']$.
    \item \textbf{Strict Desirability.} If player $i$ adds more value to some coalition $C' \subseteq N \setminus \{i,j\}$ than $j$ (i.e., $\nu(C' \cup \{i\}) > \nu(C' \cup \{j\})$) and at least the same value to other coalitions $C \subseteq N \setminus \{i,j\}$, it follows that $(i \succeq j) \land (j \nsucceq i)$. In such a scenario, player $i$ will have a strictly larger semivalue than player $j$, i.e., $\phi_i[\nu] > \phi_j[\nu]$.
\squishend

\textbf{Example.} Consider the following characteristic functions in Tab.~\ref{tab:char-func}. The players $i$ and $j$ are \emph{symmetric} based on $\nu$.
Player $j$ deserves a higher reward value under $\nu$ than $\nu'$ due to \emph{strict monotonicity} as we observe that $\nu \succeq_j \nu'$ (since $\nu(C) = \nu'(C)$ when $C \not\owns j$ and $\nu(C) > \nu'(C)$ when  $C \ni j$). 
However, player $i$ does not get a strictly higher reward value under $\nu$ than $\nu'$ as $\nu \succeq_i \nu'$ and $\nu \preceq_i \nu'$ (for $C = \emptyset, \{j\}, \{k\}, \{j,k\}$, $\nu(C \cup \{i\}) - \nu(C) = \nu'(C \cup \{i\}) - \nu'(C)$).
Under $\nu'$, player $i \succeq j$, thus by desirability, $i$ should get a higher reward value than $j$.

\begin{table}[h]
\centering
\caption{The value of the different coalitions ($\rightarrow$) under different characteristic functions ($\downarrow$) with 3 players.}
\label{tab:char-func}
\begin{tabular}{c|cllllll}
 & $\{i\}$ & $\{j\}$ & $\{k\}$ & $\{i, j\}$ & $\{i, k\}$ & $\{j, k\}$ & $\{i, j, k\}$ \\ \hline
 $\nu$ & $\nu(\{i\}) = 3$ & 3 & 1 & 5 & 4& 4 & 6 \\
 $\nu'$ & $\nu'(\{i\}) = 3$ & 2 & 1 & 4 & 4 & 3 & 5 \\
\end{tabular}
\end{table}

These properties align with our intuition of \textbf{(F)} collaborative fairness.
The dummy property ensures that a null player (freerider) has no value.
The symmetry property ensures that two players with equal values are treated equally.
However, while two players may have equal value when alone $\nu(\{i\}) = \nu(\{j\})$, $i$ may still contribute more when others are present (e.g., in $\nu(N)$), thus, it is important to consider other coalitions. \citet{sim2020} have also highlighted these desirable fairness properties for data valuation and collaborative machine learning.\vspace{1mm}

\noindent
\emph{Remark on desirability.} The symmetry and (strict) monotonicity properties imply (strict) desirability. The former ensures $\phi_i[\nu] = \phi_j[\nu]$ when $i$ and $j$ are symmetric while the latter ensures that $\phi_j[\nu] > \phi_j[\nu']$ when $j$'s contribution decreases to smaller than $i$ in $\nu'$. Thus, $\phi_i[\nu'] > \phi_j[\nu']$.
The monotonicity and desirability properties are also referred to as sensitivity and dominance property in the Cooperative Game Theory literature \cite{domenech2016probabilistic}.

\subsection{Bayesian Models}\label{app:back:bayesian}

When applying a Bayesian learning algorithm $\mathcal{A}$, we encode our prior belief about the likely values of the model parameters $\theta$ through a distribution $p(\theta)$. After observing the dataset $D_i$, the output $\mathcal{A}(D_i)$ is the posterior distribution $p(\theta|D_i)$, which is computed using Bayes' rule:
\[
p(\theta|D_i) = \frac{p(D_i|\theta) p(\theta)}{p(D_i)}.
\]
Intuitively, the posterior becomes more concentrated around parameter values $\theta$ that have a higher likelihood $p(D_i|\theta)$ of generating the dataset $D_i$.

For some Bayesian models, the posterior $p(\theta|D_i)$ can be computed using \emph{sufficient statistics} instead of the dataset $D_i$. We define statistics and sufficient statistics as follows:

\begin{definition}[Statistic \cite{ss_def}]
    A statistic $g(D_i)$ is a function of only the data $D_i$. The statistic $g(D_i)$ and the model parameters $\theta$ are conditionally independent given $D_i$, i.e., ${p(g(D_i) | D_i, \theta) = p(g(D_i) | D_i)}$. 
\end{definition}

\begin{definition}[Sufficient Statistic (SS) \cite{ss_def,titsias2009variational}]
    The statistic $g(D_i)$ is a SS for the dataset $D_i$ if the model parameters $\theta$ and dataset $D_i$ are conditionally independent given $g(D_i)$, i.e., ${p(\theta | g(D_i), 
D_i) = p(\theta | g(D_i))}$. 
\end{definition}

\begin{definition}[Sufficiency Principle \cite{casella2021statistical}]\label{p-def:suff-principle}
    The statistic $g(D_i)$ is a sufficient statistic for the dataset $D_i$ if for any datasets $D_i, D_j$ such that $g(D_i) = g(D_j)$, the inference about $\theta$ should be the same regardless of whether $D_i$ or $D_j$ is observed.
\end{definition}

Intuitively, the SS definition enforces that knowing the dataset $D_i$ should not provide extra information about $\theta$ beyond $g(D_i)$. The posterior $\mathcal{A}(D_i) \neq \mathcal{A}(\trueDi)$ if and only if the datasets have different sufficient statistics.

We briefly discuss a few Bayesian models and methods for evaluating them.

\subsubsection{Exponential Family}\label{p-bg:exp-family}

\begin{definition}[Exponential Family \cite{murphy2022probabilistic}]\label{def:exp-family}
    A likelihood function $p(d|\theta)$ belongs to the \emph{exponential family} if it is of the form
    \[
    p(d|\theta) = h(d) \exp{[\xi(\theta)^\top g(d) - A(\xi(\theta))]}\ ,
    \]
    where $\xi(\theta)$ is the vector of \emph{natural parameters}, and $A(\xi(\theta))$ is the log partition function ensuring that $\sum_d p(d|\theta) = 1$.
\end{definition}

The univariate Gaussian (with mean $\mu$ and variance $\sigma^2$) belongs to the exponential family, as we can define the sufficient statistics $g(x) = (x, x^2)^\top$ in
\[
p(x| \mu, \sigma^2) = \frac{1}{\sqrt{2\pi \sigma^2}} \exp{\left[\frac{-1}{2\sigma^2}x^2 + \frac{\mu}{\sigma^2}x - \frac{\mu^2}{2\sigma^2}\right]}.
\]

When the data $D = \{d_l\}_{l=1}^c$ are drawn independently from the same distribution, the joint probability $p(D|\theta)$ of observing $\{d_l\}_{l=1}^c$ can be expressed as:
\[
\left(\prod_{l=1}^{c} h(d_l) \right) \exp\left[ \xi(\theta)^\top \sum_{l=1}^{c} g(d_l)  - c A(\xi(\theta))\right].
\]
The sufficient statistic $g(D)$ for the dataset $D$ is the sum of the sufficient statistics of individual data points, i.e., $g(D) = \sum_{l=1}^{c} g(d_l)$.

If the posterior $p(\theta|D)$ and the prior $p(\theta)$ belong to the same distribution family, they are \emph{conjugate distributions}, and $p(\theta)$ is the conjugate prior for the likelihood function.

\begin{definition}[Conjugate Prior \cite{murphy2022probabilistic}]\label{def:conj-prior}
    Let the likelihood $p(D|\theta)$ belong to the exponential family with natural parameters $\xi(\theta)$ and log partition function $A(\xi(\theta))$. The (natural) conjugate prior has the form
    \[
    p(\theta | \nu_0, {\varsigma}_0) \propto 
    \exp{[\xi(\theta)^\top (\nu_0 {\varsigma}_0) - \nu_0 A(\xi(\theta))]} = \exp{[\eta^\top \zeta(\theta)]}.
    \]
    The prior's natural parameters $\eta \triangleq \begin{bmatrix} \nu_0, {\varsigma}_0^\top \end{bmatrix}^\top$ consist of the size $\nu_0$ and the mean sufficient statistics ${\varsigma}_0$ of the prior pseudo-data. The sufficient statistic $\zeta(\theta)$ is $\begin{bmatrix} \xi(\theta)^\top, -A(\xi(\theta)) \end{bmatrix}^\top$.
\end{definition}

The main advantage of using conjugate priors is that they allow for closed-form computation of the posterior. Given the sufficient statistics $g(D)$ for $D$ with $c$ data points, the posterior natural parameters are updated as:
\begin{equation}\label{eq:post-expfamily}
        p(\theta | D) = p\left(\theta | \nu_0 + c, \frac{\nu_0 {\varsigma}_0 + g(D)}{\nu_0 + c}\right).
\end{equation}

\subsubsection{Bayesian Linear Regression}\label{p-bg:sub-blr}
In linear regression, each datum consists of the input $\mathbf{x} \in \mathbb{R}^w$ and the output variable $y \in \mathbb{R}$. 
Let $D$ denote the dataset with $c$ data points, and let $\vy$ and $\mX$ denote the concatenated output vector and design matrix, respectively, in $\mathbb{R}^{c \times w}$.
Bayesian linear regression models the relationship as $\vy = \mX \vw + \mathcal{N}(0, \sigma^2\mI)$ where the model parameters $\theta$ consist of the weight parameters $\vw \in \mathbb{R}^w$ and the noise variance $\sigma^2$. The likelihood is given by:
\[
 p(\vy | \mX, \vw, \sigma^2) = (2 \pi \sigma^2)^{-\frac{c}{2}} \exp{\left[\frac{-1}{2\sigma^2}\vy^\top \vy + \frac{1}{\sigma^2}\vw^\top \mX^\top \vy - \frac{1}{2\sigma^2}\vw^\top \mX^\top \mX \vw \right]}.
\]
This likelihood depends on the data through the sufficient statistics $g(D) = (\vy^\top \vy, \mX^\top \vy, \mX^\top \mX)$. Bayesian linear regression can also be written in exponential family form.

\subsubsection{Gaussian Processes}

A Gaussian process (GP) is a collection of random variables $\{f(\vx)\}_{\vx \in \mathfrak{X}}$ such that any finite subset follows a multivariate Gaussian distribution \citep{rasmussen06}. A GP is fully specified by its prior mean function $\E[f(\vx)]$ and its covariance function (kernel) $k_{\vx\vx'} \triangleq \text{cov}[f(\vx),f(\vx')]$ for all $\vx, \vx' \in \mathfrak{X}$. 

We assume each observation $y(\vx_i)$ is the function $f$ evaluated at input $\vx_i$ corrupted by zero-mean Gaussian noise $\epsilon_i$ with variance $\sigma^2$, i.e., $y(\vx_i) = f(\vx_i) + \epsilon_i$. Given a training set of observations $D = \{(\vx_1, y(\vx_1)), \dots, (\vx_t, y(\vx_t))\}$, the posterior belief about the function $f$ evaluated at new inputs ${\mathbf{X}^*}$ is given by the posterior mean vector and covariance matrix:
\begin{align*}\label{bo-eq:gp}
\begin{split}
    \mu_{{\mathbf{X}^*}|D} &= \mK_{D{\mathbf{X}^*}}^\top (\mK_D + \sigma^2 \mI)^{-1} \vy, \\
    \Sigma_{{\mathbf{X}^*}|D}  &= \mK_{{\mathbf{X}^*}{\mathbf{X}^*}}
    - \mK_{D{\mathbf{X}^*}}^\top (\mK_D + \sigma^2 \mI)^{-1} \mK_{D{\mathbf{X}^*}}.
\end{split}
\end{align*}
The matrices $\mK_{D{\mathbf{X}^*}}$ and $\mK_{{\mathbf{X}^*}{\mathbf{X}^*}}$ store the pairwise similarities between elements in the training set $D$ and the validation set ${\mathbf{X}^*}$, and within the validation set ${\mathbf{X}^*}$, respectively.

\subsubsection{More Complex Bayesian Models}

Bayesian neural networks extend neural networks by placing distributions over their weights \cite{jospin2022bnn}. 
Gaussian Processes with deep kernels \cite{wilson2016deep} extend Gaussian Processes by leveraging deep neural networks to learn flexible feature representations.

\subsubsection{Evaluation Metrics}\label{app:subsec:metrics}

\paragraph{Negative Log Predictive Density (NLPD).}
The NLPD on a validation dataset \(T^*\) with inputs \(\mathbf{X}^*\) and outputs \(\vy^*\)  given the dataset \(D_i\) is:
\begin{equation*}
\text{NLPD} \triangleq -\log p(\mathcal{Y} = \vy^* | \mathbf{X}^*, D_i)\ .
\end{equation*}
The NLPD loss decreases when the model's posterior predictive distribution $p(\mathcal{Y}|\mathbf{X}^*, D_i)$ assigns higher probability to the observed outputs $\vy^*$, indicating that the predictive mean $\E[\mathcal{Y}|\mathbf{X}^*, \cD_i = D_i]$ is close to $\vy^*$ (accurate predictions) and that predictive uncertainty is well-calibrated (i.e., the model is more certain about accurate predictions).
The NLPD loss increases if a model is uncertain about accurate predictions or overconfident about inaccurate ones.

\paragraph{Mean Negative Log Probability (MNLP).} The MNLP on a validation dataset \(T^*\) given the dataset \(D_i\) is:
\begin{equation*}
\text{MNLP} \triangleq  \frac{1}{|T^*|} \sum_{(\vx^*, y^*) \in T^*} -\log p(y^* |\vx^*, D_i)\ .
\end{equation*}
The MNLP loss increases if a model is uncertain about accurate predictions or overconfident about inaccurate ones, making it a useful metric for capturing both prediction accuracy and uncertainty.

Both NLPD and MNLP are widely used for evaluating Bayesian models. Adding noise or withholding data untruthfully may lead to more inaccurate predictions and duplicating data can cause overconfident predictions, increasing both losses.

\paragraph{Regression (Gaussian Case).} 

When the predictive distribution \(p(\vy^* | \mathbf{X}^*, D_i)\) is multivariate Gaussian with predictive mean vector \(\widehat{\boldsymbol{\mu}}(\mathbf{X}^*)\) and predictive covariance matrix \(\widehat{\boldsymbol{\Sigma}}(\mathbf{X}^*)\), the NLPD becomes:
\begin{equation*}
\text{NLPD} \triangleq \frac{1}{2} \left( \log \det(2 \pi \widehat{\boldsymbol{\Sigma}}(\mathbf{X}^*)) + (\vy^* - \widehat{\boldsymbol{\mu}}(\mathbf{X}^*))^\top \widehat{\boldsymbol{\Sigma}}^{-1}(\mathbf{X}^*) (\vy^* - \widehat{\boldsymbol{\mu}}(\mathbf{X}^*)) \right).
\end{equation*}
The log-determinant term penalizes models that predict high uncertainty, while the Mahalanobis distance penalizes inaccurate predictions, particularly when the predictive covariance is small (i.e., overconfident predictions).

Similarly for MNLP which considers each validation data point $(\vx^*, y^*)$ separately, the predictive distribution \(p(y^* |\vx^*, D_i)\) is Gaussian with predictive mean \(\widehat{\mu}(\vx^*)\) and variance \(\widehat{\sigma^2}(\vx^*)\). The MNLP becomes:
\begin{equation*}
\text{MNLP} \triangleq \frac{1}{|T^*|} \sum_{(\vx^*, y^*) \in T^*} 
\frac{1}{2} \left( \log(2 \pi \widehat{\sigma^{2}}(\vx^*)) + \frac{(\widehat{\mu}(\vx^*) - y^*)^2}{\widehat{\sigma^{2}}(\vx^*)} \right).
\end{equation*}
The first term penalizes large predictive variance (underconfidence), and the second term penalizes inaccurate predictions more when the predictive variance is small (overconfidence).

\paragraph{Binary Classification.} 
For binary classification, the predictive probability \(p(y^* | \vx^*, D_i)\) is typically estimated using Monte Carlo samples from \(p(\theta | D_i)\). Specifically, 
$
p(y = 1 | \vx^*, D_i) \approx \frac{1}{m} \sum_{j=1}^m \mathrm{sigmoid}(\theta_j^\top \vx^*)\ ,
$
where $\theta_j$ are the Monte Carlo samples. In this case, the MNLP reduces to the cross-entropy loss.

Both MNLP and NLPD penalize overconfidence in inaccurate predictions and underconfidence in accurate ones, encouraging models to balance predictive accuracy and uncertainty. Thus, these metrics also discourage dataset inflation (e.g., duplication) or withholding/corruption of data.  %

\subsection{Proper Scoring Rules}\label{app:back:scoring}

Let $x'$ denote the input and $y'$ the corresponding output. In \cref{def:strictly-proper}, we consider discriminative models and assume that $x'$ is given.
We can also consider generative models by replacing the conditional distributions $p(y|x')$ and $q(y|x')$ in \cref{def:strictly-proper} by the joint distributions $p(x', y')$ and $q(x', y')$.

\begin{definition}[Strictly Proper Scoring Rule]\label{def:strictly-proper}
Let $p(x', y')$ denote the true (unknown) joint distribution of observing $(x', y')$. Given $x'$, a model or source makes a probabilistic prediction $q(y|x')$, representing a distribution over possible outputs $y$ conditioned on $x'$.

When $(x', y')$ is observed, a scoring rule $V(q(y|x'); (x', y'))$ assigns a score to the predictive distribution $q(y|x')$, based on how well it agrees with the actual output $y'$. The scoring rule rewards predictions that are both accurate and well-calibrated with respect to the true conditional distribution $p(y|x')$.

The expected score of the prediction $q$ under the true joint distribution $p$ is defined as:
\[
V_\mathbb{E}(q; p) := \int p(x') \int p(y'|x') V(q(y|x'); (x', y')) \, dy' \, dx',
\]
which represents the average score obtained by the model's prediction $q$ over all possible realizations of $(x', y')$ where $y'$ is drawn from $p(y'|x')$.

A scoring rule $V$ is said to be \emph{proper} if the expected score is maximized when the predictive distribution $q(y|x')$ equals the true conditional distribution $p(y|x')$. Formally, for any predictive distribution $q(y|x')$, we require:
\[
V_\mathbb{E}(q; p) \leq V_\mathbb{E}(p; p),
\]
with equality if and only if $q(y|x') = p(y|x')$. If this equality holds \emph{only} when $q(y|x') = p(y|x')$, then the scoring rule is said to be \emph{strictly proper}.
\end{definition}

Strictly proper scoring rules are widely used in machine learning for models that output probabilistic predictions. These rules ensure that the model’s predictive probabilities are consistent with the true underlying distribution. Examples of strictly proper scoring rules include:

\begin{itemize}
    \item The \textbf{logarithmic scoring rule}, which corresponds to the cross-entropy loss used in classification tasks, is given by:
    \[
    V(q(y|x'); (x', y')) = \log q(y'|x'),
    \]
    and it is maximized when $q(y|x') = p(y|x')$.

    \item The \textbf{Brier score}, another strictly proper scoring rule, measures the squared error between the predictive probabilities and the true outputs. In the case of classification where $y'$ is represented as a one-hot encoded vector, the Brier score is:
    \[
    V(q(y|x'); (x', y')) = \sum_{k} \left( q_k(y|x') - \mathbb{I}(y' = k) \right)^2 \ ,
    \]
    where $q_k(y|x')$ denotes the predicted probability of class $k$ and $\mathbb{I}(y' = k)$ is the indicator function.
\end{itemize}

\textbf{Classification accuracy} is \emph{not} a strictly proper scoring rule because accuracy only evaluates the predicted label, ignoring how well-calibrated the predictive probabilities are. As a result, probabilistic predictions can be perturbed without affecting the accuracy score, as long as the predicted labels remain the same.

\subsection{Incentivizing Truthful Data Submission}\label{app:peer}

\citet{faltings2022game} gives an overview of game-theoretic mechanisms for eliciting accurate information. In this work, we focus on mechanisms to elicit true data from sources jointly training an ML model for a common prediction task with a shared ground truth (e.g., heart disease in a population) \footnote{There is a single task and objective ground truth. Thus, mechanisms that need statistics taken over multiple tasks are unsuitable.}. 
A mechanism is \emph{truthful} if sources expect the highest reward by reporting the data they believe are correct.
Peer consistency mechanisms~\cite{miller2005eliciting,jurca2011incentivest,faltings2017peer} evaluate a source based on the agreement between its submission and those of randomly selected peers. 
\citet{dorner2023honesty} incentivizes truthfulness at the Nash equilibrium by penalizing disagreement between a source's reported and its peers' model parameters in mean estimation and stochastic gradient descent tasks.
\citet{chen2020truthful} values a source $i$ by the change in log-likelihood of others' data after a Bayesian model is updated by source $i$'s data. 
However, these peer consistency mechanisms may fail to ensure (\textbf{F}) as rewards are not based on the same DVFs and all sources' data values. 

As an alternative, \citet{richardson2020budget,zheng2024truthful} assume access to a ground truth validation set.
\citet{richardson2020budget} allows for multiple models and loss functions but assumes that all data points are drawn from the true distribution and that more data cannot worsen test loss.
\citet{zheng2024truthful} also uses Bayesian models and pointwise mutual information between each source’s data and the mediator’s held-out validation set.

\section{Data Valuation Functions}\label{app:dv}
\subsection{Strategies to increase valuation for untruthful data valuation functions}\label{app:strategies}

In this section, we will examine how sources can untruthfully increase (or maintain) their valuation when using the data valuation functions from previous works. We will first examine data valuation functions that do not use a validation set.
\begin{description}
    \item[Cardinality] Let $v(D_i) = |D_i|$. Data source $i$ can increase its valuation by untruthfully submitting more data points. For example, duplicating every data point in $\trueDi$ to form ${D}_i$ would artificially increase the valuation.

    \item[Divergence from the prior] Consider exponential family models where each dataset $D_i$ has sufficient statistics $s(D_i)$. 
    \citet{sim2023} propose to value a dataset by the divergence of the posterior model parameters from the prior, i.e., $v(D_i) = \mathbb{D}_{KL}\left(p(\theta|D_i); \; p(\theta)\right)$. In their Appendix E.2, \citet{sim2023} have shown that submitting the dataset with large sufficient statistics (i.e., $\kappa s(\trueDi)$ where $\kappa > 1$) would increase the valuation (i.e., $v(D_i) > v(\trueDi)$).

    \emph{Proof.} The KL divergence between two members of the same exponential family with natural parameters $\eta$ and $\eta'$, and log partition function $B(\cdot)$ is given by $(\eta - \eta')^\top \nabla B(\eta) - B(\eta) + B(\eta')$. 
    Let source $i$'s corresponding data sufficient statistics and cardinality be $\mathbf{s}_i$ and $c_i$, respectively.
    Let $\eta'$ and $\eta$ be the natural parameters of the prior and the power posterior distribution (used to generate a model reward with value $r_i$), respectively. Then, $\eta = \eta'+ \kappa_i\begin{bmatrix} \mathbf{s}_i^\top, \ c_i  \end{bmatrix}^\top$.
    For $\kappa_i \geq 0$, the derivative of valuation/reward $r_i$ w.r.t.~$\kappa_i$ is non-negative. Thus increasing $\kappa_i$ would increase the reward:
    \begin{align*}
    \frac{\mathbf{d} r_i}{\mathbf{d} \kappa_i} 
    & = \frac{\partial r_i}{\partial \eta} \frac{\partial \eta}{\partial \kappa_i} \\
    & = \left((\eta - \eta')^\top \nabla^2 B(\eta) + \nabla B(\eta) -\nabla B(\eta)\right) \begin{bmatrix} \mathbf{s}_i^\top, \ c_i  \end{bmatrix}^\top \\
    & = \begin{bmatrix} \kappa_i\mathbf{s}_i^\top,\ \kappa_i c_i  \end{bmatrix} \nabla^2 B(\eta) \begin{bmatrix} \mathbf{s}_i^\top, \ c_i  \end{bmatrix}^\top  = \kappa_i\begin{bmatrix} \mathbf{s}_i^\top,\ c_i  \end{bmatrix} \nabla^2 B(\eta) \begin{bmatrix} \mathbf{s}_i^\top, \ c_i  \end{bmatrix}^\top
     \geq 0 \ .
    \end{align*}
    The last inequality is because the second derivative is positive semi-definite. 
        
    Furthermore, the source does not need to submit a dataset that scales the true sufficient statistics. Any dataset that results in posterior model parameters significantly diverging from the prior (e.g., concatenate multiple perturbed copies of $D_i$) can untruthfully increase the valuation.
    
    \item[Information gain] \citet{sim2020} propose to value data based on the information gain on Bayesian model parameters $\theta$ or the Gaussian process function given the data. Formally, $v(D_i) = \mathbb{I}(\theta; D_i) = .5 \log \det(\mathbf{I} + \mathbf{K}_{D_i} / \sigma^2)$ where $\mathbf{K}_{D_i}$ is the kernel matrix that records the pairwise similarity between all points in $D_i$. The data source can untruthfully increase its valuation by duplicating data or arbitrarily adding more data.

    \emph{Proof} Let $D'_i \subset D_i$. The information gain is the reduction in entropy of the model parameters $\theta$, $\mathbb{I}(\theta; D_i) = \mathbb{H}(\theta) - \mathbb{H}(\theta | D_i)$. The difference $\mathbb{I}(\theta; D_i) - \mathbb{I}(\theta; D'_i) =  \mathbb{H}(\theta | D'_i) - \mathbb{H}(\theta | D_i) \geq 0$ due to the `information never hurts'' property of entropy \cite{Cover91}. Thus, adding data would increase the information gain.

    Note that it is not straightforward to compute the information gain for other models, such as Bayesian logistic regression. However, the information gain would involve taking expectation over the data and may hence not incentivize strict truthfulness.

    \item[Volume] Each source $i$'s dataset $D_i$ consists of an input matrix $\mathbf{X}_i$ (with size $|D_i| \times f$ where $f$ is the number of features) and an output vector.
    \citet{xu2021validation} propose to value data based on the volume (i.e., square root of the determinant) of the left Gram matrix: $v(S) = \sqrt{|\mathbf{X}_i^\top \mathbf{X}_i|}$.
    Since the volume of a dataset does not decrease with the addition of data points, source $i$ can untruthfully increase its valuation by arbitrarily adding more diverse data.
    For example, duplicating each data point $k$ times would increase $v(S)$ by a factor of $\sqrt{k}$.

    Moreover, the data valuation functions based on information gain and volume depend solely on the input matrix and not on the observed output values (such as housing prices). As a result, source $i$ can untruthfully submit incorrect output values in $D_i$, increasing its privacy while reducing the model performance for others.
\end{description}

These examples demonstrate Goodhart's law: ``When a measure becomes a target, it ceases to be a good measure''. When data sources can fully observe the valuation function $v$, it becomes a target, and sources will untruthfully optimize their datasets to maximize their values. 
In Sec.~\ref{sec:dv}, we avoid Goodhart's law by ensuring the data valuation function depends on some random variables $\cO$ unknown to source $i$ (e.g., a held-out validation set or datasets from others). 

\subsection{Proofs that DVFs ensure truthfulness}\label{app:proof}

\subsubsection{Proof of \cref{prop:log-scoring}}

\textbf{Part I.} The value of source $i$'s dataset $v(D_i) \triangleq \log p(\cT = T^*|\cD_i = D_i) - \log p(\cT = T^*)$ is \struthful.

\begin{proof}
Source $i$ does not know the held-out validation set $T^*$. Given assumptions~\ref{ass:agree} and~\ref{ass:valid}, source $i$ will consider possible values $T$ based on its true dataset $\trueDi$, i.e., the distribution $p(\cT = T|\cD_i = \trueDi)$. We abbreviate  $p(\cT = T|\cD_i = \trueDi)$ as $p(\cT = T | \trueDi)$. The difference in expectation from submitting $\trueDi$ instead of $D_i$ is
\begin{align*}
    & \E_{\cT|\trueDi} [v(\trueDi)] - \E_{\cT|\trueDi} [v({D}_i)] \\
    &= \int_{T} p(\cT=T|\trueDi) \log p(\cT=T|\trueDi) \, \d T - \int_{T} p(\cT=T|\trueDi) \log p(\cT=T|{D}_i) \,\d T \\
    &= \mathrm{KL} \left[ p(\cT=T|\trueDi) \parallel p(\cT=T|{D}_i) \right] \geq 0.
\end{align*}
Since the Kullback-Leibler (KL) divergence is always non-negative, we conclude that
\[
  \E_{\cT|\trueDi} [v(\trueDi)] \geq \E_{\cT|\trueDi} [v({D}_i)]  \ .
\]
Moreover, the KL divergence is positive whenever $ p(\cT=T|D_i) \neq p(\cT=T|\trueDi)$. Thus, any dataset $D_i$ leading to different inferences about the model parameters (i.e., with different sufficient statistics) would have strictly lower valuation in expectation, i.e., $\E_{\cT|\trueDi} [v(\trueDi)] > \E_{\cT|\trueDi} [v({D}_i)] $.
\end{proof}

An alternative proof makes use of the result that the logarithmic scoring rule is strictly proper. The definitions of scoring rules are given in Sec.~\ref{app:back:scoring} while the proof is elaborated in Sec.~\ref{app:alt-proper}.

\textbf{Part II.} For any $i$ in coalition $C$, the value of the coalition $C$'s dataset $v(D_C) \triangleq \log p(\cT = T^*|\cD_C = D_C) - \log p(\cT = T^*)$ is \struthful.

\begin{proof}
As source $i$ does not know the held-out validation set $T^*$ or the datasets of others $\Cwoi = C \setminus \{i\}$, source $i$ should take expectations over $\cT$ and $\cD_{\Cwoi}$. Given assumptions~\ref{ass:agree}-\ref{ass:others}, source $i$ will consider the possible values based on the distribution $p(\cT = T, \cD_{\Cwoi} = {D}_{\Cwoi}|\cD_i = \trueDi)$. 
We abbreviate $p(\cT = T|\cD_i = D_i, \cD_{\Cwoi} = {D}_{\Cwoi})$ as $p(\cT = T| D_i, {D}_{\Cwoi})$.

Now, the difference in expectation from submitting $\trueDi$ instead of $D_i$ is
\begin{align*}
    & \E_{(\cT, \cD_\Cwoi) | \trueDi} \left[ v(\trueDi \cup {D}_{\Cwoi})) \right] - \E_{(\cT, \cD_\Cwoi) | \trueDi} \left[ v({D}_i \cup {D}_{\Cwoi})) \right] \\
    &= \int_{D_{\Cwoi}} \int_T p(\cT = T, \cD_{\Cwoi} = D_{\Cwoi} | \trueDi) \log p(\cT = T | \trueDi, {D}_{\Cwoi}) \, \d T \d D_{\Cwoi} \\
    & \qquad - \int_{D_{\Cwoi}} \int_T p(\cT = T, \cD_{\Cwoi} = D_{\Cwoi} | \trueDi) \log p(\cT = T | D_i, {D}_{\Cwoi}) \, \d T \d D_{\Cwoi} \\
    &= \int_{D_{\Cwoi}} p(\cD_{\Cwoi} = D_{\Cwoi} | \trueDi) \int_T p(\cT = T|\trueDi, D_{\Cwoi}) \log \frac{p(\cT = T | \trueDi, {D}_{\Cwoi})}{p(\cT = T | D_i, {D}_{\Cwoi})} \, \d T \d D_{\Cwoi} \\
    &= \int_{D_{\Cwoi}} p(\cD_{\Cwoi} = D_{\Cwoi} | \trueDi)\cdot \mathrm{KL}\left( p(\cT | \trueDi, {D}_{\Cwoi}) \parallel p(\cT | D_i, {D}_{\Cwoi}) \right) \d D_{\Cwoi} \\
    & \geq 0.
\end{align*}
Since the KL divergence and probabilities are always non-negative, we conclude that
\[
    \E_{(\cT, \cD_\Cwoi) | \trueDi} \left[ v(\trueDi \cup {D}_{\Cwoi})) \right] \geq \E_{(\cT, \cD_\Cwoi) | \trueDi} \left[ v({D}_i \cup {D}_{\Cwoi})) \right]\ .
\]
Moreover, the KL divergence is positive whenever the distributions differ. Thus, any dataset $D_i$ leading to different inferences about the model parameters (i.e., with different sufficient statistics) would have strictly lower valuation in expectation.
\end{proof}
\noindent
\emph{Remark.} If source $i$ suspects that some source $j$ in $\Cwoi$ is untruthful and not submitting data that are drawn from $p(\cD_j|\theta)$, source $i$ would not be convinced by the proof, which takes expectation over $p(\cT = T, \cD_{\Cwoi} = D_{\Cwoi} | \trueDi)$. For example, source $i$ might attempt to counteract the errors introduced by other sources in $D_\Cwoi$.

\subsubsection{Strictly proper scoring rules}\label{app:alt-proper}

Let $\mathbf{X}^*$ denote the input matrix and $\vy^*$ the output vector of the validation set. We consider discriminative models that treat $\mathbf{X}^*$ as known, thus the random variable $\cT$ only corresponds to the output vector $\mathcal{Y}$.

Let $V$ be a strictly proper scoring rule. By definition, the expected value $\E_{\mathcal{Y}} \left[V(q_i(\mathcal{Y} | \mathbf{X}^*); \mathbf{X}^*, \vy^*\right]$
is only maximized when $q_i(\mathcal{Y} | \mathbf{X}^*)$ matches the true posterior predictive distribution $p(\mathcal{Y} = \vy^* | \mathbf{X}^*, \trueDi)$.
Hence for any constant $b$ and dataset $D_i$ that leads to different inference about model parameters and predictions,
\[
    \E_{\mathcal{Y}} \left[V(p(\mathcal{Y} = \vy^* | \mathbf{X}^*, D_i)); \mathbf{X}^*, \vy^*\right] - b <
    \E_{\mathcal{Y}} \left[V(p(\mathcal{Y} = \vy^* | \mathbf{X}^*, \trueDi)); \mathbf{X}^*, \vy^*\right] - b\ .
\]
Thus, the DVF
\[
v(D_i) = V(p(\mathcal{Y} | \mathbf{X}^*, D_i); \mathbf{X}^*, \vy^*) - b.
\]
is \struthful\ and satisfies \cref{def:truthful}.

For any coalition $C$, the expected value $\E_{\mathcal{Y}} \left[V(q_C(\mathcal{Y} | \mathbf{X}^*); \mathbf{X}^*, \vy^*\right]$ 
is also only maximized when $q_C(\mathcal{Y} | \mathbf{X}^*)$ matches the true posterior predictive distribution $p(\mathcal{Y} = \vy^* | \mathbf{X}^*, \bigcup_{j \in C} \bar{D}_j)$.
Hence for any constant $b$ and dataset $D_C$ that leads to different inference about model parameters and predictions,
\[
    \E_{\mathcal{Y}} \left[V(p(\mathcal{Y} = \vy^* | \mathbf{X}^*, D_C); \mathbf{X}^*, \vy^*) \right] - b <
    \E_{\mathcal{Y}} \left[V(p(\mathcal{Y} = \vy^* | \mathbf{X}^*, \bigcup_{j \in C} \bar{D}_j)); \mathbf{X}^*, \vy^*\right] - b\ .
\]
Specifically, if every other source $j \in \Cwoi$ submits $\bar{D}_j$, source $i$ must also submit $\trueDi$ to produce the same inference about model parameters and predictions. That is,
\[
    \E_{\mathcal{Y}} \left[V(p(\mathcal{Y} = \vy^* | \mathbf{X}^*, \bigcup_{j \in \Cwoi} \bar{D}_j \cup D_i); \mathbf{X}^*, \vy^*)\right] - b <
    \E_{\mathcal{Y}} \left[V(p(\mathcal{Y} = \vy^* | \mathbf{X}^*, \bigcup_{j \in C} \bar{D}_j); \mathbf{X}^*, \vy^*)\right] - b\ .
\]
Hence, for any source $i \in C$, the DVF 
\[
v(D_C) = V(p(\mathcal{Y} | \mathbf{X}^*, D_C); \mathbf{X}^*, \vy^*) - b
\]
is \struthful\ when every other source $j \in \Cwoi$ submits $\bar{D}_j$.

\subsubsection{Mean log-likelihood}

Note that \cref{prop:log-scoring} also holds true when the validation set consists of any single data point $t$ represented by the random variable $\omega_t$.

Thus, when $T^*$ consists of multiple data points, setting $v(D_i) = \frac{1}{|T^*|}\sum_{t \in T^*} \log p(\omega_t = t|\cD_i = D_i) - \log p(\omega_t = t)$ still satisfies \cref{def:truthful}.

When considering discriminative models, this redefined $v$ inversely corresponds to the MNLP of the validation data $\sum_{t=(x^*, y^*) \in T^*} -\log p(y^*|\theta, x^*)$.

\section{Truthful Semivalues}
\subsection{Proof of \cref{prop:truth-semi-eq}}\label{app:prove-truth-semi}

\begin{proof}
Let $\nu^v_{\mathfrak{D}, i \rightarrow \trueDi}$ denote the characteristic function computed using the datasets $D_j$ for $j \in N \setminus \{i\}$ and the true dataset $\trueDi$ for $j = i$.
Given
\[
\phi_i[\nu^v_{\mathfrak{D}}] = \sum_{C \subseteq N \setminus \{i\}} w_{|C|}  \left[  v(D_{C} \cup D_i) - v(D_C) \right] \ ,
\]
we can compute the expected semivalue,
\begin{align*}
    \E_{(\cT, \cD_{N \setminus \{i\}}) | \trueDi}[ \nu^v_{\mathfrak{D}, i \rightarrow \trueDi} ] 
    &= \sum_{C \subseteq N \setminus \{i\}} w_{|C|} \left[  \E_{(\cT, \cD_C) | \trueDi}[v(D_{C} \cup \trueDi)] - \E_{(\cT, \cD_C) | \trueDi}[v(D_C)] \right] \\
    & \geq \sum_{C \subseteq N \setminus \{i\}} w_{|C|} \left[  \E_{(\cT, \cD_C) | \trueDi}[v(D_{C} \cup \textcolor{blue}{D_i})] - \E_{(\cT, \cD_C) | \trueDi}[v(D_C)] \right] \\
    & = \E_{(\cT, \cD_{N \setminus \{i\}}) | \trueDi}[ \phi_i[\nu^v_{\mathfrak{D}}] ].
\end{align*}

The first equality is due to the linearity of expectation.
The second inequality makes use of the fact that the weights are non-negative and \cref{prop:log-scoring}.

The semivalue $\phi_i[\nu^v_{\mathfrak{D}}]$ is also \struthful \ as for every $C$ such that $w_{|C|} > 0$, data that results in different inference about the model parameters (i.e., $ p(\cT | \trueDi, {D}_{C}) \neq p(\cT | D_i, {D}_{C}) $), would result in a positive KL divergence and strict inequality.
\end{proof}

\subsection{Proof of \cref{prop:rank}}\label{app:proof-rank}

\begin{proof}

Recall that the semivalue for a source $i$ is given by:
\[
\phi_i[\nu^v_{\mathfrak{D}}] = \sum_{C \subseteq N \setminus \{i\}} w_{|C|} \left[ v(D_i \cup D_C) - v(D_C) \right].
\]
For any other source $k \neq i$, its semivalue is:
\begin{align*}
\phi_k[\nu^v_{\mathfrak{D}}] &= \sum_{C \subseteq N \setminus \{k\}} w_{|C|} \left[ v(D_k \cup D_C) - v(D_C) \right] \\
&= \sum_{C \subseteq N \setminus \{k,i\}} w_{|C|+1} \left[ v(D_k \cup D_i \cup D_C) - v(D_i \cup D_C) \right] \\
&\quad + \sum_{C \subseteq N \setminus \{k,i\}} w_{|C|} \left[ v(D_k \cup D_C) - v(D_C) \right].
\end{align*}

Now, consider source $i$ submitting a dataset $D_i$ instead of $\trueDi$. The change in its semivalue, denoted by $\Delta^i_i$, is:
\[
\Delta^i_i = \sum_{C \subseteq N \setminus \{i\}} w_{|C|} \left[ v(D_i \cup D_C) - v(\trueDi \cup D_C) \right].
\]

Next, we consider the change in source $k$'s semivalue
\begin{align*}
\Delta^i_k & = \sum_{C \subseteq N \setminus \{k,i\}} w_{|C|+1} \left[ v(D_i \cup D_k \cup D_C) - v(D_i \cup D_C) \right]
- \sum_{C \subseteq N \setminus \{k,i\}} w_{|C|+1} \left[ v(\trueDi \cup D_k \cup D_C) - v(\trueDi \cup D_C) \right] \\
& = \sum_{C \subseteq N \setminus \{k,i\}} w_{|C|+1} \left[ v(D_i \cup D_{C \cup \{k\}}) - v(\trueDi \cup D_{C \cup \{k\}})  \right]
- \sum_{C \subseteq N \setminus \{k,i\}} w_{|C|+1} \left[ v(D_i \cup D_C)  - v(\trueDi \cup D_C) \right]. \\
\end{align*}
\cref{prop:log-scoring} states that $\E_{(\cT, \cD_{N \setminus \{i\}}) | \trueDi}[v(D_i \cup D_C)  - v(\trueDi \cup D_C)] \leq 0$ for any coalition $C$.
While $\E_{(\cT, \cD_{N \setminus \{i\}}) | \trueDi}[\Delta^i_i]$ is a positive weighted sum of  $\E_{(\cT, \cD_{N \setminus \{i\}}) | \trueDi}[v(D_i \cup D_C)  - v(\trueDi \cup D_C)]$, $\E_{(\cT, \cD_{N \setminus \{i\}}) | \trueDi}[\Delta^i_k]$ consists of both positive and negative weights. Thus, 
\begin{align*}
    \E_{(\cT, \cD_{N \setminus \{i\}}) | \trueDi}\left[ \phi_i[\nu^v_{\mathfrak{D}}] -  \phi_i[\nu^v_{\mathfrak{D}, i \rightarrow \trueDi} ]\right] & = 
    \E_{(\cT, \cD_{N \setminus \{i\}}) | \trueDi}[\Delta^i_i] \\
    & \leq \E_{(\cT, \cD_{N \setminus \{i\}}) | \trueDi}[\Delta^i_k] \\
    & = \E_{(\cT, \cD_{N \setminus \{i\}}) | \trueDi}\left[ \phi_k[\nu^v_{\mathfrak{D}}] -  \phi_k[\nu^v_{\mathfrak{D}, i \rightarrow \trueDi} ]\right]\ \\
    \E_{(\cT, \cD_{N \setminus \{i\}}) | \trueDi}\left[
\phi_i[\nu^v_{\mathfrak{D}, i \rightarrow \trueDi} ] - \phi_i[\nu^v_{\mathfrak{D}}]\right]  &\geq \ \E_{(\cT, \cD_{N \setminus \{i\}}) | \trueDi}\left[ \phi_k[ \nu^v_{\mathfrak{D}, i \rightarrow \trueDi} ] - \phi_k[\nu^v_{\mathfrak{D}}] \right].
\end{align*}

\end{proof}

\section{Considering Other Constraints}

\subsection{Weaker Truthfulness Guarantees when denominator depends on data submissions}\label{app:proof-scaled}

In this section, we explore the theoretical guarantees when $a = \max_k \phi_k[\nu^v_{\mathfrak{D}}] + \gamma$.

\begin{proposition}\label{prop:scaled}
    Let $B > 0$ be the limited budget per source.
    Let  $\underline{\gamma} \geq 0$ be such that $\max_k \phi_k[\nu^v_{\mathfrak{D}}] + \underline{\gamma} > 0$.
    We define the scaled semivalue as $r_i \triangleq B \cdot \frac{\phi_i[\nu^v_{\mathfrak{D}}]}{\max_k \phi_k[\nu^v_{\mathfrak{D}}] + \gamma}$. Then, for each source $i$, the scaled semivalue fits within the limited budget, i.e., $r_i \leq B$.
    
    Under assumptions~\ref{a1}-\ref{a3} for any $\gamma > \underline{\gamma}$ and alternative dataset $D_i$ in $\mathfrak{D}$ with {$\mathbb{E}[\phi_i[\nu^v_{\mathfrak{D}, i \rightarrow \trueDi}]] > 0$},\footnote{If the expectation is negative, source $i$ would not submit data.} we have
    \[
    \frac{B \cdot \mathbb{E}[\phi_i[\nu^v_{\mathfrak{D}}]]}{\mathbb{E}[\max_k \phi_k[\nu^v_{\mathfrak{D}}]] + \gamma}
\leq
\frac{B \cdot \mathbb{E}[\phi_i[\nu_{\mathfrak{D}, i \rightarrow \trueDi}]]}{\mathbb{E}[\max_k \phi_k[\nu_{\mathfrak{D}, i \rightarrow \trueDi}]] + \gamma} \ ,
    \]
    where the expectation $\E$ is taken over $(\cT, \cD_{N \setminus \{i\}}) | \trueDi$. The inequality is strict when $\mathcal{A}(D_i) \neq \mathcal{A}(\trueDi)$.
    From source $i$'s perspective, the ratio of expected values is maximized by submitting $\trueDi$.    
\end{proposition}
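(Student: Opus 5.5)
The plan is to reduce the ratio comparison to two facts: one about the numerators and one about the denominators. Write $M \triangleq \max_k \phi_k[\nu^v_{\mathfrak{D}}]$ and $M^* \triangleq \max_k \phi_k[\nu^v_{\mathfrak{D}, i \rightarrow \trueDi}]$. Let all expectations be over $(\cT, \cD_{N \setminus \{i\}}) \mid \trueDi$, and set $\delta \triangleq \E[\phi_i[\nu^v_{\mathfrak{D}, i \rightarrow \trueDi}]] - \E[\phi_i[\nu^v_{\mathfrak{D}}]]$. By \cref{prop:truth-semi-eq}, $\delta \geq 0$, and $\delta > 0$ when $\mathcal{A}(D_i) \neq \mathcal{A}(\trueDi)$.

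The budget claim $r_i \leq B$ is immediate. We have $\phi_i \leq M$ pointwise and $M + \gamma > 0$ because $\gamma > \underline{\gamma}$. Hence $\phi_i/(M+\gamma) \leq M/(M+\gamma) \leq 1$, where the last step uses $\gamma \geq 0$.

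For the inequality, I would first check that both denominators are positive. On the right, $\E[M^*] + \gamma \geq \E[\phi_i[\nu^v_{\mathfrak{D}, i \rightarrow \trueDi}]] > 0$ by hypothesis. On the left, $\E[M] + \gamma > 0$ follows from $M + \gamma > 0$ pointwise. The argument then splits into two cases.
\begin{itemize}
\item If $\E[\phi_i[\nu^v_{\mathfrak{D}}]] \leq 0$, the left side is $\leq 0$ and the right side is $> 0$, so the inequality holds strictly.
\item If $\E[\phi_i[\nu^v_{\mathfrak{D}}]] > 0$, cross-multiplying shows the claim is equivalent to
\[
\E[\phi_i[\nu^v_{\mathfrak{D}}]]\,\bigl(\E[M^*] - \E[M]\bigr) \leq \delta\,\bigl(\E[M] + \gamma\bigr).
\]
\end{itemize}
Suppose we can show $\E[M^*] - \E[M] \leq \delta$. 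Then the left side of the display is at most $\E[\phi_i[\nu^v_{\mathfrak{D}}]]\,\delta \leq \E[M]\,\delta \leq (\E[M]+\gamma)\,\delta$, using $\phi_i \leq M$ and $\gamma \geq 0$. For strictness, when $\delta > 0$ the first of these steps can be made strict unless the factor $\E[M^*]-\E[M]$ is exactly $\delta$; in that boundary case I would instead use $\gamma > \underline{\gamma} \geq 0$, hence $\gamma > 0$, in the last step.

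The main obstacle is the denominator bound $\E[M^*] - \E[M] \leq \delta$. Pointwise, let $k^\dagger$ be a (measurable) maximizer of $\phi_k[\nu^v_{\mathfrak{D}, i \rightarrow \trueDi}]$. Then
\[
M^* - M \leq \phi_{k^\dagger}[\nu^v_{\mathfrak{D}, i \rightarrow \trueDi}] - \phi_{k^\dagger}[\nu^v_{\mathfrak{D}}].
\]
I would try to bound the expectation of this random-index difference using \cref{prop:rank}.
\begin{itemize}
\item If $k^\dagger = i$, the term is exactly the contribution counted in $\delta$.
\item If $k^\dagger \neq i$, the difference is the expression $-\Delta^i_k$ from the proof of \cref{prop:rank}: a signed combination of terms $v(D_i \cup D_S) - v(\trueDi \cup D_S)$. \cref{prop:rank} bounds this difference by source $i$'s own loss only in expectation for a fixed $k$.
\end{itemize}
The difficulty is that $k^\dagger$ depends on the realization of $(\cT, \cD_{N\setminus\{i\}})$, so the per-$k$ expectation bound does not transfer directly.

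I would first try to decompose each such term by conditioning on $\cD_S$. The conditional expectation of $v(D_i \cup D_S) - v(\trueDi \cup D_S)$ is a negative KL divergence. I would then bound the sum over the positive-weight terms, namely the coalitions $S \ni k$, by the weighted KL terms already present in $\delta$, using $\sum_c w_c \binom{n-1}{c} = 1$ to compare coefficients. If this uniform-in-$k$ bound fails, I would fall back on the slack from $\gamma > \underline{\gamma}$ to absorb the residual term.
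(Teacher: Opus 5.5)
Your budget argument is correct. So is your algebraic reduction (the sign split on $\mathbb{E}[\phi_i[\nu^v_{\mathfrak{D}}]]$, then cross-multiplying). It lands on exactly the inequality the paper's proof also needs, $\mathbb{E}[M^*]-\mathbb{E}[M]\le\delta$, which is the paper's $\mathbb{E}[\Delta^i_i]\le\mathbb{E}[\Delta^i_{max}]$. You are also right that this is where the difficulty lies. The paper obtains it by invoking Prop.~\ref{prop:rank} with $j=\arg\max_k\phi_k[\nu^v_{\mathfrak{D},i\rightarrow\bar{D}_i}]$, which is precisely the random-index transfer you flagged as unjustified. Pointwise, for $k\neq i$,
\[
\Delta^i_i-\Delta^i_k=\sum_{C\subseteq N\setminus\{i,k\}}\bigl(w_{|C|}+w_{|C|+1}\bigr)\,h(C),\qquad h(C)\triangleq v(D_i\cup D_C)-v(\bar{D}_i\cup D_C),
\]
and Prop.~\ref{prop:rank} holds only because each $\mathbb{E}[h(C)]\le 0$. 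Your route needs $\mathbb{E}[\mathbf{1}\{k^\dagger=k\}\,h(C)]\le 0$, and that fails in general. The index $k^\dagger$ depends on $\mathcal{T}$, so the event that some $k\neq i$ tops the truthful game selects validation draws that $\bar{D}_i$ predicted badly. That is exactly where a less committal $D_i$ scores higher. Conditioning on $\mathcal{D}_S$ does not remove this, since the indicator still depends on $\mathcal{T}$. The $\gamma$-slack fallback cannot work either, because the claim is asserted for every $\gamma>\underline{\gamma}$.

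In fact the denominator bound, and the proposition itself, are false. Take $n=2$ and let source $k$ hold no information about $\theta$ (e.g.\ $D_k=\emptyset$). For every semivalue this gives $\phi_k\equiv 0$ and $\phi_i=v(D_i)$, so $M=\max(v(D_i),0)$, and any $\underline{\gamma}>0$ is admissible.
\begin{itemize}
\item Model: $\theta\sim\mathrm{Beta}(1,1)$, all data and the single validation point $t$ i.i.d.\ Bernoulli$(\theta)$, and $\bar{D}_i$ consisting of three successes. Then $p(t{=}1)=1/2$ and $p(t{=}1\mid\bar{D}_i)=4/5$.
\item Truthful submission: $\mathbb{E}[\phi_i]=0.8\ln 1.6+0.2\ln 0.4\approx 0.193$ and $\mathbb{E}[M^*]=0.8\ln 1.6\approx 0.376$.
\item Submitting one of the three successes (a subset): $p(t{=}1\mid D_i)=2/3$, $\mathbb{E}[\phi_i]=0.8\ln\frac{4}{3}+0.2\ln\frac{2}{3}\approx 0.149$, and $\mathbb{E}[M]=0.8\ln\frac{4}{3}\approx 0.230$.
\end{itemize}
Thus $\delta\approx 0.044$ while $\mathbb{E}[M^*]-\mathbb{E}[M]\approx 0.146$. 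With $\underline{\gamma}=0.005$ and $\gamma=0.01$, the two ratios are about $0.499$ (truthful) versus $0.621$ (subset). Assumptions~\ref{a1}--\ref{a3} and $\mathbb{E}[\phi_i[\nu^v_{\mathfrak{D},i\rightarrow\bar{D}_i}]]>0$ all hold, so no lemma can close your route; the statement needs an extra hypothesis.

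One hypothesis that makes your argument go through: the top source under truthful submission is almost surely a fixed index $j$. Then $M^*-M\le\phi_j[\nu^v_{\mathfrak{D},i\rightarrow\bar{D}_i}]-\phi_j[\nu^v_{\mathfrak{D}}]$ with $j$ deterministic. Prop.~\ref{prop:rank} then gives $\mathbb{E}[M^*]-\mathbb{E}[M]\le\delta$, and your algebra finishes, using $\gamma>0$ for strictness. Otherwise, as Sec.~\ref{sec:budget} itself says, the normalizer must be fixed in advance.
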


The key intuition of the following proof is that, by \cref{prop:rank}, any untruthful submission $D_i$ is expected to reduce the numerator more than the denominator. 

\begin{proof}

Let $B > 0$ be the limited budget per source, and let $\underline{\gamma} \geq 0$ be such that $\max_k \phi_k[\nu^v_{\mathfrak{D}}] + \underline{\gamma} > 0$. We define the scaled semivalue:
\[
r_i \triangleq B \cdot \frac{\phi_i[\nu^v_{\mathfrak{D}}]}{\max_k \phi_k[\nu^v_{\mathfrak{D}}] + \gamma}.
\]
Since $\phi_i[\nu^v_{\mathfrak{D}}] \leq \max_k \phi_k[\nu^v_{\mathfrak{D}}]$, it follows that:
\[
\frac{\phi_i[\nu^v_{\mathfrak{D}}]}{\max_k \phi_k[\nu^v_{\mathfrak{D}}] + \gamma} \leq 1 \quad \text{and hence} \quad r_i \leq B.
\]

Next, we make assumptions~\ref{a1}-\ref{a3} to use the past propositions. For any alternative dataset $D_i$ in $\mathfrak{D}$, let $\Delta^i_i = \phi_i[\nu^v_{\mathfrak{D}}] - \phi_i[\nu^v_{\mathfrak{D}, i \rightarrow \trueDi}]$.
Let source $j = \mathop{argmax}_k \phi_k[\nu^v_{\mathfrak{D}, i \rightarrow \trueDi}]$ and
$\Delta^i_j = \phi_j[\nu^v_{\mathfrak{D}}] - \phi_j[\nu^v_{\mathfrak{D}, i \rightarrow \trueDi}]$. We denote $\Delta^i_{max} = \max_k \phi_k[\nu^v_{\mathfrak{D}}] - \max_j \phi_j[\nu^v_{\mathfrak{D}, i \rightarrow \trueDi}] $.

\begin{align}
\E_{(\cT, \cD_{N \setminus \{i\}}) | \trueDi}\left[ \Delta^i_i \right] 
& \leq \label{step:a}
\E_{(\cT, \cD_{N \setminus \{i\}}) | \trueDi}\left[ \Delta^i_j \right] \\
& \leq \E_{(\cT, \cD_{N \setminus \{i\}}) | \trueDi} \left[
\max_k \phi_k[\nu^v_{\mathfrak{D}}] - \max_j \phi_j[\nu^v_{\mathfrak{D}, i \rightarrow \trueDi}] \right] \label{step:p} \\
& = \E_{(\cT, \cD_{N \setminus \{i\}}) | \trueDi}\left[ \Delta^i_{max} \right]\ . \nonumber
\end{align}

Step~\ref{step:a} is due to \cref{prop:rank}.
Step~\ref{step:p} is because  $\phi_j[\nu^v_{\mathfrak{D}}] \leq \max_k  \phi_k[\nu^v_{\mathfrak{D}}]$.

For ease of notation, we use $\E$ to abbreviate $\E_{(\cT, \cD_{N \setminus \{i\}}) | \trueDi}$.
We abbreviate $\mathbb{E}[\max_k \phi_k[\nu_{\mathfrak{D}, i \rightarrow \trueDi}]]$ as $\bar{\varphi}_{max}$ and $\mathbb{E}[\phi_i[\nu_{\mathfrak{D}, i \rightarrow \trueDi}]]$ as  $\bar{\varphi}_{i}$.

\begin{align}
    \E[\Delta^i_i]   & \leq \E[\Delta^i_{max}] \nonumber \\
    \E[\Delta^i_i] \cdot (\bar{\varphi}_{max} + \gamma) & \leq \bar{\varphi}_{i} \cdot (\E[\Delta^i_{max}])  \label{step:c} \\
    (\bar{\varphi}_{i} + \E[\Delta^i_i]) \cdot (\bar{\varphi}_{max} + \gamma) & \leq \bar{\varphi}_{i} \cdot (\bar{\varphi}_{max} + \E[\Delta^i_{max}] + \gamma) \nonumber\\
    \frac{B \cdot \mathbb{E}[\phi_i[\nu^v_{\mathfrak{D}}]]}{\mathbb{E}[\max_k \phi_k[\nu^v_{\mathfrak{D}}]] + \gamma} & \leq \frac{B \cdot \mathbb{E}[\phi_i[\nu_{\mathfrak{D}, i \rightarrow \trueDi}]]}{\mathbb{E}[\max_k \phi_k[\nu_{\mathfrak{D}, i \rightarrow \trueDi}]] + \gamma} \ .\label{step:e}
\end{align}

Step~\ref{step:c} is because $ \E\left[ \Delta^i_i \right] \leq 0$ (\cref{prop:truth-semi-eq}) and 
$\bar{\varphi}_{max} \geq \bar{\varphi}_i \geq 0$.

Step~\ref{step:e} assumes that $(\max_k \phi_k[\nu^v_{\mathfrak{D}}]] + \gamma), (\mathbb{E}[\max_k \phi_k[\nu_{\mathfrak{D}, i \rightarrow \trueDi}]] + \gamma), B > 0$. 

\end{proof}

\subsection{Interaction between Truthfulness and other incentives in CML}\label{app:interaction}

\begin{enumerate} [label=I\arabic*,leftmargin=*,itemsep=5pt,topsep=0pt,parsep=0pt,partopsep=0pt]
    \item \label{i:feas} \textbf{Feasibility.} The reward to each source $i$ must not exceed some value $B$. For example, $B$ might be the monetary budget available or it might be the maximum value of the model reward \cite{sim2020}.
    \item \label{i:eff} \textbf{Efficiency.} To maximize the benefits from the collaboration, there is at least one source that gets the maximum value $B$.
    \item \label{i:ir} \textbf{Individual Rationality.} The reward to each source $i$ must exceed the value each source can get from working alone or not participating in the collaboration. For model rewards, the value of the model reward must exceed the value of the model trained only on source $i$'s data \cite{sim2020}. For monetary reward, the value of the reward must exceed the cost which may be assumed to be $0$.
    \item \label{i:fair} \textbf{Collaborative Fairness.} We define $i \succeq j$ if the value of the coalition with $i$'s data is never less than that with $j$'s, i.e., $\forall C \subseteq N \setminus \{i,j\},\ v(D_{C \cup \{i\}}) \geq v(D_{C \cup \{j\}})$. The reward values $(r_i)_{i \in N}$ are \emph{fair} if they reflect $i$'s desirability: (i) $(i \succeq j) \implies (r_i \geq r_j)$, and (ii) for strict desirability, $(i \succeq j) \land (j \nsucceq i) \implies (r_i > r_j)$.
Notably, fairness goes beyond considering whether $v(D_i) = v(D_j)$ as source $i$ may still contribute more to improving model performance (e.g., in $v(D_N)$) than $j$ if its data is less redundant and similar to others' in $N \setminus \{i,j\}$.
Condition (i) also implies that two sources with identical value in all coalitions and identical datasets should get equal rewards.
    \item  \label{i:privacy} \textbf{Privacy.} Some sources may want differential privacy guarantees to prevent others from inferring about their data. Moreover, stronger differential privacy guarantees should lead to lower rewards \cite{sim2023}.
    \item \label{i:valid} \textbf{No Validation Set.} When it is hard to obtain and agree on a validation set, data valuation must be done without a validation set.
\end{enumerate}

The \emph{strict} truthfulness incentive cannot be simultaneously satisfied with \ref{i:eff}. If a source is guaranteed to receive the maximum reward $B$, the reward value does not depend on the level of its contribution. For example, if the source contributing the most informative dataset will get the maximum reward value $B$, each source should only contribute just enough to become the maximum contributor. 

The truthfulness incentive may not be satisfied simultaneously with \ref{i:ir}. If a source is guaranteed to receive a monetary reward more than its cost (e.g., $0$), sources with no data will submit random datasets to try to get a positive reward and the expected reward from such untruthfulness is non-negative.

Truthfulness differs from \ref{i:privacy} as the former is concerned with submitting data that matches the distribution that is agreed on by the collaboration while the latter may change the distribution (e.g., more noise for stronger privacy).

\subsection{Removing the Validation Set}\label{app:alt-remove}

\begin{figure}
    \centering
    \includegraphics[width=0.66\linewidth]{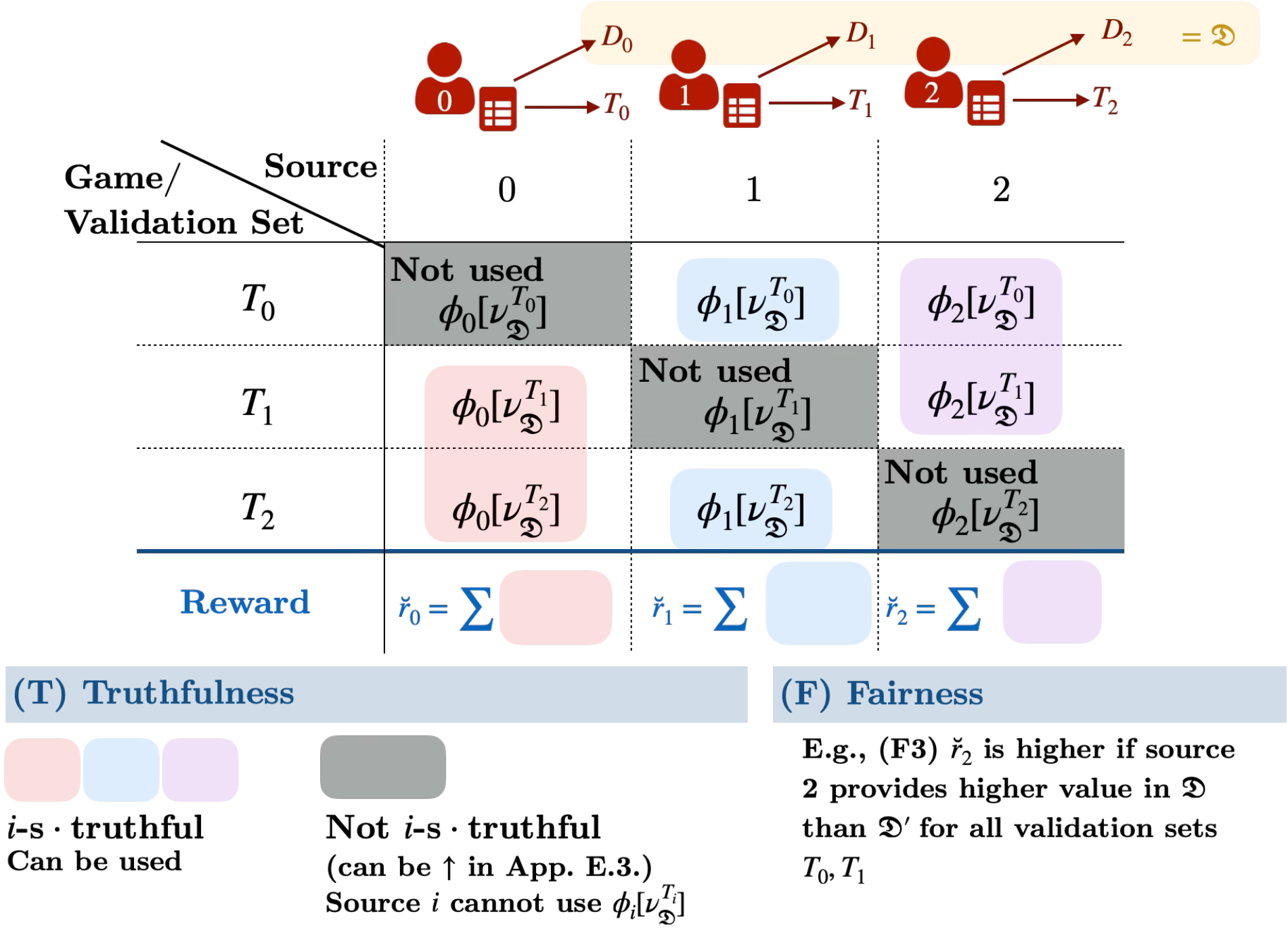}
    \caption{When there is no validation set, the reward for each source are computed as follows to ensure truthfulness and modified fairness.}
    \label{fig:no-validation-overview}
\end{figure}

Now, we start from the alternative perspective and enforce (\textbf{F}) collaborative fairness from Sec.~\ref{sec:problem} using at least one common DVF $v^T$ across all sources. Without loss of generality, we assume that the DVF $v^T$ depends on the validation set $T_i$ generated from source $i$ (and optionally other sources $T_{N \setminus \{i\}}$). Would source $i$ still be incentivized to truthfully submit its dataset $\trueDi$ (when the submitted $D_i$ is used to generate the validation set $T_i$ and remaining set $R_i$)?

\textbf{Truthful DVF.} Suppose that there exists a DVF $v^T$ that is \itruthful. Without loss of generality, we assume that the DVF $v^T$ depends on the validation set $T_i$ generated from source $i$ (and optionally other sources $T_{N \setminus \{i\}}$). The semivalue $\phi_i[\nu^T]$ defined based on the characteristic function $\nu^T$ (such that $\nu^T(C) = v^T(D_C)$) may not be \struthful. This is because in the proof of \cref{prop:truth-semi-eq} in App.~\ref{app:prove-truth-semi}, we have implicitly assumed that the subtracted value of coalition $v(D_C)$ for $C \not\owns i$ would not change. However, $v^T(D_C)$ changes when source $i$ submits a different dataset $D_i$, thus the inequality in the proof may not hold. Thus, combining a truthful DVF (if it exists) and fair semivalue may not ensure both truthfulness and fairness.

\textbf{Log scoring rule based DVF.}
We consider $v^T(D_i) = \log p(\cT_i = T_i, \cT_{N \setminus \{i\}} = T_{N \setminus \{i\}} | \mathcal{R}_i= R_i) - \log p(\cT_i = T_i, \cT_{N \setminus \{i\}} = T_{N \setminus \{i\}} ) $. Since $\cT_{N \setminus \{i\}}$ is unknown to source $i$, $i$ should consider the expectation over its known full dataset (i.e., given $\cD_i = \trueDi$). We note that source $i$ can indirectly control $T_i$ and $R_i$ as they are split from the submitted $D_i$ (which may not be the true dataset). Thus, the expectation should also be taken given $D_i$. Hence,
\begin{align*}
      & \E_{\cT_{N \setminus \{i\}}, \cT_i, \mathcal{R}_i | \trueDi, D_i}[v^T(i)] \\
     & = \E_{\cT_{N \setminus \{i\}} | \trueDi} [\log p(\cT_{N \setminus \{i\}} = T_{N \setminus \{i\}} | \cD_i = D_i) - \log p(\cT_{N \setminus \{i\}} = T_{N \setminus \{i\}})] \\
     & \quad  + \E_{\cT_i, \mathcal{R}_i | D_i} [\log p(\cT_i = T_i |  \mathcal{R}_i= R_i) - \log p(\cT_i = T_i)] \ .
\end{align*}
The first term is maximized in expectation by truthfully submitting $D_i = \trueDi$. However, source $i$ knows the splitting algorithm (e.g., random 20\% is set as the validation set) and has enough information to maximize the second term by submitting a different $D_i$ (e.g., with duplicates).

As a more specific example, consider the Bayesian model where there is an unknown mean $\mu$ and each source draws data from a Gaussian distribution centered at $\mu$. Suppose that source $i$ has $2$ data points and after splitting, it will have $1$ point each in the validation and remaining set. To increase $\nu^T(i)$, source $i$ can submit two identical points (at the mean of its original data) as it would increase the second term.

\section{Experiments}\label{app:exp}

\subsection{Additional Details}\label{app:exp:details}

\textbf{Compute.} The experiments are run on a machine with Ubuntu 22.04.3 LTS, 2 x Intel Xeon Silver 4116 (2.1 GHz), and NVIDIA Titan RTX GPU (Cuda 12.2). The software environments used are Miniconda and Python. Please refer to the \texttt{environment.yml} file attached for the full list of Python packages. For reference, it takes about an hour to evaluate the DVF for multiple coalitions and compute the Shapley value for one strategy of the slowest (NN-CY) experiment in Fig.~\ref{fig:honest-shapley}.

Let source $i$'s dataset $D_i$ be broken down into the input matrix $\mathbf{X}_i$ and output vector $\vy_i$. We provide more details about the various experiments below.

\textbf{(GP-FR), (NN-CY), (GP-FR-9).} For (GP-FR-9), we assign source $0$ $200$ data points and $100$ to $8$ other sources. 
\squishlisttwo
    \item Strategy N: Source $0$ adds noise sampled from $\mathcal{N}(0, .2)$ to the standardized $\vy_0$,
    \item Strategy I: Source $0$ generates synthetic data $\mathbf{X}^+_0$ outside its input domain by sampling $10\%$ of its data and setting the first two features (i.e., ${\mathbf{X}^+_0}_{[:,(0,1)]}$) to values $.1$ less than the minimum of each column. Source $0$ artificially sets the corresponding $\vy^+_0 = 0$. The synthetic dataset $(\mathbf{X}^+_0,\vy^+_0)$ is added to the original $D_0$. 
    \item Strategy P: Source $0$ adds zero mean Gaussian noise with standard deviation $.05$ (GP-FR)/ $.1$ (NN-CY) to each input feature in its input matrix.
    \item When source $1$ is untruthful: Source $1$ sets a random $10\%$ of $\vy_1$ to $0$.
\squishend

\textbf{(LO-HE).}
\squishlisttwo
    \item Strategy N: Source $0$ flips the class label of each of its data point independently with probability $5\%$
    \item Strategy I: Source $0$ generates synthetic data $\mathbf{X}^+_0$ outside its input domain by sampling $10\%$ of its data and setting the first two features (i.e., ${\mathbf{X}^+_0}_{[:,(0,1)]}$) to values $.1$ less than the minimum of each column. Source $0$ artificially sets the corresponding $\vy^+_0$ to the mode, class $1$. The synthetic dataset $(\mathbf{X}^+_0,\vy^+_0)$ is added to the original $D_0$. 
    
    \item Strategy P: Source $0$ adds zero mean Gaussian noise with standard deviation $.2$ to each input feature in its input matrix.
    \item When source $1$ is untruthful: Source $1$ flips the class label of each of its data points independently with probability $5\%$
\squishend

\textbf{(LO-BL).} We split the training data among sources $0$-$2$ in the ratio $[.4, .3, .3]$. The training data (from $8$ classes) are ordered such that there are more of class $2$ for source $0$ and more of class $0$ and $5$ for later sources, resulting in different input distribution.
\squishlisttwo
    \item Strategy N: Source $0$ submits an incorrect class label of each of its data point independently with probability $5\%$
    \item Strategy I: Source $0$ generates synthetic data $\mathbf{X}^+_0$ outside its input domain by sampling $10\%$ of its data and setting the first two features (i.e., ${\mathbf{X}^+_0}_{[:,(0,1)]}$) to values $.1$ less than the minimum of each column. Source $0$ artificially sets the corresponding $\vy^+_0$ to class $0$. The synthetic dataset $(\mathbf{X}^+_0,\vy^+_0)$ is added to the original $D_0$. 
    \item Strategy P: Source $0$ adds zero mean Gaussian noise with standard deviation $.2$ to each input feature in its input matrix.
    \item When source $1$ is untruthful: Source $1$ submits an incorrect class label of each of its data points independently with probability $5\%$
\squishend

\subsection{DVF}\label{app:exp:dvf}

In this section, we empirically evaluate if truthfulness is still incentivized under different validation set sizes and Bayesian model misspecifications.

\textbf{Different validation set sizes.} In Fig.~\ref{fig:dvf-valid-size-more}, we plot the value $v(D_0)$ of source $0$ when evaluated on validation set of different sizes. Across all datasets except (LO-BL), it can be observed that source $0$'s strategy T consistently leads to its highest value compared to other untruthful strategies, regardless of the validation set size. Larger validation sets result in smaller variance in the data value. (LO-BL) is an exception: data duplication slightly increases the DVF by reducing the high uncertainty of our prior, which may be beneficial when the validation set is well-predicted.

\begin{figure}[H]
        \centering
        \begin{tabular}{@{}c@{\hspace{5mm}}c@{\hspace{5mm}}c@{\hspace{5mm}}c@{}} \includegraphics[width=0.2\columnwidth]{supp_figs/gp-Fraction-of-validation-set.pdf} &  \includegraphics[width=0.2\columnwidth]{supp_figs/heart-Fraction-of-validation-set.pdf}  & 
        \includegraphics[width=0.2\columnwidth]{supp_figs/cycle-Fraction-of-validation-set.pdf} &
        \includegraphics[width=0.2\columnwidth]{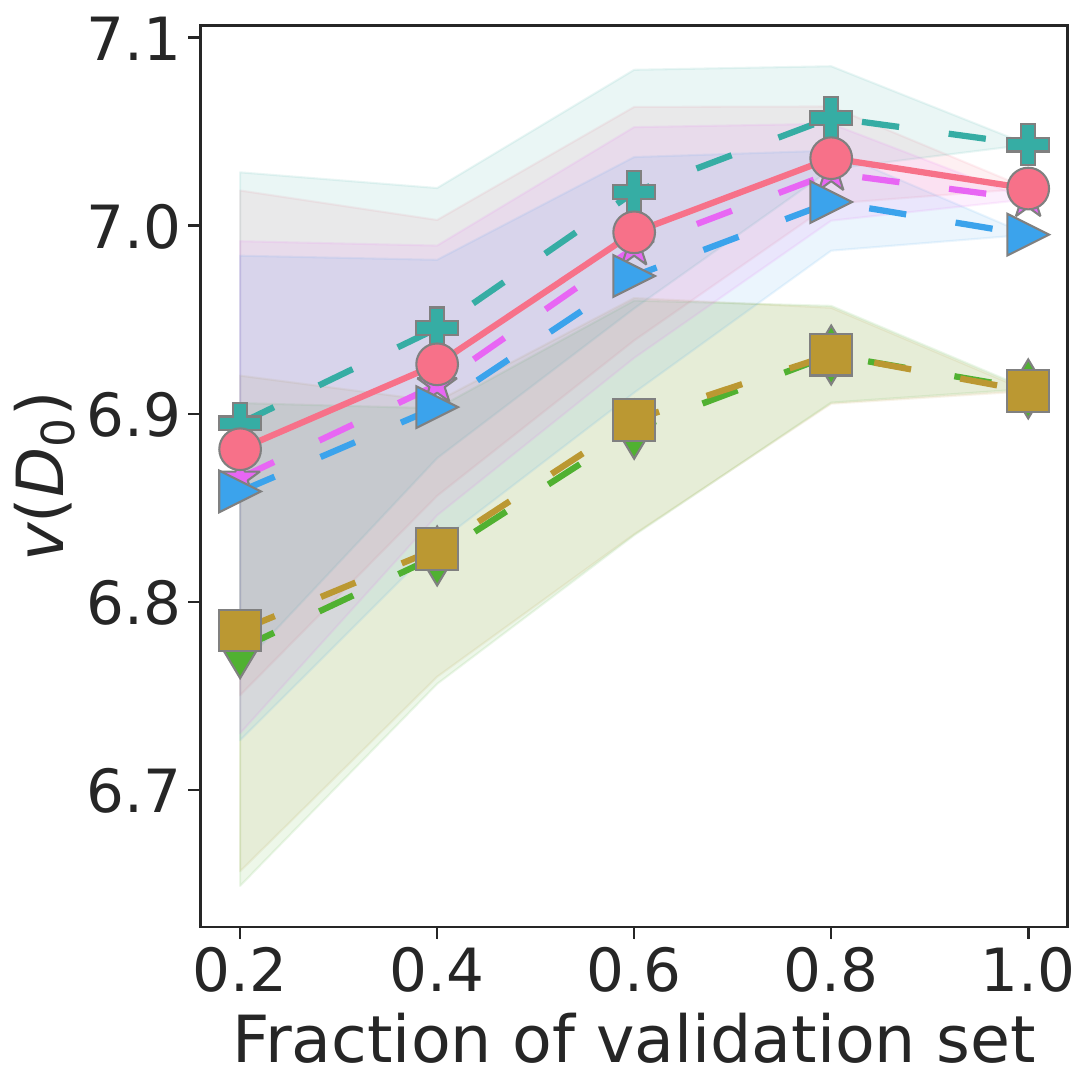}
        \\
        {(a) (GP-FR)} & {(b) (LO-HE)} & {(c) (NN-CY)} &{(d) (LO-BL)} 
        \end{tabular}
        \caption{Graphs of value of source $0$ (with $95\%$ CI shaded across 20 sets) under its different strategies when evaluated on validation set of increasing~sizes for various datasets (a)-(d).
        }
        \label{fig:dvf-valid-size-more}
\end{figure}

\textbf{Increasing noise in $p(\cT|\theta)$.} In Fig.~\ref{fig:dvf-valid-noise-more}, we plot the value $v(D_0)$ of source $0$ when using validation set with increasing levels of noise added to the validation set outputs. Across all datasets, source $0$'s strategy T still achieves its highest value compared to other untruthful strategies when noise is low. However, as noise increases and the validation set likelihood $p(\cT|\theta)$ in \ref{a2} becomes more misspecified, $v(D_0)$ may not be empirically \struthful\ for source $0$. Untruthful strategies, such as submitting a subset (S in Fig.~\ref{fig:dvf-valid-noise-more}a) or a noisy dataset (N in Fig.~\ref{fig:dvf-valid-noise-more}{b-d}) may result in a higher log-likelihood of the noisy validation set than the truthful strategy as they produce higher predictive uncertainty.

\begin{figure}[H]
        \centering
        \begin{tabular}{@{}c@{\hspace{5mm}}c@{\hspace{5mm}}c@{\hspace{5mm}}c@{}}  \includegraphics[width=0.2\columnwidth]{supp_figs/gp-Additional-noise.pdf} &  \includegraphics[width=0.2\columnwidth]{supp_figs/heart-Fraction-of-validation-set.pdf}  &  
        \includegraphics[width=0.2\columnwidth]{supp_figs/cycle-Additional-noise.pdf} & 
        \includegraphics[width=0.2\columnwidth]{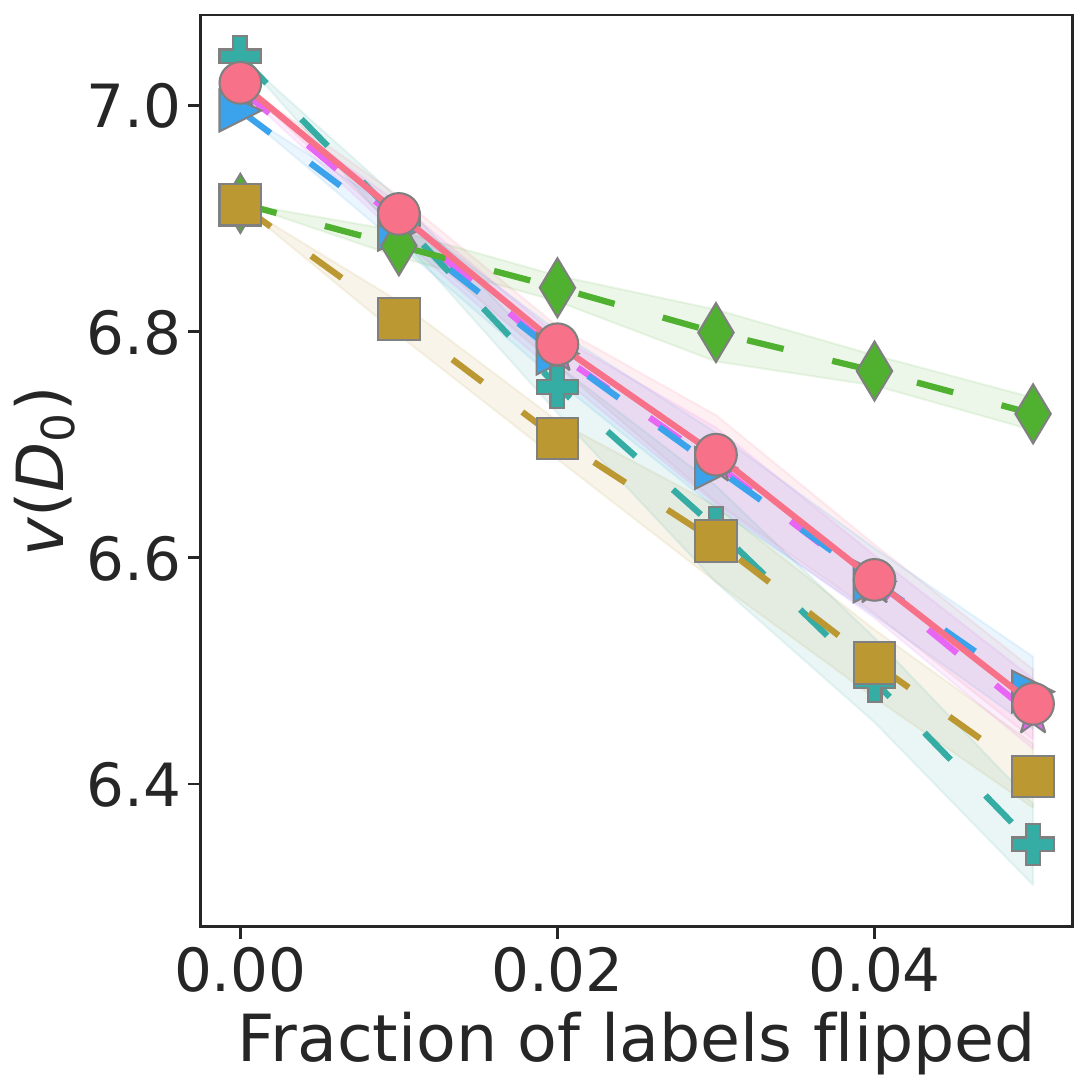} 
        \\
        {(a) (GP-FR)} & {(b) (LO-HE)} & {(c) (NN-CY)}  & {(d) (LO-BL)} 
        \end{tabular}
        \caption{Graphs of value of source $0$ (with $95\%$ CI shaded across 20 sets) under its different strategies when evaluated on validation set with increasing levels of noise added to the outputs for various datasets (a)-(d).
        }
        \label{fig:dvf-valid-noise-more}
\end{figure}

\textbf{Different validation set input distributions.}
We sort the data in the validation set using a random permutation of columns in $\mathbf{X}^*$ and evaluate source $0$'s strategies on the first $k$ fraction of the sorted validation set.
A smaller $k$ would result in a validation set with an input distribution that differs more significantly from source $0$'s input distribution. In Fig.~\ref{fig:dvf-valid-size-sorted}, across all datasets except (LO-BL), source $0$'s strategy T still achieves its highest value. (LO-BL) is an exception: data duplication slightly increases the DVF by reducing the high uncertainty of our prior, which may be beneficial when the validation set is well-predicted.

\begin{figure}[H]
        \centering
        \begin{tabular}{@{}c@{\hspace{5mm}}c@{\hspace{5mm}}c@{\hspace{5mm}}c@{}} \includegraphics[width=0.2\columnwidth]{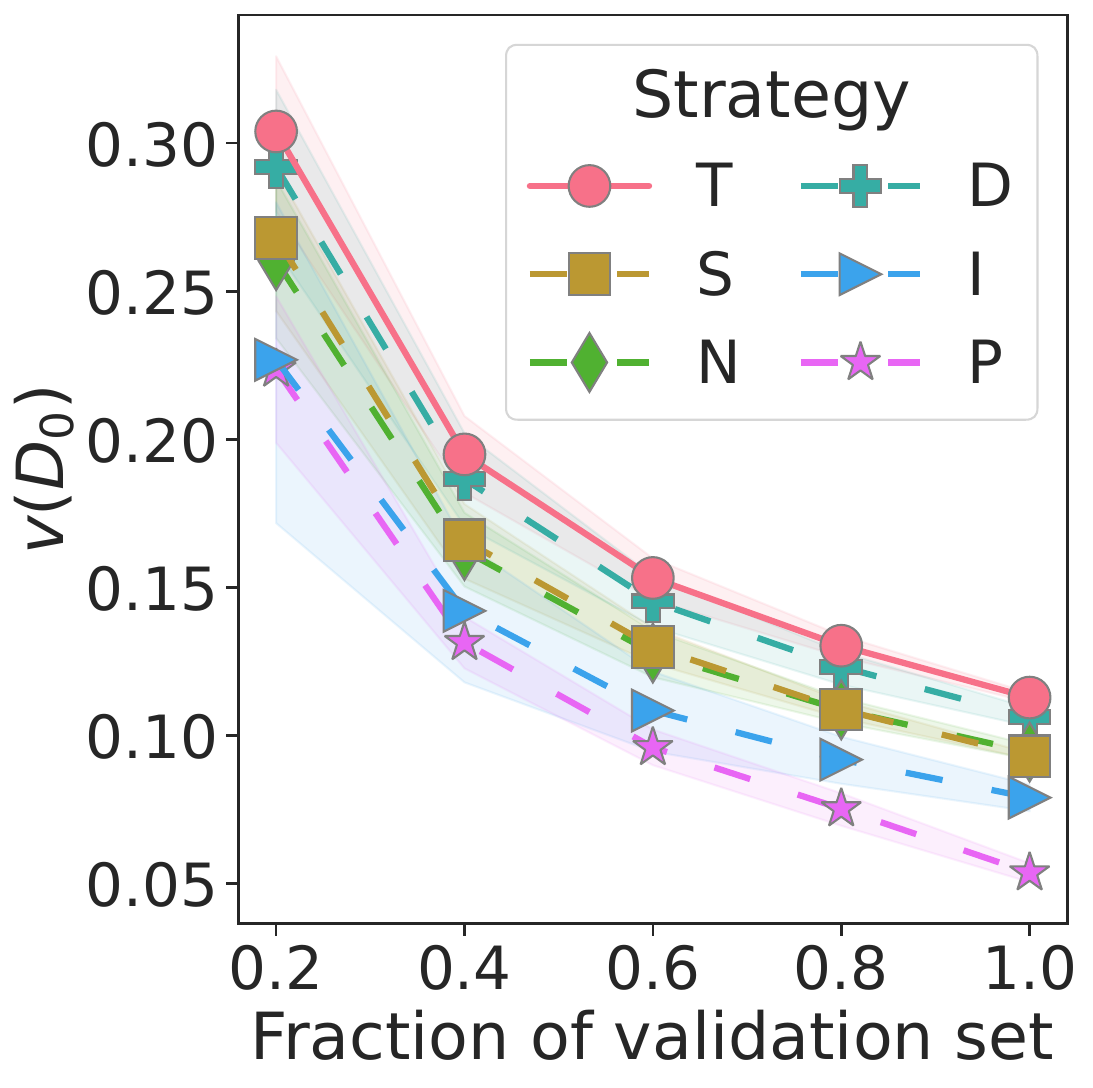} &  \includegraphics[width=0.2\columnwidth]{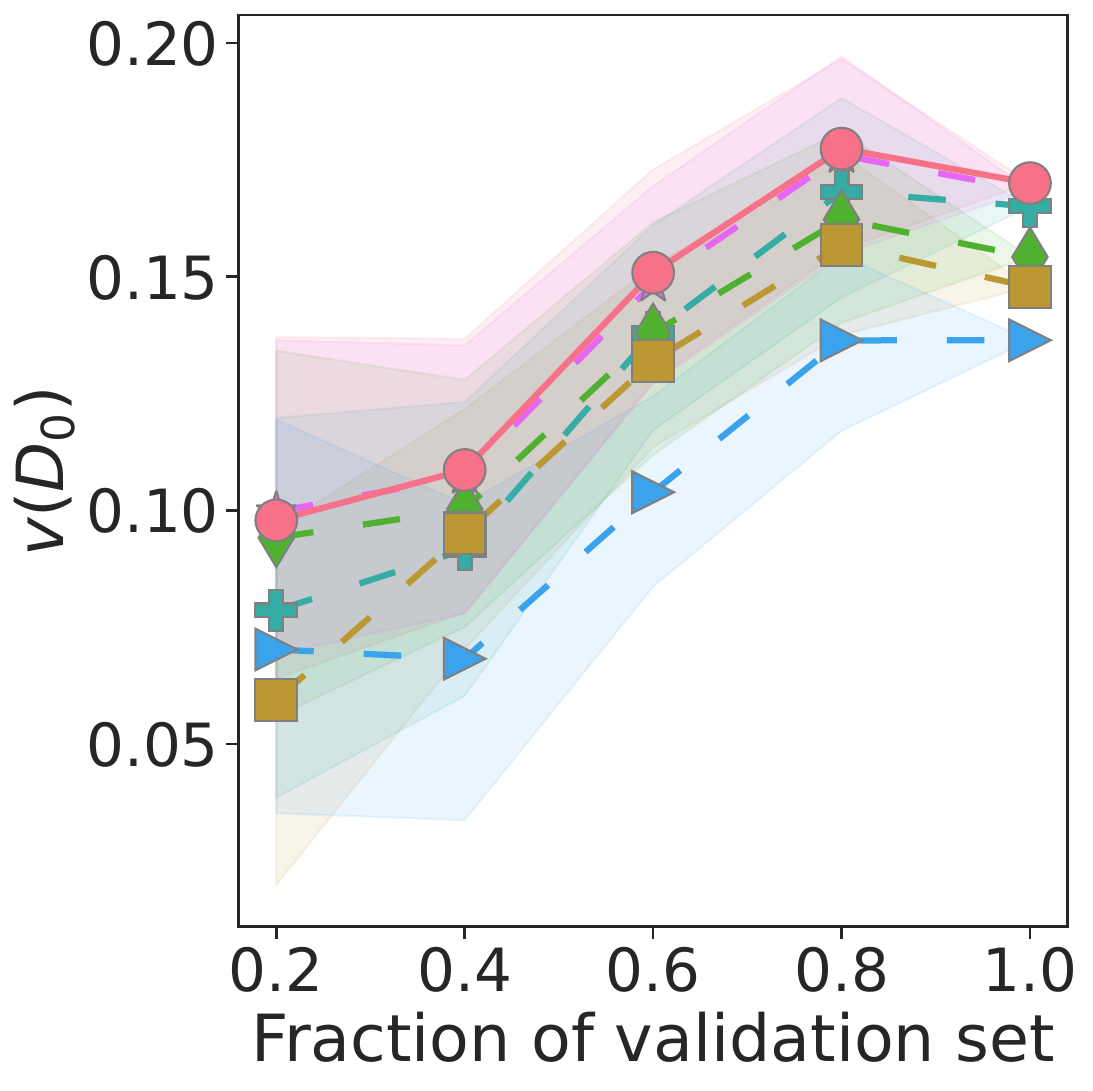}  & 
        \includegraphics[width=0.2\columnwidth]{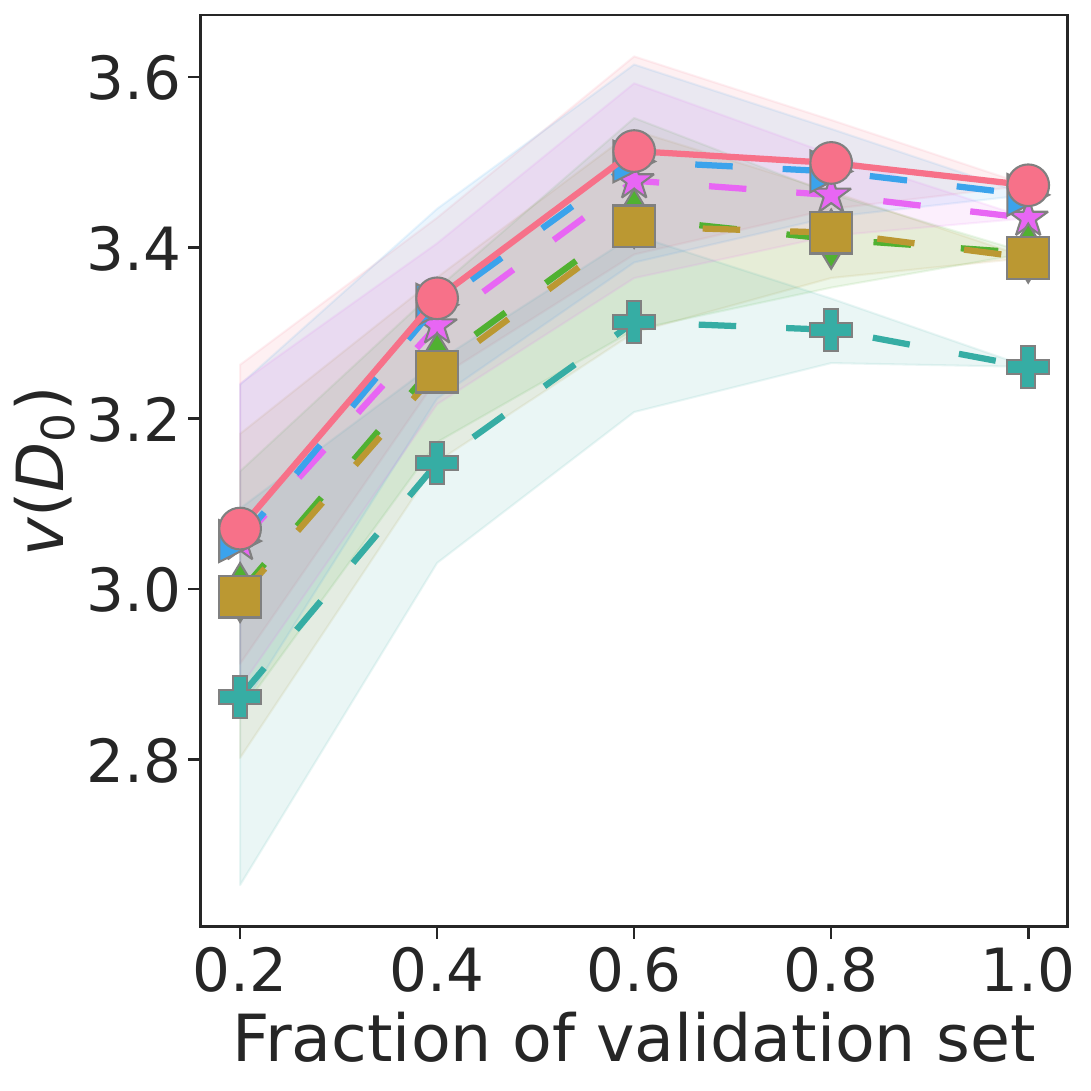}  & \includegraphics[width=0.2\columnwidth]{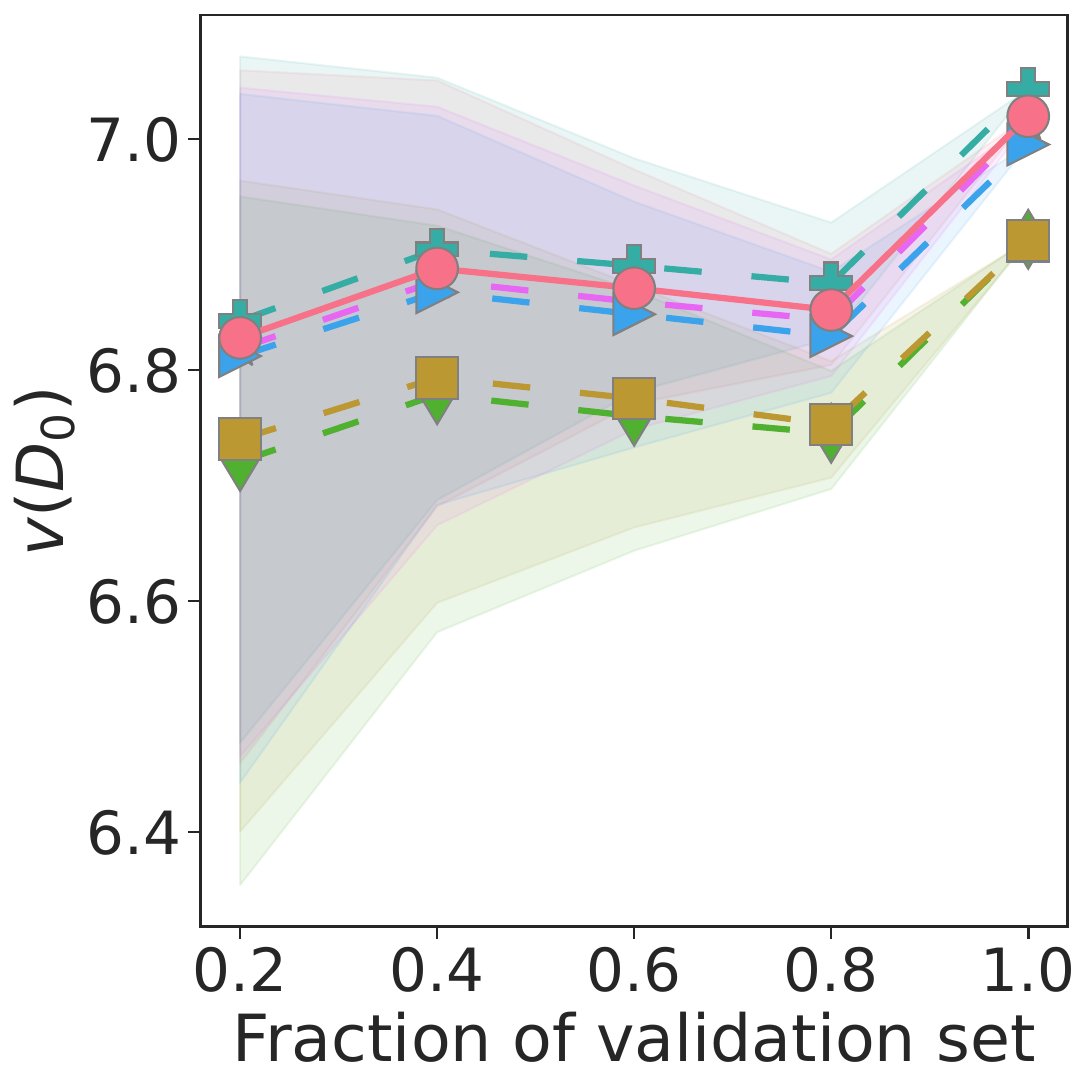} \\
        {(a) (GP-FR)} & {(b) (LO-HE)} & {(c) (NN-CY)} & {(d) (LO-BL)}  
        \end{tabular}
        \caption{Graphs of value of source $0$ (with $95\%$ CI shaded across 20 sets) under its different strategies when evaluated on validation set of increasing~sizes for various datasets (a)-(d). As the validation set is sorted, a smaller fraction corresponds to a more different input distributions.
        }
        \label{fig:dvf-valid-size-sorted}
\end{figure}

\begin{figure}[H]
        \centering
        \begin{tabular}{@{}c@{\hspace{5mm}}c@{\hspace{5mm}}c@{}} \includegraphics[width=0.2\columnwidth]{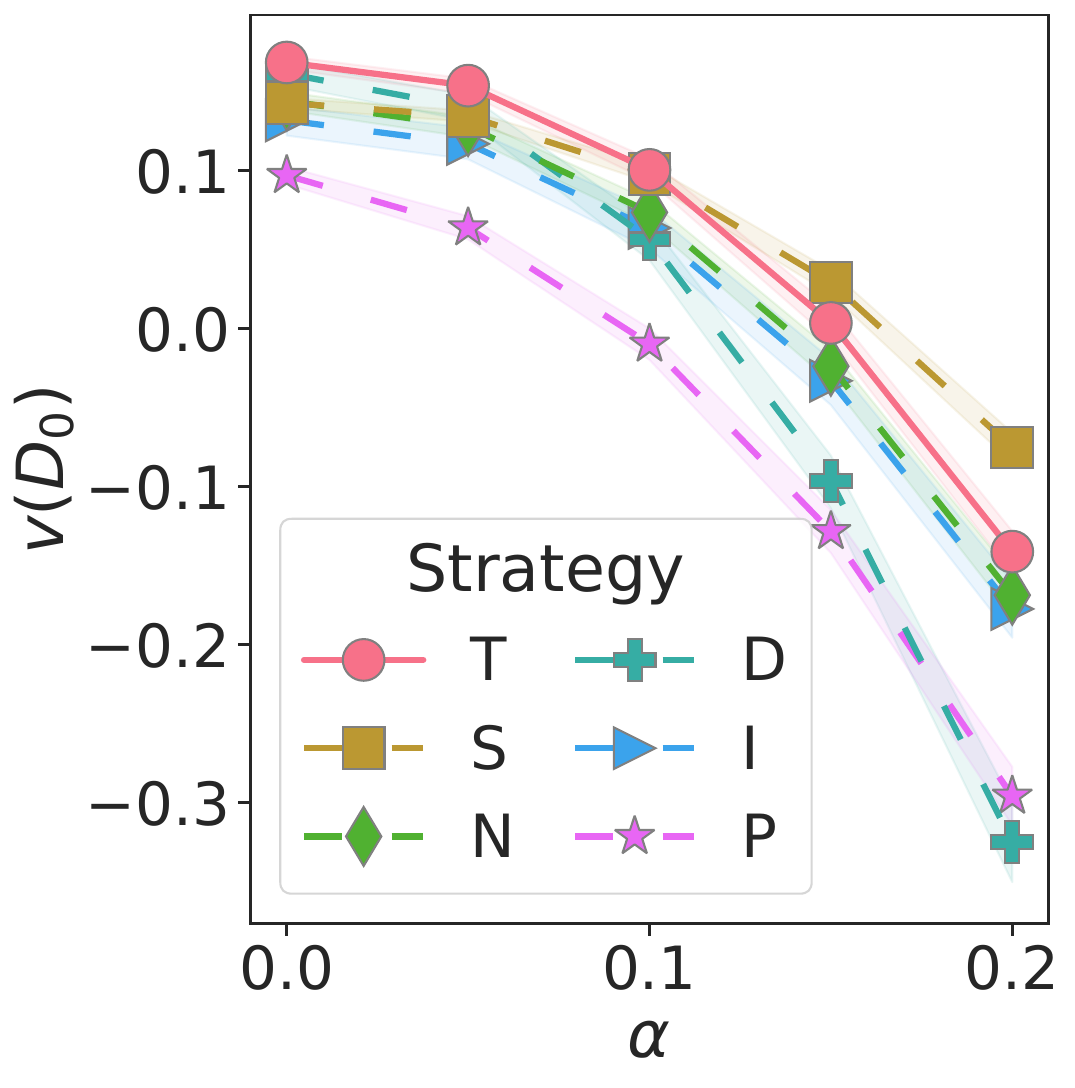} &  \includegraphics[width=0.2\columnwidth]{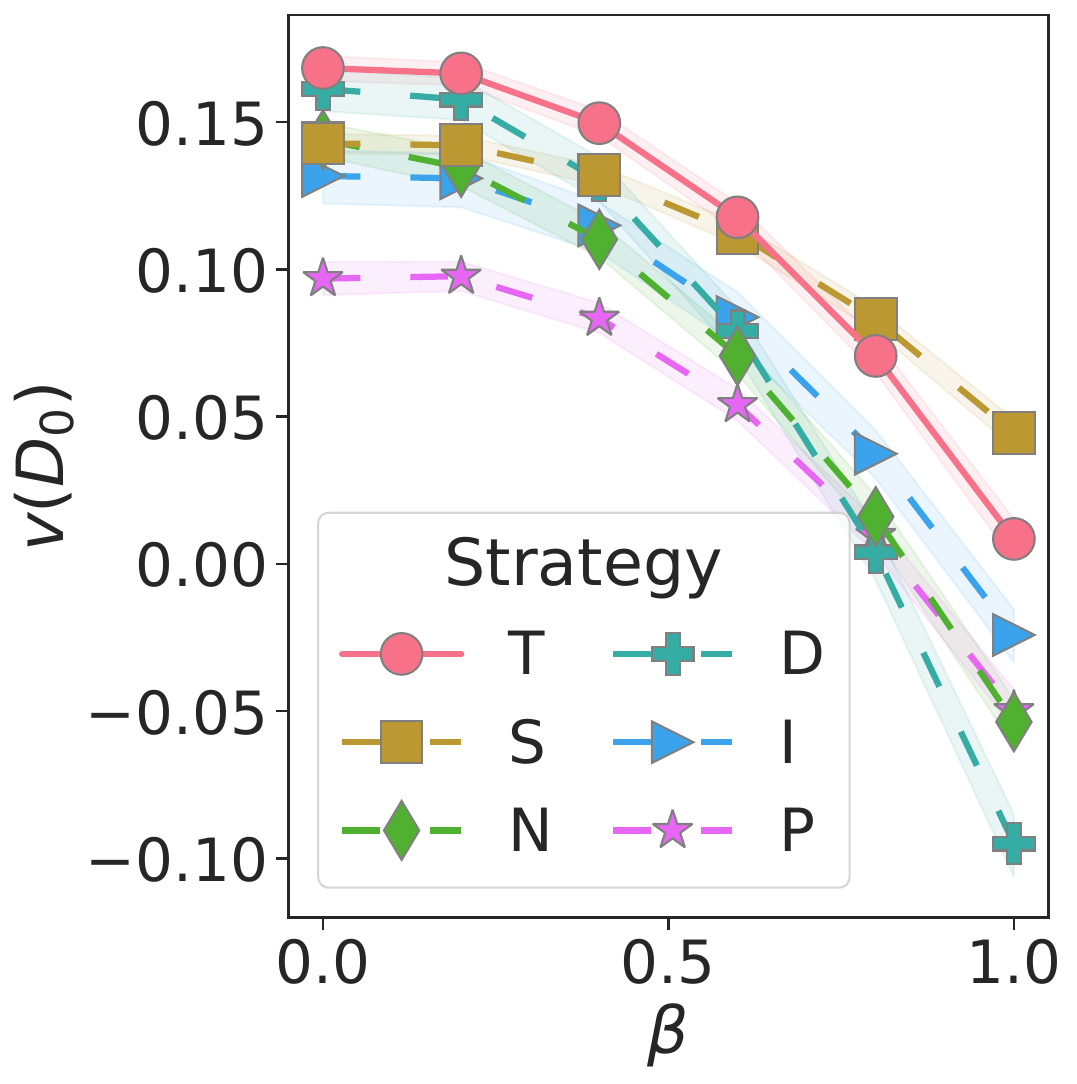}  & \includegraphics[width=0.2\columnwidth]{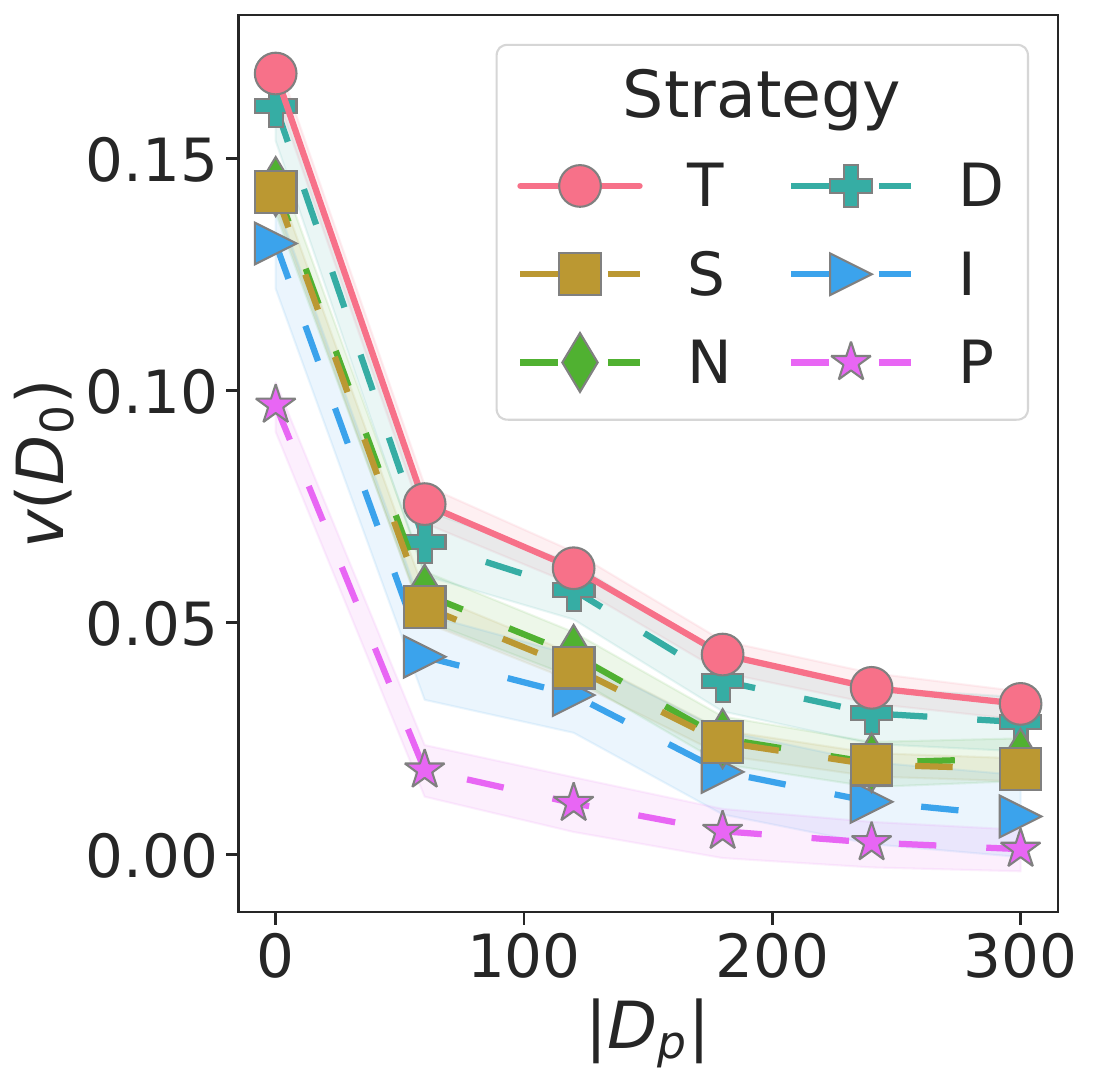}   \\
        {(a) Vary $\alpha$} & {(b) Vary $\beta$} & {(c) Vary $|D_p|$} \\
    \includegraphics[width=0.2\columnwidth]{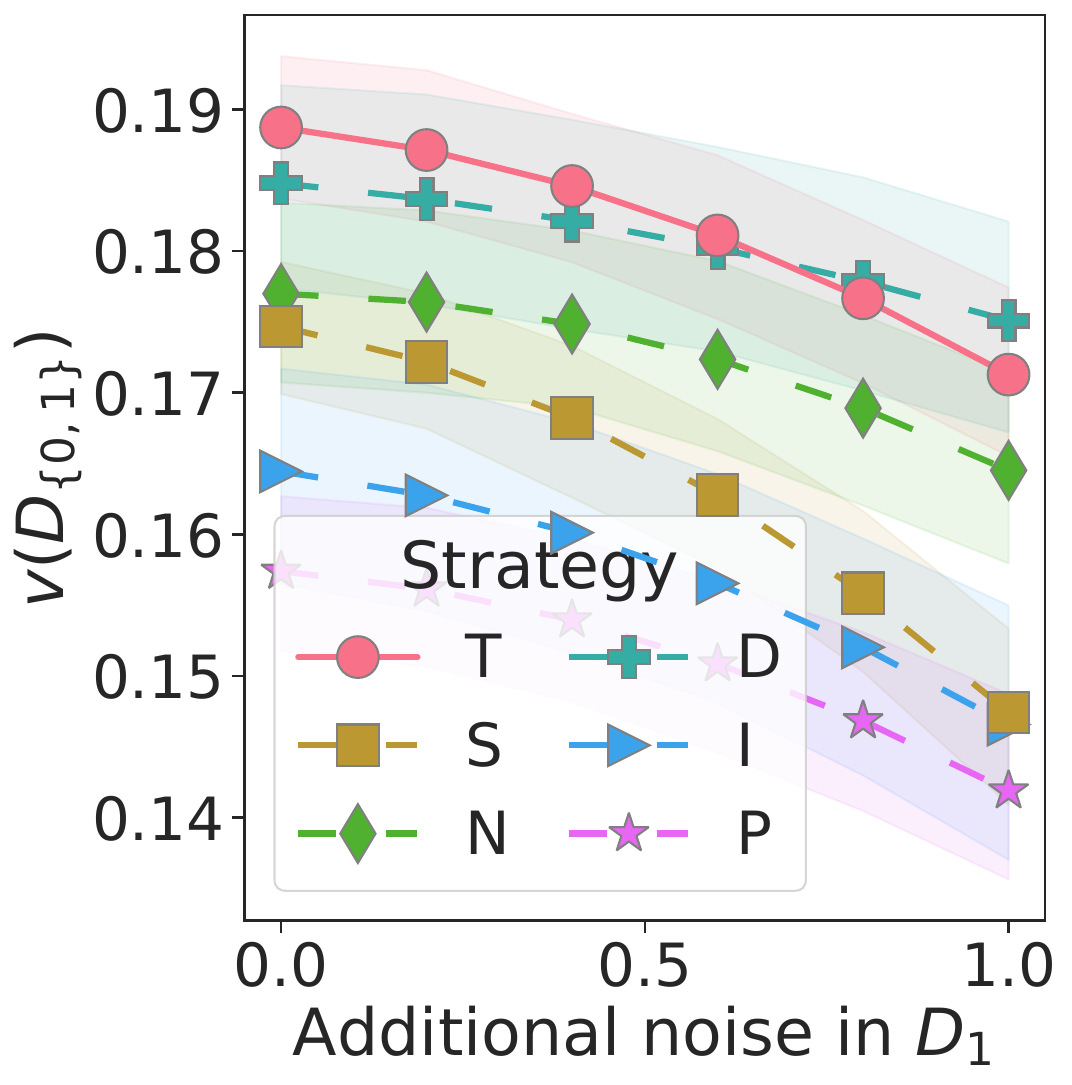} &
    \includegraphics[width=0.2\columnwidth]{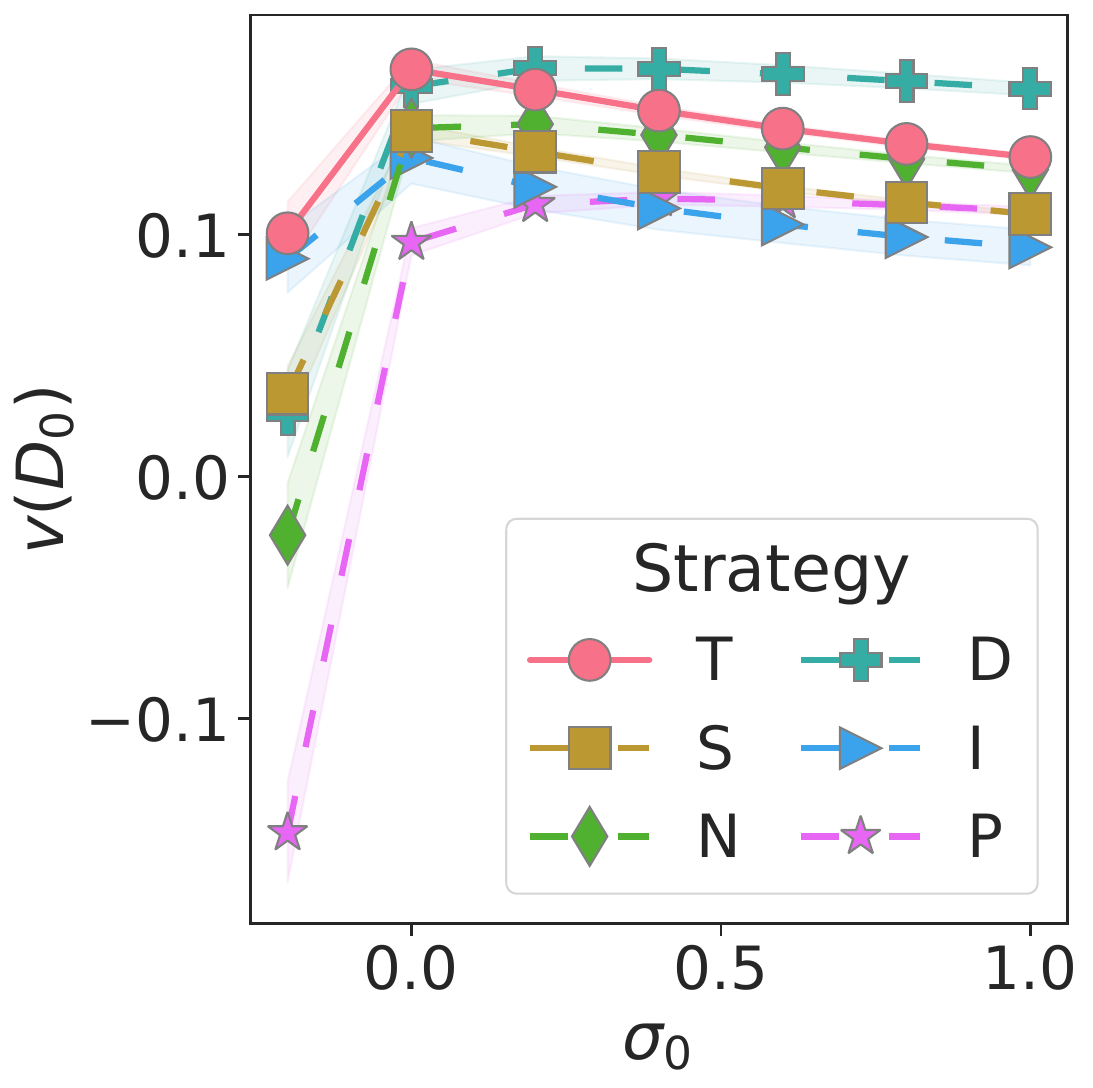} &   \\
    {(d) Vary noise in $\vy$ in $D_1$} & 
        {(e) Vary reported $\sigma_0$}
        \end{tabular}
        \caption{Graphs of value of source $0$ (with $95\%$ CI shaded across 20 sets) under its different strategies when evaluated on (a)-(b) validation sets generated with modified parameters, (c) a prior that has seen $|D_p|$ data for the (GP-FR) dataset, (d) source $1$ adds more noise,
        and (e) when source $0$ reports different noise in $D_0$.
        }
        \label{fig:dvf-gp-fr-misspecified}
\end{figure}

\textbf{Model misspecification of $p(\cT|\theta)$ in \ref{a2}.}
We consider generating the mediator's validation set $T^*$ with modified parameters of the synthetic Friedman function:
\begin{align*}
(A): & \  \vy = 10\sin(\textcolor{blue}{(1+\alpha)} \pi \mathbf{X}_{[:,0]} \mathbf{X}_{[:,1]}) + 20 (\mathbf{X}_{[:,2]} - 0.5)^2 + 10\mathbf{X}_{[:,3]} + 5\mathbf{X}_{[:,4]} + 0\mathbf{X}_{[:,5]} + \mathcal{N}(0, 1) \ ,\\
(B): & \  \vy = 10\sin(\pi \mathbf{X}_{[:,0]} \mathbf{X}_{[:,1]}) + 20 (\mathbf{X}_{[:,2]} - 0.5)^2 + 10\mathbf{X}_{[:,3]} + 5\mathbf{X}_{[:,4]} + 0\mathbf{X}_{[:,5]} + \textcolor{blue}{\beta} + \mathcal{N}(0, 1)\ .
\end{align*}
When $\alpha$ or $\beta$ equals $0$, the original synthetic Friedman function is recovered. Larger values of $\alpha$ or $\beta$ mean greater model misspecification.
As shown in Fig.~\ref{fig:dvf-gp-fr-misspecified}a-b, when $\alpha$ or $\beta$ is small, source $0$ submitting the true $\bar{D}_0$ achieves its highest value. However, as $\alpha$ or $\beta$ increases, 
and the validation set likelihood $p(\cT|\theta)$ in \ref{a2} becomes more misspecified, $v(D_0)$ may not be empirically \struthful\ for source $0$. Source $0$'s untruthful strategies, such as [S] submitting a subset or [I] injecting mislabeled synthetic data, may result in a higher log-likelihood of the noisy validation set than the truthful strategy [T] as they produce higher predictive uncertainty.

\textbf{Informative prior in GP-FR.} Next, we consider the scenario where the prior distribution of model parameters $p(\theta)$ has been updated by some true data $D_p$. That is, $v(D_i) \triangleq \log p(\cT = T^*|\cD_i = D_i, \cD_p = D_p) - \log p(\cT = T^* | \cD_p = D_p)$.
In Fig.~\ref{fig:dvf-gp-fr-misspecified}c, we vary the size of $D_p$ and observe that across all sizes and datasets, source $0$'s strategy T still achieves its highest value. As the size of $D_p$ increases, the data value of source $0$ decreases.
When we interpret $D_p$ as the data from the coalition $C$, we can also conclude that source $0$ adds less value to coalition with more data.

For models in the exponential family (App.~\ref{p-bg:exp-family}), this observation can be explained by the formula for the posterior natural parameters in equation~\ref{eq:post-expfamily}. As  $\upsilon_0$ increases (from more prior observations), the relative impact of dataset $D_i$ with $c_i$ data points and sufficient statistics decreases.

\textbf{Model misspecification of $p(\cD_C|\theta)$ in \ref{a3}.}
In Fig.~\ref{fig:dvf-gp-fr-misspecified}d, we plot $v(D_{\{0,1\}})$ as source $0$ uses different strategies (plotted as different lines) while source $1$ untruthfully increases the noise in its data (plotted on the horizontal axis). When source $1$ adds a small amount of noise, source $0$'s strategy T still achieves its highest value as compared to other strategies. However, when source $1$ increases the additional noise, source $0$'s untruthful strategies, such as S, may result in a higher $v(D_{\{0,1\}})$ as they produce higher predictive uncertainty. 

\textbf{Misspecification of $p(\cD_0|\theta)$.}
In Fig.~\ref{fig:dvf-gp-fr-misspecified}e, we plot $v(D_0)$ as source $0$ uses different strategies (plotted as different lines) and reports different noise ($\mathcal{N}(0,1+\sigma_0)$ in its $\vy_0$). It can be observed that as $\sigma_0$ deviates further from $0$, the value $v(D_0)$ decreases. Thus, source $0$ cannot artificially increase its value by reporting lower noise. This experiment also shows that source $0$ can have different relationships with the model parameters as other sources (here, different noise level in $p(\cD_i|\theta)$).

\subsection{Truthful Semivalues}\label{app:exp:semi}

\textbf{Truthful Shapley Values.} In Fig.~\ref{fig:honest-shapley-more}, we consider source $0$ using various strategies while all other sources submit their true datasets. We report the Shapley values computed for $20$ different subsets of the validation set.
Across all datasets except (LO-BL), it can be observed that source $0$'s strategy T consistently leads to its highest Shapley value. (LO-BL) is an exception: data duplication slightly increases source $0$'s Shapley value by reducing the high uncertainty of our prior, which may be beneficial when the validation set is well-predicted.
Moreover, collaborative fairness is always evident: the Shapley values of other sources are higher when source $0$ submits a less valuable untruthful dataset (e.g., with strategies S and N) than their Shapley values when source $0$ is truthful. This reflects that data from other sources become more informative and important for accurate predictions when source $0$ is untruthful.

\begin{figure}[h]
    \centering
    \begin{tabular}{cccc}
        \begin{subfigure}{0.2\textwidth}
            \centering
            \includegraphics[width=\textwidth]{supp_figs/gp-shapley-honest.pdf}
            \caption{(GP-FR)}
        \end{subfigure} &
        \begin{subfigure}{0.2\textwidth}
            \centering
            \includegraphics[width=\textwidth]{supp_figs/heart-shapley-honest.pdf}
            \caption{(LO-HE)}
        \end{subfigure} &
        \begin{subfigure}{0.2\textwidth}
            \centering
            \includegraphics[width=\textwidth]{supp_figs/cycle-shapley-honest.pdf}
            \caption{(NN-CY)}
        \end{subfigure} &
        \begin{subfigure}{0.2\textwidth}
            \centering
            \includegraphics[width=\textwidth]{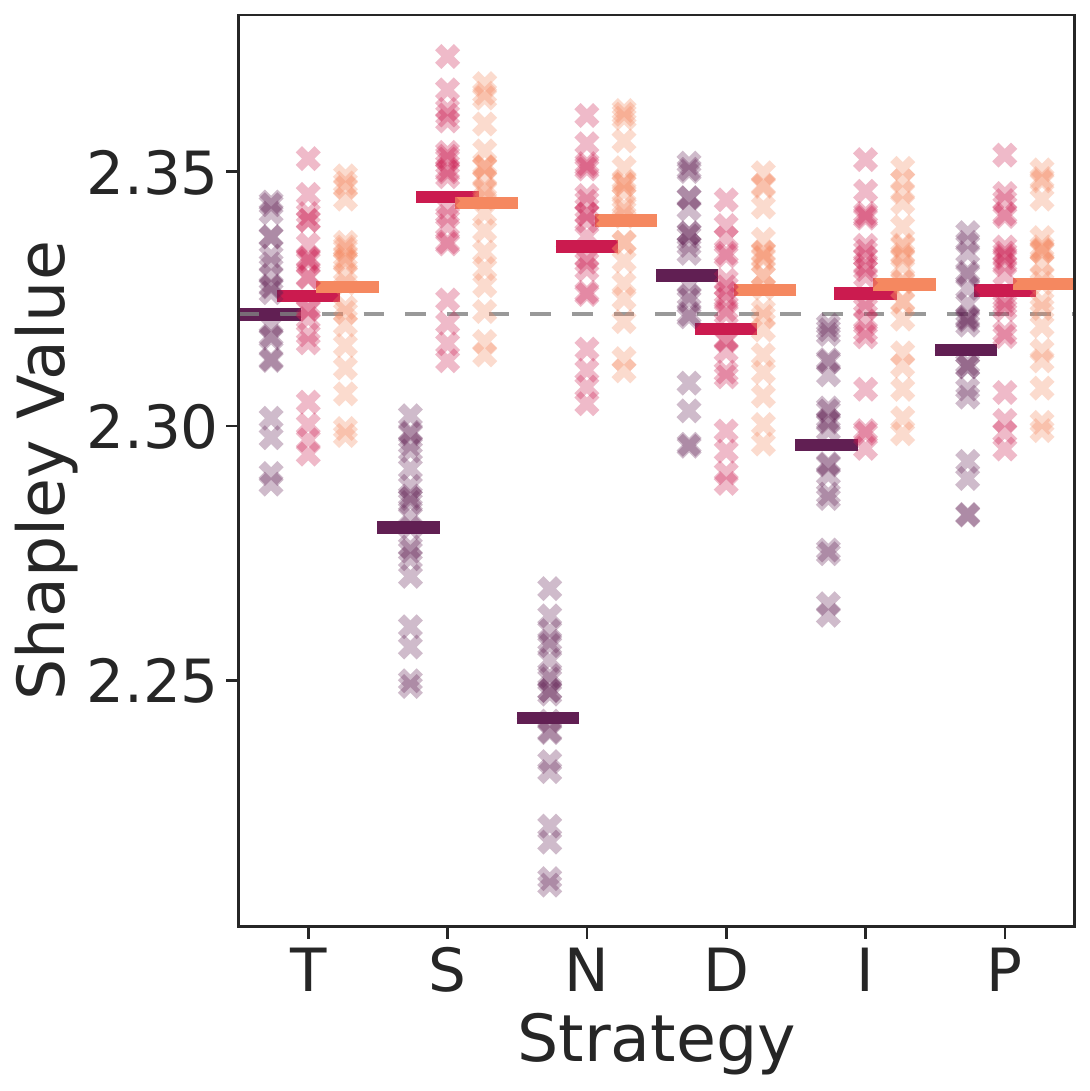}
            \caption{(LO-BL)}
        \end{subfigure} \\
        \multicolumn{4}{c}{
            \begin{subfigure}{0.4\textwidth}
                \centering
                \includegraphics[width=\textwidth]{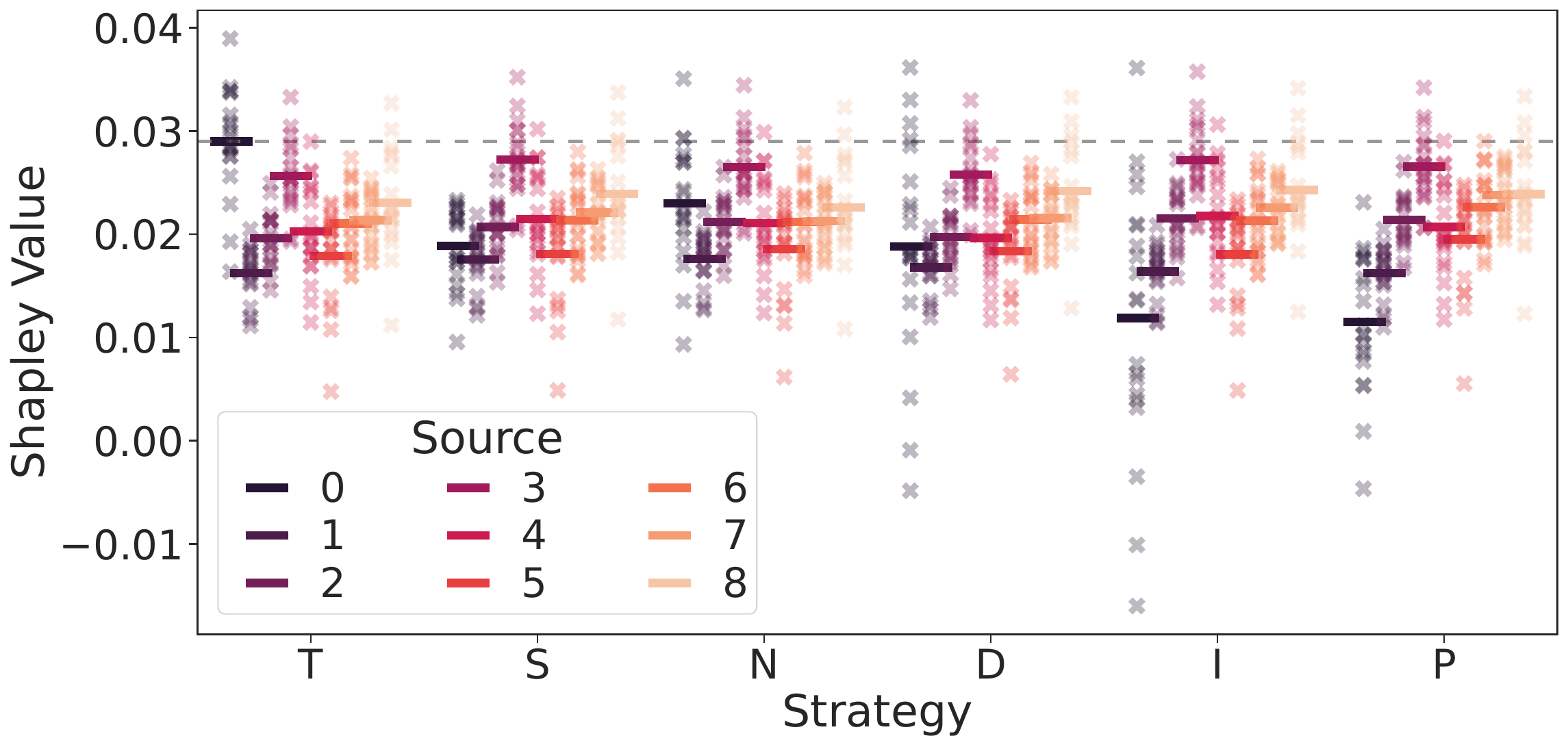}
                \caption{(GP-FR-9)}
            \end{subfigure}
        } \\
    \end{tabular}
    \caption{Graphs of all sources' Shapley values (mean across $20$ validation sets plotted as straight bars) when source $0$ uses different strategies for various datasets (a)-(e).}
    \label{fig:honest-shapley-more}
\end{figure}

\begin{figure}[H]
    \centering
    \begin{tabular}{cccc}
        \begin{subfigure}{0.2\textwidth}
            \centering
            \includegraphics[width=\textwidth]{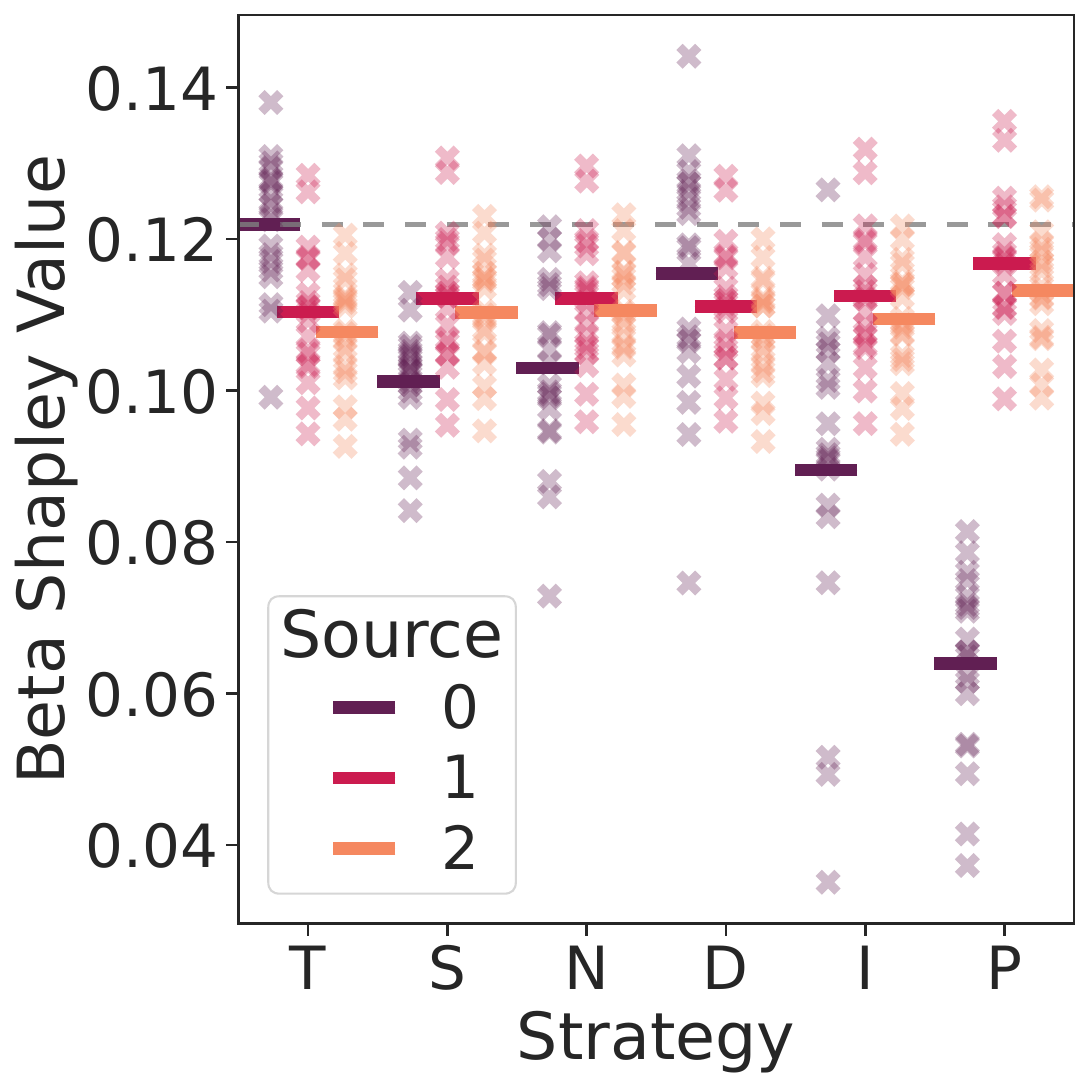}
            \caption{(GP-FR)}
        \end{subfigure} &
        \begin{subfigure}{0.2\textwidth}
            \centering
            \includegraphics[width=\textwidth]{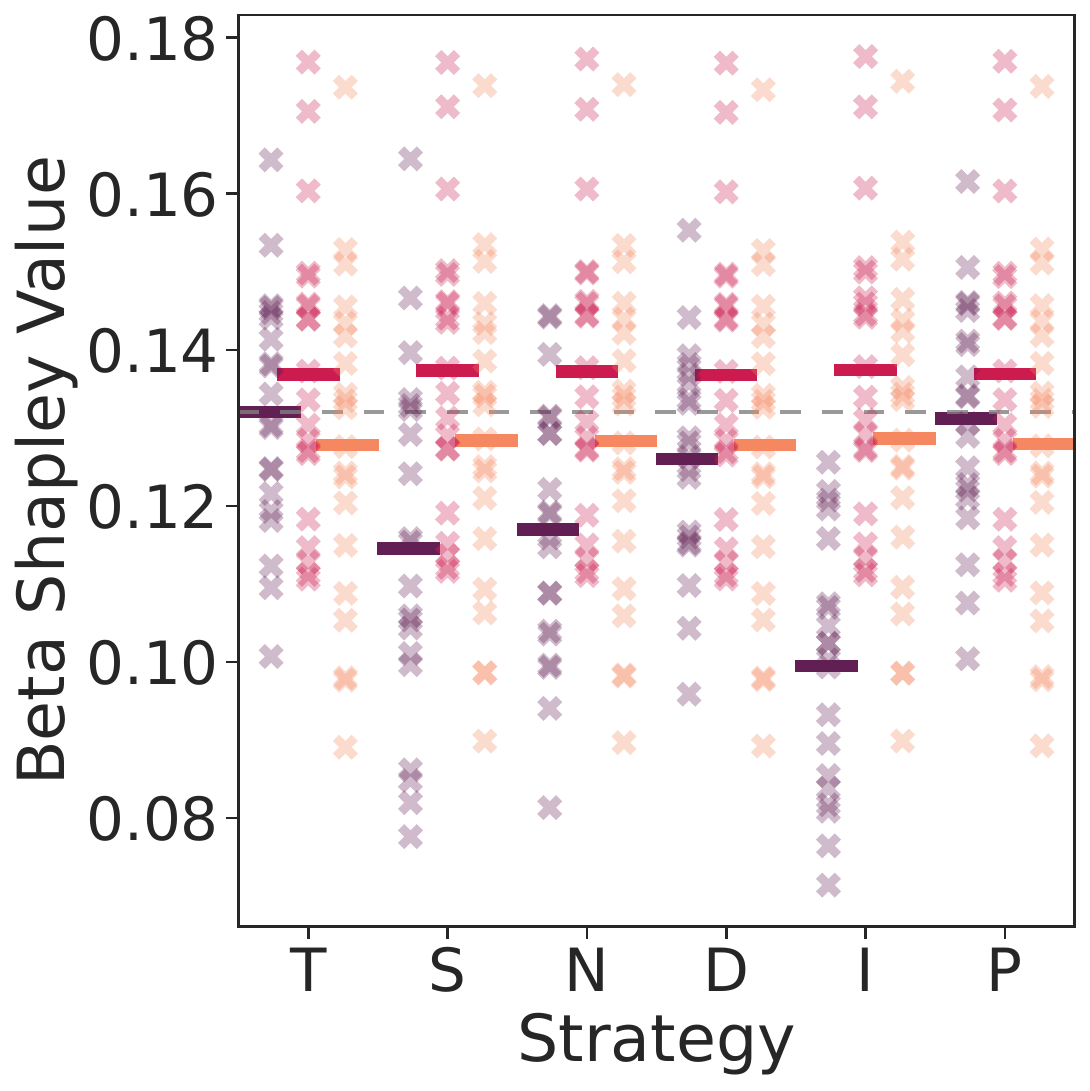}
            \caption{(LO-HE)}
        \end{subfigure} &
        \begin{subfigure}{0.2\textwidth}
            \centering
            \includegraphics[width=\textwidth]{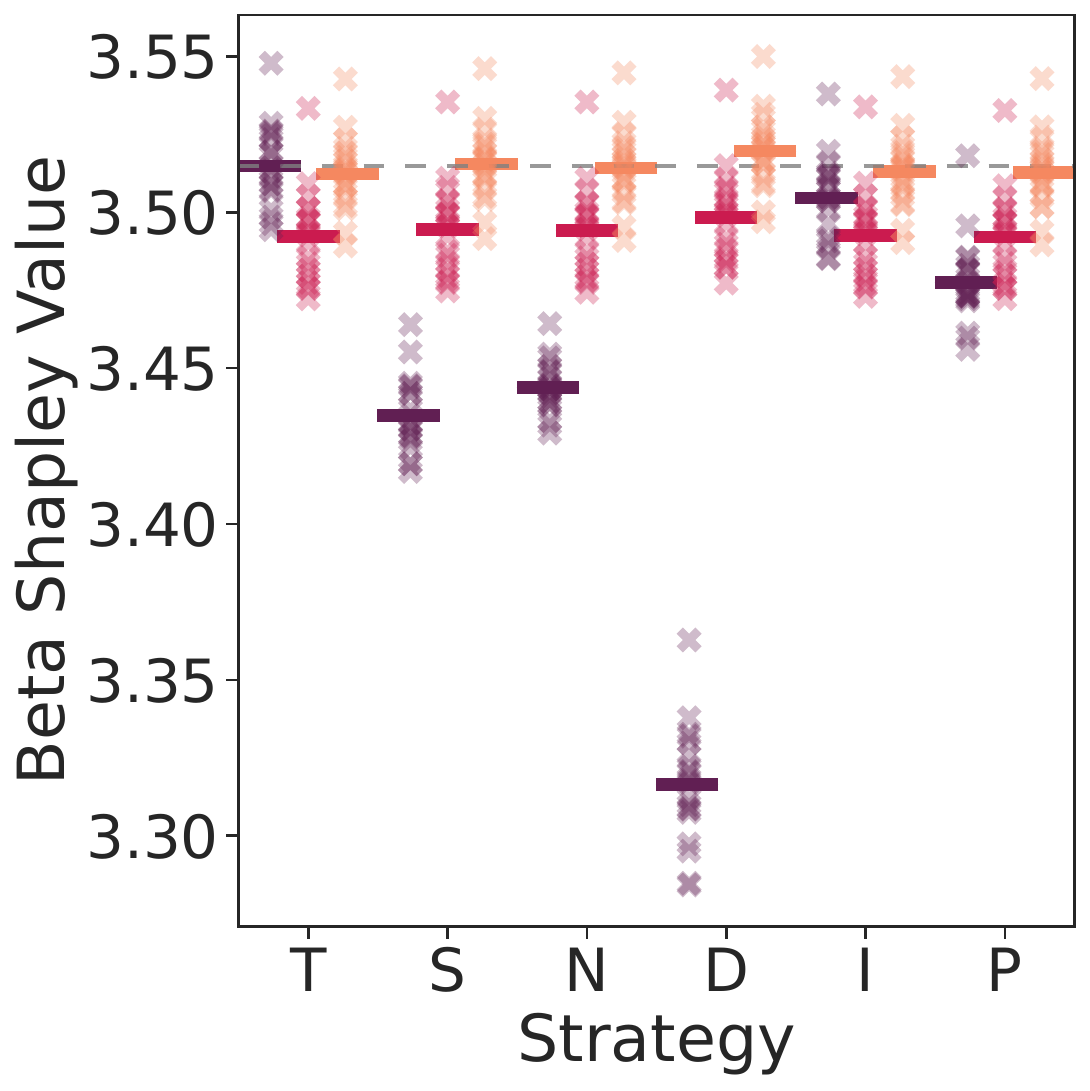}
            \caption{(NN-CY)}
        \end{subfigure} &
        \begin{subfigure}{0.2\textwidth}
            \centering
            \includegraphics[width=\textwidth]{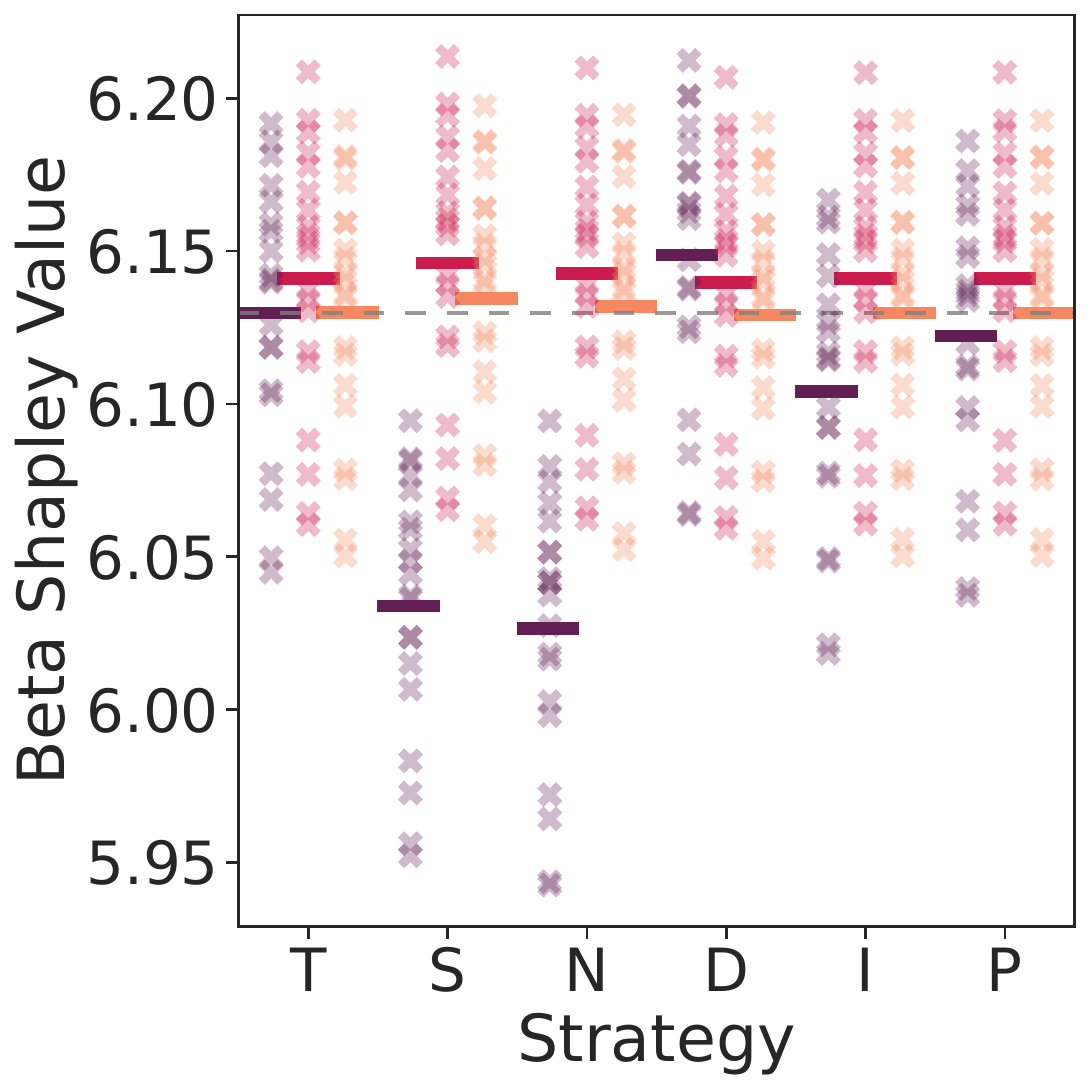}
            \caption{(LO-BL)}
        \end{subfigure} \\
        \multicolumn{4}{c}{
            \begin{subfigure}{0.4\textwidth}
                \centering
                \includegraphics[width=\textwidth]{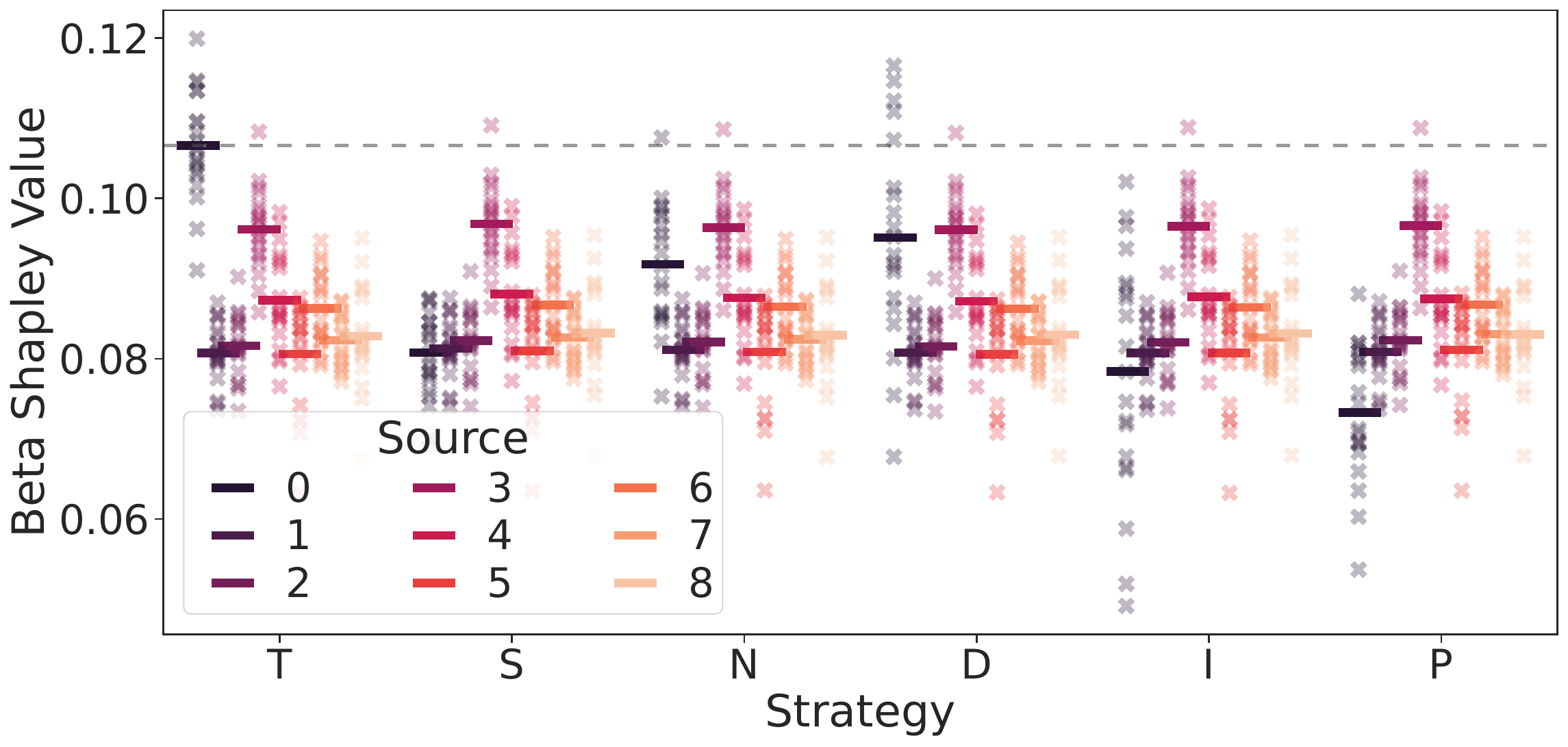}
                \caption{(GP-FR-9)}
            \end{subfigure}
        } \\
    \end{tabular}
    \caption{Graphs of all sources' $\mathtt{Beta}(16,1)$ Shapley values (mean across $20$ validation sets plotted as straight bars) when source $0$ uses different strategies for various datasets (a)-(e).}
    \vspace{-3mm}
    \label{fig:honest-beta-shapley16}
\end{figure}

\begin{figure}[H]
    \centering
    \begin{tabular}{cccc}
        \begin{subfigure}{0.2\textwidth}
            \centering
            \includegraphics[width=\textwidth]{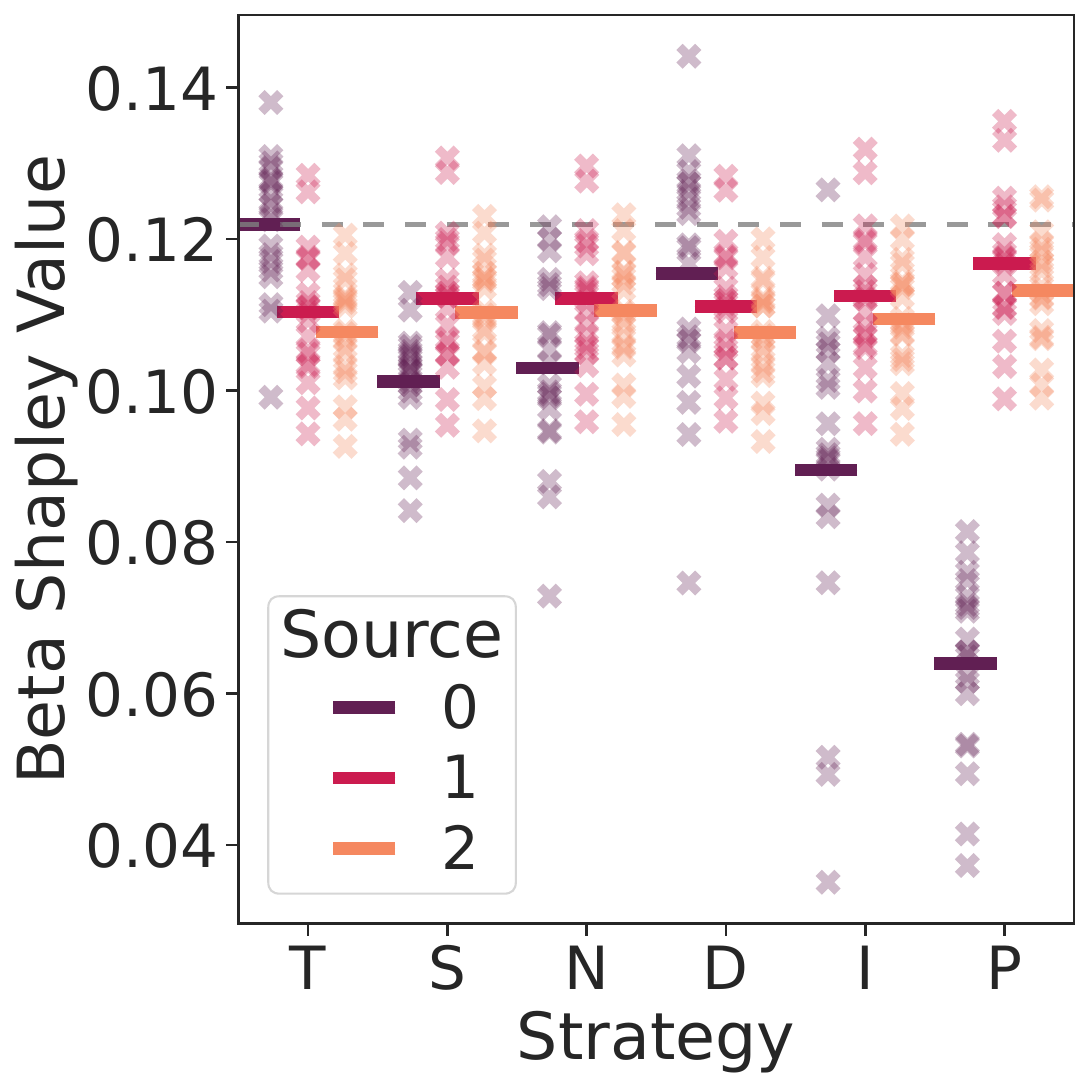}
            \caption{(GP-FR)}
        \end{subfigure} &
        \begin{subfigure}{0.2\textwidth}
            \centering
            \includegraphics[width=\textwidth]{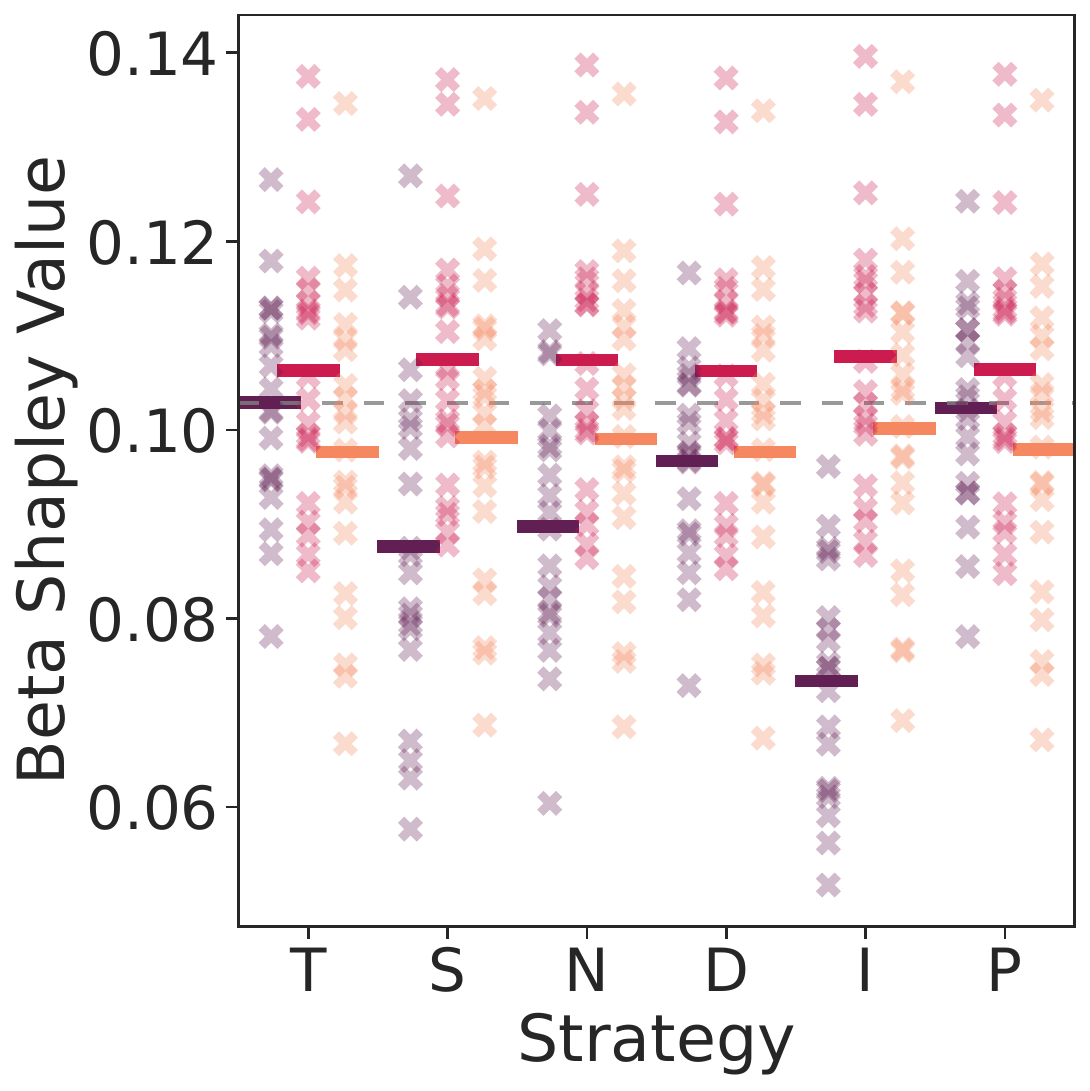}
            \caption{(LO-HE)}
        \end{subfigure} &
        \begin{subfigure}{0.2\textwidth}
            \centering
            \includegraphics[width=\textwidth]{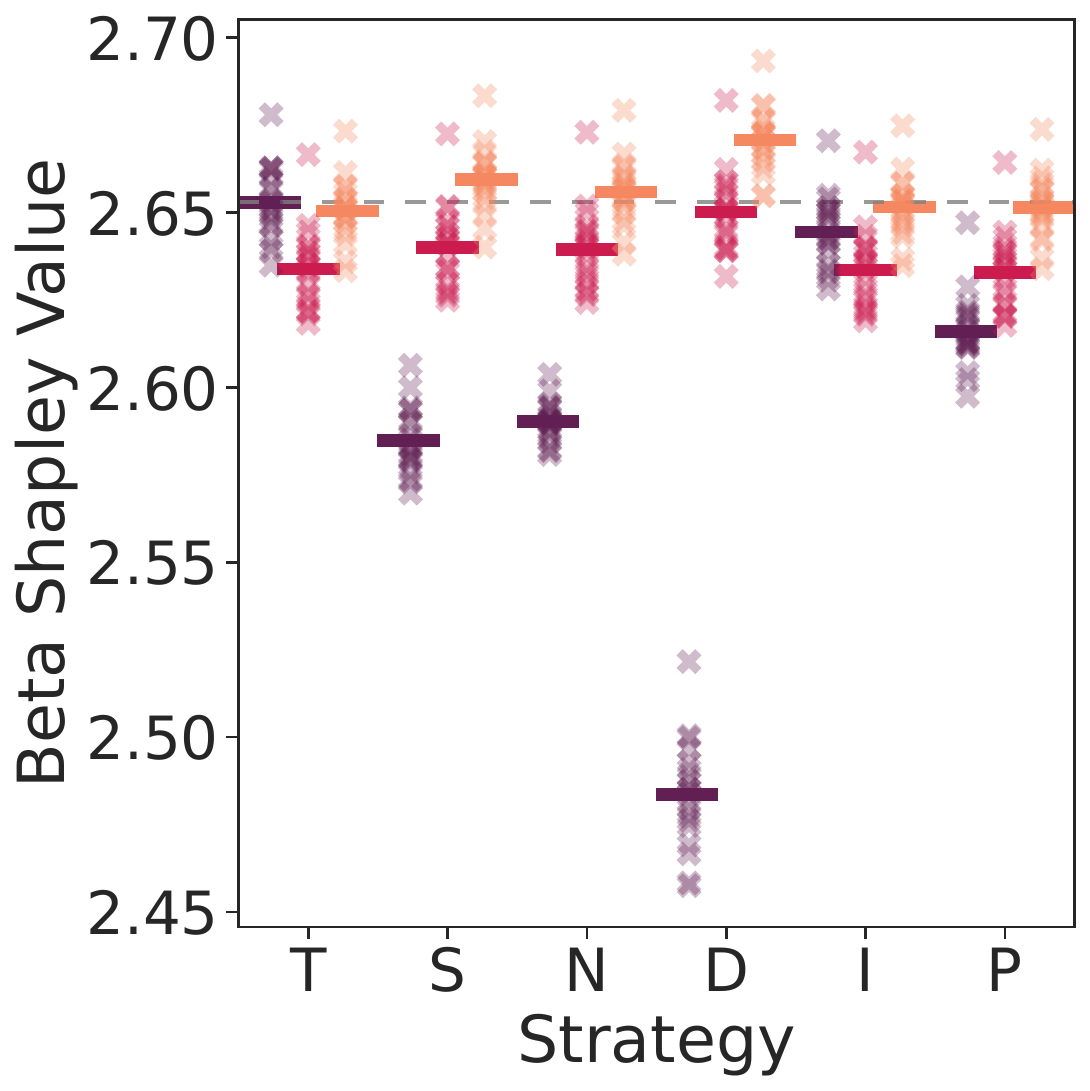}
            \caption{(NN-CY)}
        \end{subfigure} &
        \begin{subfigure}{0.2\textwidth}
            \centering
            \includegraphics[width=\textwidth]{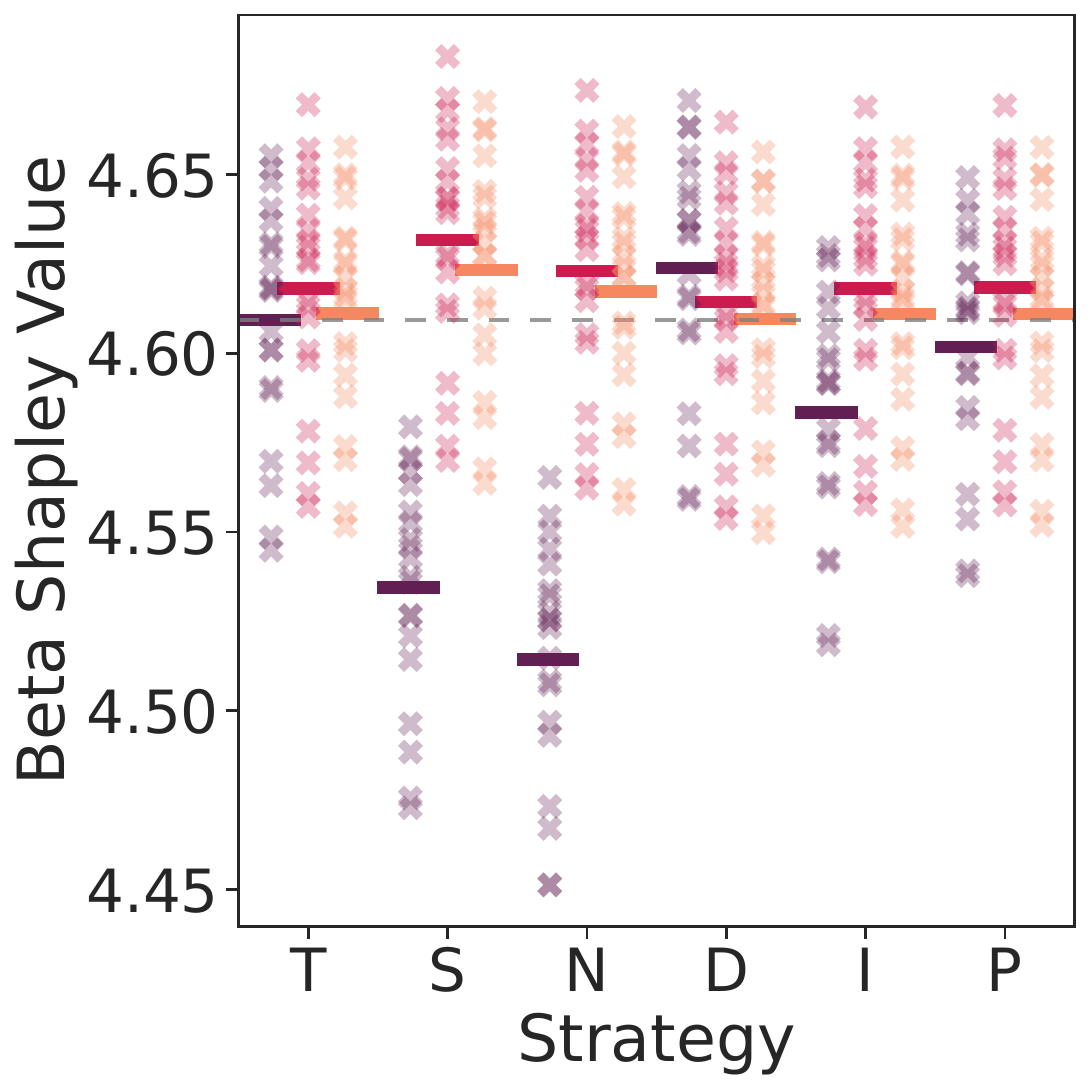}
            \caption{(LO-BL)}
        \end{subfigure} \\
        \multicolumn{4}{c}{
            \begin{subfigure}{0.4\textwidth}
                \centering
                \includegraphics[width=\textwidth]{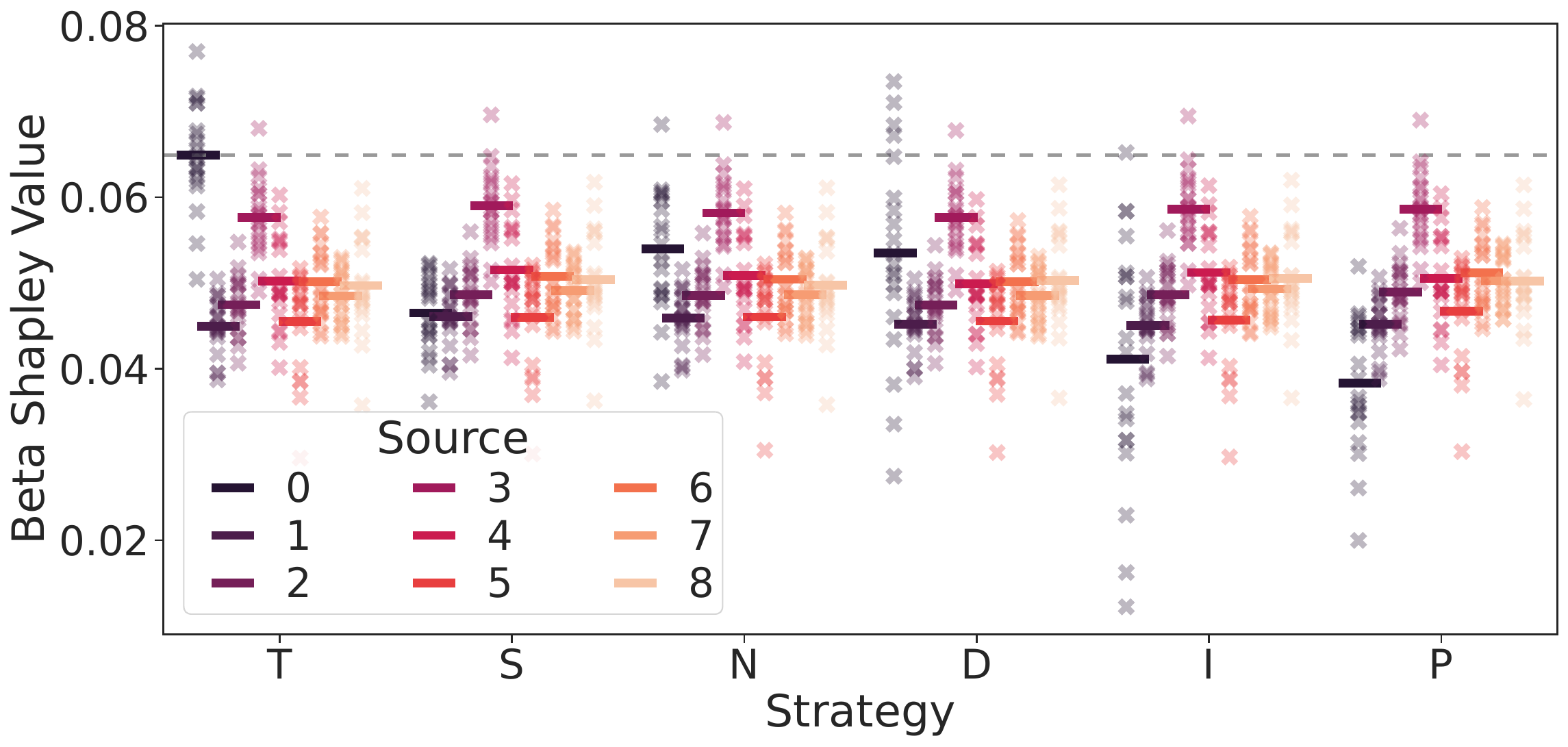}
                \caption{(GP-FR-9)}
            \end{subfigure}
        } \\
    \end{tabular}
    \caption{Graphs of all sources' $\mathtt{Beta}(4,1)$ Shapley values (mean across $20$ validation sets plotted as straight bars) when source $0$ uses different strategies for various datasets (a)-(e).}
    \vspace{-3mm}
    \label{fig:honest-beta-shapley2}
\end{figure}

\begin{figure}[H]
    \centering
    \begin{tabular}{cccc}
        \begin{subfigure}{0.2\textwidth}
            \centering
            \includegraphics[width=\textwidth]{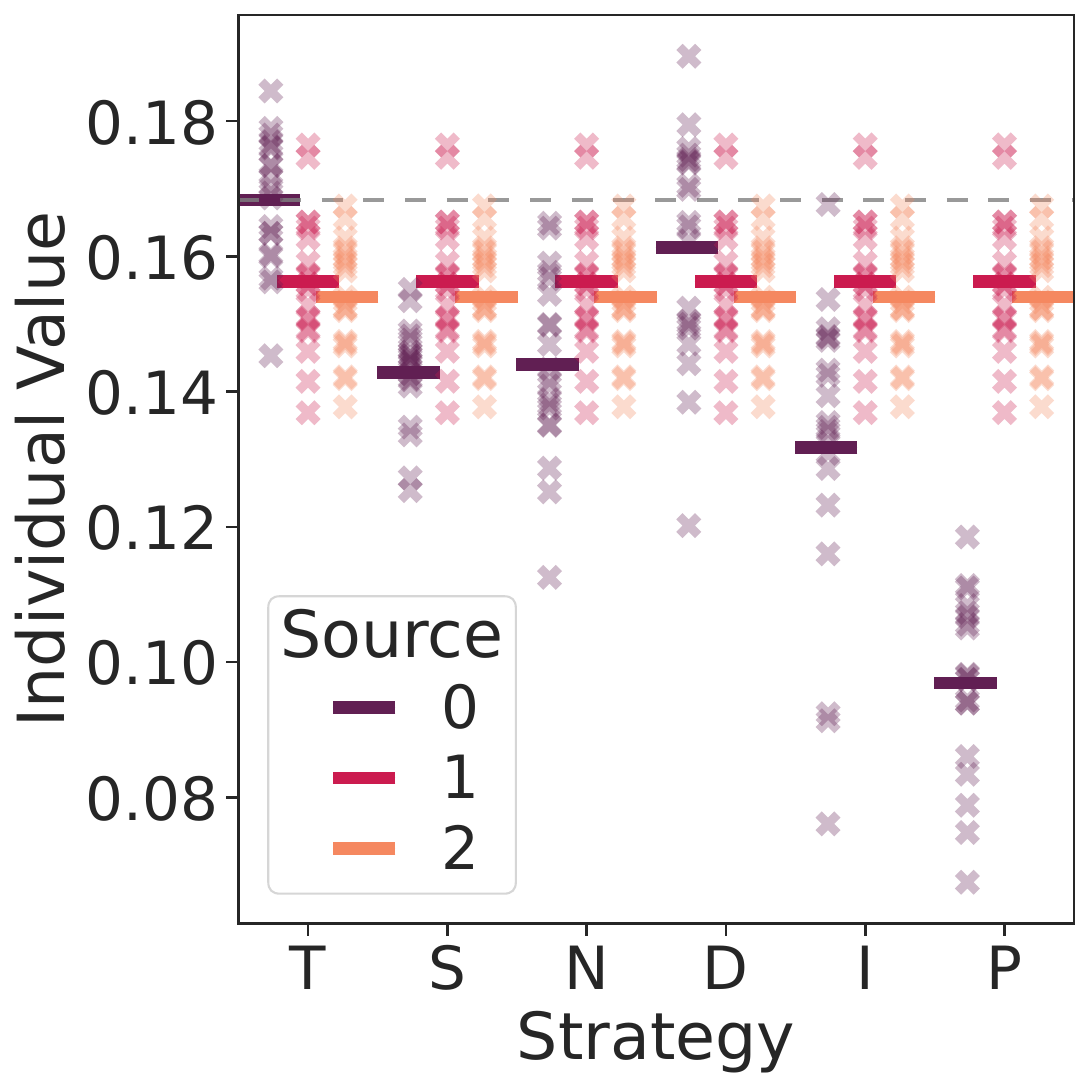}
            \caption{(GP-FR)}
        \end{subfigure} &
        \begin{subfigure}{0.2\textwidth}
            \centering
            \includegraphics[width=\textwidth]{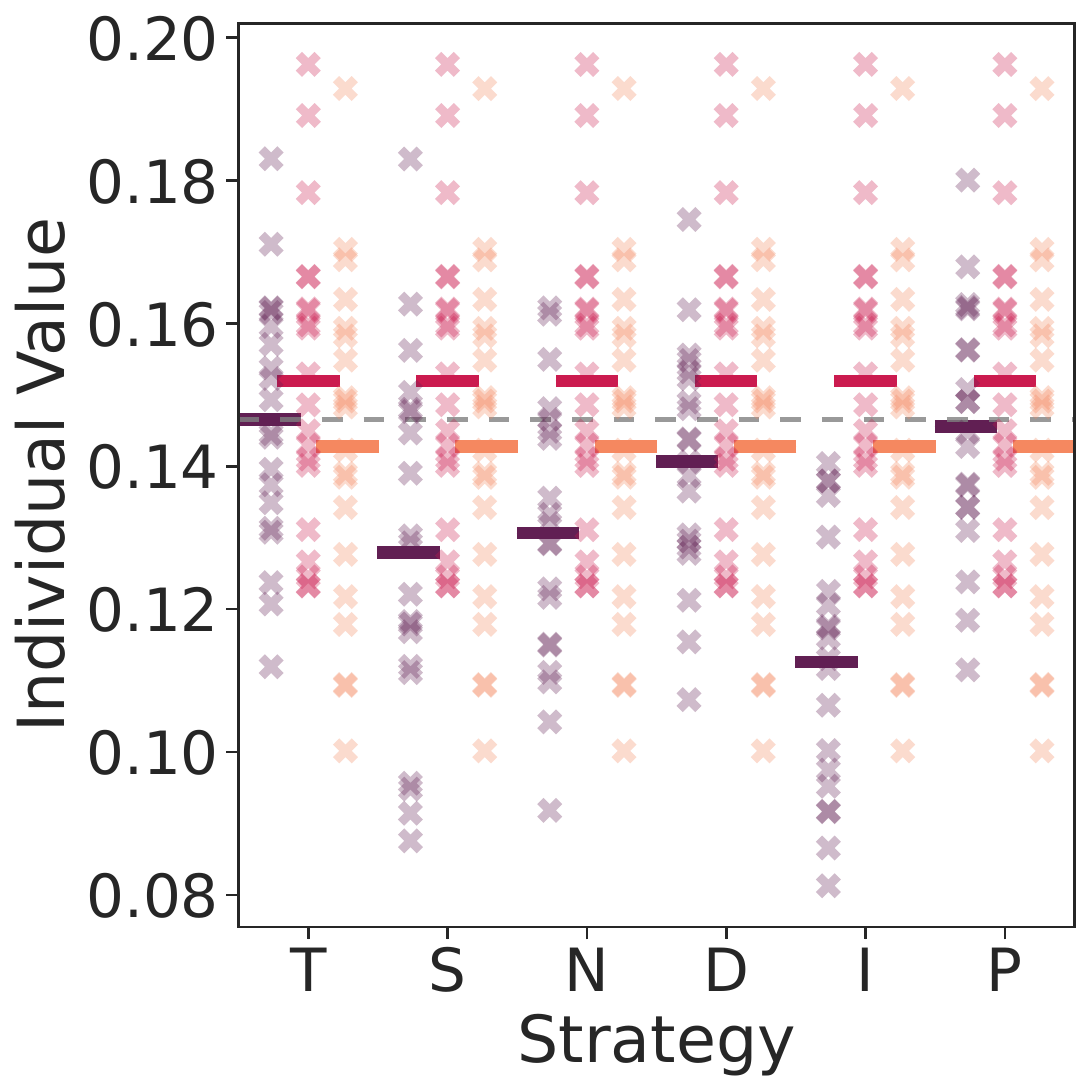}
            \caption{(LO-HE)}
        \end{subfigure} &
        \begin{subfigure}{0.2\textwidth}
            \centering
            \includegraphics[width=\textwidth]{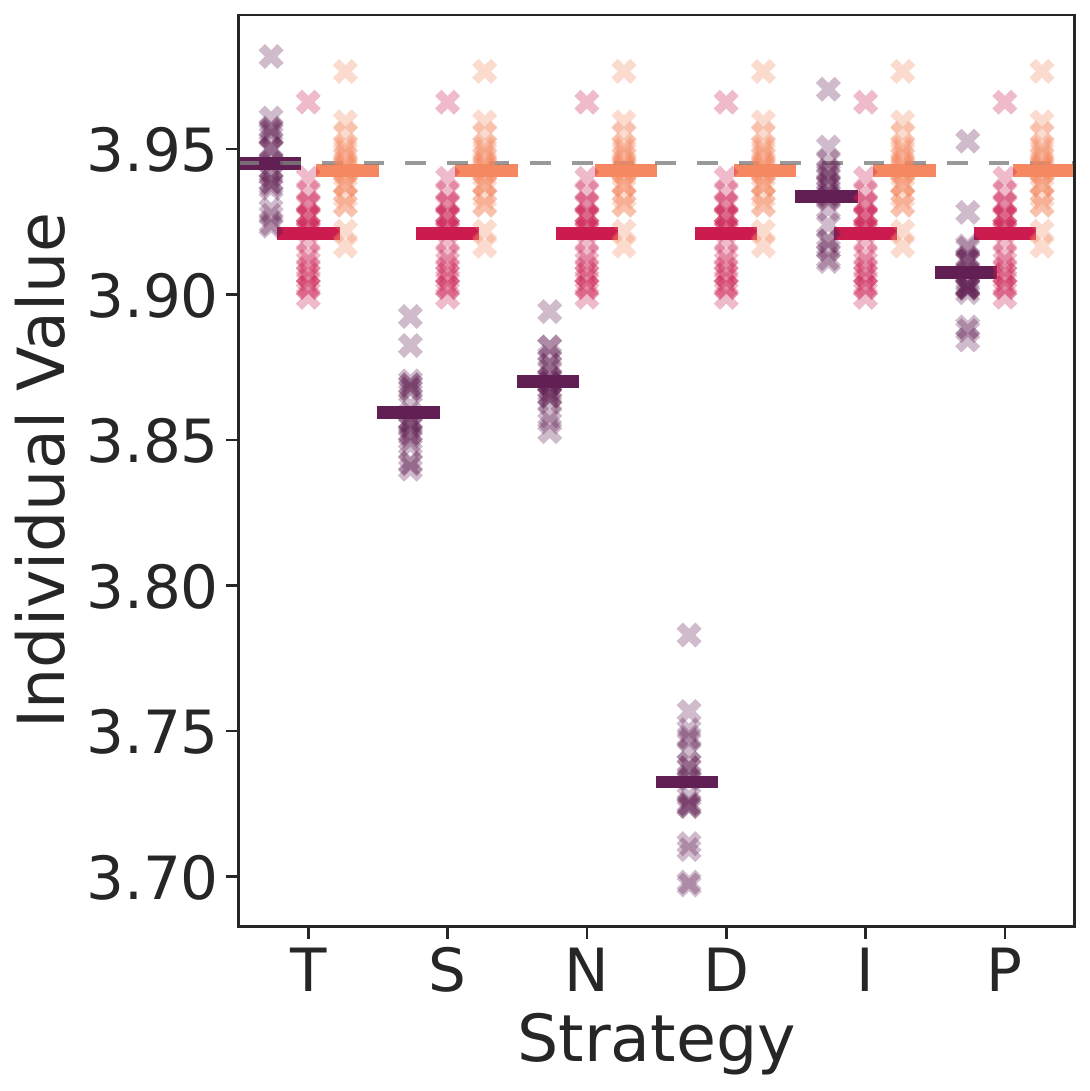}
            \caption{(NN-CY)}
        \end{subfigure} &
        \begin{subfigure}{0.2\textwidth}
            \centering
            \includegraphics[width=\textwidth]{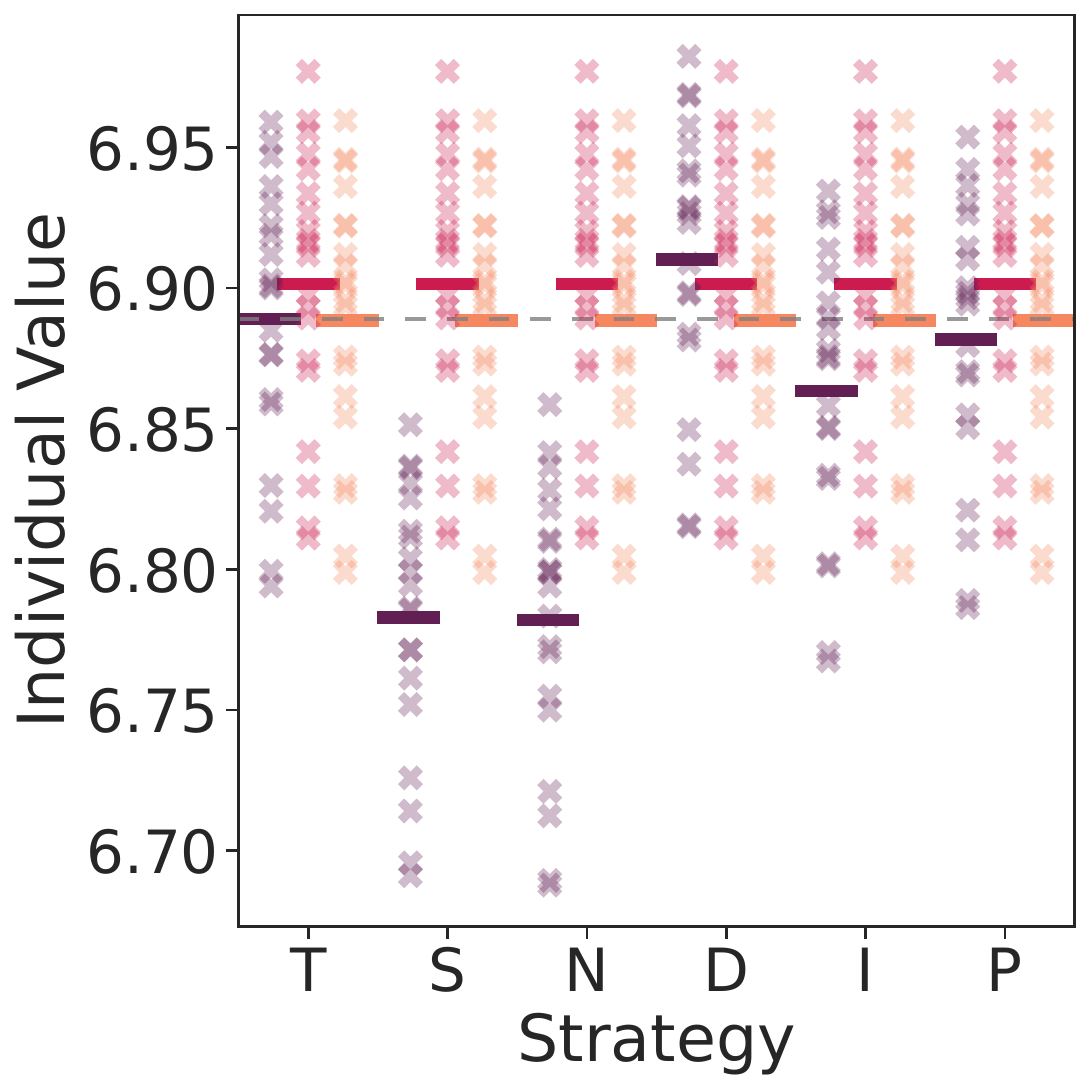}
            \caption{(LO-BL)}
        \end{subfigure} \\
        \multicolumn{4}{c}{
            \begin{subfigure}{0.4\textwidth}
                \centering
                \includegraphics[width=\textwidth]{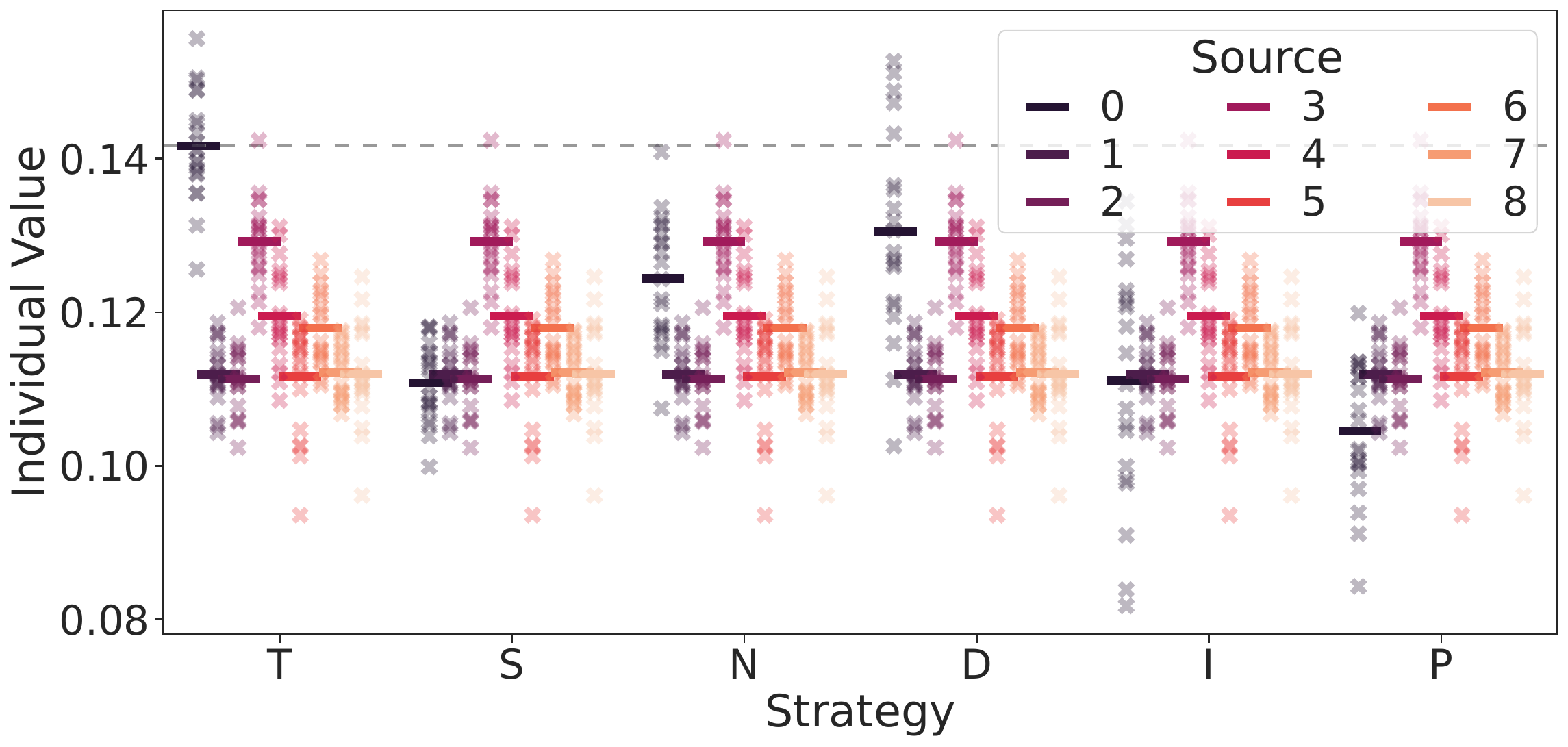}
                \caption{(GP-FR-9)}
            \end{subfigure}
        } \\
    \end{tabular}
    \caption{Graphs of all sources' individual value, corresponding to \cite{zheng2024truthful} (mean across $20$ validation sets plotted as straight bars) when source $0$ uses different strategies for various datasets (a)-(e).}
    \vspace{-3mm}
    \label{fig:honest-indiv-zheng}
\end{figure}

\textbf{Larger weights for smaller coalitions.} We consider using the $\mathtt{Beta}(16,1)$ and  $\mathtt{Beta}(4,1)$ Shapley value \cite{kwon2021betashapley} and the individual value to place more weights on smaller coalitions. From Figs.~\ref{fig:honest-beta-shapley16},\ref{fig:honest-beta-shapley2}~and~\ref{fig:honest-indiv-zheng}, it can be observed that source $0$ using strategy T consistently leads to higher Shapley and individual value compared to other untruthful strategies across all datasets. Additionally, it can be observed that the $\mathtt{Beta}(16,1)$ (and $\mathtt{Beta}(4,1)$) Shapley value of sources $1$ and $2$ are (more) similar across all source $0$'s strategies and varying $D_0$. This is close to the extreme where $w_0 = 1$ and $w_{c \neq 0} = 0$ and the semivalues of sources $1$ and $2$ are independent of source $0$'s submission (i.e., there is no collaborative fairness in Fig.~\ref{fig:honest-indiv-zheng}).
We also observe that each source's $\mathtt{Beta}(16,1)$ Shapley value is higher than their Shapley value in Fig.~\ref{fig:honest-shapley-more}.

\begin{figure}[H]
    \centering
    \begin{tabular}{@{}c@{\hspace{2mm}}c@{\hspace{2mm}}c@{\hspace{2mm}}c@{}}
        \includegraphics[width=0.2\columnwidth]{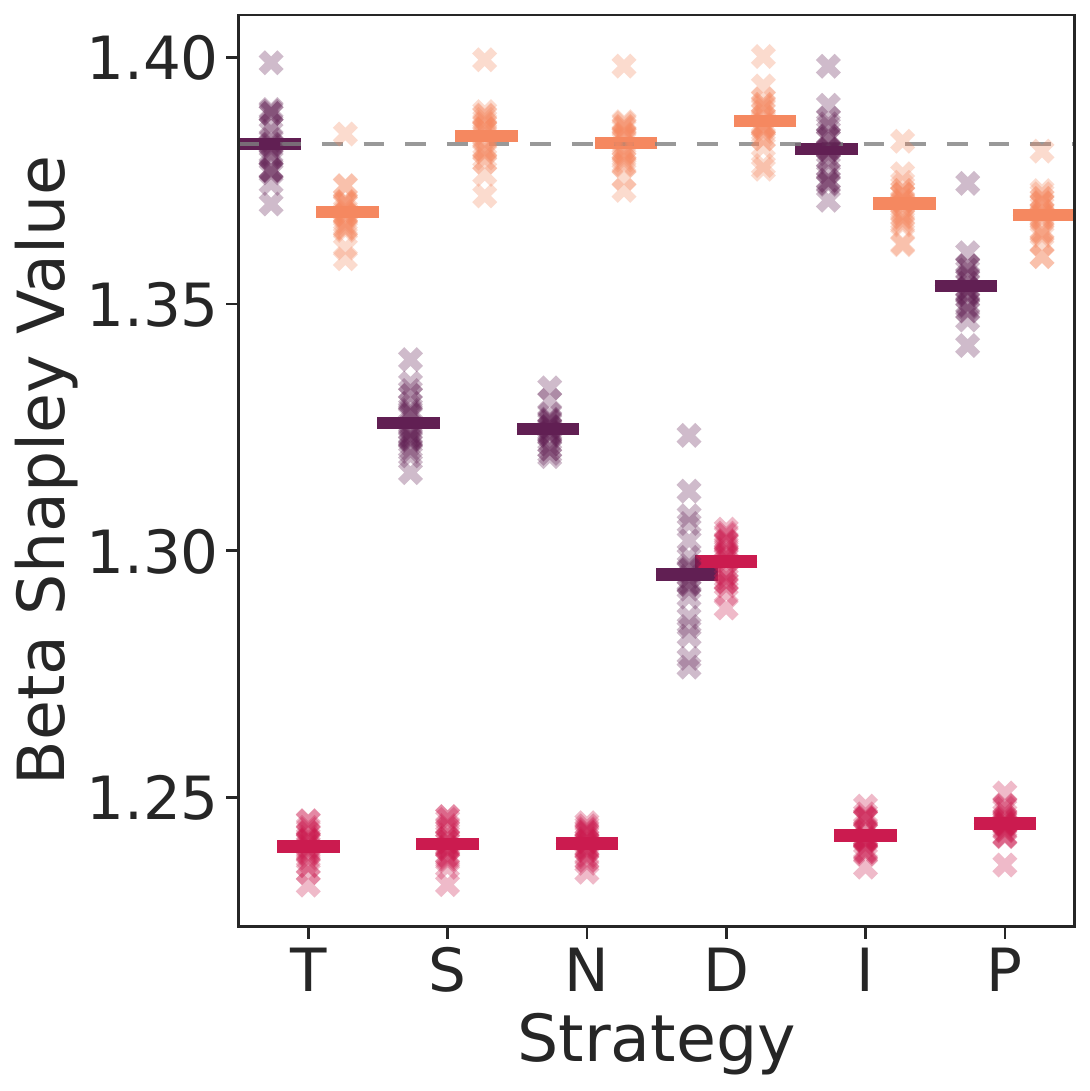} &
        \includegraphics[width=0.2\columnwidth]{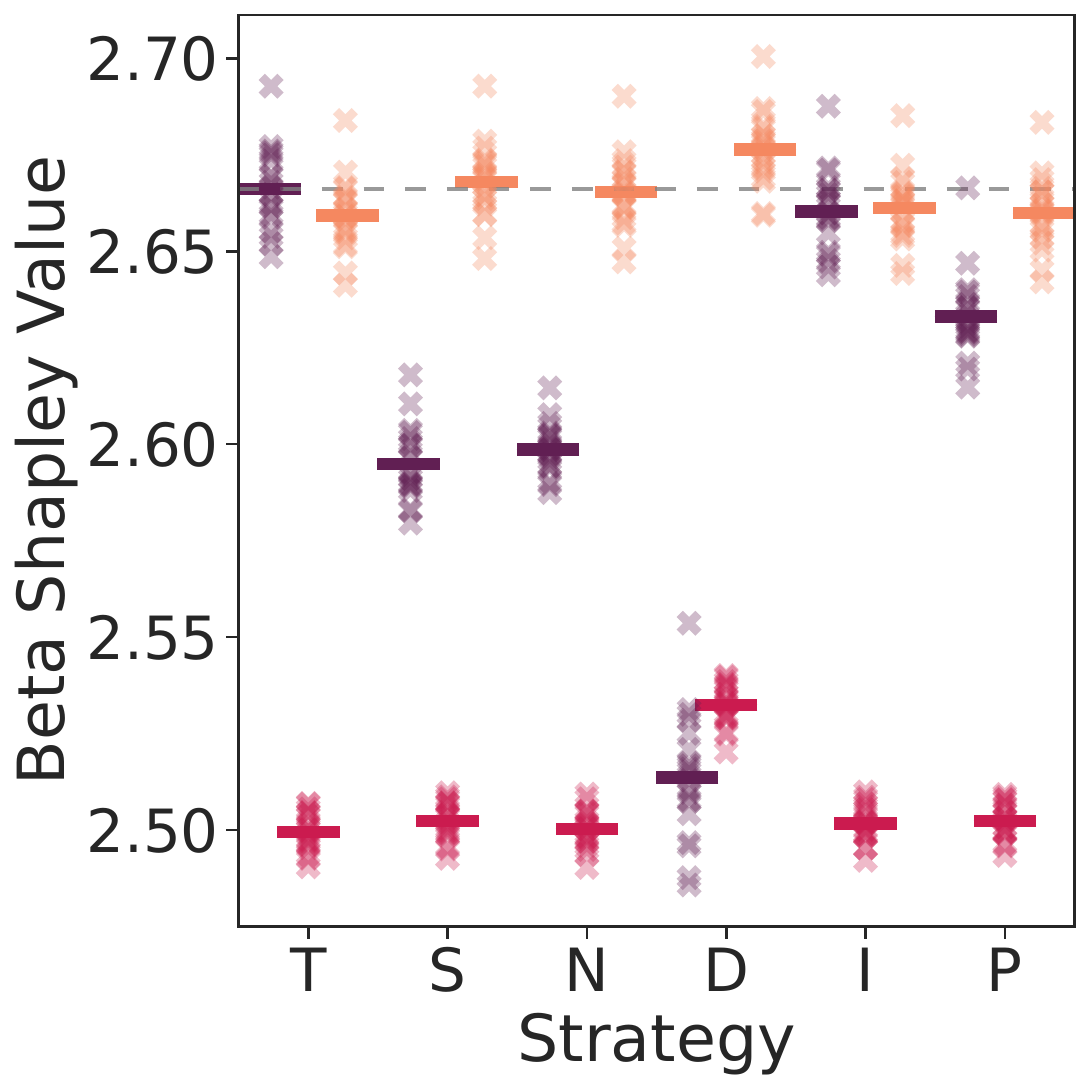} &
        \includegraphics[width=0.2\columnwidth]{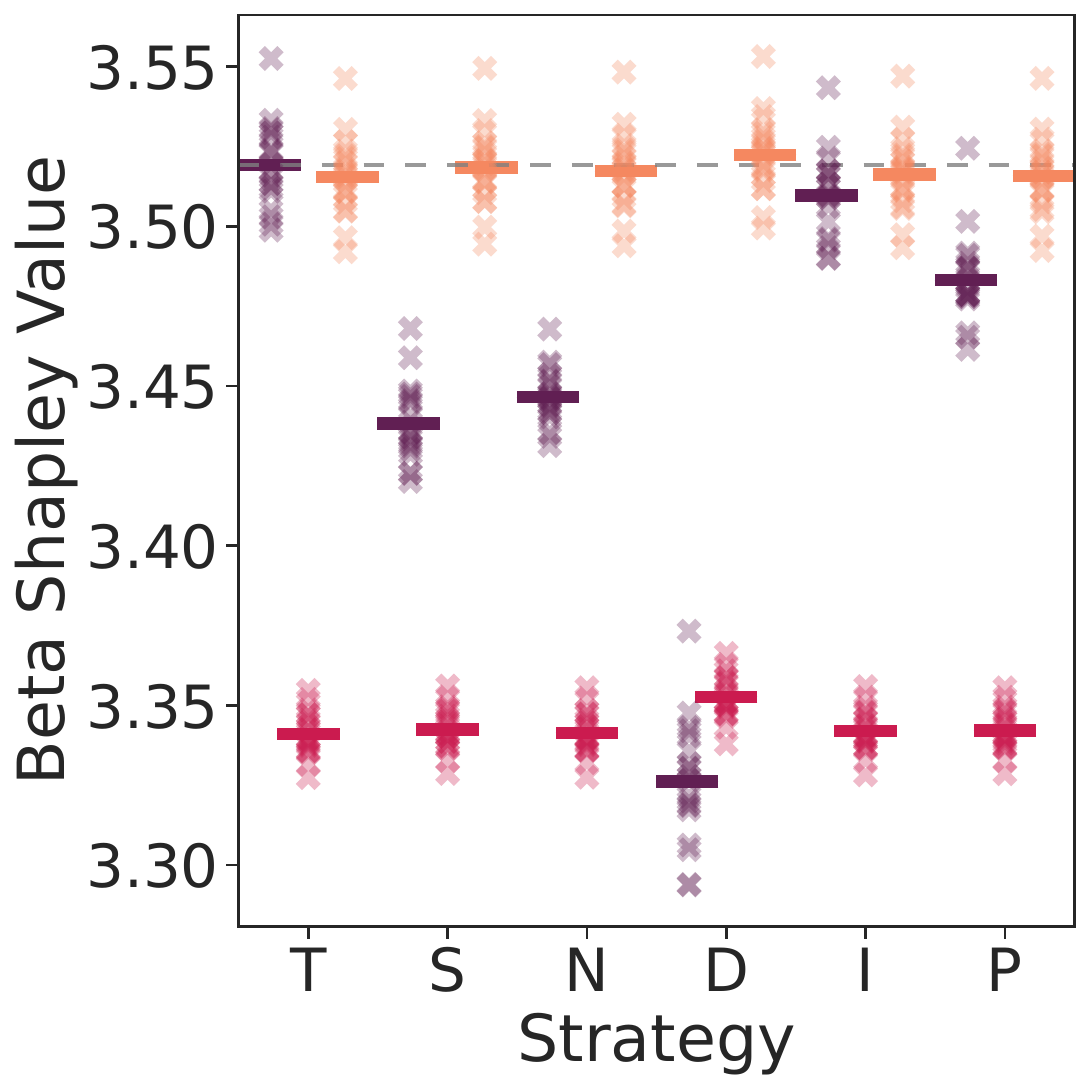} &
        \includegraphics[width=0.2\columnwidth]{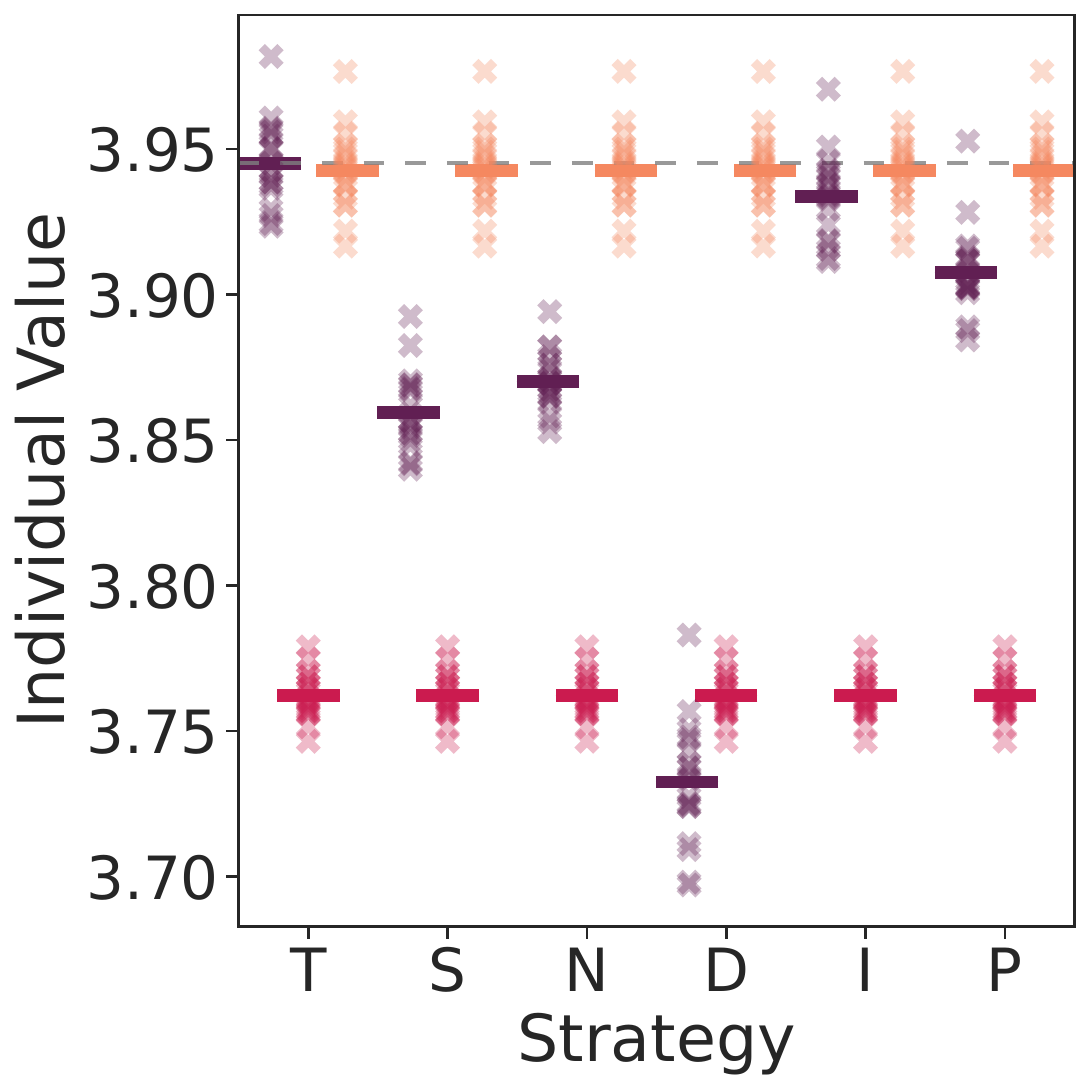} \\
        \parbox{0.22\columnwidth}{\centering (a) \texttt{Beta}$(1,1)$ \\ Shapley values} &
        \parbox{0.22\columnwidth}{\centering (b) \texttt{Beta}$(4,1)$ \\ Shapley values} &
        \parbox{0.22\columnwidth}{\centering (c) \texttt{Beta}$(16,1)$ \\ Shapley values} &
        \parbox{0.22\columnwidth}{\centering (d) Individual \\ values \cite{zheng2024truthful}} \\
    \end{tabular}
    \caption{Graphs of all sources' Beta Shapley or individual values (mean across $20$ validation sets plotted as straight bars) when source $0$ uses different strategies and source $1$ adds noise to $\vy_1$ in the (NN-CY) experiment. From (a) to (d), the weight on smaller coalitions increases.}
    \label{fig:untruthful-vary-weights}
\end{figure}

Fig.~\ref{fig:untruthful-vary-weights} considers the (NN-CY) experiment with an untruthful source $1$. From Fig.~\ref{fig:untruthful-vary-weights}a-d, as the weight on smaller coalitions increases, there is (i) \textbf{less fairness} as sources $1$ and $2$ reward become less influenced by source $0$'s strategies. Source $0$'s reward is also (ii) less influenced by the untruthful source $1$. 
Unlike in Fig.~\ref{fig:untruthful-vary-weights}a, in Fig.~\ref{fig:untruthful-vary-weights}d, source $0$'s individual value from using untruthful strategy I is clearly lower than from using strategy T and source $0$'s semivalue when using strategy D becomes clearly lower than source $1$'s semivalue.

\begin{figure}[h]
    \centering
    \begin{tabular}{cccc}
        \begin{subfigure}{0.2\textwidth}
            \centering
            \includegraphics[width=\textwidth]{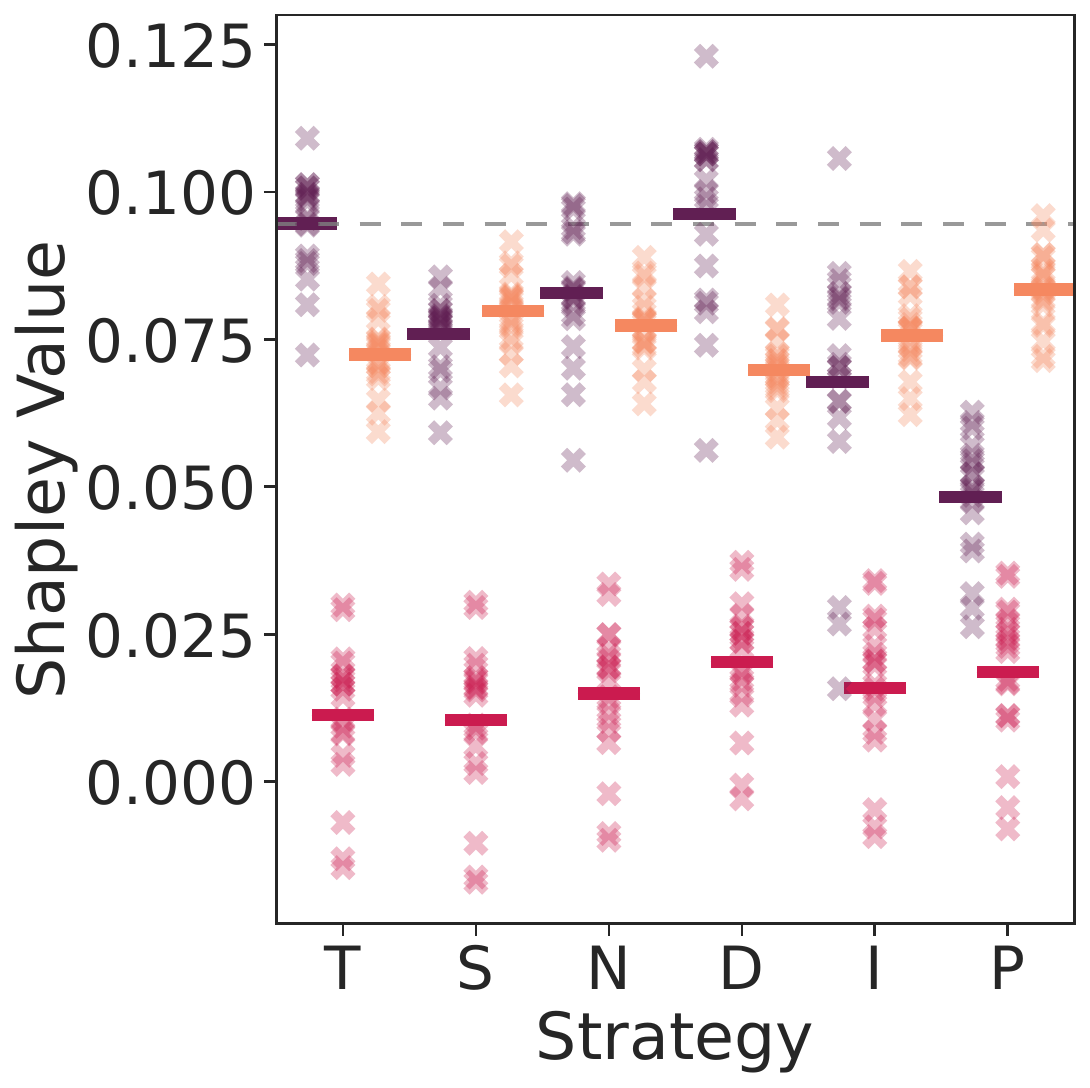}
            \caption{(GP-FR)}
        \end{subfigure} &
        \begin{subfigure}{0.2\textwidth}
            \centering
            \includegraphics[width=\textwidth]{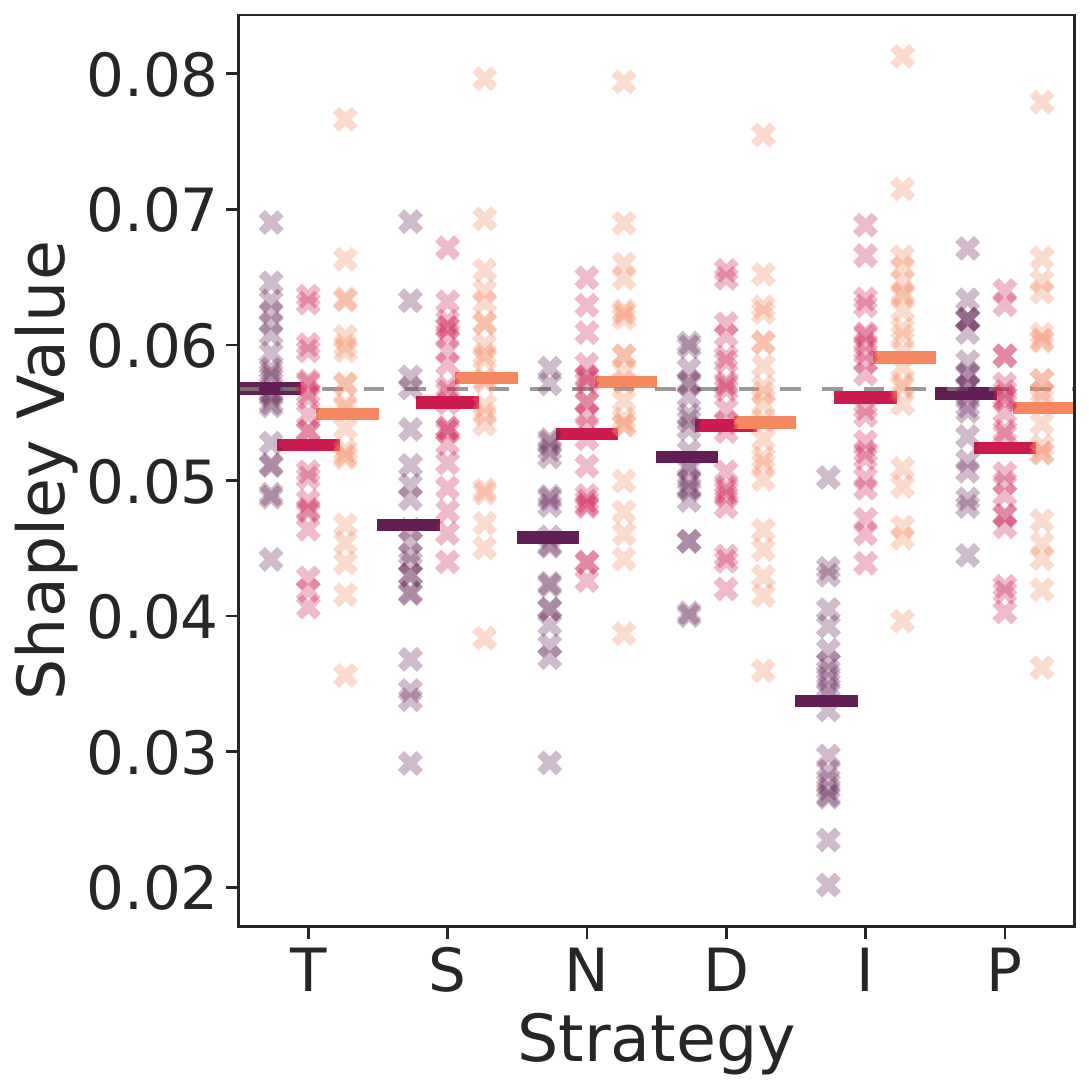}
            \caption{(LO-HE)}
        \end{subfigure} &
        \begin{subfigure}{0.2\textwidth}
            \centering
            \includegraphics[width=\textwidth]{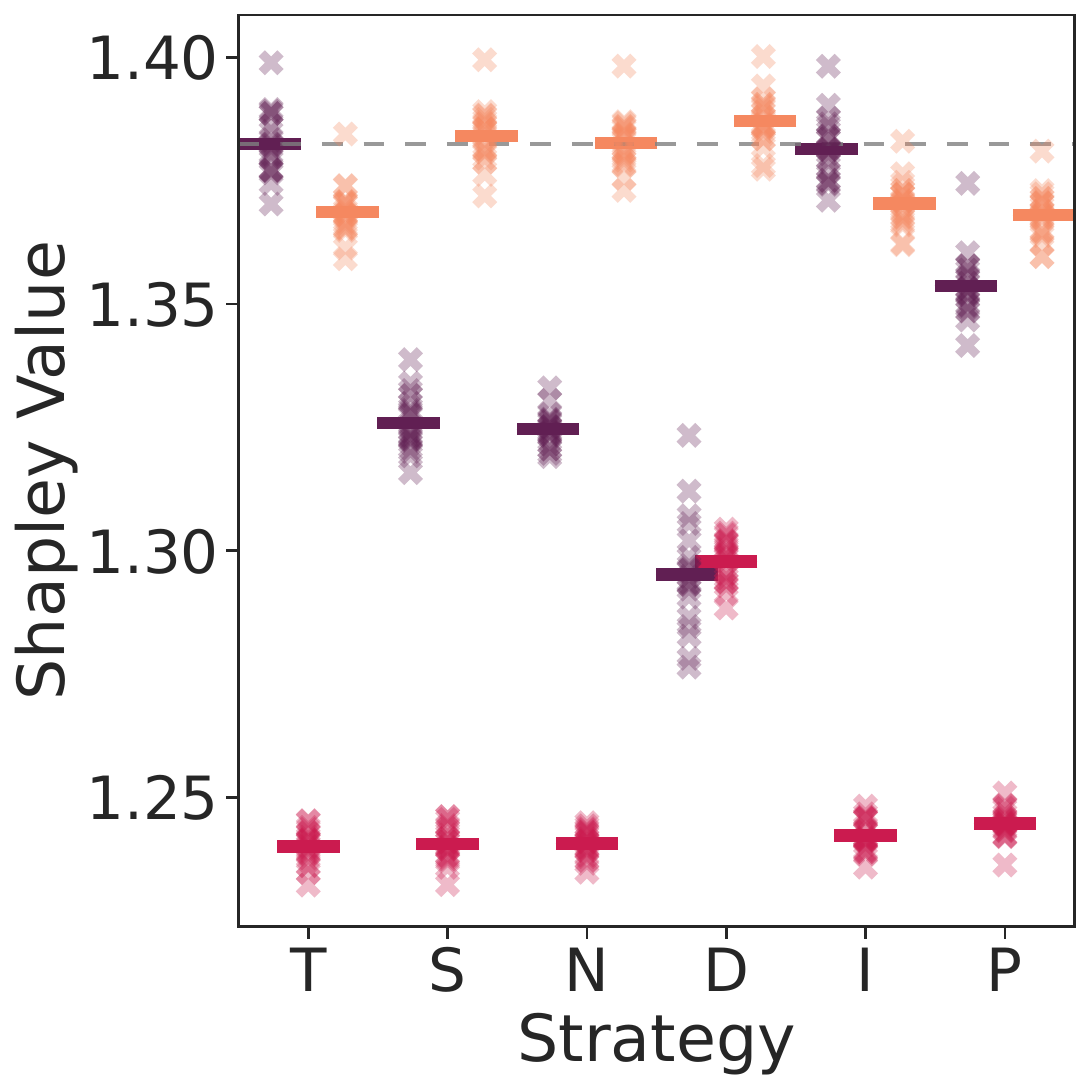}
            \caption{(NN-CY)}
        \end{subfigure} &
        \begin{subfigure}{0.2\textwidth}
            \centering
            \includegraphics[width=\textwidth]{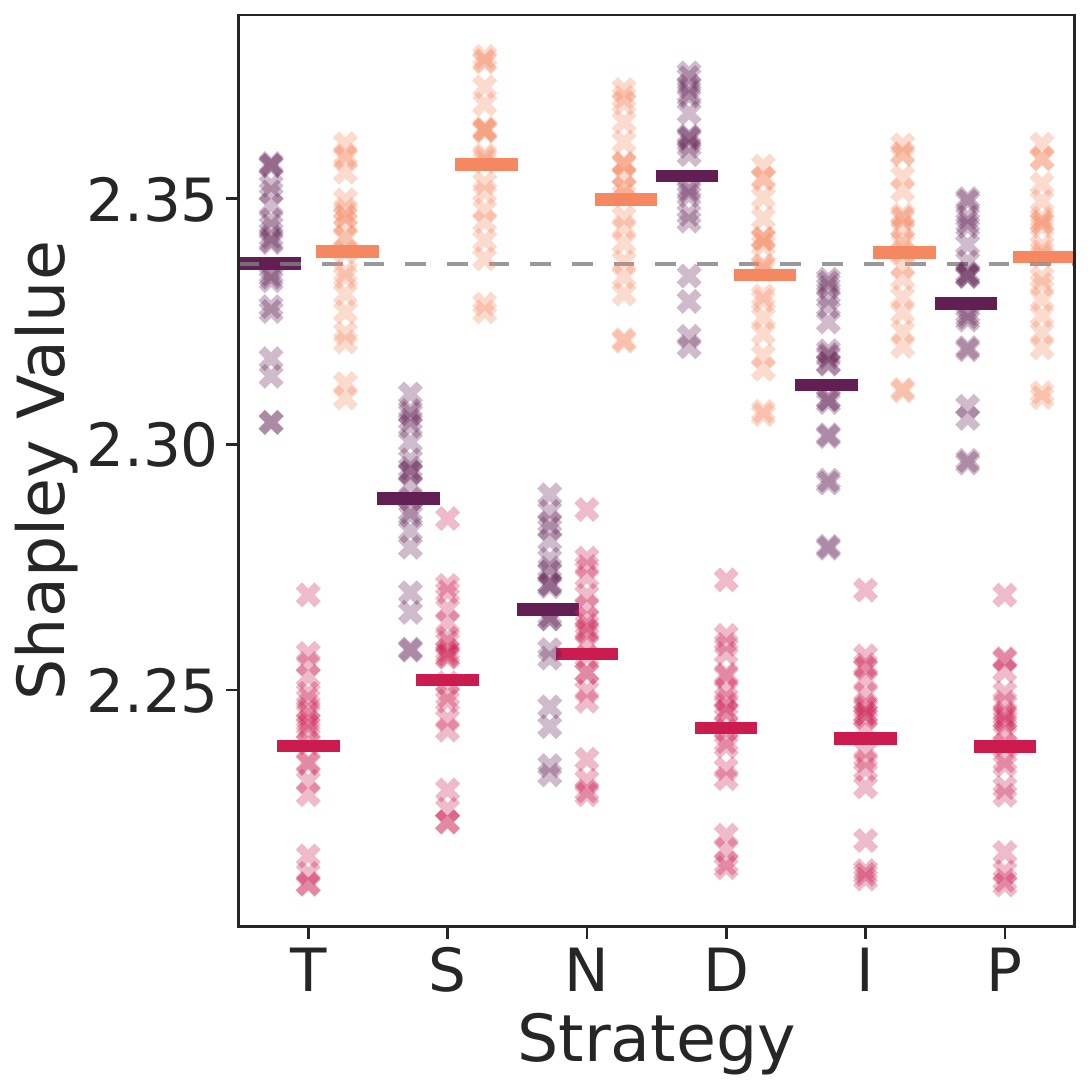}
            \caption{(LO-BL)}
        \end{subfigure} \\
        \multicolumn{4}{c}{
            \begin{subfigure}{0.4\textwidth}
                \centering
                \includegraphics[width=\textwidth]{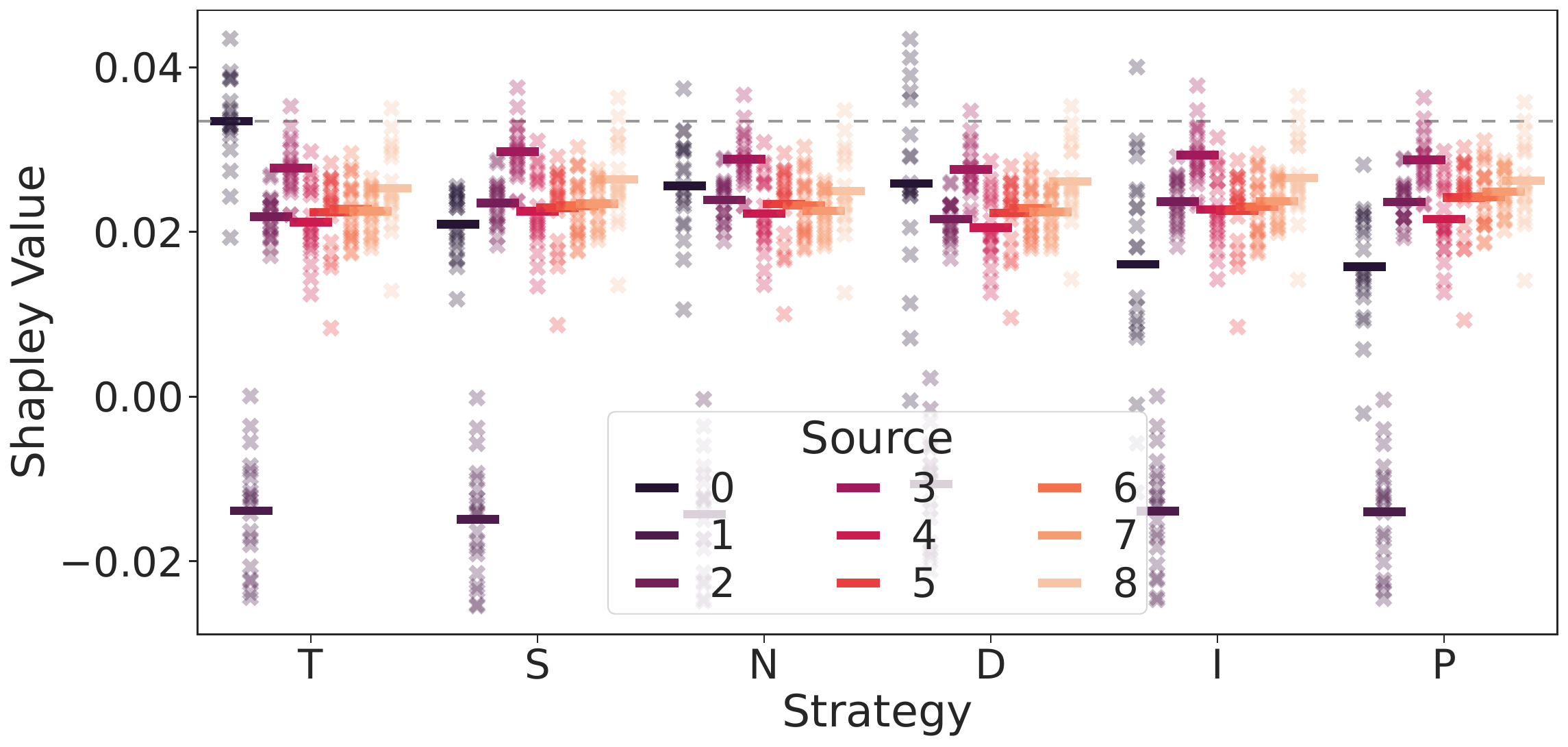}
                \caption{(GP-FR-9)}
            \end{subfigure}
        } \\
    \end{tabular}
    \caption{Graphs of all sources' Shapley values (mean across $20$ validation sets plotted as straight bars) when source $0$ uses different strategies for various datasets (a)-(e). Source $1$ adds noise to $\vy_1$ in its dataset $D_1$.}
    \vspace{-3mm}
    \label{fig:lie-shapley}
\end{figure}

\textbf{Untruthful source $1$.} In Sec.~\ref{sec:exp-semi}, we consider the scenario where all other sources are truthful and assumptions \ref{a1}-\ref{a3} are correct. If source $1$ is untruthful and the belief in assumption \ref{a3} is wrong (i.e., $D_1$ is not drawn from $p(\cD_1|\theta)$), is the semivalue $\phi_0[\nu^v_\mathfrak{D}]$ still \struthful? In Fig.~\ref{fig:lie-shapley}, it can be observed that source $0$'s semivalue is still usually maximized by strategy T across all datasets. When source $1$'s dataset is very noisy (e.g., $10\%$ with the wrong output $y$ in (GP-FR), $6\%$ relabeled in (LO-BL) source $0$), source $0$ duplicating its dataset (strategy D) can increase $v(D_N)$ and $\phi_0[\nu^v_\mathfrak{D}]$ by outweighing/counteracting the wrong $D_1$.

\subsection{More Sources}

\begin{table}[H]
    \scriptsize
    \centering
    \caption{Table of the mean and standard deviation of the approximate Shapley values \cite{kolpaczki2024approximating} based on 3000 samples across 20 runs for all $20$ sources (rows) on the (GP-FR) dataset with 2000 data points. Source $0$ tries different strategies (columns). Source $0$'s strategy T leads to the highest value for itself and lowest value for others as compared to other strategies.}
    \label{tab:20parties}
\begin{tabular}{ccccccc}
 & T & S & N & D & I & P \\
\hline
0 & 0.01608 $\pm$ 0.00266 & 0.01099 $\pm$ 0.00208 & 0.01137 $\pm$ 0.00322 & 0.01605 $\pm$ 0.00611 & -0.00578 $\pm$ 0.01563 & 0.00620 $\pm$ 0.00395 \\
1 & 0.01397 $\pm$ 0.00283 & 0.01404 $\pm$ 0.00282 & 0.01414 $\pm$ 0.00282 & 0.01448 $\pm$ 0.00270 & 0.01436 $\pm$ 0.00284 & 0.01458 $\pm$ 0.00284 \\
2 & 0.01344 $\pm$ 0.00452 & 0.01428 $\pm$ 0.00460 & 0.01360 $\pm$ 0.00452 & 0.01321 $\pm$ 0.00430 & 0.01418 $\pm$ 0.00454 & 0.01387 $\pm$ 0.00449 \\
3 & 0.01950 $\pm$ 0.00476 & 0.01966 $\pm$ 0.00484 & 0.02002 $\pm$ 0.00475 & 0.01928 $\pm$ 0.00452 & 0.02088 $\pm$ 0.00505 & 0.01976 $\pm$ 0.00479 \\
4 & 0.00910 $\pm$ 0.00234 & 0.00899 $\pm$ 0.00235 & 0.00910 $\pm$ 0.00236 & 0.00882 $\pm$ 0.00228 & 0.00875 $\pm$ 0.00232 & 0.00945 $\pm$ 0.00232 \\
5 & 0.01126 $\pm$ 0.00192 & 0.01109 $\pm$ 0.00195 & 0.01134 $\pm$ 0.00192 & 0.01125 $\pm$ 0.00185 & 0.01041 $\pm$ 0.00196 & 0.01062 $\pm$ 0.00189 \\
6 & 0.00730 $\pm$ 0.00198 & 0.00782 $\pm$ 0.00203 & 0.00766 $\pm$ 0.00199 & 0.00714 $\pm$ 0.00191 & 0.00721 $\pm$ 0.00194 & 0.00758 $\pm$ 0.00197 \\
7 & 0.00725 $\pm$ 0.00248 & 0.00748 $\pm$ 0.00252 & 0.00715 $\pm$ 0.00246 & 0.00702 $\pm$ 0.00235 & 0.00866 $\pm$ 0.00245 & 0.00751 $\pm$ 0.00246 \\
8 & 0.01373 $\pm$ 0.00244 & 0.01377 $\pm$ 0.00242 & 0.01413 $\pm$ 0.00245 & 0.01357 $\pm$ 0.00242 & 0.01462 $\pm$ 0.00276 & 0.01398 $\pm$ 0.00249 \\
9 & 0.01386 $\pm$ 0.00186 & 0.01440 $\pm$ 0.00185 & 0.01393 $\pm$ 0.00185 & 0.01329 $\pm$ 0.00187 & 0.01376 $\pm$ 0.00186 & 0.01421 $\pm$ 0.00185 \\
10 & 0.01162 $\pm$ 0.00279 & 0.01171 $\pm$ 0.00289 & 0.01151 $\pm$ 0.00277 & 0.01171 $\pm$ 0.00262 & 0.01191 $\pm$ 0.00284 & 0.01126 $\pm$ 0.00278 \\
11 & 0.00931 $\pm$ 0.00188 & 0.00937 $\pm$ 0.00191 & 0.00927 $\pm$ 0.00184 & 0.00946 $\pm$ 0.00180 & 0.01041 $\pm$ 0.00194 & 0.00924 $\pm$ 0.00189 \\
12 & 0.00985 $\pm$ 0.00203 & 0.01011 $\pm$ 0.00199 & 0.00999 $\pm$ 0.00200 & 0.00990 $\pm$ 0.00203 & 0.01075 $\pm$ 0.00212 & 0.01049 $\pm$ 0.00208 \\
13 & 0.00990 $\pm$ 0.00276 & 0.00975 $\pm$ 0.00282 & 0.01008 $\pm$ 0.00278 & 0.00980 $\pm$ 0.00262 & 0.00951 $\pm$ 0.00294 & 0.00987 $\pm$ 0.00276 \\
14 & 0.00625 $\pm$ 0.00258 & 0.00628 $\pm$ 0.00267 & 0.00614 $\pm$ 0.00260 & 0.00611 $\pm$ 0.00243 & 0.00575 $\pm$ 0.00258 & 0.00619 $\pm$ 0.00260 \\
15 & 0.01332 $\pm$ 0.00322 & 0.01388 $\pm$ 0.00333 & 0.01320 $\pm$ 0.00321 & 0.01324 $\pm$ 0.00301 & 0.01296 $\pm$ 0.00329 & 0.01361 $\pm$ 0.00324 \\
16 & 0.01055 $\pm$ 0.00260 & 0.01050 $\pm$ 0.00266 & 0.01062 $\pm$ 0.00256 & 0.01057 $\pm$ 0.00245 & 0.01048 $\pm$ 0.00259 & 0.01090 $\pm$ 0.00265 \\
17 & 0.01029 $\pm$ 0.00227 & 0.01076 $\pm$ 0.00234 & 0.01036 $\pm$ 0.00223 & 0.01021 $\pm$ 0.00213 & 0.01138 $\pm$ 0.00241 & 0.01066 $\pm$ 0.00231 \\
18 & 0.00946 $\pm$ 0.00215 & 0.00961 $\pm$ 0.00218 & 0.00995 $\pm$ 0.00219 & 0.00928 $\pm$ 0.00206 & 0.01027 $\pm$ 0.00248 & 0.00981 $\pm$ 0.00219 \\
19 & 0.00968 $\pm$ 0.00241 & 0.00971 $\pm$ 0.00245 & 0.00977 $\pm$ 0.00239 & 0.00996 $\pm$ 0.00230 & 0.01004 $\pm$ 0.00241 & 0.00997 $\pm$ 0.00241 \\
\end{tabular}
\end{table}

\begin{table}[H]
    \scriptsize
    \centering
    \caption{Table of the mean and standard deviation of the exact Shapley values for $6$ sources (rows) on the (LO-BL) dataset. We split the training data among sources in the ratio $[.2, .2, .1, .1, .2, .2]$. Source $0$ tries different strategies (columns). Source $0$'s strategy T leads to the highest value for itself and lowest value for others as compared to other strategies.}
    \label{tab:bl-6-parties}
\begin{tabular}{ccccccc}
 & T & S & N & D & I & P \\
\hline
0 & 1.15996 $\pm$ 0.00843 & 1.11632 $\pm$ 0.00815 & 1.12150 $\pm$ 0.00848 & 1.16659 $\pm$ 0.00909 & 1.14006 $\pm$ 0.00863 & 1.15340 $\pm$ 0.00851 \\
1 & 1.16837 $\pm$ 0.00824 & 1.17778 $\pm$ 0.00828 & 1.17207 $\pm$ 0.00812 & 1.16602 $\pm$ 0.00814 & 1.16907 $\pm$ 0.00816 & 1.16976 $\pm$ 0.00817 \\
2 & 1.13464 $\pm$ 0.00846 & 1.14159 $\pm$ 0.00854 & 1.13528 $\pm$ 0.00836 & 1.13292 $\pm$ 0.00840 & 1.13542 $\pm$ 0.00847 & 1.13570 $\pm$ 0.00841 \\
3 & 1.14289 $\pm$ 0.00772 & 1.15127 $\pm$ 0.00776 & 1.14542 $\pm$ 0.00770 & 1.14016 $\pm$ 0.00761 & 1.14314 $\pm$ 0.00765 & 1.14358 $\pm$ 0.00748 \\
4 & 1.18589 $\pm$ 0.00735 & 1.19497 $\pm$ 0.00764 & 1.18884 $\pm$ 0.00743 & 1.18436 $\pm$ 0.00741 & 1.18705 $\pm$ 0.00742 & 1.18725 $\pm$ 0.00744 \\
5 & 1.18307 $\pm$ 0.00827 & 1.19114 $\pm$ 0.00834 & 1.18467 $\pm$ 0.00819 & 1.18437 $\pm$ 0.00825 & 1.18086 $\pm$ 0.00803 & 1.18380 $\pm$ 0.00829 \\
\end{tabular}
\end{table}

\subsection{Removing the Validation Set}\label{app:exp:remove}

\textbf{Sources are evaluated on their own validation set.} In Fig.~\ref{fig:nogt-shapley-25-include}, we decide the reward value $\grave{r}_i = \sum_{j \in N} \phi_i[\nu^{T_j}_{\mathfrak{D}}]$. As source $i$ is evaluated on its own split validation set, source $0$ can increase its reward $\grave{r}_i$ by injecting synthetic data that can only be predicted well by itself (strategy I vs T in Fig.~\ref{fig:nogt-shapley-25-include}a).

\begin{figure}[H]
    \centering
    \begin{tabular}{cccc}
        \begin{subfigure}{0.2\textwidth}
            \centering
            \includegraphics[width=\textwidth]{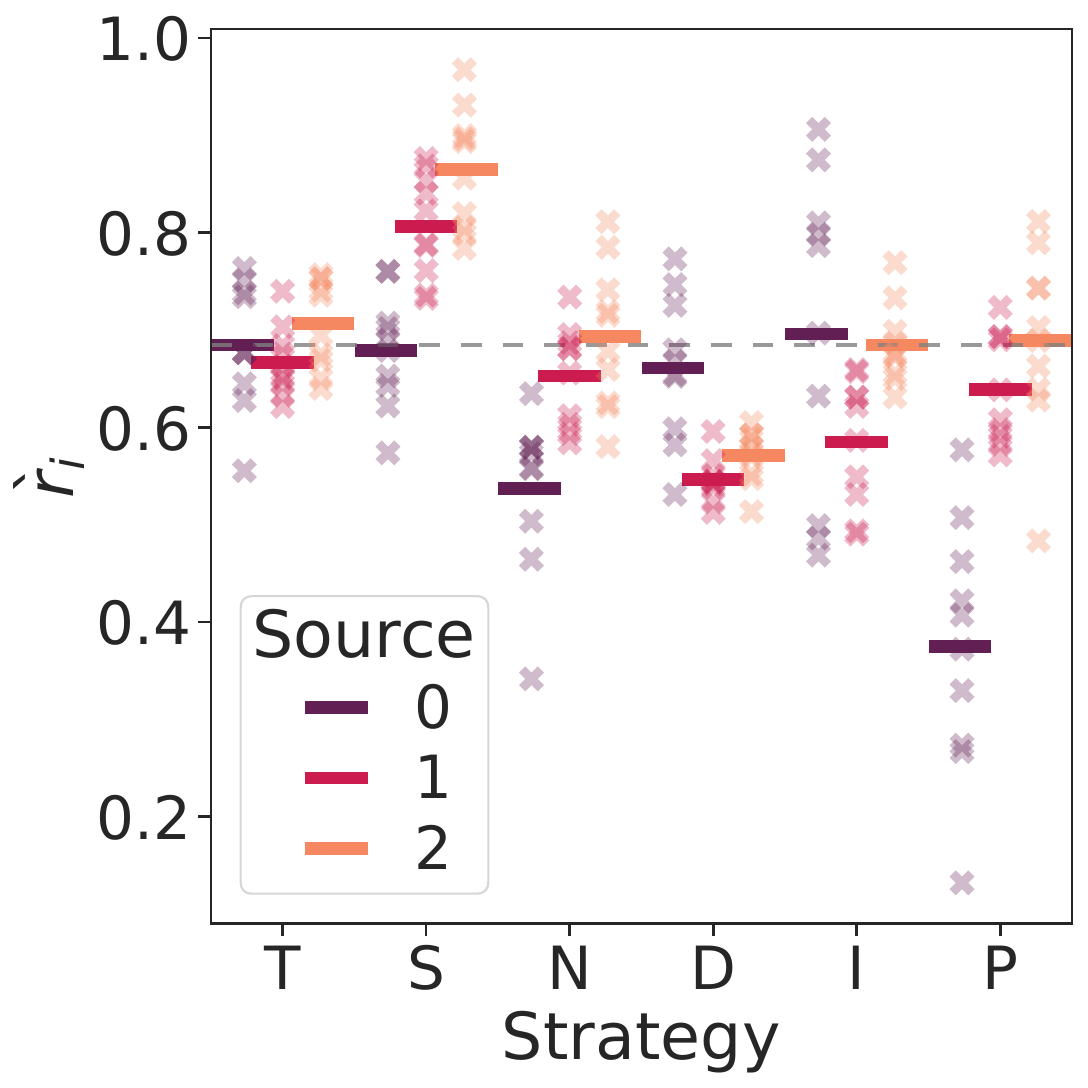}
            \caption{(GP-FR)}
        \end{subfigure} &
        \begin{subfigure}{0.2\textwidth}
            \centering
            \includegraphics[width=\textwidth]{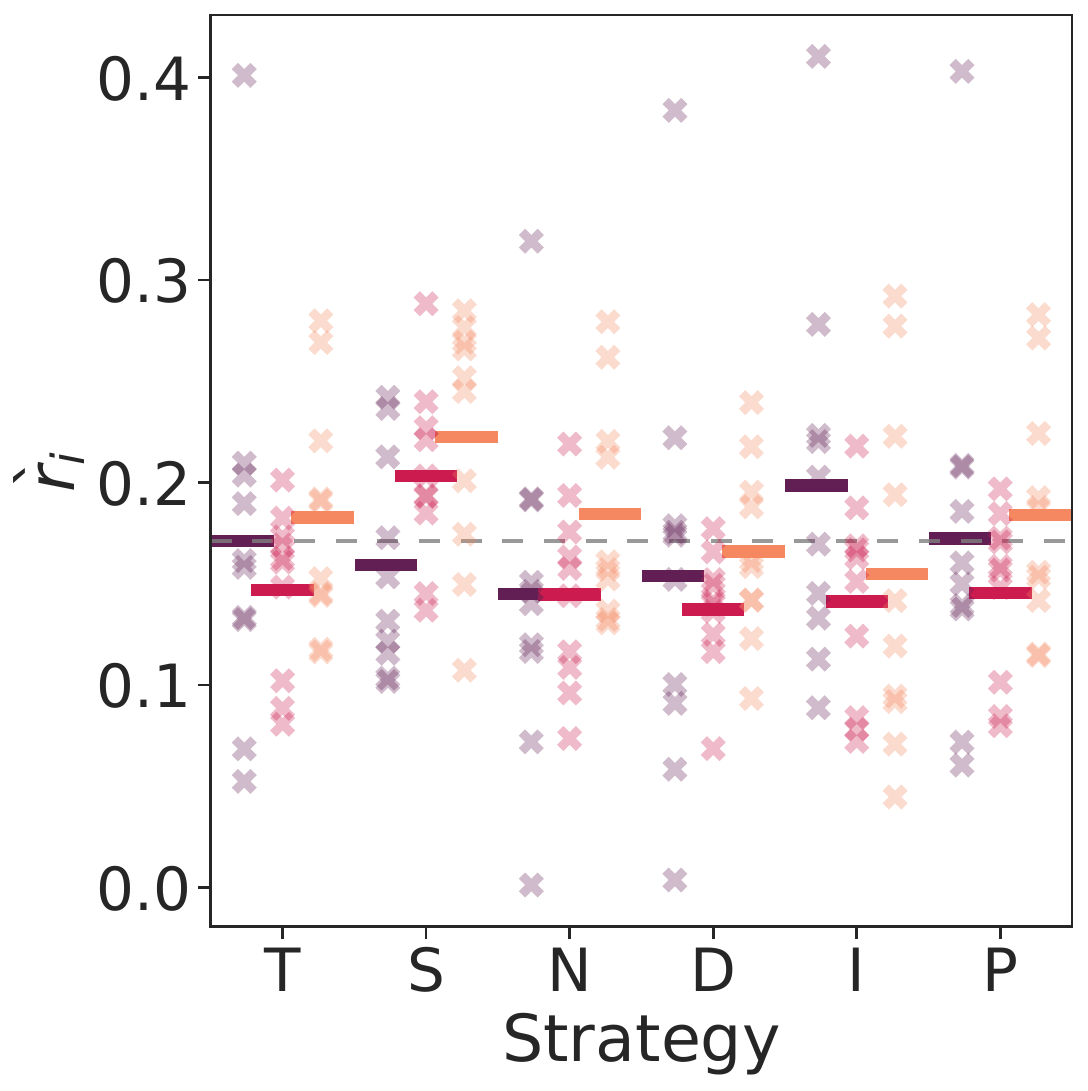}
            \caption{(LO-HE)}
        \end{subfigure} &
        \begin{subfigure}{0.2\textwidth}
            \centering
            \includegraphics[width=\textwidth]{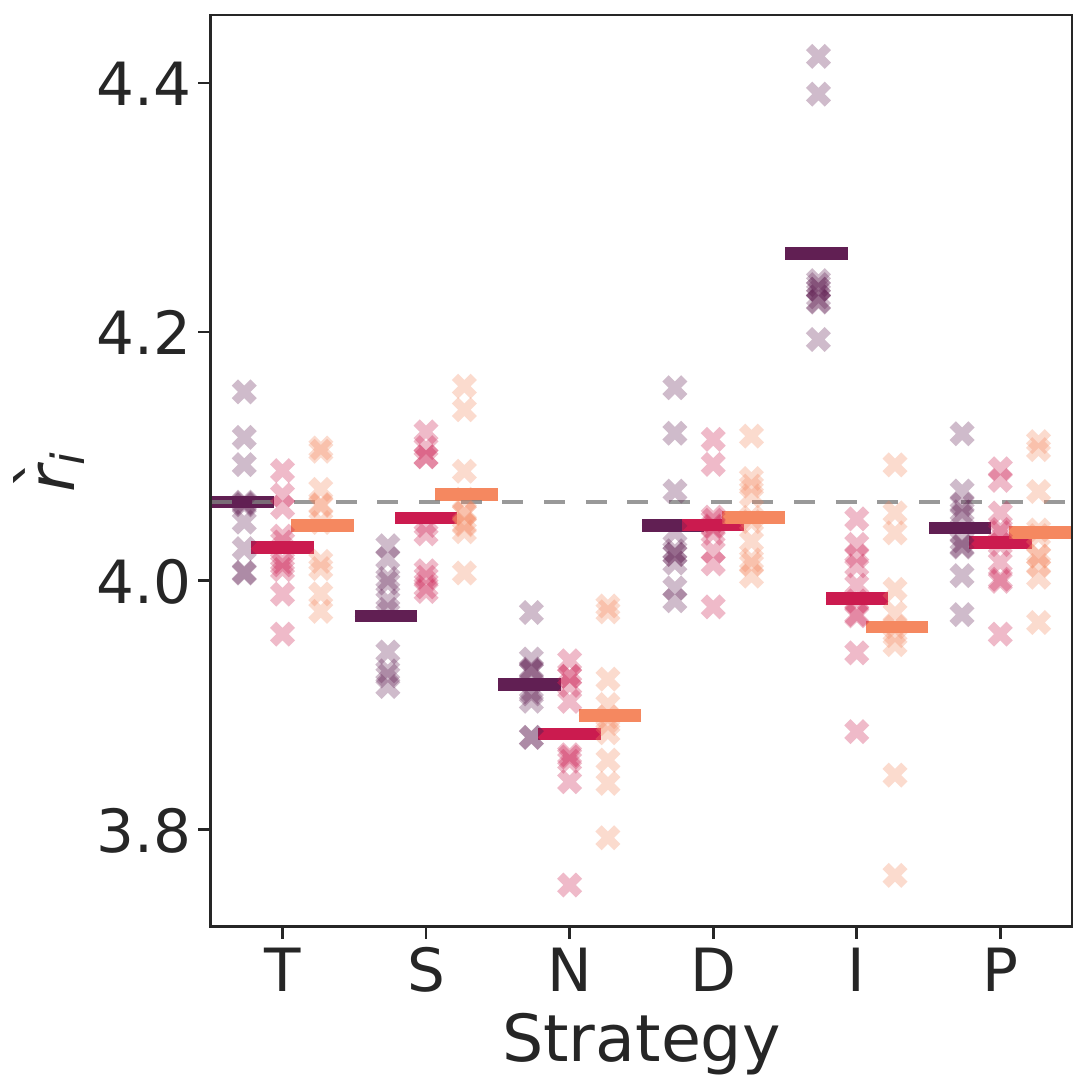}
            \caption{(NN-CY)}
        \end{subfigure} &
        \begin{subfigure}{0.2\textwidth}
            \centering
            \includegraphics[width=\textwidth]{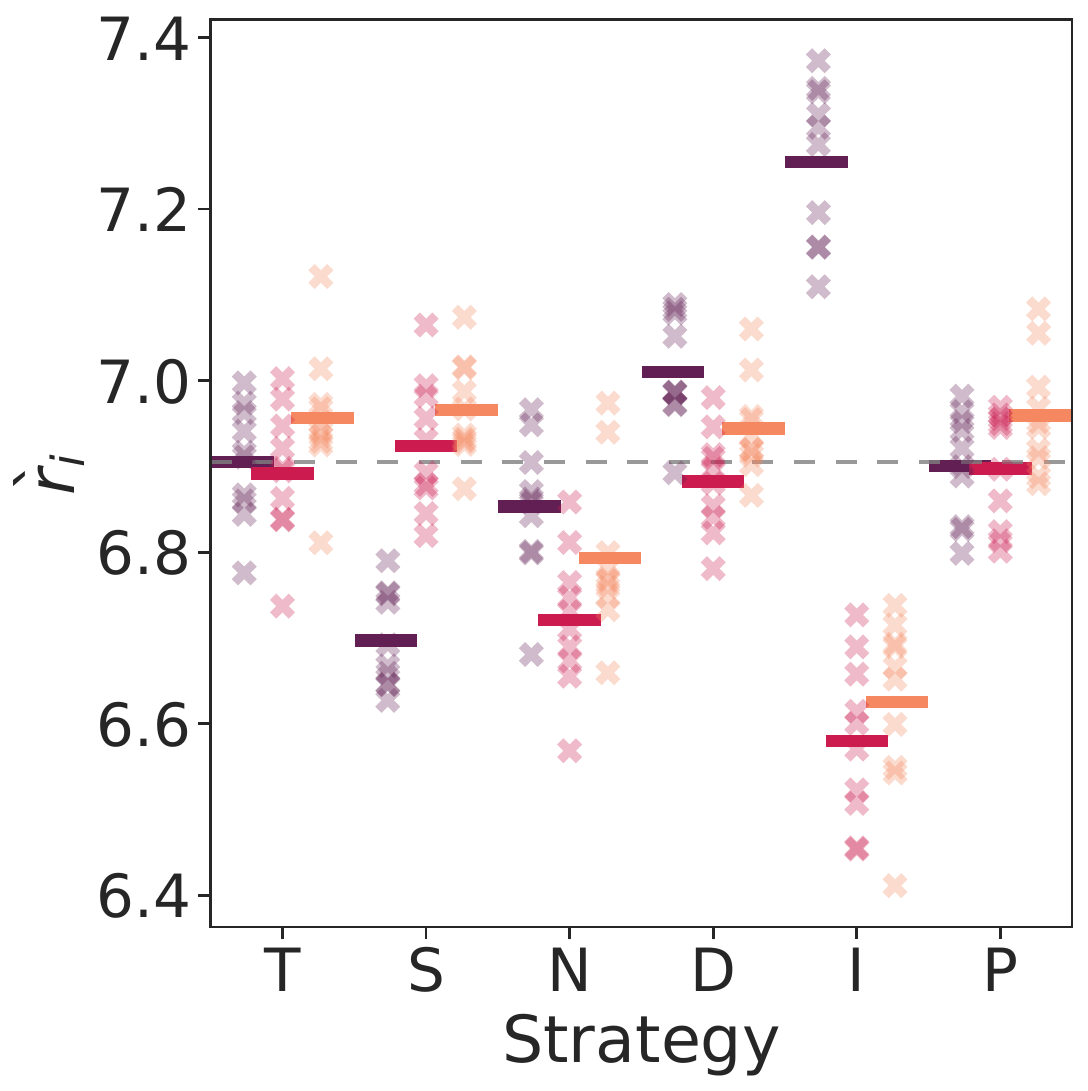}
            \caption{(LO-BL)}
        \end{subfigure} \\
        \multicolumn{4}{c}{
            \begin{subfigure}{0.4\textwidth}
                \centering
                \includegraphics[width=\textwidth]{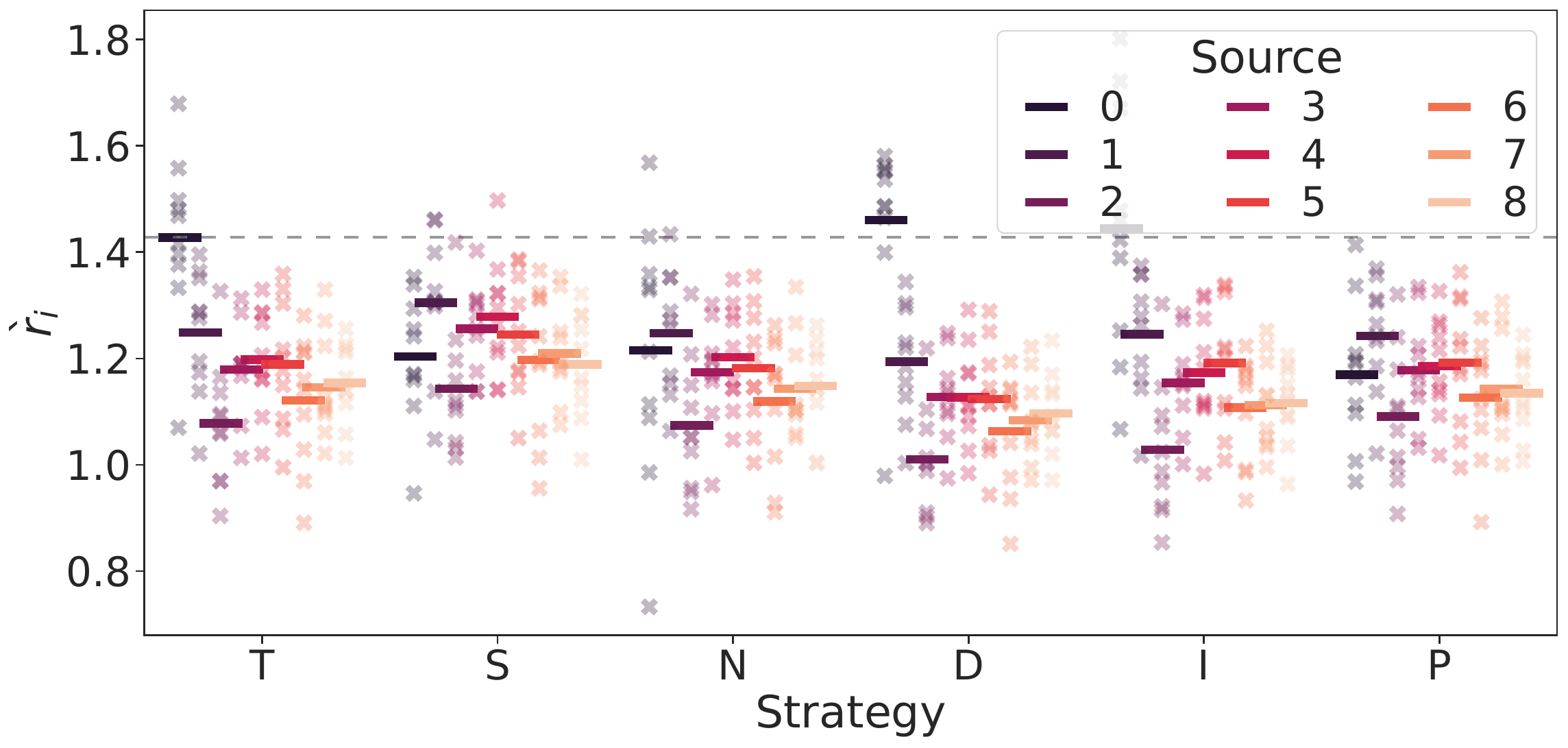}
                \caption{(GP-FR-9)}
            \end{subfigure}
        } \\
    \end{tabular}
    \caption{Graphs of all sources' reward values $\grave{r}_i = \sum_{j \in N} \phi_i[\nu^{T_j}_{\mathfrak{D}}]$ (mean across $10$ $75\%-25\%$ training and validation splits plotted as straight bars) when source $0$ uses different strategies for various datasets (a)-(e) and other sources are truthful. Source $i$ is evaluated on its own validation set.}
    \vspace{-3mm}
    \label{fig:nogt-shapley-25-include}
\end{figure}

\textbf{Sources are not evaluated on their own validation set.} In Figs.~\ref{fig:nogt-shapley-25-exclude}~and~\ref{fig:nogt-shapley-50-exclude}, we decide the reward value $\breve{r}_i = \sum_{j \in N \setminus \{i\}} \phi_i[\nu^{T_j}_{\mathfrak{D}}]$. As source $0$ is not evaluated on its own split validation set, we observe a similar result to Sec.~\ref{sec:exp-nogt-semi}. Across all datasets, source $0$'s strategy T usually leads to the highest reward value compared to untruthful strategies. {There is an exception for (GP-FR) in Fig.~\ref{fig:nogt-shapley-50-exclude}: Strategy D, duplication, increases the log-likelihood of the validation set from others. This may be because source $0$'s full knowledge is captured by $D_0$ but the mediator only makes use of $50\%$ of this knowledge (thus the original knowledge may be ``recovered'' by duplication).
Thus, to disincentivize duplication, we recommend using small validation sets.
}

\begin{figure}[h]
    \centering
    \begin{tabular}{cccc}
        \begin{subfigure}{0.2\textwidth}
            \centering
            \includegraphics[width=\textwidth]{supp_figs/gp-shapley-exclude_25.pdf}
            \caption{(GP-FR)}
        \end{subfigure} &
        \begin{subfigure}{0.2\textwidth}
            \centering
            \includegraphics[width=\textwidth]{supp_figs/heart-shapley-exclude-0.25.pdf}
            \caption{(LO-HE)}
        \end{subfigure} &
        \begin{subfigure}{0.2\textwidth}
            \centering
            \includegraphics[width=\textwidth]{supp_figs/cycle-shapley-exclude-0.25.pdf}
            \caption{(NN-CY)}
        \end{subfigure} &
        \begin{subfigure}{0.2\textwidth}
            \centering
            \includegraphics[width=\textwidth]{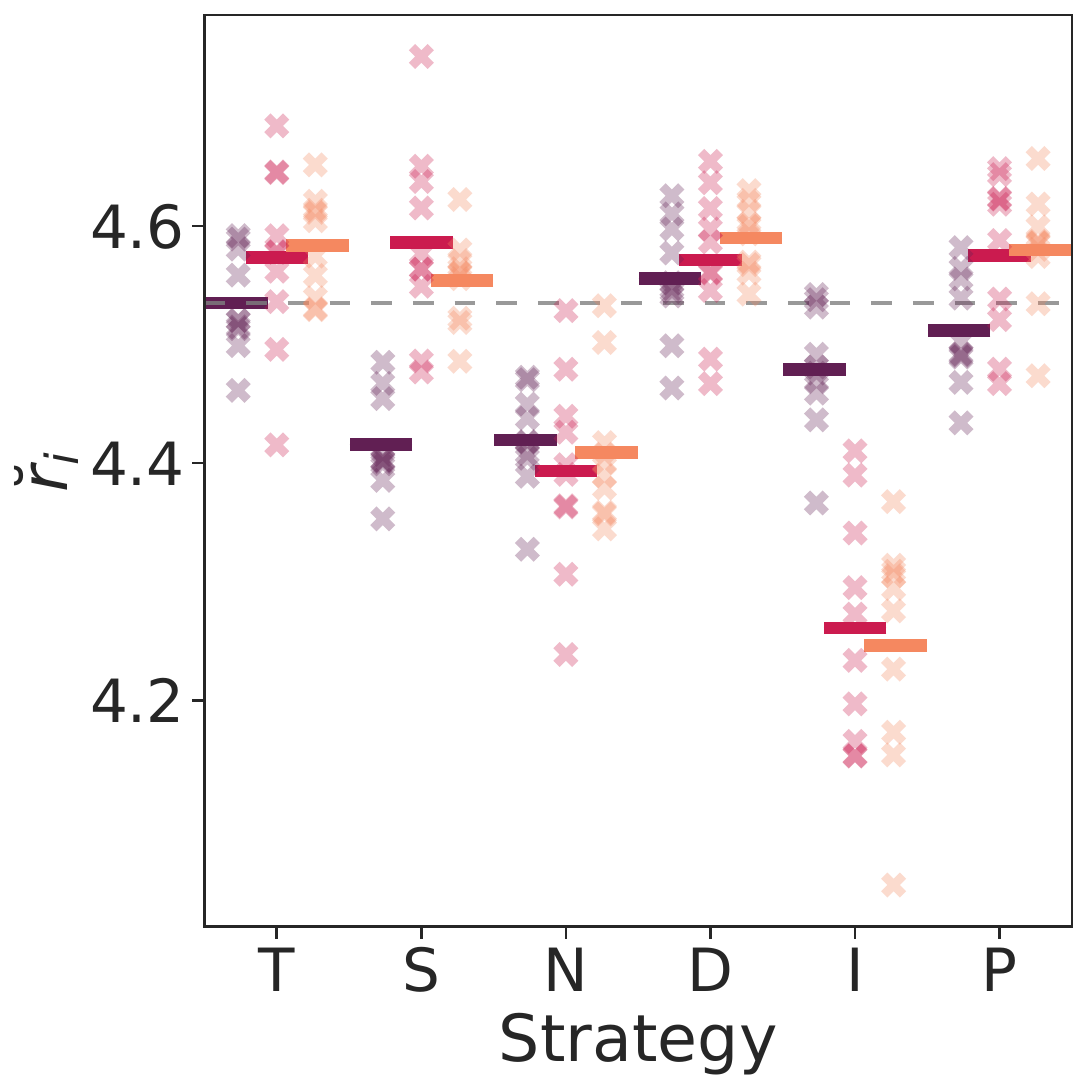}
            \caption{(LO-BL)}
        \end{subfigure} \\
        \multicolumn{4}{c}{
            \begin{subfigure}{0.4\textwidth}
                \centering
                \includegraphics[width=\textwidth]{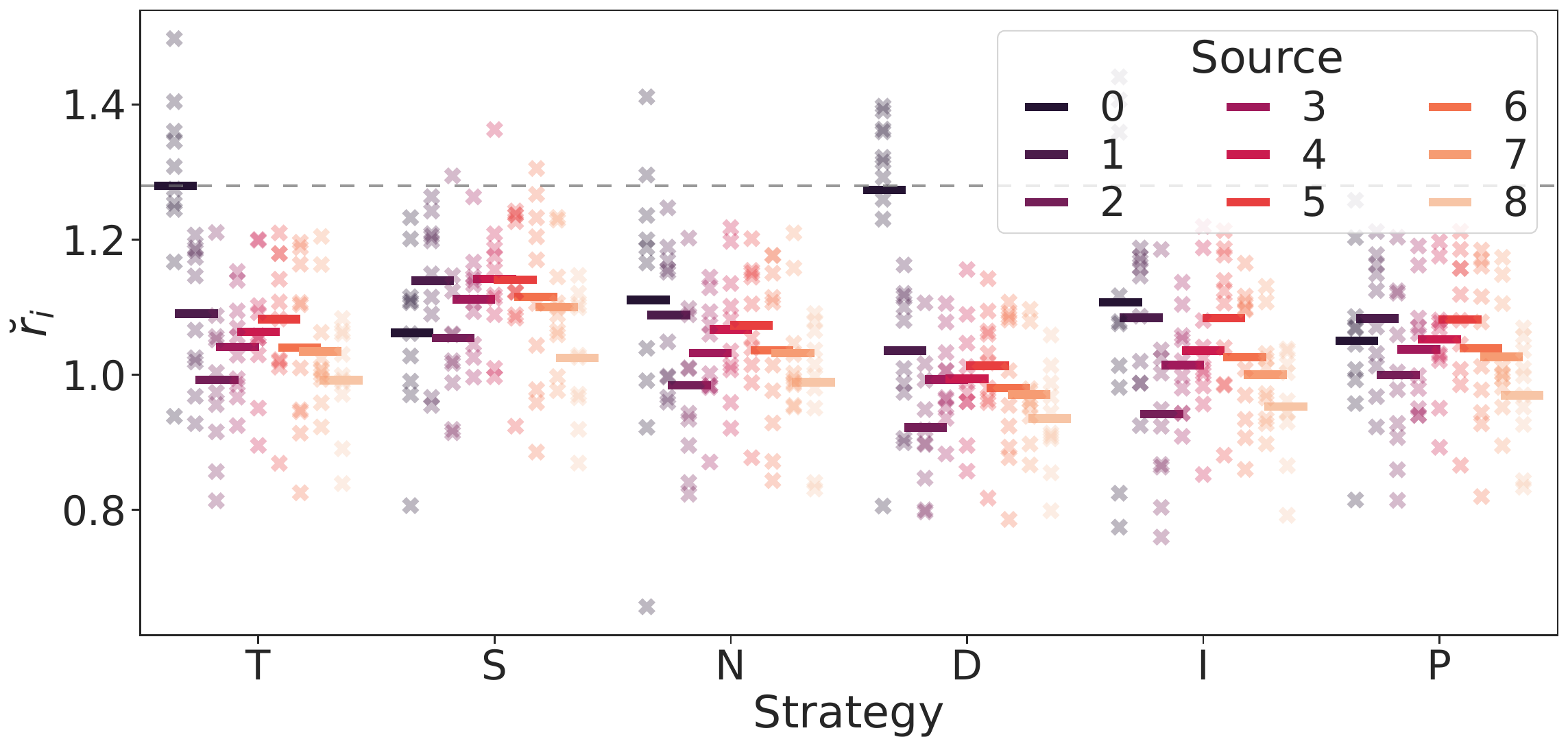}
                \caption{(GP-FR-9)}
            \end{subfigure}
        } \\
    \end{tabular}
    \caption{Graphs of all sources' reward values $\breve{r}_i$ (mean across $10$ training and validation splits plotted as straight bars) when source $0$ uses different strategies for various datasets (a)-(e).}
    \vspace{-3mm}
    \label{fig:nogt-shapley-25-exclude}
\end{figure}

\begin{figure}[h]
    \centering
    \begin{tabular}{cccc}
        \begin{subfigure}{0.2\textwidth}
            \centering
            \includegraphics[width=\textwidth]{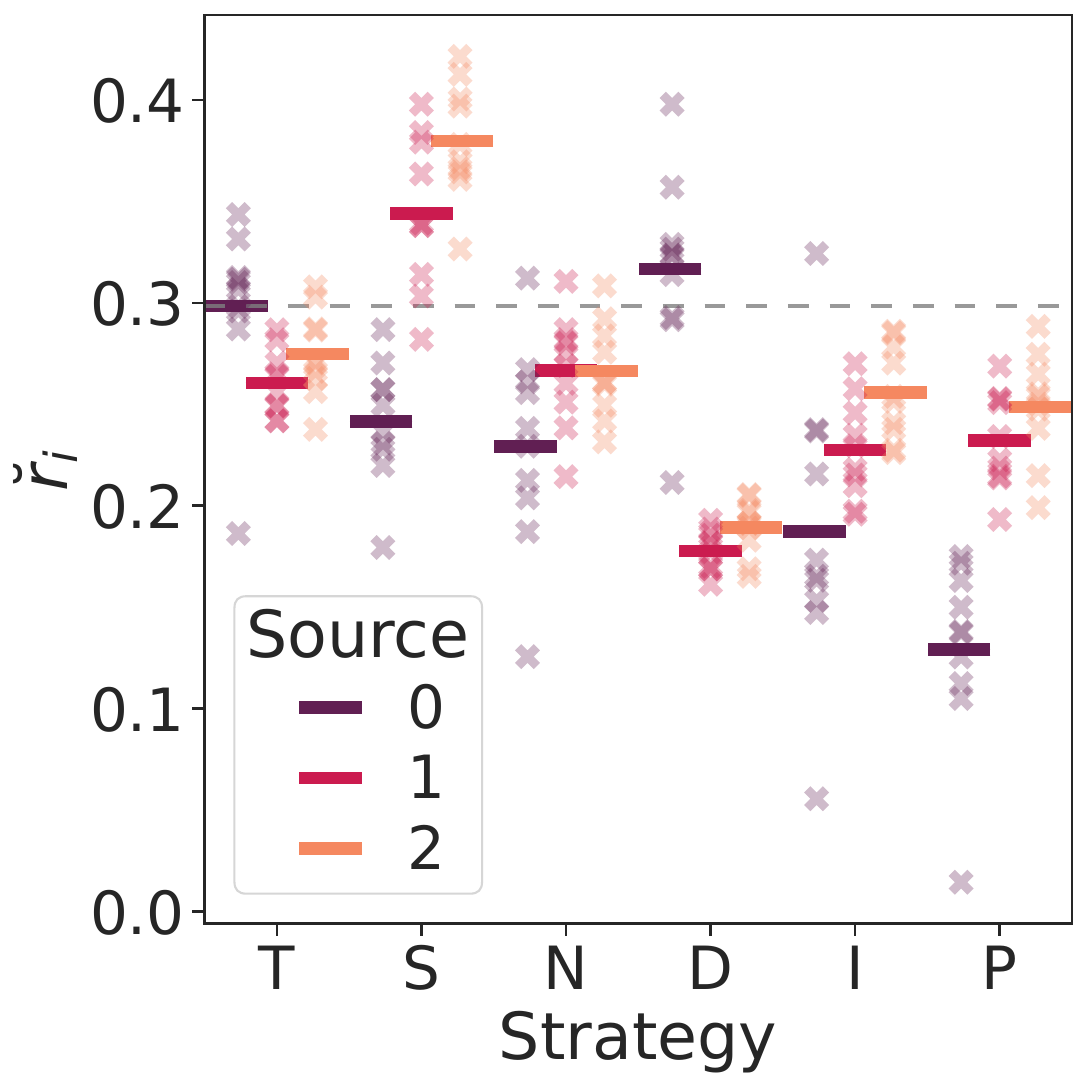}
            \caption{(GP-FR)}
        \end{subfigure} &
        \begin{subfigure}{0.2\textwidth}
            \centering
            \includegraphics[width=\textwidth]{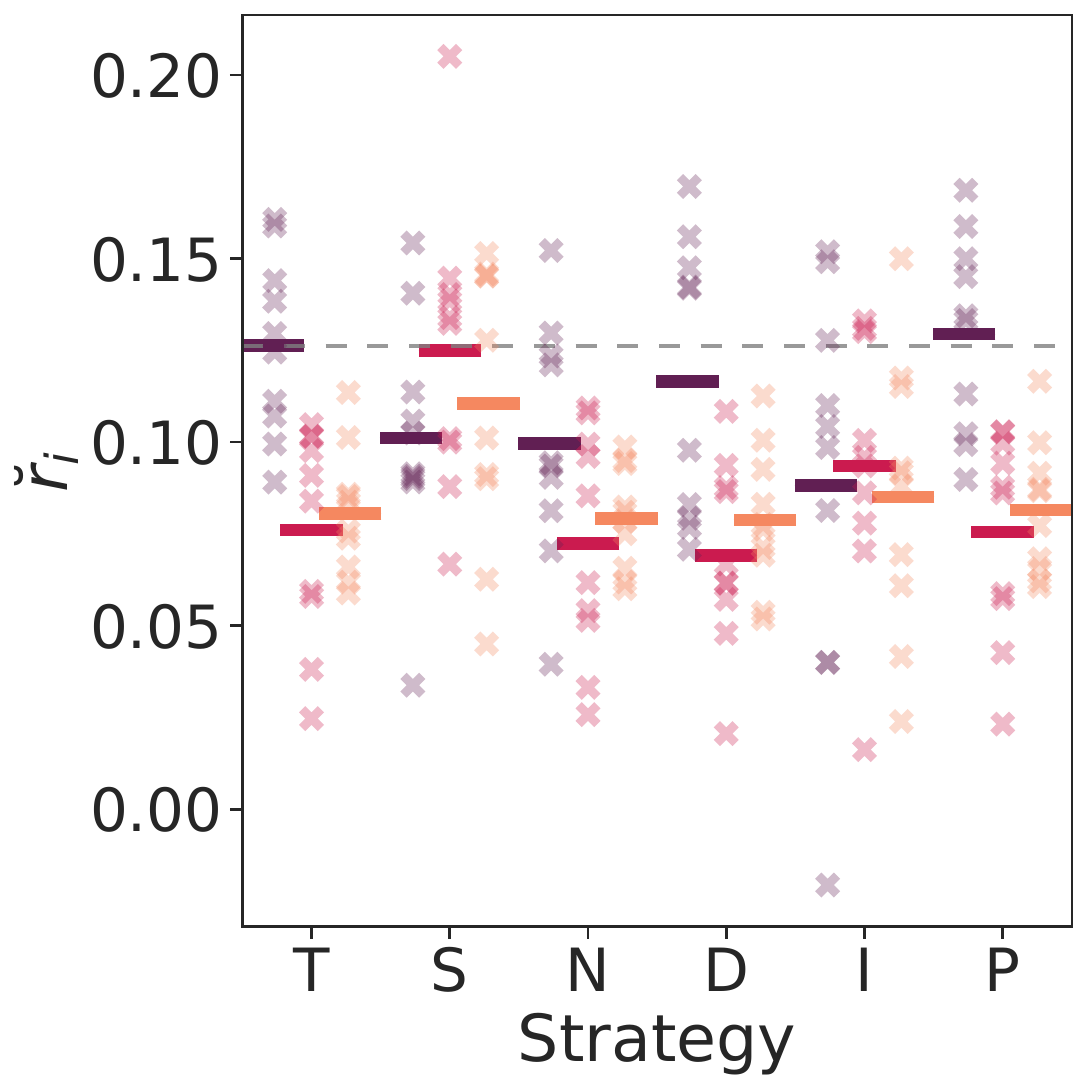}
            \caption{(LO-HE)}
        \end{subfigure} &
        \begin{subfigure}{0.2\textwidth}
            \centering
            \includegraphics[width=\textwidth]{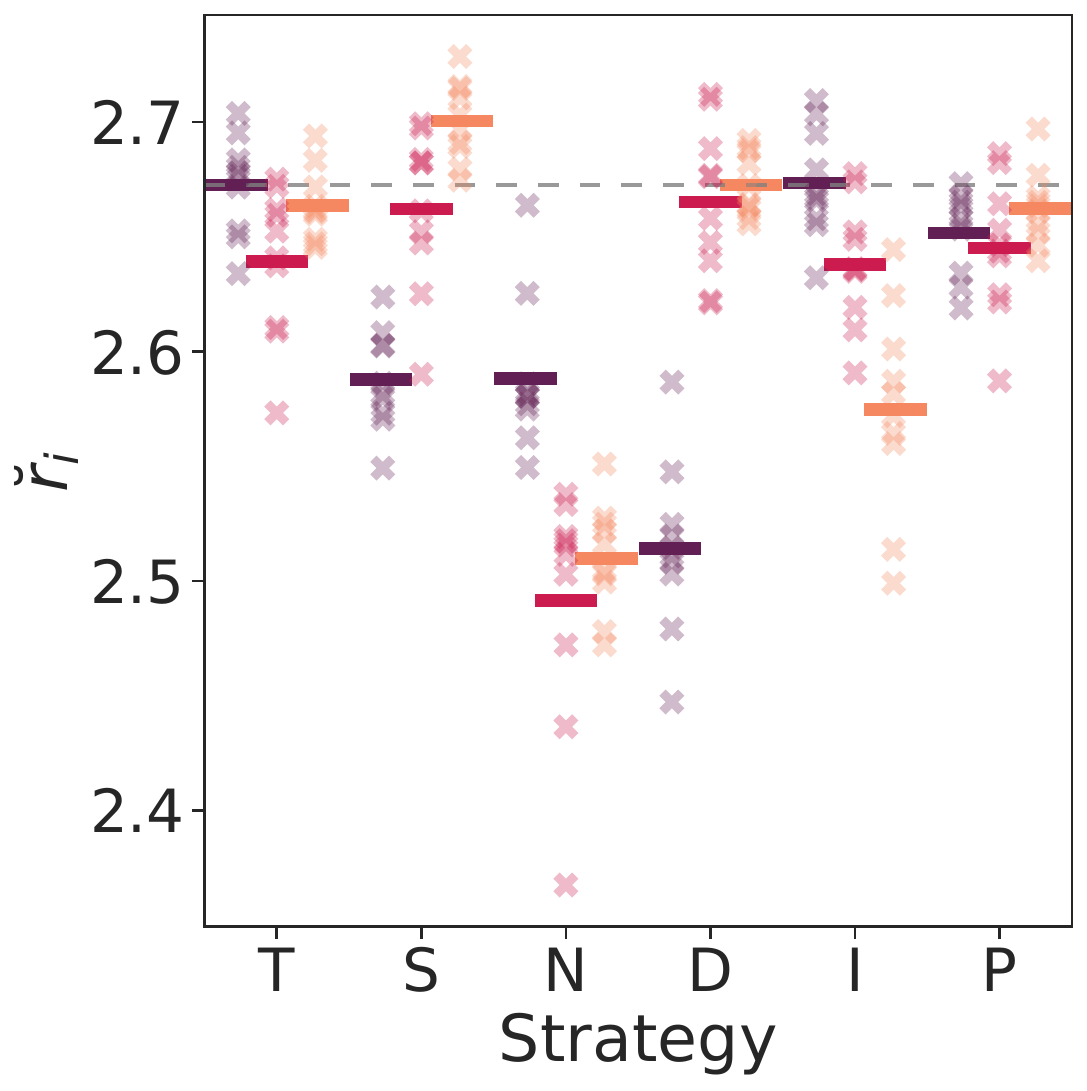}
            \caption{(NN-CY)}
        \end{subfigure} &
        \begin{subfigure}{0.2\textwidth}
            \centering
            \includegraphics[width=\textwidth]{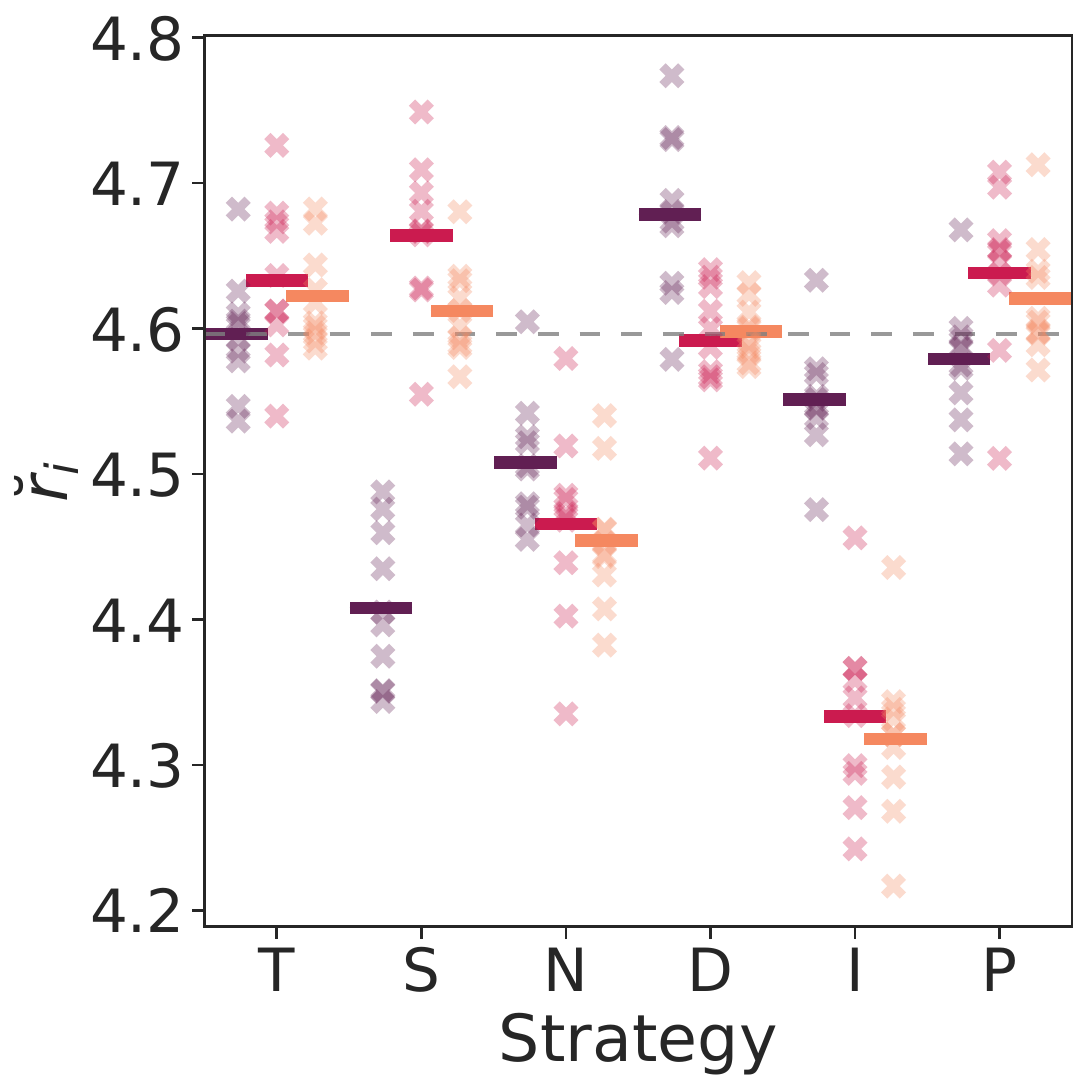}
            \caption{(LO-BL)}
        \end{subfigure} \\
        \multicolumn{4}{c}{
            \begin{subfigure}{0.4\textwidth}
                \centering
                \includegraphics[width=\textwidth]{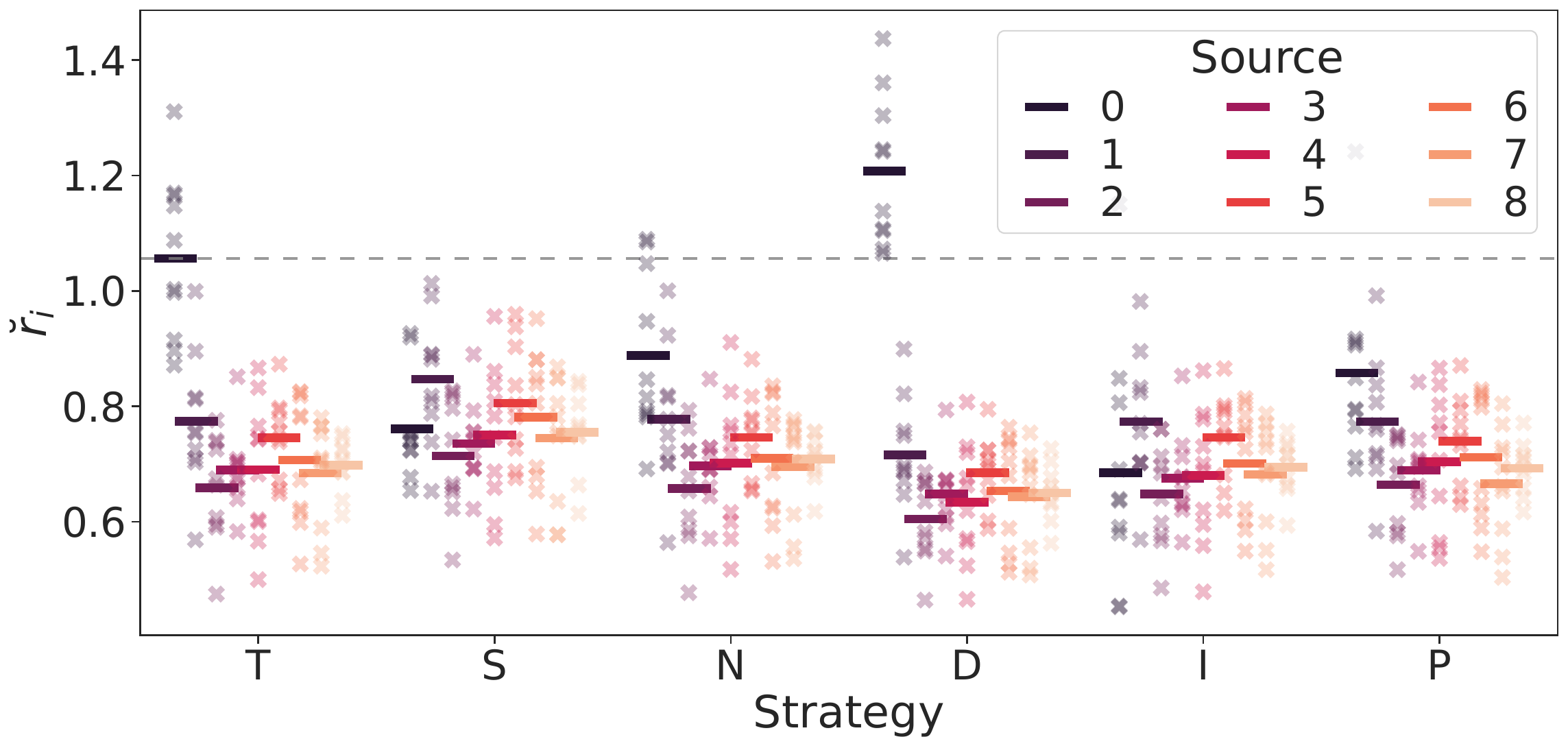}
                \caption{(GP-FR-9)}
            \end{subfigure}
        } \\
    \end{tabular}
    \caption{Graphs of all sources' reward values $\breve{r}_i = \sum_{j \in N \setminus \{i\}} \phi_i[\nu^{T_j}_{\mathfrak{D}}]$ (mean across $10$ $50\%-50\%$ training and validation splits plotted as straight bars) when source $0$ uses different strategies for various datasets (a)-(e) and other sources are truthful. Source $i$ is not evaluated on its own validation set.}
    \vspace{-3mm}
    \label{fig:nogt-shapley-50-exclude}
\end{figure}

\textbf{Reward value of other sources.} Without the mediator's validation set, the reward values of other sources do not always increase when source $0$ uses an untruthful strategy. For example, in Figs.~\ref{fig:nogt-shapley},\ref{fig:nogt-shapley-25-include},\ref{fig:nogt-shapley-50-exclude},\ref{fig:nogt-shapley-50-include}, source $0$'s strategy I decreases the reward of sources $1$ and $2$ when compared against strategy T.
This difference arises because the log-likelihood of observing source $0$'s untruthful validation set $T_0$ may be significantly lower than that of observing $\bar{T}_0$. As a result, for any source $j \neq 0$, $\phi_j[\nu^{T_0}_{\mathfrak{D}}] < \phi_j[\nu^{\bar{T}_0}_{\mathfrak{D},0 \rightarrow \bar{D}_0}]$, leading to much lower $\breve{
r}_j$ and $\grave{r}_j$ when source $0$ submits an untruthful dataset.
Without the mediator's validation set, the reward values across sources and strategies may not follow the same trends as with a constant validation set.

\begin{figure}[h]
    \centering
    \begin{tabular}{cccc}
        \begin{subfigure}{0.2\textwidth}
            \centering
            \includegraphics[width=\textwidth]{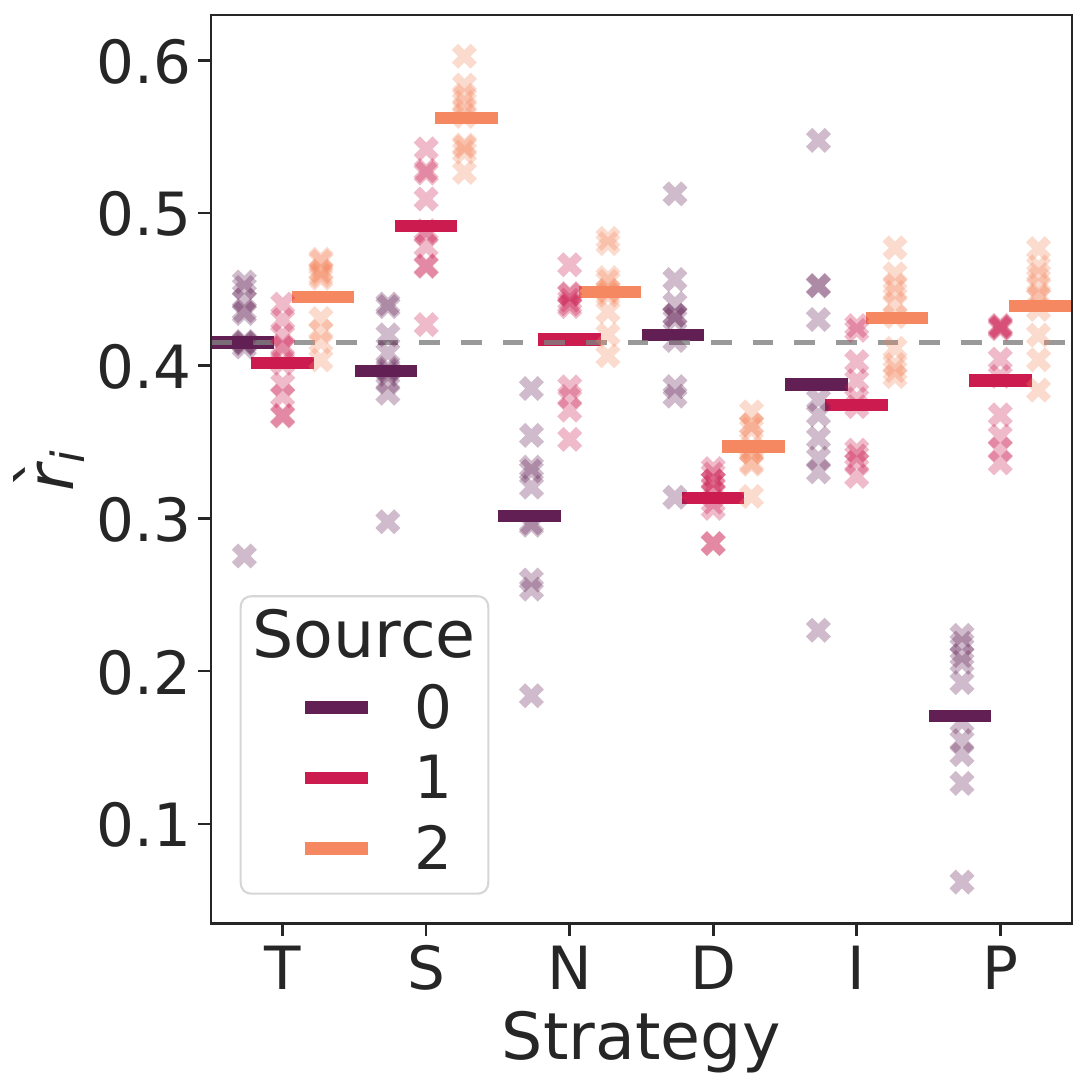}
            \caption{(GP-FR)}
        \end{subfigure} &
        \begin{subfigure}{0.2\textwidth}
            \centering
            \includegraphics[width=\textwidth]{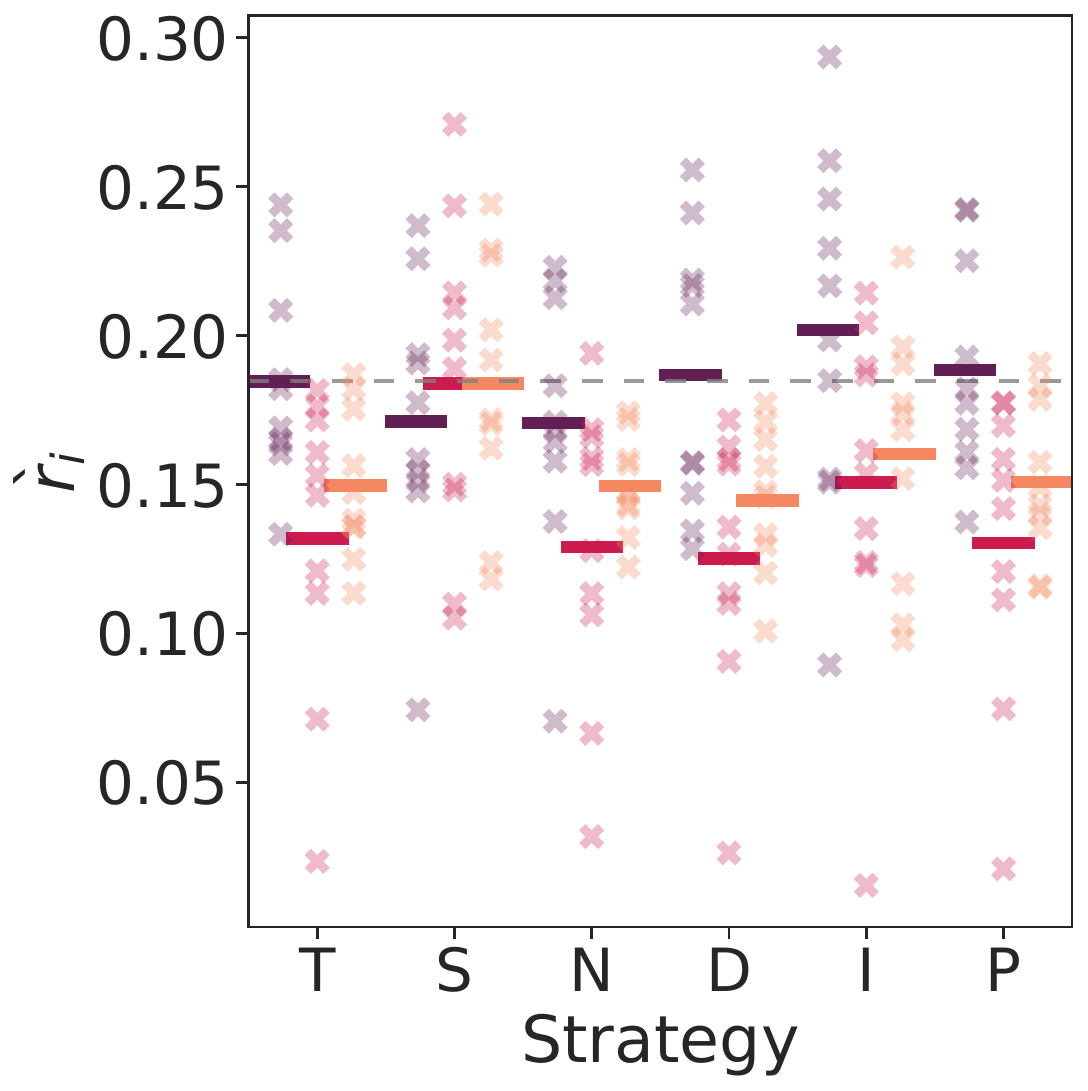}
            \caption{(LO-HE)}
        \end{subfigure} &
        \begin{subfigure}{0.2\textwidth}
            \centering
            \includegraphics[width=\textwidth]{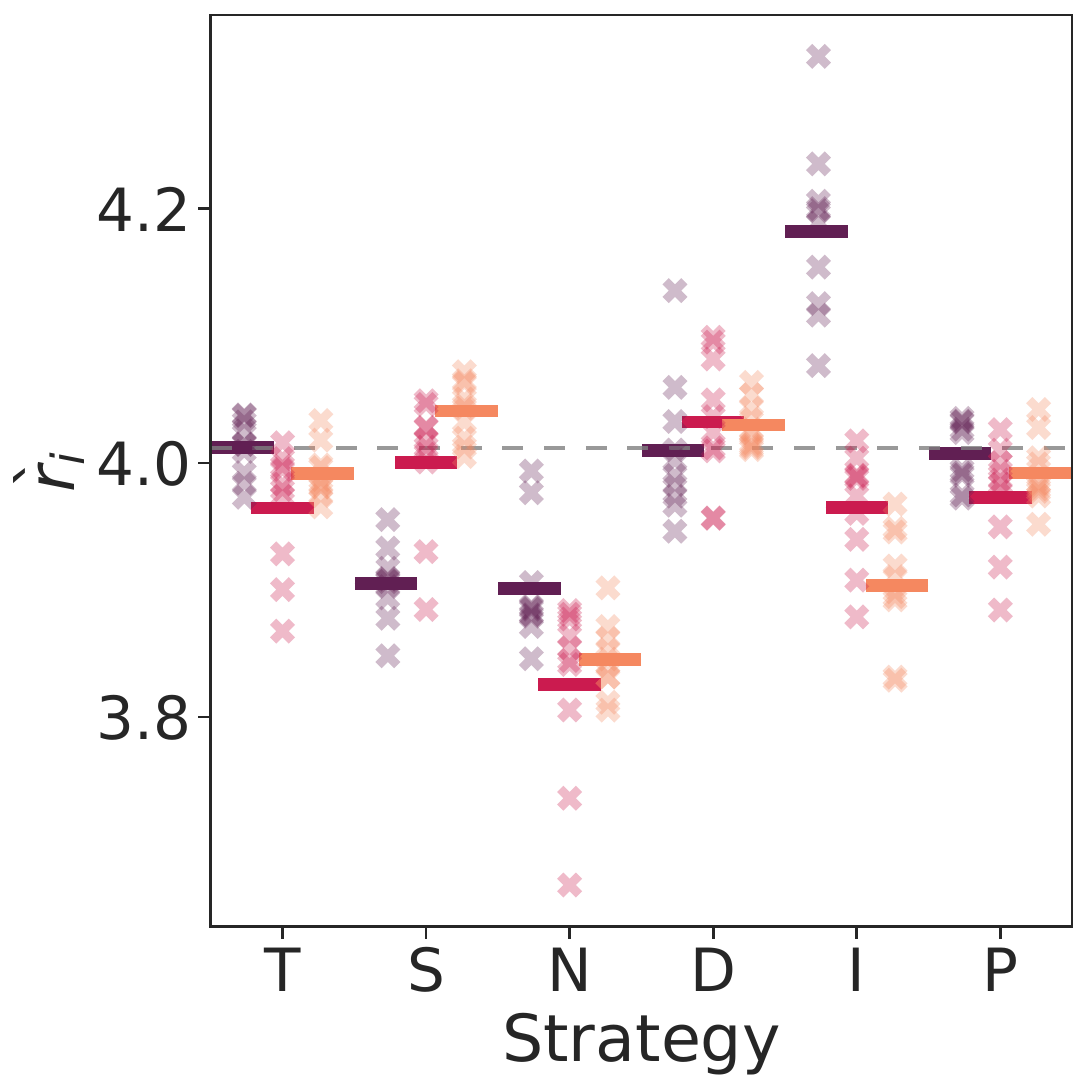}
            \caption{(NN-CY)}
        \end{subfigure} &
        \begin{subfigure}{0.2\textwidth}
            \centering
            \includegraphics[width=\textwidth]{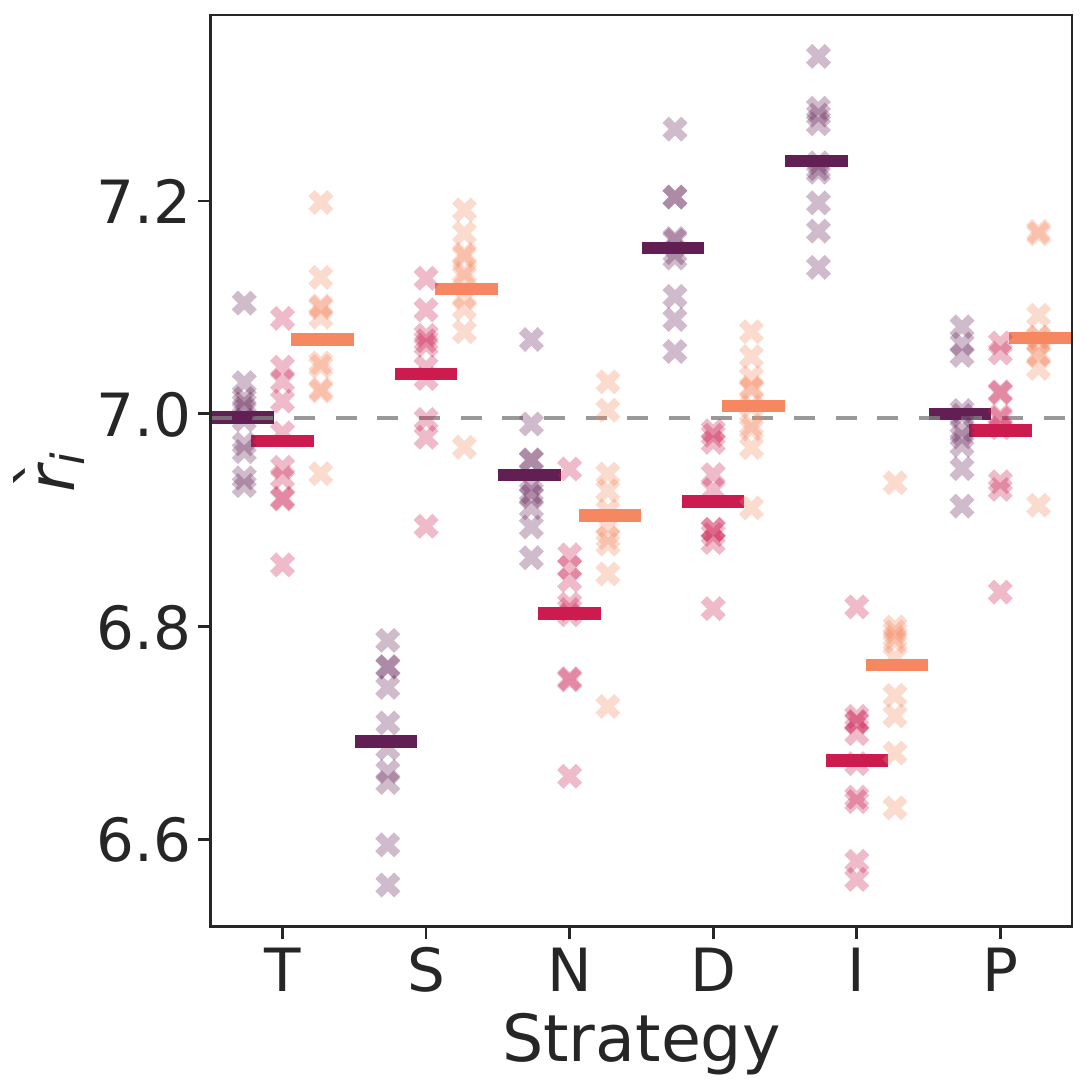}
            \caption{(LO-BL)}
        \end{subfigure} \\
        \multicolumn{4}{c}{
            \begin{subfigure}{0.4\textwidth}
                \centering
                \includegraphics[width=\textwidth]{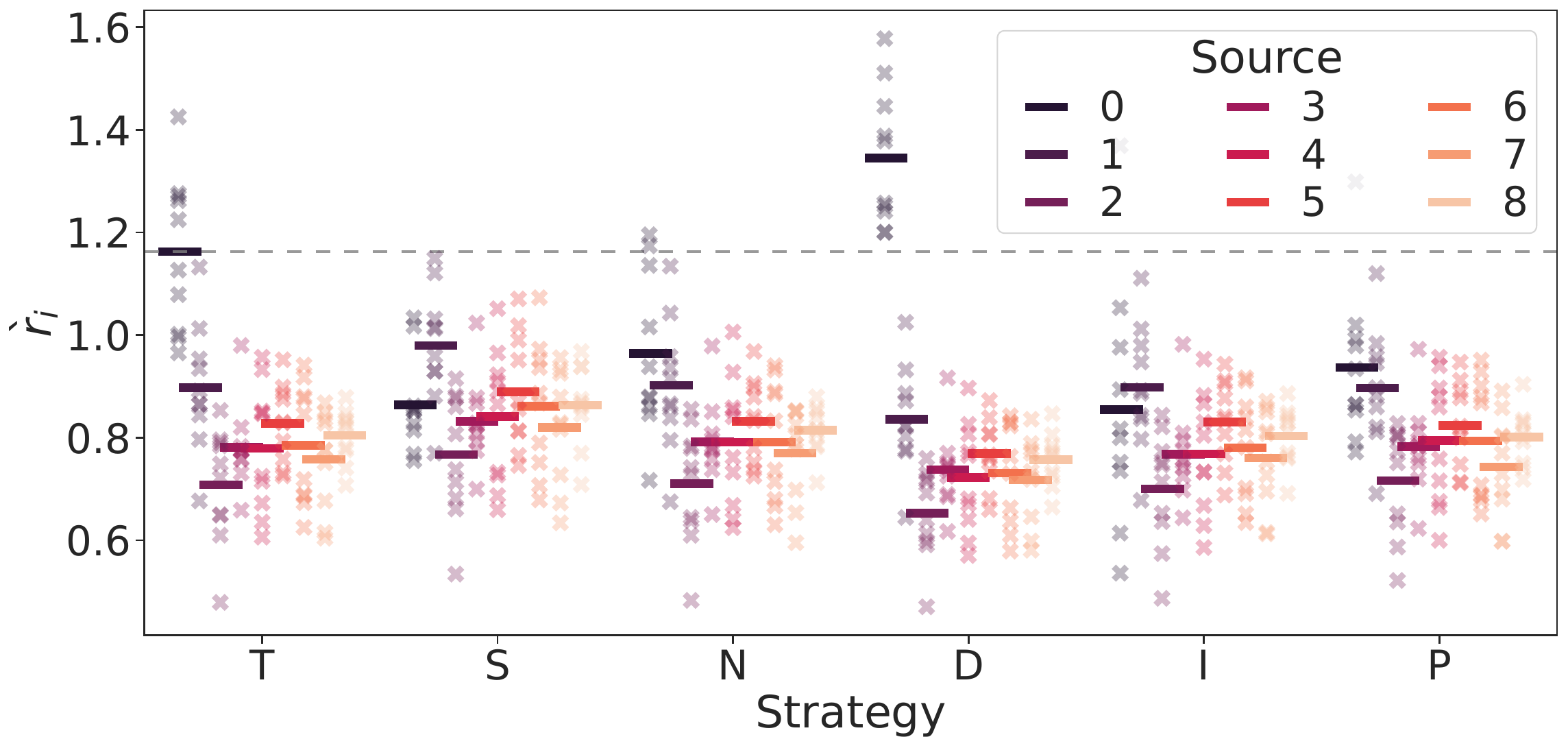}
                \caption{(GP-FR-9)}
            \end{subfigure}
        } \\
    \end{tabular}
    \caption{Graphs of all sources' reward values $\grave{r}_i = \sum_{j \in N} \phi_i[\nu^{T_j}_{\mathfrak{D}}]$ (mean across $10$ $50\%-50\%$ training and validation splits plotted as straight bars) when source $0$ uses different strategies for various datasets (a)-(e) and other sources are truthful. Source $i$ is evaluated on its own validation set.}
    \vspace{-3mm}
    \label{fig:nogt-shapley-50-include}
\end{figure}

\section{Limitations}\label{app:limits}
In this section, we thoroughly discuss the limitations of our work. Some of these limitations arise from assumptions needed for theoretical guarantees and it may be non-trivial for future work to address and  remove each of them.

\begin{description}
    \item[Reliance on a high-quality validation set.] In Secs.~\ref{sec:dv} and~\ref{sec:sv}, our theoretical results rely on a high-quality validation set as we need the value to depend on some information that is unknown to all sources. 

    The assumption of existence is \textbf{reasonable} (App.~\ref{app:assumptions}) when the mediator can be trusted to obtain a validation set from actual evaluations and to suggest an appropriate model. Moreover, the mediator can use outlier detection techniques, such as residual analysis \cite{murphy2022probabilistic}, to remove data that do not fit the model specifications and are unlikely to be from $p(\cT|\theta)$.

    The assumption of existence is \textbf{not unique to this work} and thus the community views it as a sensible and necessary starting point for research. Using a validation set is a common practice in data valuation works \citep{jia2019towards,ghorbani2019data}, and it serves as a foundation for ensuring truthfulness \citep{richardson2020budget}. To our knowledge, works that remove the ground truth assumption can only incentivize and ensure truthfulness in limited settings (e.g., multiple independent and homogeneous tasks, agents’ signals on a task being conditionally independent given the ground truth, discrete predictions, or knowing the misclassification rate for the noisy ground truth in binary classification \citep{liu2023surrogate}). Thus, removing this validation set assumption is highly non-trivial.

    Lastly, the assumption is \textbf{not too strict or restrictive}. We analyze what happens when the validation set is of smaller sizes and of lower quality (e.g., noisy) in Sec.~\ref{sec:experiments} and App.~\ref{app:exp:dvf}. We verify that truthfulness is still incentivized when the misspecification (e.g., noise) is low.

    \item[Currently only works for Bayesian models.] We motivated the need for Bayesian models in App.~\ref{app:assumptions}. We focus on Bayesian models as they naturally produce probabilistic predictions that account for both prior knowledge and the observed data. Such probabilistic predictions are essential for \cref{def:truthful} and applying proper scoring rules (see App.~\ref{app:back:scoring}). 

    The focus on Bayesian models as the first step is \textbf{reasonable} and \textbf{not unique to this work}. Bayesian models are also the focus of prior works \cite{chen2020truthful,zheng2024truthful}. In Sec.~\ref{sec:dv}, we state that training non-Bayesian models involves an extra step of loss minimization rather than directly computing probabilities (based on the model specifications). Thus, it may be harder to prove \cref{def:truthful} in the next step.
    Additionally, considering the decentralized and iterative FL setting is also harder and hence left as the next step. In the last paragraph of App.~\ref{app:difference}, we explain that this is because FL reward mechanisms often involve giving out model rewards which conflict with strict truthfulness when there is a limited budget/performance.
    
    Lastly, the assumption is \textbf{not too strict or restrictive} as Bayesian models can model complex relationships. We consider Gaussian processes and Bayesian neural networks in our experiments (Sec.~\ref{sec:experiments}). In App.~\ref{app:back:bayesian}, we provide more background on Bayesian models.

    \item[Requires a trusted mediator.] The valuation must be done by a trusted mediator who would not manipulate its validation set or collude with any source to increase the source's reward. The assumption of existence is \textbf{reasonable} as sources can use trusted third party platforms, legal contracts and verification checks. The assumption of existence is \textbf{not unique to this work} and shared by other data valuation works in Tab.~\ref{tab:comparison}.

    \item[Computing exact semivalues may be prohibitively expensive in practice.] We consider semivalues to achieve the fairness conditions.

    The scalability challenge is \textbf{not unique to this work}. It applies to any work \cite{ghorbani2019data,jia2019towards,sim2020,sim2023} on data values using Shapley value. The only difference is our DVF. In Sec.~\ref{sec:dv}, we explain that our DVF is often used to evaluate the model performance of Bayesian models (in place of model accuracy). Thus, there are many works that have proposed efficient Monte Carlo approximations to obtain the Bayesian models and data values such as \citet{hoffman2014no}. 
    For the (NN-CY) experiment which involves NUTS sampling approximation on a neural network, it took $1$ hour to compute the Shapley value (and utility of $8$ coalitions).

    \textbf{The challenge can be avoided or mitigated} due to two reasons. Firstly, in our collaborative ML setting \cite{sim2020}, there are fewer data sources (e.g., hospitals holding larger datasets).
    Secondly, in the remark on efficiency in Sec.~\ref{sec:sv}, we describe how our theoretical results would also hold for unbiased semivalue estimators. On average, the approximate semivalues will still incentivize truthfulness and fairness. 
    Using approximate semivalues would significantly reduce the number of coalitions to evaluate to $O(n \log n)$ and scalability would also improve with the discovery of more efficient approximation techniques such as \citet{kolpaczki2024approximating,li2024semiapprox}. We evaluate on $20$ sources in Tab.~\ref{tab:20parties} using the approximation proposed by \citet{kolpaczki2024approximating}.

    \item[The empirical analysis shows that truthfulness is only still incentivized under minor model misspecification.] The limitation is \textbf{reasonable} as in practice, the misspecifications are likely to be minor as (1) the mediator and sources can use outlier detection techniques, such as residual analysis \cite{murphy2022probabilistic}, to remove data that do not fit the model specifications and (2) the mediator and sources will likely discuss and re-specify better model/distributions to improve predictive performance. The analysis should be viewed as comprehensive instead of limited as other theoretical works \cite{chen2020truthful} may not have discussed or evaluated assumption violations.
\end{description}

\section{Other Questions}\label{app:q}
\begin{enumerate}
    \item \textbf{What is the novelty of the work since semivalues are already used to ensure fairness and pointwise mutual information is already used to incentivize truthfulness?}

    Both incentives have not been simultaneously considered. We explicitly identified the additional condition needed to extend the proof of truthfulness incentive to the value of coalitions and semivalues. Thus, we are able to derive theoretical guarantees. Our contributions also involve identifying pointwise mutual information/log-likelihood based on a validation set to ensure truthfulness is more \emph{compatible} with fairness over alternative definitions, such as peer prediction \cite{chen2020truthful} and other proper scoring rules. We also carefully justified the assumptions in App.~\ref{app:assumptions}.

    Next, we novelly analyze why strict truthfulness may not be possible with a limited budget and without a validation set.

    Lastly, we novelly provide comprehensive experiments with multiple sources and probabilistic models and even tested cases where the probabilistic model might be misspecified. We find that truthfulness is still incentivized under minor misspecifications.

    \item \textbf{At first glance, the work may seem like a simple extension of existing works. Can we clarify the technical contributions of the work?}

    The combination of pointwise mutual information and semivalues may not be that trivial and simple. To extend \citet{zheng2024truthful} and proper scoring rules to ensure fairness, there is a need to account for additional terms which depend on others’ data, identify and justify the additional assumption \ref{a3} (which we have done in Sec.~\ref{sec:dv} and App.~\ref{app:assumptions}) and define the propositions such as Prop.~\ref{prop:log-scoring} precisely. For a complete treatment, we also consider potential violations through our experiments. We have also motivated why all sources would prefer the truthful Nash equilibrium, further justifying  \ref{a3}. All the above constitute part of the novel technical contributions.

    Next, we think that technical novelty may not only stem from complex or fundamentally new solutions. Technical novelty can also involve identifying an open problem and providing a clear and rigorous solution, which we believe is non-trivial. To elaborate, we have recognized that a common limitation of collaborative fairness works \cite{sim2020,sim2023,xu2021validation} is that they assume truthfulness without incentivizing or verifying. \textbf{Then, we seek to remove this common and strong assumption}. We identified the problem of provably ensuring both truthfulness and collaborative fairness before proposing the first mechanism that achieves it. Next, we have comprehensively evaluated our solution formally and empirically (which is not often seen in truthfulness works). These novel technical contributions are highlighted and differentiated from existing works in Tab.~\ref{tab:comparison}.

    It may take multiple future works, each a significant contribution, to address and remove other assumptions in App.~\ref{app:assumptions}.

    \item \textbf{Why do we not consider federated learning and giving out rewards in each round? How does this work compare against \citet{xu2021gradient,wu2024incentive}?}

    See App.~\ref{app:difference}.

    \item \textbf{How does the proposed mechanism handle the impact of differential privacy techniques (e.g., Gaussian noise addition) on truthfulness? Would such techniques violate the truthfulness guarantees?}

    The proposed mechanism seeks to ensure that each source $i$ submits the dataset $D_i$ drawn from the agreed on likelihood $p(D_i|\theta)$. If the likelihood does not account for differentially private techniques (e.g. Gaussian noise addition), truthfulness guarantees that use these techniques would decrease the valuation. If the likelihood accounts for them, truthfulness guarantees that use these techniques as described (e.g., specified level of noise) would maximize the valuation. See the last part of App.~\ref{app:interaction}.

    \item \textbf{Can we comment about more complex and potentially untruthful strategies such as synthetic data generation with an ML model?}

    In Sec.~\ref{sec:problem}, we define the true dataset $\bar{D}_i$ as the dataset that fully captures the knowledge of source $i$. We did not define truthfulness as the dataset actually collected. This is because the source may know that flipping the image or slightly modifying some pixels should still result in the same value or label. It is also reasonable that these uncollected augmentations increase the truthful data value instead of decreasing it. To resolve this issue, we \emph{consider these helpful augmentations capturing knowledge as truthful}.
    Thus, synthetic data generation with an ML model \emph{may not be untruthful} (i.e., there is no ground truth for us to evaluate the truthfulness incentive of the mechanism under this strategy).
    
    To better align our definition of truthfulness with an alternative version of submitting a collected dataset as-is, we believe the mediator should commit to doing the helpful data augmentation (such as image rotation, adding input noise of a limited magnitude, and training a generative model for each source) and value each source after the augmentation. Thus, there would be no incentive for each source to do it by themselves. Future work can also consider generative models instead of discriminative models.
    
    \item \textbf{Does the reliance on semivalues make the mechanism unscalable?}

    High computational cost is needed to achieve the fairness conditions. 

    We agree that scalability is a challenge, however, it would apply to \textbf{any work \cite{ghorbani2019data,jia2019towards,sim2020,sim2023} on data values using Shapley value} and semivalues and is less important in collaborative ML when there are fewer data sources (e.g., hospitals holding larger datasets). We hope that scalability would improve with the discovery of more efficient approximation techniques such as \citet{li2024semiapprox}.

    In the remark on efficiency in Sec.~\ref{sec:sv}, we describe how our theoretical results would also hold for unbiased semivalue estimators. Thus, our method is still applicable to more sources.

    We evaluate on $20$ sources in Tab.~\ref{tab:20parties} using the approximation proposed by \citet{kolpaczki2024approximating}.

    \item \textbf{Comment on the real world applicability of this work.}
    
    As explained in App.~\ref{app:difference}, we consider the collaboration between a few sources. Thus, scalability is less of an issue. 

    In the reply to the previous question, we explain how semivalues can be approximated unbiasedly and efficiently. Our theoretical results still hold for these unbiased semivalue estimators.

    As existing data valuation works have also used these semivalue approximations, we focus on discussing the computation efficiency of our DVF here.
    In Sec.~\ref{sec:dv}, we explain that our DVF is often used to evaluate the model performance of Bayesian models (in place of model accuracy). Thus, there are many works that have proposed efficient Monte Carlo approximations to obtain the Bayesian models and data values such as \citet{hoffman2014no}. 
    For the (NN-CY) experiment which involves NUTS sampling approximation on a neural network, it took $1$ hour to compute the Shapley value (and utility of $8$ coalitions).

    \item \textbf{Is splitting out a validation set for each source necessary in Sec.~\ref{sec:no-test-set}?}

     Without dataset splitting, we can consider the case with $n$ games and in the $j$-th game, each source $j$'s full dataset is used to evaluate and incentivize truthfulness of all other sources such as $k$.

     This approach can also be interpreted as changing the validation and remaining set split in each game (to support a $0-100$ split for source $j$ in the $j$-th game). If so, the approach satisfies our modified fairness conditions. 
     
     We chose the version in our paper for the ease of writing and because intuitively it feels fairer that in the $j$-th game, source $k$'s semivalue still depends on $n$ sources' datasets $\mathfrak{D}$ including source $j$'s data $D_j$ instead of only $n-1$ sources (especially when they are the only $2$ sources present). The dependence would require splitting source $j$'s data into the remaining and validation set.
     
     When only $2$ sources are present, the approach without data splitting would always result in equal rewards despite differing contributions. With data splitting, their rewards would differ as we consider the semivalue in each game.
    
\end{enumerate}

\end{document}